\documentclass[11pt]{article}
\usepackage[margin=1in]{geometry}
\newif\ifSpringerSubmission
\SpringerSubmissionfalse
\usepackage{amsmath,amssymb,amsthm,mathtools}
\usepackage{xcolor,bm,booktabs,enumitem,microtype,graphicx,aliascnt,placeins}
\ifSpringerSubmission\else
  \usepackage[numbers,sort&compress]{natbib}
\fi
\usepackage{hyperref}
\hypersetup{hidelinks}
\usepackage[nameinlink,noabbrev]{cleveref}

\newcommand{\R}{\mathbb R}
\newcommand{\C}{\mathbb C}
\newcommand{\N}{\mathbb N}
\newcommand{\Z}{\mathbb Z}
\newcommand{\EE}{\mathbb E}
\newcommand{\dd}{\mathrm d}
\newcommand{\eps}{\varepsilon}
\newcommand{\cE}{\mathcal E}

\newcommand{\cA}{\mathcal A}
\newcommand{\cS}{\mathcal S}
\newcommand{\cP}{\mathcal P}
\newcommand{\cK}{\mathcal K}
\newcommand{\cD}{\mathcal D}
\newcommand{\cB}{\mathcal B}
\newcommand{\cY}{\mathcal C}
\newcommand{\cH}{\mathcal H}
\newcommand{\outY}{\mathcal Y}
\newcommand{\tr}{\operatorname{tr}}
\newcommand{\supp}{\operatorname{supp}}
\newcommand{\ReLU}{\operatorname{ReLU}}
\newcommand{\Lip}{\operatorname{Lip}}
\newcommand{\fp}{\operatorname{fp}}
\newcommand{\Ran}{\operatorname{Ran}}
\newcommand{\Id}{\operatorname{Id}}
\newcommand{\iiprod}[1]{(\!(#1)\!)}

\ifSpringerSubmission\theoremstyle{thmstyleone}\else\theoremstyle{plain}\fi
\newtheorem{thm}{Theorem}[section]
\newaliascnt{prop}{thm}
\newtheorem{prop}[prop]{Proposition}
\aliascntresetthe{prop}
\newaliascnt{lem}{thm}
\newtheorem{lem}[lem]{Lemma}
\aliascntresetthe{lem}
\newaliascnt{cor}{thm}
\newtheorem{cor}[cor]{Corollary}
\aliascntresetthe{cor}
\ifSpringerSubmission\theoremstyle{thmstylethree}\else\theoremstyle{definition}\fi
\newaliascnt{dfn}{thm}
\newtheorem{dfn}[dfn]{Definition}
\aliascntresetthe{dfn}
\newaliascnt{ex}{thm}
\newtheorem{ex}[ex]{Example}
\aliascntresetthe{ex}
\ifSpringerSubmission\theoremstyle{thmstyletwo}\else\theoremstyle{remark}\fi
\newaliascnt{rmk}{thm}
\newtheorem{rmk}[rmk]{Remark}
\aliascntresetthe{rmk}

\crefname{thm}{Theorem}{Theorems}
\crefname{prop}{Proposition}{Propositions}
\crefname{lem}{Lemma}{Lemmas}
\crefname{cor}{Corollary}{Corollaries}
\crefname{dfn}{Definition}{Definitions}
\crefname{ex}{Example}{Examples}
\crefname{rmk}{Remark}{Remarks}
\crefname{section}{Section}{Sections}
\crefname{subsection}{Section}{Sections}
\crefname{appendix}{Appendix}{Appendices}
\Crefname{appendix}{Appendix}{Appendices}
\crefname{equation}{}{}
\creflabelformat{equation}{(#2#1#3)}

\newcommand{\manuscriptTitle}{Constructive Universal Approximation by Infinite-Dimensional Neural Networks via Gaussian-Weighted Ridgelet Transform}
\newcommand{\manuscriptKeywords}{Ridgelet transform; neural networks with infinite-dimensional inputs; operator learning; Gaussian measure; constructive universal approximation; Barron-type approximation rate; dimension-free rate; non-polynomial activation functions}
\newcommand{\manuscriptMSC}{41A30, 42C40, 44A12, 46G12, 60G15, 68T07}
\newcommand{\manuscriptDate}{29 September 2026}
\newcommand{\affiliationRIKEN}{%
  \orgname{RIKEN Center for Advanced Intelligence Project (AIP)},
  \orgaddress{\street{Nihonbashi 1-chome Mitsui Building, 1-4-1 Nihonbashi,
  Chuo-ku}, \city{Tokyo} \postcode{103-0027}, \country{Japan}}}
\newcommand{\affiliationCyberAgent}{%
  \orgdiv{AI tech studio}, \orgname{CyberAgent, Inc.},
  \orgaddress{\street{Shibuya Scramble Square, 2-24-12 Shibuya, Shibuya-ku},
  \city{Tokyo} \postcode{150-6121}, \country{Japan}}}
\newcommand{\affiliationKings}{%
  \orgdiv{Department of Mathematics}, \orgname{King's College London},
  \orgaddress{\street{Strand Building}, \city{London}, \postcode{WC2R 2LS},
  \country{UK}}}
\newcommand{\affiliationKyoto}{%
  \orgdiv{Department of Mathematics,
  Graduate School of Science}, \orgname{Kyoto University},
  \orgaddress{\city{Kyoto}, \postcode{606-8502}, \country{Japan}}}
\newcommand{\affiliationOsaka}{%
  \orgdiv{Department of Information Systems Engineering, Graduate School of
  Information Science and Technology}, \orgname{The University of Osaka},
  \orgaddress{\street{1-5 Yamadaoka}, \city{Suita}, \state{Osaka}
  \postcode{565-0871}, \country{Japan}}}
\title{\manuscriptTitle}
\ifSpringerSubmission
\author*[1,2]{\fnm{Sho} \sur{Sonoda}}\email{sho.sonoda@riken.jp}
\author[3]{\fnm{Luca} \sur{Galimberti}}\email{luca.galimberti@kcl.ac.uk}
\author[4]{\fnm{Yuanchao} \sur{Xu}}\email{xu.yuanchao.3a@kyoto-u.ac.jp}
\author[4,1]{\fnm{Isao} \sur{Ishikawa}}\email{ishikawa.isao.5s@kyoto-u.ac.jp}
\author[5,1]{\fnm{Masahiro} \sur{Ikeda}}\email{ikeda@ist.osaka-u.ac.jp}
\affil*[1]{\affiliationRIKEN}
\affil[2]{\affiliationCyberAgent}
\affil[3]{\affiliationKings}
\affil[4]{\affiliationKyoto}
\affil[5]{\affiliationOsaka}
\else
\newcommand{\orgdiv}[1]{#1}
\newcommand{\orgname}[1]{#1}
\newcommand{\orgaddress}[1]{#1}
\newcommand{\street}[1]{#1}
\newcommand{\city}[1]{#1}
\newcommand{\state}[1]{#1}
\newcommand{\postcode}[1]{#1}
\newcommand{\country}[1]{#1}
\newcommand{\arxivAuthorBlock}{%
\begin{minipage}{0.96\textwidth}
\centering
{\large Sho Sonoda\textsuperscript{1,2,*}\quad
Luca Galimberti\textsuperscript{3}\quad
Yuanchao Xu\textsuperscript{4}\\[3pt]
Isao Ishikawa\textsuperscript{4,1}\quad
Masahiro Ikeda\textsuperscript{5,1}\par}
\medskip
{\small
\textsuperscript{1}\affiliationRIKEN\par\smallskip
\textsuperscript{2}\affiliationCyberAgent\par\smallskip
\textsuperscript{3}\affiliationKings\par\smallskip
\textsuperscript{4}\affiliationKyoto\par\smallskip
\textsuperscript{5}\affiliationOsaka\par}
\medskip
{\footnotesize
\textsuperscript{*}Corresponding author: \texttt{sho.sonoda@riken.jp}\\[2pt]
\texttt{luca.galimberti@kcl.ac.uk}; \texttt{xu.yuanchao.3a@kyoto-u.ac.jp}\\[2pt]
\texttt{ishikawa.isao.5s@kyoto-u.ac.jp}; \texttt{ikeda@ist.osaka-u.ac.jp}\par}
\end{minipage}}
\author{\arxivAuthorBlock}
\date{\manuscriptDate}
\fi

\begin{document}
\maketitle
\begin{abstract}
Neural networks whose inputs are functions, as in operator learning, are usually shown to be universal by existence arguments, either by reduction to finite-dimensional networks or by Stone--Weierstrass-type density; these prove that approximants exist but say little about how the parameters depend on the target. We take a constructive route. We extend the ridgelet transform, an integral transform related to the Fourier, Radon, and wavelet transforms, to continuous maps on a separable infinite-dimensional Hilbert space, replacing the Lebesgue measure that is unavailable in infinite dimensions by Gaussian weights with a homogeneous mixture on the direction space. The resulting transform has a Plancherel identity and reconstruction formulas incorporating the frame operator, the correction induced by the weights, and it assigns to each target an explicit coefficient measure, one for every admissible analysis filter. From these representations we obtain uniform approximation on compact sets for all continuous non-polynomial activations of polynomial growth, including ReLU, and, by Monte Carlo sampling of the coefficient, finite-width networks with a dimension-free $O(N^{-1/2})$ error in the width $N$ for globally Lipschitz activations and coefficients with finite variation and second moments. Explicit examples include Gaussian targets and neural-operator layers. A companion Lean~4 formalization provides machine-checked proofs, with its correspondence to the manuscript documented at \url{https://shosonoda.github.io/lean-operator-ridgelet/}.

\end{abstract}
\noindent\textbf{Keywords.} \manuscriptKeywords.
\par\smallskip
\noindent\textbf{Mathematics Subject Classification (2020).} \manuscriptMSC.
\section{Introduction}
\label{sec:intro}

We develop ridgelet analysis for neural networks with infinite-dimensional
inputs.  The construction connects explicit integral representations to
finite-width approximation and uniform approximation on compact sets.

\subsection{From function inputs to network coefficients}

Operator learning seeks to approximate nonlinear maps whose inputs are
functions and whose outputs may also be functions.  Examples include
solution maps of differential equations and layers that combine an
integral operator with a pointwise nonlinearity
\citep{Kovachki2021NOreview,Lu2021,Lanthaler2022}.  On a separable real Hilbert
space \(\cH\), a basic model with values in a Hilbert space \(\outY\) is
\[
 f_N(x)=\sum_{j=1}^N v_j\sigma(\langle a_j,x\rangle-b_j),
 \qquad a_j\in \cH,\ b_j\in\R,\ v_j\in \outY.
\]
Each direction \(a_j\) defines a linear measurement of the input.  Beyond
asking whether these networks can approximate a target, we ask how to
choose their directions, biases, and output coefficients from that target.

An integral representation addresses this question by replacing the
finite sum with a superposition of neurons,
\[
 f(x)=S_\sigma[\Gamma](x)
     :=\int_{\cH\times\R}\sigma(\langle a,x\rangle-b)\,\Gamma(\dd a,\dd b).
\]
Here \(\Gamma\) is a finite scalar or vector coefficient measure.
Its total variation measures the size of the representation.
Once \(\Gamma\) is known, its normalized variation supplies a probability
law for sampling neurons; its polar density supplies the output weights.
Barron's argument then relates the finite-width approximation error to the variation and
moments of this measure \citep{Barron1993,Bach2017b}.

The analytic tool used here is a ridgelet transform: an integral operator
that sends a target to coefficients indexed by directions and biases.
It is related to the Fourier, Radon, and wavelet transforms
\citep{Murata1996,Candes.PhD,Sonoda2017acha}, but we construct and
explain it directly on \(\cH\).  No knowledge of ridgelet analysis is
needed to read the definitions or the proofs below.

Universality of such networks is usually proved by existence arguments:
by reduction to a finite-dimensional approximation theorem, or by
Stone--Weierstrass-type density arguments on the infinite-dimensional space
\citep{Benth2023,Ismailov2026AMAI,CuchieroSchmockerTeichmann2026}; random
features give rates on Hilbert inputs \citep{Gonon2023reservoir}.  These
results establish existence or sampling bounds without an analysis
operator.  In this paper, we supply that operator: an explicit coefficient
formula, a norm identity, and reconstruction, for the full class of
continuous non-polynomial activations of polynomial growth, from which
universality and finite-width rates follow.

The main technical obstacle is that an infinite-dimensional Hilbert space
has no locally finite, translation-invariant analogue of Lebesgue measure, so
the finite-dimensional ridgelet integrals cannot be carried over directly.
We integrate against a Gaussian input measure and a homogeneous mixture of
Gaussian measures on the direction space; the homogeneity parameter of the
mixture replaces the dimension-dependent factor of the finite-dimensional
admissibility condition.  \Cref{sec:transform} constructs these measures
and the transform.

\subsection{Main results}
\label{subsec:main-results}

Our main contributions connect transform identities to constructive
neural-network approximation; the universality statement itself is of a
type already known, and the novelty lies in how the approximants are
obtained.  The precise assumptions are stated with each theorem.

\paragraph{Transform and reconstruction.}
We construct a Gaussian-weighted ridgelet transform with a Plancherel
identity: the norm of the coefficients equals a constant times the
target norm in a Hilbert space defined by weighted Fourier analysis
(\cref{thm:B}).  The transform is injective and has closed range.
Reconstruction uses a \emph{frame operator}, an invertible correction
induced by the Gaussian weights, which can be undone after synthesis
or before analysis (\cref{thm:C}).

\paragraph{Explicit network coefficients.}
For targets given by a Fourier integral, we derive an explicit
coefficient formula and an integral-network representation
(\cref{thm:A}).  Smoothness of the Fourier density along lines through
the origin, together with weighted bounds on its derivatives, makes
the coefficient measure finite and gives the parameter moments needed
for approximation (\cref{thm:E}).
The representation allows continuous activations of polynomial growth;
non-polynomiality permits a nonzero reconstruction constant.
In particular, ReLU is admissible when paired with a suitable analysis
filter (\cref{cor:relu-admissible}).  Different admissible analysis filters
give different coefficients for the same target, so the representation is a
family indexed by the filter.

\paragraph{Approximation rates and universality.}
Monte Carlo approximation of the integral representation gives
finite-width networks.  For globally Lipschitz activations, their
expected uniform error on a compact set is of order \(N^{-1/2}\),
where \(N\) is the width, with no explicit input or output dimension
factor (\cref{thm:lipschitz-barron}).  The constant depends on the
coefficient variation, parameter moments, and compact set.
Density of the represented targets then proves \emph{cc-universality},
meaning uniform approximation of every continuous target on each
compact subset of the input space (\cref{thm:D}).  These results cover
Hilbert-valued outputs, including function-valued operators.

\paragraph{Examples and numerical evidence.}
We give Gaussian targets, Gaussian-parameter networks, and
neural-operator layers built from periodic convolution and a Dirichlet
solution operator, with explicit representations
(\cref{sec:examples}).  Three experiments illustrate the predicted
approximation rates across input resolutions, check an explicit
transform formula, and separate finite-width error from numerical
discretization error (\cref{sec:numerics}).

\paragraph{Scope.}
The coefficient formula is explicit once a spectral density is given; for a
general continuous target, the universality proof first chooses a spectral
approximation, and the constants \(V\) and \(M_2\) depend on that choice.
We do not obtain a uniform width--accuracy relation for all continuous
targets, a coefficient-computation algorithm, or a statistical learning
guarantee.  Universality is proved in the compact-open topology, not globally
on weighted non-compact spaces, and the \(N^{-1/2}\) rate is a Barron-type
statement for targets whose coefficient has finite variation and moments,
consistent with the lower bounds for Lipschitz operator classes
(\cref{subsec:related}).  Every finite-width network factors through finitely
many measurements \(\langle a_j,x\rangle\); representing the directions
themselves with finitely many numbers is a further approximation problem.
Tempered reconstruction is an identity in a dual space, and general nonlinear
operator activations are outside the reconstruction theorem.

\subsection{Related work}
\label{subsec:related}

Ridgelet analysis originates with \citet{Murata1996}, \citet{Candes.PhD},
and \citet{Candes1999a} as a transform adapted to ridge functions.  Its
distributional extension admits unbounded activations, including ReLU
\citep{Sonoda2017acha}; weighted Sobolev activation spaces are developed in
\citet{Sonoda2021ghost}; and the Fourier-slice method yields transforms for
group convolutional and non-Euclidean networks
\citep{Sonoda2022gconv,Sonoda2024fourier}, where the Hilbert-space
construction of \citet[Section~4]{Sonoda2024fourier} integrates over a
finite-dimensional subspace.  Complexity bounds derived from integral
representations are surveyed in \citet{Kurkova2012}.

Integration over an infinite-dimensional space with respect to a Gaussian
measure is classical: Fourier--Wiener transforms
\citep{CameronMartin1945,Kuo1989}, Gaussian Radon transforms
\citep{Hertle1980,MihaiSengupta2007,HolmesSengupta2012,Holmes2013,Becnel2019},
and a ridgelet transform for Wiener functionals \citep{Martias2012ridgelet},
whose atoms are generated by a functional on the Cameron--Martin space and
whose basic identity relates two Gaussian spaces of different variance
rather than reconstructing the target.  In this paper, the atoms are
neurons, the direction measure is homogeneous, and the Plancherel identity
holds on a single space; \cref{sec:extended-related} makes these
comparisons explicit, including the relation of our analysis transform to
the Gaussian Radon transform.

Universal approximation with infinite-dimensional inputs goes back to the
approximation of functionals \citep{ChenChen1993,Chen1995,Stinchcombe1999,
RossiConanGuez2005}.  Compact-open universality is known on Fr\'echet
spaces with operator activations \citep{Benth2023,Galimberti2025}, in
\(L^p(\mu)\) \citep{Galimberti2026Lp}, for scalar ridge neurons on locally
convex spaces with every continuous non-polynomial activation
\citep{Ismailov2026AMAI}, and globally on weighted spaces
\citep{CuchieroSchmockerTeichmann2026,SchmockerTeichmann2026}; neural
operators \citep{Kovachki2021NOreview,Lu2021,Lanthaler2022,
KovachkiLanthalerMishra2021,LanthalerLiStuart2023} motivate the examples in
\cref{sec:examples}.  These are existence theorems; here the approximants
are obtained from an explicit coefficient, one for each admissible analysis
filter.

Finite-width bounds by sampling a parameter distribution follow
\citet{Barron1993}, \citet{Bach2017b}, and \citet*{E-Ma-Wu2022barron}, with
Hilbert-input versions in \citet{Gonon2023reservoir,Gonon2024},
Banach-valued versions in \citet{Korolev2022,NeufeldSchmocker2023random},
and encoder--decoder versions in \citet{yang-shi2026}.  In this paper, the
sampling measure is the homogeneous measure \(\nu_\alpha\), the coefficient
is characterized by an analysis operator with a Plancherel identity and a
closed-range projection, and the uniform rate covers Hilbert-valued outputs.
Lower bounds \citep{LanthalerStuart2025,Lanthaler2024entropy,
AdcockMaierParhi2025} exclude such rates for all Lipschitz operators; ours
concerns a Barron-type class.  Continuous frames
\citep{Fornasier2005,Holighaus2017errata}, Gaussian measures
\citep{Bogachev1998book,DaPrato2014,Prevot2007book}, and Wiener chaos
\citep{Janson1997book} supply the general background.

\subsection{Notation and organization}
\label{subsec:notation}

Unless stated otherwise, \(\cH\) is an infinite-dimensional separable
real Hilbert space with inner product
\(\langle\cdot,\cdot\rangle\), \(\cB(\cH)\) its Borel sigma-algebra, and
\(\mu_{Q}\) the centred Gaussian measure with covariance \(Q\), that is,
the Borel probability measure with
\(\int_\cH e^{i\langle x,\xi\rangle}\mu_{Q}(\dd x)=e^{-\langle Q\xi,\xi\rangle/2}\).
Throughout, \(P\) and \(Q\) are injective, positive, self-adjoint,
trace-class operators on \(\cH\), and
\(D_\omega a:=\omega a\).  Functions on \(\cH\) are complex valued unless
stated otherwise; \(\outY\) denotes a separable complex Hilbert space of
outputs.  On \(\R\) we use
\begin{equation}
  h^\sharp(\omega)=\int_\R h(t)e^{-it\omega}\,\dd t,
  \qquad
  h(t)=\frac1{2\pi}\int_\R h^\sharp(\omega)e^{it\omega}\,\dd\omega,
  \label{eq:fourier-convention}
\end{equation}
We use \(\sharp\) for one-dimensional Fourier transforms, including
tempered distributions and the partial transform in the bias:
\[
 \gamma^\sharp(a,\omega)
 :=\int_\R\gamma(a,b)e^{-ib\omega}\,\dd b.
\]
A hat denotes the spatial Fourier transform, with the same negative
exponential convention. For a finite measure \(M\) on \(\cH\),
\[
 \widehat M(\xi)
 :=\int_\cH e^{-i\langle x,\xi\rangle}\,M(\dd x).
\]
For square-integrable coefficients the partial transform is understood
in \(L^2\); see \cref{lem:partial-fourier-l2}.
The letter \(\rho\) always denotes the analysis filter, \(\sigma\) a
scalar synthesis activation; \(\omega\) is a frequency or dilation variable, \(\xi\) a
spectral variable, \(a\) a direction, and \(b\) a bias.  Distributional
pairings \(\langle u,\varphi\rangle\) are bilinear.
We write \(\mu_C\) for a centred Gaussian law with covariance \(C\),
on the ambient Hilbert space specified in context; in one dimension
\(\mu_v\) denotes the Gaussian law of variance \(v\).
When operator notation is needed, \(F_\R\) denotes the
one-dimensional Fourier transform and \(F_b\) its partial version
in the bias; \(F_Q\) denotes the Gaussian-weighted spatial transform.
The inverse one-dimensional transform is denoted by \(\check h\).
Spectral densities are denoted by \(g\), parameter densities by
\(\gamma\), and finite coefficient measures by \(\Gamma\).

\Cref{sec:networks} fixes the network classes.  \Cref{sec:transform}
constructs the transform and proves the Plancherel identity (\cref{thm:B}).  \Cref{sec:reconstruction}
proves the representation and reconstruction theorems (\cref{thm:A,thm:C}), describes the frame operator, and records the
vector-valued extension.  \Cref{sec:tempered} gives reconstruction formulas
for activations that are tempered distributions, including ReLU.
\Cref{sec:finite-width} contains the approximation rates and the
universality theorem (\cref{thm:D,thm:E}).  \Cref{sec:examples} presents explicit
examples, \cref{sec:numerics} reports numerical experiments, and
\cref{sec:discussion} discusses open questions. The supporting lemmas
and proofs needed for these results appear in the main text before
their use. The appendices give additional kernel and projection
formulas, supplementary approximation bounds, the finite-dimensional
comparison and dilation obstruction, numerical methods, and an extended
overview of related work (\cref{sec:extended-related}).
The experiment code is supplied as supplementary material.

Every theorem, proposition, lemma, corollary, and example of this paper has
been formalized and machine-checked in Lean~4 with Mathlib.  The blueprint at
\url{https://shosonoda.github.io/lean-operator-ridgelet/} links each item to
the Lean statement that carries it and to its proof.  Two clauses are stated
there in a weaker form than here: the \(L^2\) clause in the conclusion of
\cref{thm:weak-sobolev-synthesis}, and the closed forms of the reconstruction
constants in \cref{prop:nonbandpass-sobolev}, which the Lean statements carry
as the Sobolev pairing itself.  Nothing in the present paper depends on the
formalization; it is offered as an independent check.
 \section{Neural networks with Hilbert-space inputs}
\label{sec:networks}

We define finite-width networks and integral networks with scalar
activations and Hilbert-valued outputs.  These definitions fix the
coefficient measures and the compact-open topology used in the
representation and approximation results.

\subsection{Finite-width and integral networks}

\begin{dfn}[Finite-width network]
\label{def:finite-network}
Let \(\sigma:\R\to\C\) be continuous.  A width-\(N\) network on \(\cH\) with
values in \(\outY\) is
\begin{equation}
  f_N(x)=\sum_{j=1}^N v_j\,\sigma(\langle a_j,x\rangle-b_j),
  \qquad v_j\in \outY,\quad (a_j,b_j)\in \cH\times\R.
  \label{eq:finite-network}
\end{equation}
The scalar case is \(\outY=\C\).
\end{dfn}

We measure approximation in the compact-open topology of \(C(\cH;\outY)\),
generated by the seminorms
\[
  \|f\|_{C(K;\outY)}:=\sup_{x\in K}\|f(x)\|_\outY,
  \qquad K\subset \cH\ \text{compact},
\]
and write \(C(K)\) when \(\outY=\C\).  Compact-open approximation does not
truncate the input: it asks for uniform approximation on each fixed
compact subset of \(\cH\).  Finite-rank projections converge uniformly on a
compact set, but a target need not factor through any one of them, and the
examples in \cref{sec:examples} do not.

A vector measure assigns a vector to each Borel set and is countably
additive in the output norm.  Its variation is the positive measure
\(|\Gamma|(E)=\sup\sum_j\|\Gamma(E_j)\|_\outY\), where the supremum is
over finite Borel partitions of \(E\); bounded variation means that its
mass on the whole parameter space is finite.  Since \(\outY\) is a
Hilbert space, such a measure has a density \(h\) with respect to its
variation: \(\Gamma(E)=\int_E h\,\dd|\Gamma|\), with
\(\|h\|_\outY=1\) almost everywhere.  Accordingly, an integral
\(\int u\,\dd\Gamma\) of a scalar function means the Bochner integral
\(\int uh\,\dd|\Gamma|\).  A vector-valued function is Bochner
integrable when it is strongly measurable (an almost-everywhere limit
of measurable simple functions) and its norm is integrable; its
integral is obtained by integrating simple functions that approximate
it in \(L^1\).  We refer to \citep{Diestel1977book} for more details.

\begin{dfn}[Integral network]
\label{def:integral-network}
Let \(\Gamma\) be a \(\outY\)-valued Borel measure of bounded variation on
\(\Theta:=\cH\times\R\), with variation measure \(|\Gamma|\) and total
variation \(\|\Gamma\|_{\mathrm{TV}}:=|\Gamma|(\Theta)\).  Whenever the
Bochner integral exists, define
\begin{equation}
  S_\sigma[\Gamma](x)
  :=\int_{\cH\times\R}\sigma(\langle a,x\rangle-b)\,\Gamma(\dd a,\dd b).
  \label{eq:measure-synthesis}
\end{equation}
If \(\sigma\) is globally Lipschitz, it suffices that
\begin{equation}
  \int_{\cH\times\R}(1+\|a\|+|b|)\,\dd|\Gamma|(a,b)<\infty .
  \label{eq:measure-moment}
\end{equation}
When \(\Gamma=\gamma\lambda\) has a density with respect to a
sigma-finite reference measure \(\lambda\) on \(\Theta\), we also write
\(S_\sigma[\gamma]\), and we say that the network has coefficient
\(\gamma\).
\end{dfn}

Finite-width networks are the syntheses of finite atomic measures.  The
total variation \(\|\Gamma\|_{\mathrm{TV}}\) is the infinite-width analogue
of the \(\ell^1\)-norm of the outer weights, and it is the quantity that
enters Barron's discretization in \cref{sec:finite-width}.  Scalar
coefficients are complex; a real network is recovered by taking real
parts.

 \section{The Gaussian-weighted ridgelet transform}
\label{sec:transform}

We construct the transform using a Gaussian input measure and a
homogeneous mixture of Gaussian direction measures.  The Fourier-slice
and Plancherel identities establish its analytic properties, and explicit
filters show how the admissibility assumptions can be met.

\subsection{The input and direction measures}
\label{subsec:measures}

Recall that \(P\) and \(Q\) are injective, positive, self-adjoint,
trace-class operators on \(\cH\); \(\mu_Q\) is the input Gaussian law.
Gaussian weighting makes the input integral available, but a single
Gaussian law on directions is not homogeneous under dilation.
In finite-dimensional ridgelet analysis, the dilation Jacobian
\(|\omega|^{-m}\) of Lebesgue measure on \(\R^m\) enters the
admissibility condition.  The following mixture replaces \(m\) by a
freely chosen homogeneity parameter \(\alpha\), so the corresponding
condition remains meaningful on \(\cH\).
For \(\alpha>0\),
define a Borel measure on the direction space by
\begin{equation}
  \nu_\alpha(E)
  :=\int_0^\infty \mu_{2sP}(E)\,s^{\alpha/2-1}\,\dd s,
  \qquad E\in\cB(\cH).
  \label{eq:homogeneous-mixture}
\end{equation}
We first realize the Gaussian components on a common probability space
and justify integration against the mixture.

\label{app:gaussian-details}
The Gaussian measures can be realized on a common probability space
\((\Omega,\mathfrak F,\mathbb P)\).  Let \(Pe_j=p_je_j\) with \(p_j>0\),
\(\sum_jp_j<\infty\), and let \(Z_j\) be independent standard normal
variables.  The series
\[
  X=\sum_{j\ge1}\sqrt{p_j}Z_je_j,\qquad X_s=\sqrt{2s}\,X
\]
converges in \(L^2(\Omega;\cH)\) and almost surely: orthogonality gives the
\(L^2\) tail estimate, and \(\EE\sum_jp_jZ_j^2=\tr P<\infty\) gives
almost-sure square summability.  Its finite-dimensional coordinates show
that \(X\sim\mu_{P}\) and \(X_s\sim\mu_{2sP}\); see
\citet[Chapter~2]{Prevot2007book} and \citet[Chapter~2]{Bogachev1998book}.
All moments of \(\|X\|\) are finite: by Jensen's inequality applied to
the probability weights \(p_j/\tr P\),
\begin{equation}
  \EE\|X\|^{2m}
  =\EE\Bigl(\sum_jp_jZ_j^2\Bigr)^m
  \le(\tr P)^m\,\EE Z_1^{2m}
  =(\tr P)^m(2m-1)!! .
  \label{eq:gaussian-norm-moments}
\end{equation}
The same construction with \(Q\) in place of \(P\) realizes \(\mu_Q\).

\begin{lem}[Measurability and integration of the mixture]
\label{lem:mixture-integration}
For every \(E\in\cB(\cH)\), the map \(s\mapsto\mu_{2sP}(E)\) is Borel
measurable.  The mixture \cref{eq:homogeneous-mixture} defines a countably
additive Borel measure, and for every nonnegative Borel function
\(F:\cH\to[0,\infty]\),
\begin{equation}
  \int_\cH F(\xi)\,\nu_\alpha(\dd\xi)
  =\int_0^\infty\int_\cH F(\xi)\,
    \mu_{2sP}(\dd\xi)\,s^{\alpha/2-1}\,\dd s .
  \label{eq:mixture-integration}
\end{equation}
Both sides may be infinite.  For complex \(F\) the identity holds when
\(\int_\cH|F|\,\dd\nu_\alpha<\infty\).
\end{lem}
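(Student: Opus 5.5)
The plan is to use the common probability space just constructed, where \(\mu_{2sP}\) is the law of \(X_s=\sqrt{2s}\,X\), and to reduce everything to Fubini--Tonelli on the product space \((0,\infty)\times\Omega\). The first step is measurability. The map \((s,\omega)\mapsto\sqrt{2s}\,X(\omega)\) from \((0,\infty)\times\Omega\) to \(\cH\) is jointly measurable, since it is the product of a continuous scalar function of \(s\) and a measurable \(\cH\)-valued random variable, and scalar multiplication \(\R\times\cH\to\cH\) is continuous. For \(E\in\cB(\cH)\), the indicator \(\mathbf 1_E(\sqrt{2s}X(\omega))\) is therefore jointly measurable and nonnegative. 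Tonelli's theorem then shows that \(s\mapsto\mathbb P(\sqrt{2s}X\in E)=\mu_{2sP}(E)\) is Borel measurable. This also takes care of the degenerate value \(s=0\), which is a single point and irrelevant for Lebesgue measure.

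The second step is countable additivity. Given disjoint Borel sets \(E_k\), each \(\mu_{2sP}\) is countably additive, so \(\mu_{2sP}(\bigcup_kE_k)=\sum_k\mu_{2sP}(E_k)\) pointwise in \(s\). Integrating against \(s^{\alpha/2-1}\,\dd s\) and applying monotone convergence, equivalently Tonelli for sums, gives \(\nu_\alpha(\bigcup_kE_k)=\sum_k\nu_\alpha(E_k)\). Together with \(\nu_\alpha(\emptyset)=0\), this makes \(\nu_\alpha\) a measure. It may be infinite; for instance \(\nu_\alpha(\cH)=\infty\). The statement asks for nothing more.

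The third step is the integration formula \cref{eq:mixture-integration}, proved by the standard machine. It holds for indicators by definition of \(\nu_\alpha\). It extends to nonnegative simple functions by linearity, using the measurability of \(s\mapsto\int F\,\dd\mu_{2sP}\) established above. For a general nonnegative Borel \(F\), choose simple \(F_n\uparrow F\). Monotone convergence applies on the left, inside each \(\mu_{2sP}\) integral, and then in the outer \(s\)-integral. Measurability of \(s\mapsto\int F\,\dd\mu_{2sP}=\EE F(\sqrt{2s}X)\) follows either from Tonelli or as a pointwise increasing limit of measurable functions.

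For complex \(F\) with \(\int|F|\,\dd\nu_\alpha<\infty\), split \(F\) into real and imaginary parts and each of these into positive and negative parts. Each part is nonnegative and dominated by \(|F|\), so the nonnegative case gives finite values for both sides, and linearity yields the identity. Applying the nonnegative case to \(|F|\) also shows that \(\int|F|\,\dd\mu_{2sP}<\infty\) for almost every \(s\), so the inner integral is defined almost everywhere.

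The only point that needs any care is the joint measurability in the first step, which is what makes Tonelli applicable. The realization \(X_s=\sqrt{2s}X\) on a single \(\Omega\) is exactly what settles it, so I expect no real obstacle. The remaining steps are routine measure theory.
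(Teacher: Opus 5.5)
Your proposal is correct and follows the paper's proof. It uses joint measurability of \((s,\cdot)\mapsto\sqrt{2s}\,(\cdot)\) with Tonelli for measurability in \(s\), monotone convergence for countable additivity, the simple-function machine for the integral identity, and linearity for complex \(F\). The only cosmetic difference is that you integrate over the probability space \(\Omega\) carrying \(X\sim\mu_P\), whereas the paper writes \(\mu_{2sP}(E)=\int_\cH\mathbf 1_E(\sqrt{2s}\,x)\,\mu_P(\dd x)\) directly on \(\cH\); via the push-forward these are the same computation.
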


\begin{proof}
The map \((s,x)\mapsto\sqrt{2s}\,x\) is continuous on
\((0,\infty)\times \cH\), and
\(\mu_{2sP}(E)=\int_\cH\mathbf 1_E(\sqrt{2s}\,x)\,\mu_{P}(\dd x)\).  The
integrand is jointly Borel, so its integral is a Borel function of
\(s\).  Monotone convergence applied to disjoint unions proves countable
additivity.  The integral identity holds for indicators by definition,
then for nonnegative simple functions, and finally for all nonnegative
Borel \(F\) by monotone convergence; the complex case follows by
linearity.
\end{proof}

\begin{lem}[Homogeneous Gaussian mixture]
\label{lem:homogeneous-mixture}
For every \(\alpha>0\), the measure \(\nu_\alpha\) is infinite on \(\cH\)
and has full support, and for \(\omega\ne0\),
\begin{equation}
  (D_\omega)_\#\nu_\alpha=|\omega|^{-\alpha}\nu_\alpha,
  \label{eq:homogeneity}
\end{equation}
equivalently, for every nonnegative Borel \(F\) on \(\cH\),
\begin{equation}
  \int_\cH F(\omega a)\,\nu_\alpha(\dd a)
  =|\omega|^{-\alpha}\int_\cH F(\xi)\,\nu_\alpha(\dd\xi).
  \label{eq:homogeneous-change}
\end{equation}
The measure \(\nu_\alpha\) is moreover sigma-finite and finite on
bounded Borel sets.
\end{lem}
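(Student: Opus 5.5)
The plan is to reduce everything to the scaling of the Gaussian components, $(D_\omega)_\#\mu_{2sP}=\mu_{2s\omega^2P}$, and then to a change of variables in the mixing parameter $s$. The push-forward claim holds because if $X_s=\sqrt{2s}\,X\sim\mu_{2sP}$ in the common-probability-space realization, then $\omega X_s=\sqrt{2s\omega^2}\,X\sim\mu_{2s\omega^2P}$ (a centred Gaussian law is symmetric, so the sign of $\omega$ is irrelevant). For nonnegative Borel $F$, \cref{lem:mixture-integration} then gives
\[
\int_\cH F(\omega a)\,\nu_\alpha(\dd a)=\int_0^\infty\int_\cH F(\xi)\,\mu_{2s\omega^2P}(\dd\xi)\,s^{\alpha/2-1}\,\dd s .
\]
Substituting $t=\omega^2 s$, so that $s^{\alpha/2-1}\dd s=|\omega|^{-\alpha}t^{\alpha/2-1}\dd t$, and applying \cref{lem:mixture-integration} once more yields \cref{eq:homogeneous-change}. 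Taking $F=\mathbf 1_E$ gives \cref{eq:homogeneity}.

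\textbf{Infinite mass.} We have $\nu_\alpha(\cH)=\int_0^\infty s^{\alpha/2-1}\,\dd s=\infty$. The integral converges at $0$ but diverges at $\infty$ since $\alpha>0$.

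\textbf{Finiteness on bounded sets.} This step needs a small-ball-type bound, and it is the main estimate. For the ball $B_R$, we use $\mu_{2sP}(B_R)=\mathbb P(2s\|X\|^2\le R^2)$. For $s\le1$ we bound this by $1$, which integrates since $\alpha>0$. For large $s$ we need $\mathbb P(\|X\|^2\le R^2/(2s))$ to decay faster than $s^{-\alpha/2}$. Since $\|X\|^2\ge\sum_{j\le n}p_jZ_j^2$, we get
\[
\mathbb P\bigl(\|X\|^2\le r^2\bigr)\le\prod_{j\le n}\mathbb P\bigl(|Z_j|\le r/\sqrt{p_j}\bigr)\le C_n\,r^n,
\]
because each factor is at most $2r/\sqrt{2\pi p_j}$. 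Choosing $n>\alpha$ makes $\int_1^\infty s^{-n/2}s^{\alpha/2-1}\,\dd s<\infty$. Here the infinite dimensionality of $\cH$, i.e. infinitely many $p_j>0$, is essential, since it lets $n$ be arbitrarily large. Every bounded Borel set lies in some $B_R$, and $\cH=\bigcup_R B_R$, which gives sigma-finiteness.

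\textbf{Full support.} Each $\mu_{2sP}$ with $P$ injective has full support on $\cH$, a classical fact: the Cameron--Martin space $P^{1/2}\cH$ is dense and every ball has positive Gaussian measure; see \citep{Bogachev1998book}. Hence for a nonempty open set $U$ we have $\mu_{2sP}(U)>0$ for all $s>0$, so the integrand $s\mapsto\mu_{2sP}(U)s^{\alpha/2-1}$ is positive on $(0,\infty)$. Its integral, which is measurable by \cref{lem:mixture-integration}, is therefore positive.
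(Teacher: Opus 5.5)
Your proof is correct and follows the paper's argument essentially step for step: the same coordinate small-ball bound with $n>\alpha$ factors (which is where $\dim\cH=\infty$ enters), divergence of $\int_0^\infty s^{\alpha/2-1}\,\dd s$ for infinite mass, and full support inherited from the injective Gaussian components. Homogeneity is also obtained the same way, via $(D_\omega)_\#\mu_{2sP}=\mu_{2s\omega^2P}$ and the substitution $t=\omega^2 s$; you derive the integrated form first and then specialize to indicators, whereas the paper goes the other way, which makes no difference.
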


\begin{proof}
\label{app:proof-homogeneous-mixture}
Let \(p_j>0\) be the eigenvalues of \(P\).  For \(a\sim\mu_{2sP}\), any
\(k\ge1\), and the ball \(B_r=\{a\in \cH:\|a\|<r\}\), the first \(k\)
coordinates give
\[
  \mu_{2sP}(B_r)
  \le r^k(\pi s)^{-k/2}\prod_{j\le k}p_j^{-1/2},
\]
because the density of each coordinate is bounded by
\((4\pi sp_j)^{-1/2}\) and the coordinates lie in an interval of length
\(2r\).  Choose \(k>\alpha\), which is possible because
\(\dim \cH=\infty\).  The part \(s\le1\) of
\cref{eq:homogeneous-mixture} is integrable because its integrand is at
most \(s^{\alpha/2-1}\); the small-ball bound treats \(s\ge1\).  Thus
bounded sets have finite mass.  Balls exhaust \(\cH\), whereas
\(\nu_\alpha(\cH)=\int_0^\infty s^{\alpha/2-1}\dd s=\infty\), proving
sigma-finiteness and infinitude.  Every Gaussian component has full support
because \(P\) is injective, and so does the mixture.

The law of \(\omega a\) under \(\mu_{2sP}\) is \(\mu_{2s\omega^2P}\);
the substitution \(u=s\omega^2\) in \cref{eq:homogeneous-mixture} proves
\cref{eq:homogeneity} for both signs of \(\omega\), and
\cref{eq:homogeneous-change} is its integrated form by
\cref{lem:mixture-integration}.
\end{proof}

In finite dimension the mixture is explicit: on \(\cH=\R^m\) with \(P=I\)
and \(0<\alpha<m\), \(\nu_\alpha(\dd a)=c_{m,\alpha}\|a\|^{\alpha-m}\dd a\)
(\cref{subsec:finite-reference-measure}).  Thus \(\alpha\) is the
homogeneity degree of the direction measure, not a dimension, and in
infinite dimension every \(\alpha>0\) is allowed.

\subsection{Admissible filters, the transform, and the Fourier slice}
\label{subsec:transform}

\begin{dfn}[Admissible analysis filter]
\label{def:admissible-filter}
A real function \(\rho\in\cS(\R)\) is \(\alpha\)-admissible if
\begin{equation}
  0<\iiprod{\rho,\rho}_{\alpha}
  :=\frac1{2\pi}\int_\R
  |\rho^\sharp(\omega)|^2|\omega|^{-\alpha}\,\dd\omega<\infty.
  \label{eq:self-admissibility}
\end{equation}
We call it a band-pass filter if moreover
\begin{equation}
  \rho^\sharp\in C_c^\infty(\R\setminus\{0\}),
  \label{eq:bandpass-filter}
\end{equation}
which specifies the compact Fourier support away from zero required by
the term band-pass here.  Such a filter is
\(\alpha\)-admissible for every \(\alpha>0\); since
\(\rho^\sharp\) vanishes near the origin, the integral in
\cref{eq:self-admissibility} is in fact finite and positive for every real
\(\alpha\).  For two admissible filters put
\begin{equation}
  \iiprod{\rho_1,\rho_2}_{\alpha}
  :=\frac1{2\pi}\int_\R
  \rho_1^\sharp(\omega)\overline{\rho_2^\sharp(\omega)}
  |\omega|^{-\alpha}\,\dd\omega .
  \label{eq:cross-admissibility}
\end{equation}
The self-admissibility constant \cref{eq:self-admissibility} is the case
\(\rho_1=\rho_2=\rho\) of this pairing, since \(\rho\) is real.
\end{dfn}

Explicit band-pass filters, and the Mexican hat as an example of a
non-band-pass admissible filter, are given in
\cref{ex:bandlimited-filter,ex:mexican-hat}.

\begin{dfn}[Gaussian-weighted ridgelet transform]
\label{def:ridgelet-analysis}
For \(f\in L^1(\cH,\mu_Q)\) and \(\rho\in\cS(\R)\), define
\begin{equation}
  R_\rho f(a,b)
  :=\int_\cH f(x)\rho(\langle a,x\rangle-b)\,\mu_Q(\dd x),
  \qquad (a,b)\in \cH\times\R,
  \label{eq:ridgelet-analysis}
\end{equation}
and let \(\lambda_\alpha:=\nu_\alpha\otimes\dd b\) be the parameter
measure on \(\cH\times\R\).
\end{dfn}

The Gaussian input weight makes the defining integral meaningful on
\(\cH\).  The associated Fourier operator is the Fourier transform of the
finite measure \(f\mu_Q\),
\begin{equation}
  F_Qf(\xi)
  :=\int_\cH f(x)e^{-i\langle x,\xi\rangle}\,\mu_Q(\dd x)
  =\widehat{f\mu_Q}(\xi),
  \qquad f\in L^1(\cH,\mu_Q).
  \label{eq:gaussian-analysis}
\end{equation}
Because \(\mu_Q\) is a probability measure, \(L^2(\mu_Q)\subset L^1(\mu_Q)\),
and \(F_Qf\) is bounded and continuous.

\begin{lem}[Fourier-slice identity]
\label{lem:fourier-slice}
Let \(f\in L^1(\cH,\mu_Q)\) and \(\rho\in\cS(\R)\).  Then \(R_\rho f\) is
bounded and jointly continuous on \(\cH\times\R\), and for every \(a\in \cH\)
\[
 \begin{aligned}
  \|R_\rho f(a,\cdot)\|_{L^1(\R)}
  &\le\|f\|_{L^1(\mu_Q)}\|\rho\|_{L^1(\R)},\\
  \|R_\rho f(a,\cdot)\|_{L^2(\R)}^2
  &\le\|f\|_{L^2(\mu_Q)}^2\|\rho\|_{L^2(\R)}^2
  \ \text{ if } f\in L^2(\mu_Q).
 \end{aligned}
\]
For every \(a\in \cH\) and \(\omega\in\R\),
\begin{equation}
  (R_\rho f)^\sharp(a,\omega)
  :=\int_\R R_\rho f(a,b)e^{-i\omega b}\,\dd b
  =\rho^\sharp(-\omega)\,F_Qf(\omega a).
  \label{eq:fourier-slice}
\end{equation}
\end{lem}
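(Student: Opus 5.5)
The plan is to treat \(R_\rho f(a,\cdot)\) as an average over \(x\sim\mu_Q\) of translates of \(\rho\), and to derive every claim from this picture using Fubini and Minkowski's integral inequality.

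\emph{Boundedness and continuity.} Since \(\rho\in\cS(\R)\) is bounded, \(|R_\rho f(a,b)|\le\|\rho\|_\infty\|f\|_{L^1(\mu_Q)}\). For joint continuity, take a sequence \((a_n,b_n)\to(a,b)\). For each fixed \(x\), \(\langle a_n,x\rangle-b_n\to\langle a,x\rangle-b\), so the integrand \(f(x)\rho(\langle a_n,x\rangle-b_n)\) converges pointwise by continuity of \(\rho\). It is also dominated by \(\|\rho\|_\infty|f(x)|\in L^1(\mu_Q)\). Dominated convergence therefore gives sequential continuity, which suffices on the metric space \(\cH\times\R\).

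\emph{\(L^1\) and \(L^2\) bounds in \(b\).} Fix \(a\). The map \((x,b)\mapsto f(x)\rho(\langle a,x\rangle-b)\) is jointly measurable. By Tonelli,
\[
\int_\R\int_\cH|f(x)|\,|\rho(\langle a,x\rangle-b)|\,\mu_Q(\dd x)\,\dd b=\|f\|_{L^1(\mu_Q)}\|\rho\|_{L^1(\R)},
\]
because for each \(x\) the inner \(b\)-integral equals \(\|\rho\|_{L^1}\) by translation invariance. This gives the \(L^1\) bound. For the \(L^2\) bound, apply Minkowski's integral inequality:
\[
\|R_\rho f(a,\cdot)\|_{L^2}\le\int_\cH|f(x)|\,\|\rho(\langle a,x\rangle-\cdot)\|_{L^2}\,\mu_Q(\dd x)=\|\rho\|_{L^2}\|f\|_{L^1(\mu_Q)}.
\]
Since \(\mu_Q\) is a probability measure, Cauchy--Schwarz gives \(\|f\|_{L^1(\mu_Q)}\le\|f\|_{L^2(\mu_Q)}\), and squaring yields the stated inequality. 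An alternative route is to apply Cauchy--Schwarz pointwise in \(b\), \(|R_\rho f(a,b)|^2\le\|f\|_{L^2}^2\int|\rho(\langle a,x\rangle-b)|^2\mu_Q(\dd x)\), and then integrate in \(b\) with Tonelli.

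\emph{Fourier slice.} By the \(L^1\) bound, the integrand
\[
f(x)\rho(\langle a,x\rangle-b)e^{-i\omega b}
\]
is absolutely integrable on \(\cH\times\R\) with respect to \(\mu_Q\otimes\dd b\). Fubini therefore allows me to integrate in \(b\) first. Substituting \(t=\langle a,x\rangle-b\) gives
\[
\int_\R\rho(\langle a,x\rangle-b)e^{-i\omega b}\,\dd b=e^{-i\omega\langle a,x\rangle}\int_\R\rho(t)e^{i\omega t}\,\dd t=e^{-i\langle x,\omega a\rangle}\rho^\sharp(-\omega).
\]
Integrating this against \(f\,\dd\mu_Q\) produces \(\rho^\sharp(-\omega)F_Qf(\omega a)\), which is \cref{eq:fourier-slice}.

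The only point needing care is joint measurability of \((x,b)\mapsto f(x)\rho(\langle a,x\rangle-b)\), which is needed for Tonelli and Fubini. It holds because this is a product of a Borel function of \(x\) and a continuous function of \((x,b)\). Apart from that, every step is routine, and I expect no real obstacle.
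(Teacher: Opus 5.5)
Your proposal is correct and matches the paper's proof step for step: the sup bound, dominated convergence for joint continuity, Tonelli for the \(L^1\) bound, and Fubini with the substitution \(t=\langle a,x\rangle-b\) for the slice identity. The one difference is your primary route to the \(L^2\) bound: the paper uses exactly your ``alternative'' (Cauchy--Schwarz in \(\mu_Q\) pointwise in \(b\), then Tonelli), whereas your Minkowski argument gives the slightly stronger bound \(\|R_\rho f(a,\cdot)\|_{L^2}\le\|f\|_{L^1(\mu_Q)}\|\rho\|_{L^2}\), valid for every \(f\in L^1(\mu_Q)\).
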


\begin{proof}
Boundedness follows from
\(|R_\rho f(a,b)|\le\|f\|_1\|\rho\|_\infty\), and joint continuity from
dominated convergence.  Cauchy--Schwarz in the probability measure
\(\mu_Q\) and Tonelli in \(b\) give the \(L^2\) bound.  The \(L^1\) bound
and Fubini follow from
\(\int_\cH\int_\R|f(x)||\rho(\langle a,x\rangle-b)|\dd b\,\mu_Q(\dd x)
=\|f\|_{L^1(\mu_Q)}\|\rho\|_{L^1}\).
With \(u=\langle a,x\rangle-b\), the inner Fourier transform equals
\(e^{-i\omega\langle a,x\rangle}\rho^\sharp(-\omega)\), which is
\cref{eq:fourier-slice}.
\end{proof}

The Fourier-slice identity determines the coefficient from the spectral
side alone.  For square-integrable coefficients we use the partial
Fourier transform in the bias; its precise construction and jointly
measurable representatives are given in \cref{lem:partial-fourier-l2}.

We first construct the partial Fourier transform on the coefficient space.
This lemma, and several auxiliary results below, are stated for
coefficients with values in a separable complex Hilbert space \(\outY\);
the scalar case is \(\outY=\C\).  Vector- and function-valued outputs
arise for neural operators and for compositions of network layers
(\cref{subsec:vector-valued,sec:examples}), and stating these results
once for \(\outY\) settles measurability, the interchange of integrals,
and Plancherel for every later use, with Bochner integrals in place of
scalar integrals.

\begin{lem}[Partial Fourier transform of a square-integrable coefficient]
\label{lem:partial-fourier-l2}
Let \(\nu\) be a sigma-finite Borel measure on \(\cH\), and let \(\outY\) be a
separable complex Hilbert space.  The Fourier transform in the bias
defines a unitary map
\[
 F_b:L^2(\nu\otimes\dd b;\outY)
       \longrightarrow L^2(\nu\otimes\dd\omega/(2\pi);\outY).
\]
Its image has a jointly strongly measurable representative
\(\gamma^\sharp(a,\omega)\) whose section, for \(\nu\)-almost every
\(a\), is the \(L^2\) Fourier transform of \(\gamma(a,\cdot)\).
These representatives are equal almost everywhere in the product
measure, and
\[
 \int_\cH\int_\R\|\gamma^\sharp(a,\omega)\|_\outY^2
       \frac{\dd\omega}{2\pi}\,\nu(\dd a)
 =\|\gamma\|_{L^2(\nu\otimes\dd b;\outY)}^2.
\]
\end{lem}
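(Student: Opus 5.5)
The plan is to reduce to the classical vector-valued Plancherel theorem on $L^2(\R;\outY)$ by means of the tensor-product identification
\[
L^2(\nu\otimes\dd b;\outY)\cong L^2(\nu;L^2(\R;\outY)).
\]
Let $\mathcal F:L^2(\R,\dd b;\outY)\to L^2(\R,\dd\omega/(2\pi);\outY)$ denote the Fourier--Plancherel map. It is unitary by the scalar Plancherel theorem applied coordinatewise in an orthonormal basis of the separable space $\outY$: it is defined on $L^1\cap L^2$ via Bochner integrals, is isometric there by Parseval in each coordinate, and extends by density. Its range is dense, since it contains all $\outY$-valued Schwarz functions, so the extension is surjective.

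Define $F_b\gamma(a):=\mathcal F(\gamma(a,\cdot))$ pointwise in $a$. Then $F_b$ acts as the constant unitary $\mathcal F$ on the Bochner space $L^2(\nu;L^2(\R;\outY))$, and is therefore unitary onto $L^2(\nu;L^2(\R,\dd\omega/2\pi;\outY))$. The norm identity follows immediately from Tonelli.

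The substantive work is measurability. First I would justify the identification
\[
L^2(\nu\otimes\dd b;\outY)\cong L^2(\nu;L^2(\R;\outY)).
\]
Both sides are completions of the span of $\mathbf 1_E(a)\mathbf 1_I(b)y$ with $\nu(E)<\infty$. Sigma-finiteness of $\nu$ and separability of $\outY$ give density of this span on each side, and Fubini--Tonelli shows that the two norms agree on it. For such elementary tensors the transform is explicit,
\[
\gamma^\sharp(a,\omega)=\mathbf 1_E(a)\,\mathbf 1_I^\sharp(\omega)\,y,
\]
which is jointly measurable. For general $\gamma$, I would take elementary $\gamma_n\to\gamma$ in $L^2$. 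The images $\gamma_n^\sharp$ are then Cauchy in $L^2(\nu\otimes\dd\omega/2\pi;\outY)$. Passing to a subsequence that converges almost everywhere in the product measure yields a jointly strongly measurable limit $\gamma^\sharp$.

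It remains to identify the sections of this limit. By Fubini, a further subsequence gives $\gamma_n(a,\cdot)\to\gamma(a,\cdot)$ in $L^2(\R)$ for $\nu$-a.e.\ $a$, and $\gamma_n^\sharp(a,\cdot)\to\gamma^\sharp(a,\cdot)$ in $L^2$ as well. To obtain these, use $\sum\|\gamma_{n+1}-\gamma_n\|^2<\infty$ after thinning, together with monotone convergence. Continuity of $\mathcal F$ then shows that the section $\gamma^\sharp(a,\cdot)$ equals $\mathcal F\gamma(a,\cdot)$ for $\nu$-a.e.\ $a$.

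Uniqueness almost everywhere follows by the same Fubini argument. Two jointly measurable representatives whose sections agree for a.e.\ $a$ differ on a product-null set, by Tonelli applied to the indicator of the set where they differ.

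The main obstacle is this matching of the almost-everywhere subsequences across the two variables, that is, producing a single representative that is both jointly measurable and has the correct sections. The plan is to handle it by the diagonal subsequence choice described above rather than through abstract tensor-product theory.
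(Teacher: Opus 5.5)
Your proposal is correct and follows the paper's route: identify $L^2(\nu\otimes\dd b;\outY)$ with the Bochner space $L^2(\nu;L^2(\R;\outY))$, apply the vector-valued Plancherel unitary sectionwise, get the norm identity from Tonelli, and get product-almost-everywhere uniqueness from sectionwise uniqueness plus Fubini; your elementary-tensor approximation simply spells out the ``inverse Fubini identification'' that the paper invokes to produce the jointly measurable representative. One small repair is needed in the section-matching step. Summability of $\|\gamma_{n+1}-\gamma_n\|^2$ only makes consecutive sectionwise differences tend to zero, which does not make the sections Cauchy. Instead, thin so that $\sum_n\|\gamma_n-\gamma\|^2<\infty$; the same thinning serves for $\gamma_n^\sharp-\gamma^\sharp$, which has the same norm. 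Monotone convergence then gives $\sum_n\|\gamma_n(a,\cdot)-\gamma(a,\cdot)\|_{L^2(\R)}^2<\infty$ for $\nu$-almost every $a$, and likewise for the transforms.
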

\begin{proof}
Fubini identifies the two spaces with the Bochner spaces
\[
 L^2(\nu;L^2(\dd b;\outY)),\qquad
 L^2(\nu;L^2(\dd\omega/(2\pi);\outY)).
\]
Apply the Hilbert-valued
one-dimensional Plancherel unitary to each section.  This preserves
strong measurability by approximation with simple functions, and
preserves the squared norm by Tonelli.  The inverse operation is the
sectionwise inverse Fourier transform.  The inverse Fubini
identification supplies a jointly measurable representative;
sectionwise uniqueness and Fubini give product almost-everywhere
uniqueness.  A square-integrable coefficient need not be integrable
in the bias, so its Fourier transform cannot in general be computed by
an absolutely convergent integral.  The transform and its inverse are
defined here through convergence in the \(L^2\) norm; no pointwise
convergence of a Fourier integral is asserted.
\end{proof}

This partial transform yields the following coefficient operator.
Its scaled-isometry property is proved immediately below.

\begin{dfn}[The coefficient operator]
\label{def:spectral-coefficient}
Let \(\rho\) be \(\alpha\)-admissible and let \(g:\cH\to\C\) be Borel with
\(g\in L^2(\nu_\alpha)\).  The coefficient \(W_\rho g\in L^2(\lambda_\alpha)\)
is the function whose partial Fourier transform in the bias is
\begin{equation}
  (W_\rho g)^\sharp(a,\omega)=\rho^\sharp(-\omega)\,g(\omega a).
  \label{eq:spectral-coefficient}
\end{equation}
For every such \(g\), the following inverse Fourier integral converges
absolutely for \(\nu_\alpha\)-almost every \(a\), and defines a
representative for all \(b\in\R\):
\begin{equation}
  \gamma_g(a,b):=W_\rho g(a,b)
  =\frac1{2\pi}\int_\R\rho^\sharp(-\omega)\,g(\omega a)\,
  e^{i\omega b}\,\dd\omega .
  \label{eq:spectral-coefficient-formula}
\end{equation}
\end{dfn}

\begin{lem}[The coefficient operator is a scaled isometry]
\label{lem:coefficient-isometry}
\(W_\rho:L^2(\nu_\alpha)\to L^2(\lambda_\alpha)\) is well defined,
independent of the Borel representative of \(g\), and
\(\|W_\rho g\|_{L^2(\lambda_\alpha)}^2=\iiprod{\rho,\rho}_{\alpha}\|g\|_{L^2(\nu_\alpha)}^2\).
For every \(g\in L^2(\nu_\alpha)\),
\(\rho^\sharp(-\,\cdot\,)g(\,\cdot\,a)\in L^1(\R)\) for
\(\nu_\alpha\)-almost every \(a\), so
\cref{eq:spectral-coefficient-formula} holds without an additional
\(L^1(\nu_\alpha)\) hypothesis.
\end{lem}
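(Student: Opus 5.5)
The plan is to compute the $L^2(\nu_\alpha\otimes\dd\omega/(2\pi))$ norm of the function $G(a,\omega):=\rho^\sharp(-\omega)g(\omega a)$ directly. We then pull it back through the unitary $F_b$ of \cref{lem:partial-fourier-l2}, which defines $W_\rho g:=F_b^{-1}G$ and gives the isometry at once. First, measurability: $(a,\omega)\mapsto\omega a$ is continuous and $g$ is Borel, so $G$ is jointly Borel on $\cH\times\R$. If $g=g'$ $\nu_\alpha$-a.e., then for each fixed $\omega\neq0$ the set $\{a: g(\omega a)\neq g'(\omega a)\}$ is the preimage of a $\nu_\alpha$-null set under $D_\omega$. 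By \cref{eq:homogeneous-change} applied to its indicator, that set is $\nu_\alpha$-null. Tonelli then shows $G=G'$ almost everywhere in the product measure, which gives independence of the representative.

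The key step is the norm computation via Tonelli and the homogeneity \cref{eq:homogeneous-change}:
\[
\int_\R\int_\cH|\rho^\sharp(-\omega)|^2|g(\omega a)|^2\,\nu_\alpha(\dd a)\frac{\dd\omega}{2\pi}
=\frac1{2\pi}\int_\R|\rho^\sharp(-\omega)|^2|\omega|^{-\alpha}\,\dd\omega\;\|g\|_{L^2(\nu_\alpha)}^2 .
\]
The right side equals $\iiprod{\rho,\rho}_\alpha\|g\|^2_{L^2(\nu_\alpha)}$, because $\rho$ is real and so $|\rho^\sharp(-\omega)|=|\rho^\sharp(\omega)|$. Here $\omega=0$ is a Lebesgue-null set, so it can be ignored. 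Since $\nu_\alpha$ is sigma-finite (\cref{lem:homogeneous-mixture}), the unitary $F_b$ from \cref{lem:partial-fourier-l2} applies. So $W_\rho g:=F_b^{-1}G$ lies in $L^2(\lambda_\alpha)$ and satisfies the stated norm identity. Linearity is immediate.

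For the absolute convergence claim, I would use Cauchy--Schwarz in $\omega$ with the weight $|\omega|^{\alpha}$:
\[
\int_\R|\rho^\sharp(-\omega)||g(\omega a)|\,\dd\omega\le\Bigl(\int_\R|\rho^\sharp(\omega)|^2|\omega|^{-\alpha}\dd\omega\Bigr)^{1/2}\Bigl(\int_\R|g(\omega a)|^2|\omega|^{\alpha}\dd\omega\Bigr)^{1/2}.
\]
The first factor is finite by admissibility. The second factor is where I expect the main obstacle: the $|\omega|^{\alpha}$ weight is not directly controlled by $\|g\|_{L^2(\nu_\alpha)}$. I would first try to avoid that weight altogether. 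Instead, I would show that $\int_\R|\rho^\sharp(-\omega)|^2|g(\omega a)|^2\dd\omega<\infty$ for a.e.\ $a$, which follows from the Tonelli computation above, and combine it with the fact that $\rho^\sharp\in\cS$ is also integrable. This suggests applying Cauchy--Schwarz in the split form $|\rho^\sharp|\,|g|=|\rho^\sharp|^{1/2}\cdot|\rho^\sharp|^{1/2}|g|$, which gives the bound $\|\rho^\sharp\|_{L^1}^{1/2}\bigl(\int|\rho^\sharp(-\omega)||g(\omega a)|^2\dd\omega\bigr)^{1/2}$. The remaining integral is finite for a.e.\ $a$ if $\int_\cH\int_\R|\rho^\sharp(\omega)||g(\omega a)|^2\dd\omega\,\nu_\alpha(\dd a)=\|g\|^2\int|\rho^\sharp||\omega|^{-\alpha}\dd\omega<\infty$. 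That requires $\int|\rho^\sharp(\omega)||\omega|^{-\alpha}\dd\omega<\infty$, which holds trivially for band-pass filters but for a general admissible $\rho$ needs checking near $\omega=0$. If this fails, I would fall back on disintegrating $\nu_\alpha$ along rays: decompose $\nu_\alpha$ in polar form via the mixture and homogeneity, so that finiteness of $\int|g(\omega a)|^2|\omega|^{\alpha-1}\dd\omega$ holds for a.e.\ $a$ on the unit-ray parametrization. With that in hand, Cauchy--Schwarz against $|\rho^\sharp(\omega)|^2|\omega|^{1-\alpha}$ closes the argument, using $|\rho^\sharp(\omega)|^2|\omega|^{-\alpha}\in L^1$ and the decay of $\rho^\sharp$ at infinity. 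Once absolute convergence holds, the pointwise inverse Fourier integral agrees with the $L^2$ inverse sectionwise for a.e.\ $a$. This identifies \cref{eq:spectral-coefficient-formula} as a representative, defined for all $b$.
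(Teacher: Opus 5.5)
Your measurability, representative-independence and norm computations are essentially the paper's argument: Tonelli with \cref{eq:homogeneous-change}, then the unitary of \cref{lem:partial-fourier-l2}. You diverge from the paper in the $L^1$ claim.

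Your second attempt does not cover all admissible filters. For the Mexican hat of \cref{ex:mexican-hat} with $3\le\alpha<5$, one has $\rho^\sharp(\omega)\sim\sqrt{2\pi}\,\omega^2$ at the origin, so $\int_\R|\rho^\sharp(\omega)||\omega|^{-\alpha}\,\dd\omega=\infty$ although $\iiprod{\rho,\rho}_{\alpha}<\infty$. The fallback is therefore needed. Its key assertion cannot come from Tonelli, because
\[
 \int_\cH\int_\R|g(\omega a)|^2|\omega|^{\alpha-1}\,\dd\omega\,\nu_\alpha(\dd a)
 =\|g\|_{L^2(\nu_\alpha)}^2\int_\R|\omega|^{-1}\,\dd\omega=\infty .
\]
The assertion is nevertheless true, and no unit sphere is required: the mixture is itself a polar decomposition with angular part $\mu_P$. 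Since $\mu_{2sP}$ is the law of $\sqrt{2s}\,X$ with $X\sim\mu_P$, \cref{lem:mixture-integration} and the substitution $r=\sqrt{2s}$ give
\[
 \int_\cH F\,\dd\nu_\alpha
 =2^{1-\alpha/2}\int_\cH\int_0^\infty F(rx)\,r^{\alpha-1}\,\dd r\,\mu_P(\dd x)
 \qquad\text{for Borel } F\ge0 .
\]
Take $F=|g|^2$ and use the symmetry of $\mu_P$. Then the Borel function $\Phi(x):=\int_\R|g(\omega x)|^2|\omega|^{\alpha-1}\,\dd\omega$ is finite $\mu_P$-almost everywhere. Since $\Phi(ta)=|t|^{-\alpha}\Phi(a)$ for $t\ne0$, the set $\{\Phi=\infty\}$ is invariant under nonzero dilations. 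The same formula with $F=\mathbf 1_{\{\Phi=\infty\}}$ then shows that this set is $\nu_\alpha$-null. Your Cauchy--Schwarz step against $|\rho^\sharp(\omega)|^2|\omega|^{1-\alpha}$ finishes the proof.

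With this filled in, your proof is correct but takes a different route from the paper's. The paper avoids the disintegration by choosing the weight so that the $\nu_\alpha$-integral is finite outright. Cauchy--Schwarz with a weight $w$ succeeds whenever
\[
 \int_\R|\rho^\sharp|^2w^{-1}\,\dd\omega<\infty
 \quad\text{and}\quad
 \int_\R w(\omega)|\omega|^{-\alpha}\,\dd\omega<\infty,
\]
since Tonelli and homogeneity give $\int_\cH\int_\R|g(\omega a)|^2w(\omega)\,\dd\omega\,\nu_\alpha(\dd a)=\|g\|_{L^2(\nu_\alpha)}^2\int_\R w(\omega)|\omega|^{-\alpha}\,\dd\omega$.

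Your weights $|\omega|^{\alpha}$ and $|\omega|^{\alpha-1}$ violate the second condition. The latter fails only logarithmically at both ends, which is why a ray-wise argument can rescue it. Your split $w=|\rho^\sharp|$ violates the second condition for the filter above. The paper takes $w(\omega)=|\omega|^{\alpha}/(1+\omega^2)$: the first integral is finite by admissibility and Schwartz decay, and the second equals $\pi$.

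The paper's route uses only homogeneity and sigma-finiteness, so it works for any homogeneous direction measure, and its majorant is itself in $L^2(\nu_\alpha)$. Your route relies on the Gaussian-mixture structure and gives only almost-everywhere finiteness, since $\int_\cH\Phi\,\dd\nu_\alpha=\infty$ whenever $g\ne0$.
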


\begin{proof}
The map \((a,\omega)\mapsto g(\omega a)\) is Borel, and by
\cref{eq:homogeneous-change} and Tonelli,
\[
  \frac1{2\pi}\int_\R\int_\cH|\rho^\sharp(\omega)|^2|g(\omega a)|^2
  \,\nu_\alpha(\dd a)\,\dd\omega
  =\iiprod{\rho,\rho}_{\alpha}\|g\|_{L^2(\nu_\alpha)}^2 .
\]
This also proves representative independence by applying the identity
to the difference of two representatives, and
\cref{lem:partial-fourier-l2} gives the coefficient.  To justify its
integral formula, put
\[
 A_\rho^2=\int_\R|\rho^\sharp(\omega)|^2|\omega|^{-\alpha}
                      (1+\omega^2)\,\dd\omega<\infty,
 \qquad
 B_g(a)^2=\int_\R|g(\omega a)|^2
                  \frac{|\omega|^\alpha}{1+\omega^2}\,\dd\omega.
\]
The first integral is finite by admissibility near zero and Schwartz
decay at infinity.  Homogeneity and Tonelli give
\(\int_\cH B_g(a)^2\nu_\alpha(\dd a)=\pi\|g\|_2^2\).
Cauchy--Schwarz bounds
\(\int|\rho^\sharp(-\omega)g(\omega a)|\dd\omega\)
by \(A_\rho B_g(a)<\infty\) for almost every \(a\).  Fourier inversion
for an \(L^1\cap L^2\) frequency section proves the formula.
\end{proof}

The construction and norm identity require only
\(\iiprod{\rho,\rho}_{\alpha}<\infty\); its positivity makes the scaled isometry
injective.  The additional condition \(g\in L^1(\nu_\alpha)\) enters the
representation theorem, where it defines a bounded continuous target
and justifies synthesis.

\begin{lem}[The adjoint coefficient operator]
\label{lem:coefficient-adjoint}
For an \(\alpha\)-admissible \(\rho\), let
\[
 W_\rho^*:L^2(\lambda_\alpha)\longrightarrow L^2(\nu_\alpha)
\]
be the Hilbert adjoint of the coefficient operator \(W_\rho\).
Writing \(C=\iiprod{\rho,\rho}_{\alpha}\), one has, for every
\(\gamma\in L^2(\lambda_\alpha)\),
\[
 W_\rho^*W_\rho=C\Id,
 \qquad
 \|W_\rho^*\gamma\|_{L^2(\nu_\alpha)}
 \le\sqrt C\|\gamma\|_{L^2(\lambda_\alpha)}.
\]
For any jointly measurable representative \(\gamma^\sharp\) of the
partial Fourier transform supplied by \cref{lem:partial-fourier-l2},
its explicit formula is
\begin{equation}
 W_\rho^*\gamma(\xi)
 =\frac1{2\pi}\int_{\R\setminus\{0\}}\overline{\rho^\sharp(-\omega)}
       |\omega|^{-\alpha}\gamma^\sharp(\xi/\omega,\omega)\,\dd\omega,
 \label{eq:ray-average}
\end{equation}
where integration is with respect to Lebesgue measure on
\(\R\setminus\{0\}\).  The integral converges absolutely for
\(\nu_\alpha\)-almost every \(\xi\), and its \(L^2\) class is independent
of that representative.
\end{lem}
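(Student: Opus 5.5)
The first two claims are abstract consequences of \cref{lem:coefficient-isometry}, and the explicit formula \cref{eq:ray-average} comes from computing the adjoint through the partial Fourier transform of \cref{lem:partial-fourier-l2}.

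\textbf{The identity and the bound.} By \cref{lem:coefficient-isometry}, \(\|W_\rho g\|^2=C\|g\|^2\) for every \(g\in L^2(\nu_\alpha)\). Polarization turns this into \(\langle W_\rho g,W_\rho h\rangle=C\langle g,h\rangle\) for all \(g,h\), which is exactly \(W_\rho^*W_\rho=C\Id\). Since \(W_\rho\) is bounded with \(\|W_\rho\|=\sqrt C\), the adjoint is bounded with the same norm, and this gives the stated bound on \(\|W_\rho^*\gamma\|\).

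\textbf{The explicit formula.} Fix \(\gamma\in L^2(\lambda_\alpha)\) and \(g\in L^2(\nu_\alpha)\).
\begin{enumerate}
\item Apply the unitary \(F_b\) of \cref{lem:partial-fourier-l2} with \(\nu=\nu_\alpha\). This moves the pairing to the frequency side:
\[
\langle W_\rho g,\gamma\rangle_{L^2(\lambda_\alpha)}=\frac1{2\pi}\int_\cH\int_\R \rho^\sharp(-\omega)\,g(\omega a)\,\overline{\gamma^\sharp(a,\omega)}\,\dd\omega\,\nu_\alpha(\dd a).
\]
\item Check absolute integrability of this double integral. The integrand is the product of \(|\rho^\sharp(-\omega)g(\omega a)|\) and \(|\gamma^\sharp(a,\omega)|\). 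The first factor is in \(L^2(\nu_\alpha\otimes \dd\omega)\) by the Tonelli computation in \cref{lem:coefficient-isometry}, and the second is in the same space by \cref{lem:partial-fourier-l2}. Cauchy--Schwarz therefore makes the integrand integrable, so Fubini applies.
\item For each fixed \(\omega\ne0\), change variables \(\xi=\omega a\) in the \(a\)-integral using the homogeneity identity \cref{eq:homogeneous-change}, applied to the function \(a\mapsto \rho^\sharp(-\omega)g(\omega a)\overline{\gamma^\sharp(a,\omega)}\). In the new variable the integrand becomes \(F(\xi)=\rho^\sharp(-\omega) g(\xi)\overline{\gamma^\sharp(\xi/\omega,\omega)}\), and the substitution produces the factor \(|\omega|^{-\alpha}\).
\item The line \(\omega=0\) is Lebesgue-null, so dropping it costs nothing. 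Swapping the integrals once more gives
\[
\langle W_\rho g,\gamma\rangle=\int_\cH g(\xi)\,\overline{\frac1{2\pi}\int_{\R\setminus\{0\}}\overline{\rho^\sharp(-\omega)}|\omega|^{-\alpha}\gamma^\sharp(\xi/\omega,\omega)\,\dd\omega}\;\nu_\alpha(\dd\xi),
\]
which identifies \(W_\rho^*\gamma\) with the right-hand side of \cref{eq:ray-average}.
\end{enumerate}

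\textbf{Justifying the second swap (main obstacle).} Step 4 needs the inner ray integral to converge absolutely for \(\nu_\alpha\)-a.e. \(\xi\), and this does not follow from the first Fubini step alone. To get it, set
\[
\Phi(\xi)=\int|\rho^\sharp(-\omega)||\omega|^{-\alpha}|\gamma^\sharp(\xi/\omega,\omega)|\,\dd\omega .
\]
Split the weight as \(|\omega|^{-\alpha}=|\omega|^{-\alpha/2}(1+\omega^2)^{1/2}\cdot|\omega|^{-\alpha/2}(1+\omega^2)^{-1/2}\) and apply Cauchy--Schwarz in \(\omega\).
\begin{itemize}
\item The first factor, \(\int|\rho^\sharp|^2|\omega|^{-\alpha}(1+\omega^2)\,\dd\omega\), is the finite constant \(A_\rho^2\) from \cref{lem:coefficient-isometry}.
\item The second factor, \(\int|\gamma^\sharp(\xi/\omega,\omega)|^2|\omega|^{-\alpha}(1+\omega^2)^{-1}\,\dd\omega\), is not bounded pointwise, but integrating it in \(\xi\) and reversing the homogeneity substitution (Tonelli is valid since everything is nonnegative) gives
\[
\int_\R\int_\cH|\gamma^\sharp(a,\omega)|^2\,\nu_\alpha(\dd a)\,\frac{\dd\omega}{1+\omega^2}\le 2\pi\|\gamma\|^2<\infty .
\]
\end{itemize}
Hence the second factor is finite \(\nu_\alpha\)-a.e., so \(\Phi<\infty\) a.e. This gives the a.e. absolute convergence claimed in the statement. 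It also shows that the \(\xi\)-integrand in step 4 is dominated by \(|g(\xi)|\Phi(\xi)\). Tonelli applied to the same nonnegative integrand shows this bound is integrable, so the second swap is legitimate.

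\textbf{Representative independence.} Two jointly measurable representatives of \(\gamma^\sharp\) agree off a \(\nu_\alpha\otimes\dd\omega\)-null set \(N\). Under the change of variables \((a,\omega)\mapsto(\omega a,\omega)\), homogeneity maps \(N\) to a null set of \(\nu_\alpha\otimes|\omega|^{-\alpha}\dd\omega\). So for a.e. \(\xi\) the two ray integrands agree for a.e. \(\omega\), and the resulting \(L^2\) classes of \(W_\rho^*\gamma\) coincide.
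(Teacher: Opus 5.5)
Your proposal is correct and follows essentially the same route as the paper: polarization of the scaled isometry gives $W_\rho^*W_\rho=C\Id$, and a weighted Cauchy--Schwarz on rays, combined with the homogeneity substitution $\xi=\omega a$ and Fubini, identifies \cref{eq:ray-average} as the adjoint; the same change of variables transports product null sets. The only difference is the weight split. The paper writes $|\omega|^{-\alpha}=|\omega|^{-\alpha/2}\cdot|\omega|^{-\alpha/2}$, which needs no factor $(1+\omega^2)$ because $\gamma^\sharp$ is already jointly square integrable, and which gives the sharp bound $\sqrt C\,\|\gamma\|$ for the formula directly. Your $A_\rho$ split yields only a non-sharp constant, which is harmless because you obtain the norm bound abstractly.
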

\begin{proof}
Polarizing \cref{lem:coefficient-isometry} gives
\(W_\rho^*W_\rho=C\Id\), and the adjoint has the same norm as the
operator.  For the formula, homogeneity and
\cref{lem:partial-fourier-l2} give
\[
 \frac1{2\pi}\int_\cH\int_{\R\setminus\{0\}} |\omega|^{-\alpha}
       |\gamma^\sharp(\xi/\omega,\omega)|^2
       \,\dd\omega\,\nu_\alpha(\dd\xi)=\|\gamma\|_2^2.
\]
Weighted Cauchy--Schwarz proves absolute convergence and the same norm
bound for the formula; the change of variables also preserves product
null sets.  Fubini and \(\xi=\omega a\) then give
\(\langle W_\rho^*\gamma,g\rangle
=\langle\gamma,W_\rho g\rangle\), with absolute integral bounded by
\(\sqrt C\|\gamma\|_2\|g\|_2\).  Thus the formula is the adjoint.

\end{proof}

Following the terminology for adjoint transforms in tomography
\citep{Natterer1986book}, we call \(W_\rho^*\) the
\emph{backprojection operator}.  Here it combines coefficient data
along lines through the origin and takes values in the frequency
domain.

In this notation the Fourier-slice identity reads
\begin{equation}
  R_\rho f=W_\rho\,F_Qf,
  \qquad f\in L^1(\mu_Q),\ F_Qf\in L^2(\nu_\alpha):
  \label{eq:slice-as-coefficient}
\end{equation}
the ridgelet transform of \(f\) is the coefficient of its Gaussian-weighted
Fourier transform.

\subsection{The Hilbert space \texorpdfstring{\(\cE_\alpha\)}{E-alpha}}
\label{subsec:spectral-space}

\begin{dfn}[The Hilbert space \(\cE_\alpha\)]
\label{def:spectral-space}
Set
\begin{align}
  \cD_\alpha
  &:=\left\{f\in L^2(\cH,\mu_Q):
       F_Qf\in L^2(\cH,\nu_\alpha)\right\},
       \label{eq:concrete-core}\\
  \langle f,g\rangle_{\cE_\alpha}
  &:=\int_\cH F_Qf(\xi)\overline{F_Qg(\xi)}
       \,\nu_\alpha(\dd\xi),
       \qquad f,g\in\cD_\alpha .
       \label{eq:spectral-inner-product}
\end{align}
The Hilbert space \(\cE_\alpha\) is the completion of
\(\cD_\alpha\) in the norm induced by \cref{eq:spectral-inner-product},
and
\begin{equation}
  \cK_\alpha
  :=\overline{F_Q(\cD_\alpha)}^{\,L^2(\nu_\alpha)}
  \label{eq:spectral-range}
\end{equation}
is the closure of the range of \(F_Q\) on \(\cD_\alpha\).
\end{dfn}

\begin{lem}[Positivity and the unitary extension]
\label{lem:spectral-unitary}
The form \cref{eq:spectral-inner-product} is positive definite on
\(\cD_\alpha\). The restriction of \(F_Q\) to \(\cD_\alpha\)
extends uniquely to a unitary map
\(F_Q:\cE_\alpha\to\cK_\alpha\), denoted by the same symbol.
\end{lem}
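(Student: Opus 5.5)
The proof has two parts. Positive definiteness amounts to injectivity of \(F_Q\) on \(\cD_\alpha\) modulo \(\mu_Q\)-null functions, together with the fact that \(\nu_\alpha\) has full support. The unitary extension is then the standard completion argument.

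\textbf{Positive definiteness.} Let \(f\in\cD_\alpha\) with \(\langle f,f\rangle_{\cE_\alpha}=0\).

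1. Then \(F_Qf=0\) for \(\nu_\alpha\)-almost every \(\xi\).
2. Since \(f\in L^2(\mu_Q)\subset L^1(\mu_Q)\), the function \(F_Qf=\widehat{f\mu_Q}\) is continuous on \(\cH\). This follows from dominated convergence, as already noted after \cref{eq:gaussian-analysis}.
3. A continuous function vanishing \(\nu_\alpha\)-almost everywhere vanishes everywhere, because \(\nu_\alpha\) has full support by \cref{lem:homogeneous-mixture}. Indeed, the set \(\{F_Qf\neq0\}\) is open; if it were nonempty, it would have positive \(\nu_\alpha\)-mass.
4. Hence the complex finite Borel measure \(f\mu_Q\) has identically vanishing Fourier transform on \(\cH\).

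The key step is then uniqueness of Fourier transforms of finite complex Borel measures on a separable Hilbert space. I would prove it by reduction to finite dimensions.

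1. For each finite-dimensional subspace \(V=\operatorname{span}(e_1,\dots,e_n)\), the pushforward of \(f\mu_Q\) under the orthogonal projection \(\pi_V\) is a finite measure on \(V\cong\R^n\).
2. Its Fourier transform is \(F_Qf\) restricted to \(V\), which is zero.
3. By classical uniqueness on \(\R^n\) (splitting into real and imaginary parts, then positive and negative parts, and comparing characteristic functions), the pushforward is zero.
4. Hence \(\int_\cH \phi(\pi_V x)f(x)\,\mu_Q(\dd x)=0\) for every bounded continuous cylinder function \(\phi\circ\pi_V\).
5. Such cylinder functions generate \(\cB(\cH)\), since \(\cH\) is separable and the Borel sigma-algebra is generated by the coordinate functionals. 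They also form a multiplicative class.
6. A monotone-class or \(\pi\)-\(\lambda\) argument then gives \(\int_E f\,\dd\mu_Q=0\) for all Borel sets \(E\). Therefore \(f=0\) \(\mu_Q\)-almost everywhere, i.e.\ \(f=0\) in \(L^2(\mu_Q)\).

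Getting this measure-theoretic uniqueness cleanly in infinite dimensions, and making sure the argument handles complex measures, is the main obstacle. Alternatively, one could cite uniqueness of Fourier transforms of measures on separable Hilbert spaces from \citet{Bogachev1998book}.

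\textbf{Unitary extension.} By definition of the inner product, the map \(F_Q:\cD_\alpha\to L^2(\nu_\alpha)\) is linear and isometric for the \(\cE_\alpha\)-norm. The form is positive definite, so \(\cD_\alpha\) is a genuine inner product space. This means \(\cE_\alpha\) is its completion without any quotient, and \(\cD_\alpha\) is dense in it.

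1. A linear isometry from a dense subspace into the complete space \(L^2(\nu_\alpha)\) extends uniquely by continuity to an isometry on \(\cE_\alpha\). Uniqueness holds because any continuous extension is determined on a dense set.
2. The image of an isometry from a complete space is complete, hence closed in \(L^2(\nu_\alpha)\).
3. The image contains \(F_Q(\cD_\alpha)\) and lies in its closure, so it equals \(\cK_\alpha\) by \cref{eq:spectral-range}.

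Hence the extension \(F_Q:\cE_\alpha\to\cK_\alpha\) is a surjective isometry, that is, unitary. It preserves inner products by polarization.
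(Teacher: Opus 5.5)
Your proposal is correct and follows the paper's own proof. Positivity comes from continuity of \(F_Qf\), full support of \(\nu_\alpha\), and Fourier uniqueness for the finite complex measure \(f\mu_Q\) via finite-dimensional projections and cylinder sets; the unitary map is the isometric extension to the completion, whose closed range is \(\cK_\alpha\). You simply spell out the \(\pi\)-\(\lambda\) step and the surjectivity argument that the paper's terse proof leaves implicit.
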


\begin{proof}
If the norm of \(f\in\cD_\alpha\) vanishes, then \(F_Qf=0\)
\(\nu_\alpha\)-almost everywhere; continuity of \(F_Qf\) and full support
of \(\nu_\alpha\) make it zero everywhere, and the Fourier transform
determines finite complex Borel measures on a separable Hilbert space
(finite-dimensional projections determine all cylinder sets, which
generate \(\cB(\cH)\)), so \(f\mu_Q=0\) and \(f=0\) in \(L^2(\mu_Q)\).
Thus \(F_Q\) is an isometry from the pre-Hilbert space \(\cD_\alpha\)
onto a dense subspace of \(\cK_\alpha\), and it extends to the completion.
\end{proof}

The completion matters: the norm of \(\cE_\alpha\) need not control
the \(L^2(\mu_Q)\)-norm, so elements of \(\cE_\alpha\) are abstract
limits, while elements of the dense subspace \(\cD_\alpha\) are functions.
On this core the extended \(F_Q\) agrees with \(\widehat{f\mu_Q}\)
as an \(L^2(\nu_\alpha)\) equivalence class. For a general element
of \(\cE_\alpha\), the extension supplies such a class; no pointwise
Gaussian integral representation is assumed.

\begin{lem}[Gaussian decay with polynomial weights]
\label{lem:gaussian-decay}
For every \(t>0\), \(\alpha>0\), and integer \(m\ge0\),
\begin{equation}
  \int_\cH\|\xi\|^{2m}e^{-t\langle Q\xi,\xi\rangle}
  \,\nu_\alpha(\dd\xi)<\infty.
  \label{eq:gaussian-decay}
\end{equation}
Consequently, if \(f\in L^2(\mu_Q)\) and
\(|F_Qf(\xi)|\le C(1+\|\xi\|)^pe^{-t\langle Q\xi,\xi\rangle/2}\) for
some \(C,p,t>0\), then \(f\in\cD_\alpha\) for every \(\alpha>0\).
\end{lem}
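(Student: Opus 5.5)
We reduce the integral to a scalar Gaussian computation on each mixture component, using \cref{lem:mixture-integration}. That lemma gives
\[
 \int_\cH\|\xi\|^{2m}e^{-t\langle Q\xi,\xi\rangle}\,\nu_\alpha(\dd\xi)
 =\int_0^\infty \EE\bigl[\|X_s\|^{2m}e^{-t\langle QX_s,X_s\rangle}\bigr]\,s^{\alpha/2-1}\,\dd s,
\]
with \(X_s=\sqrt{2s}X\) and \(X\sim\mu_P\) realized as in \cref{app:gaussian-details}.

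\emph{Small \(s\).} For \(s\le1\) we bound the exponential by \(1\) and use \cref{eq:gaussian-norm-moments}. The integrand is then at most \((2s)^m(\tr P)^m(2m-1)!!\,s^{\alpha/2-1}\), which is integrable near \(0\) because \(\alpha>0\).

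\emph{Large \(s\).} This is the main point. We first apply Cauchy--Schwarz:
\[
 \EE\bigl[\|X_s\|^{2m}e^{-2ts\langle QX,X\rangle}\bigr]
 \le (2s)^m\bigl(\EE\|X\|^{4m}\bigr)^{1/2}\bigl(\EE e^{-4ts\langle QX,X\rangle}\bigr)^{1/2}.
\]
It remains to control the Laplace transform of the quadratic form. Since \(\langle QX,X\rangle\ge q\,Y\) with \(Y\sim\chi^2_1\) is not available (the operators \(P\) and \(Q\) need not commute), we use finitely many coordinates instead. By the Gaussian Laplace-transform formula, with \(\lambda_j>0\) the eigenvalues of the trace-class, injective, positive operator \(P^{1/2}QP^{1/2}\),
\[
 \EE e^{-u\langle QX,X\rangle}=\prod_j(1+2u\lambda_j)^{-1/2}.
\]
Alternatively one can argue directly: \(\langle QX,X\rangle=\|Q^{1/2}X\|^2\), and \(Q^{1/2}X\) is centred Gaussian with covariance \(Q^{1/2}PQ^{1/2}\), which is injective. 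Keeping only the first \(k\) factors gives the bound \(\le C_k u^{-k/2}\). Hence the \(s\)-integrand for \(s\ge1\) is at most \(C\,s^{m}s^{-k/4}s^{\alpha/2-1}\). Because \(\dim\cH=\infty\), infinitely many \(\lambda_j\) are positive, so we may choose \(k>4m+2\alpha\). This makes the tail integrable and proves \cref{eq:gaussian-decay}. The step needing care is the justification of the product formula, i.e.\ the diagonalization of \(Q^{1/2}PQ^{1/2}\) and the application of independence to the resulting Gaussian series. We handle it by truncating to finite rank and passing to the limit by monotone convergence.

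\emph{The consequence.} Given the bound on \(F_Qf\), we estimate
\[
 |F_Qf(\xi)|^2\le C^2(1+\|\xi\|)^{2p}e^{-t\langle Q\xi,\xi\rangle}.
\]
Choose an integer \(m\ge p\), so that \((1+\|\xi\|)^{2p}\le 2^{2m}(1+\|\xi\|^{2m})\). The term coming from \(\|\xi\|^{2m}\) is finite by \cref{eq:gaussian-decay} with this \(m\), and the term coming from the constant \(1\) is finite by \cref{eq:gaussian-decay} with \(m=0\). Therefore \(F_Qf\in L^2(\nu_\alpha)\), and since \(f\in L^2(\mu_Q)\) by hypothesis, \(f\in\cD_\alpha\) by \cref{eq:concrete-core}. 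This holds for every \(\alpha>0\).
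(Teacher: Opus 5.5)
Your proposal is correct and follows essentially the same route as the paper: reduction via \cref{lem:mixture-integration}, Cauchy--Schwarz against the Gaussian norm moments, the product formula for the Laplace transform of \(\langle Q\xi,\xi\rangle\) in terms of the eigenvalues of \(P^{1/2}QP^{1/2}\), and keeping \(k>4m+2\alpha\) factors for \(s\ge1\). The only differences are cosmetic. For \(s\le1\) you drop the exponential, whereas the paper bounds the product by one. In the consequence you write out the polynomial bound that the paper leaves implicit.
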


\begin{proof}
\label{app:proof-gaussian-decay}
Let \(\theta_j>0\) be the eigenvalues of the injective trace-class
operator \(P^{1/2}QP^{1/2}\).  Gaussian integration gives, for
\(u>0\),
\begin{equation}
  \int_\cH e^{-u\langle Q\xi,\xi\rangle}\,\mu_{2sP}(\dd\xi)
  =\prod_j(1+4su\theta_j)^{-1/2};
  \label{eq:gaussian-layer-integral}
\end{equation}
no commutativity of \(P\) and \(Q\) is required, since under
\(\mu_{2sP}\) the quadratic form \(\langle Q\xi,\xi\rangle\) has the law
of \(2s\sum_j\theta_jZ_j^2\).  By \cref{eq:mixture-integration} it
suffices to bound
\(\int_\cH\|\xi\|^{2m}e^{-t\langle Q\xi,\xi\rangle}\mu_{2sP}(\dd\xi)\).
Cauchy--Schwarz, \cref{eq:gaussian-norm-moments} with
\(\xi=\sqrt{2s}X\), and \cref{eq:gaussian-layer-integral} with \(u=2t\)
give
\[
  \int_\cH\|\xi\|^{2m}e^{-t\langle Q\xi,\xi\rangle}\mu_{2sP}(\dd\xi)
  \le(2s)^m\bigl((\tr P)^{2m}(4m-1)!!\bigr)^{1/2}
  \prod_j(1+8st\theta_j)^{-1/4}.
\]
For \(s\le1\) the product is at most one and the bound is
\(Cs^m\).  For \(s\ge1\), retaining \(k\) factors gives
\(\prod_{j\le k}(8st\theta_j)^{-1/4}=C_ks^{-k/4}\).  Choosing
\(k>4m+2\alpha\), both parts are integrable against
\(s^{\alpha/2-1}\dd s\), proving \cref{eq:gaussian-decay}.  For the
consequence, \(|F_Qf|^2\le C^2(1+\|\xi\|)^{2p}e^{-t\langle Q\xi,\xi\rangle}\)
is integrable by the first part.
\end{proof}

\begin{ex}[Elements of \(\cD_\alpha\)]
\label{ex:core-elements}
The constant function \(1\) has \(F_Q1(\xi)=e^{-\langle Q\xi,\xi\rangle/2}\),
so \(1\in\cD_\alpha\) for every \(\alpha>0\) and \(\cE_\alpha\ne\{0\}\).

\end{ex}

\begin{proof}
The Gaussian characteristic functional gives \(F_Q1\), and
\cref{lem:gaussian-decay} with \(m=0\), \(p=0\), \(t=1\) gives
\(1\in\cD_\alpha\).
\end{proof}

Further examples of elements of \(\cD_\alpha\) will be
the non-cylindrical Gaussian targets and Gaussian-activation
operator layers treated in \cref{sec:examples}.

\subsection{Plancherel identity, closed range, and injectivity}
\label{subsec:plancherel}

The following theorem is the first of the main results: it is the
Plancherel identity of the ridgelet transform in infinite dimension.

\begin{thm}[Plancherel identity and injectivity]
\label{thm:B}
Let \(\alpha>0\).
\begin{enumerate}[label=\textup{(\roman*)}]
\item
For \(f,g\in\cD_\alpha\) and \(\alpha\)-admissible \(\rho_1,\rho_2\),
the transforms \(R_{\rho_1}f\) and \(R_{\rho_2}g\) belong to
\(L^2(\lambda_\alpha)\), and
\begin{equation}
  \langle R_{\rho_1}f,R_{\rho_2}g\rangle_{L^2(\lambda_\alpha)}
  =\iiprod{\rho_1,\rho_2}_{\alpha}
   \langle f,g\rangle_{\cE_\alpha}.
  \label{eq:ridgelet-plancherel}
\end{equation}
\item
An \(\alpha\)-admissible \(\rho\) determines a unique bounded extension
\(R_\rho:\cE_\alpha\to L^2(\lambda_\alpha)\) with
\(\|R_\rho f\|^2=\iiprod{\rho,\rho}_{\alpha}\|f\|_{\cE_\alpha}^2\); its range is
closed, and \(R_\rho=W_\rho F_Q\).
\item
If \(\rho\) is \(\alpha\)-admissible and \(f\in L^1(\cH,\mu_Q)\), then
\(R_\rho f=0\) \(\lambda_\alpha\)-almost everywhere implies \(f=0\)
\(\mu_Q\)-almost everywhere.
\end{enumerate}
\end{thm}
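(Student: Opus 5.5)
The plan rests on the Fourier-slice identity \cref{eq:slice-as-coefficient}, \(R_\rho f=W_\rho F_Qf\), combined with the bias-Plancherel \cref{lem:partial-fourier-l2} and the homogeneity \cref{eq:homogeneous-change}. For (i), take \(f,g\in\cD_\alpha\). Then \(F_Qf,F_Qg\in L^2(\nu_\alpha)\), and \(f,g\in L^2(\mu_Q)\subset L^1(\mu_Q)\). By \cref{lem:fourier-slice}, for each fixed \(a\) the section \(R_{\rho_1}f(a,\cdot)\) lies in \(L^1\cap L^2(\R)\) and its Fourier transform is \(\rho_1^\sharp(-\omega)F_Qf(\omega a)\). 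Hence \(R_{\rho_1}f\) agrees with \(W_{\rho_1}F_Qf\), and \cref{lem:coefficient-isometry} puts it in \(L^2(\lambda_\alpha)\). I would then apply the sectionwise Plancherel of \cref{lem:partial-fourier-l2} and Tonelli/Fubini to write
\[
\langle R_{\rho_1}f,R_{\rho_2}g\rangle
=\frac1{2\pi}\int_\R\int_\cH \rho_1^\sharp(-\omega)\overline{\rho_2^\sharp(-\omega)}\,F_Qf(\omega a)\overline{F_Qg(\omega a)}\,\nu_\alpha(\dd a)\,\dd\omega .
\]
The inner integral becomes \(|\omega|^{-\alpha}\langle F_Qf,F_Qg\rangle_{L^2(\nu_\alpha)}\) by \cref{eq:homogeneous-change}, extended from nonnegative to integrable integrands. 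The substitution \(\omega\mapsto-\omega\) then yields \(\iiprod{\rho_1,\rho_2}_\alpha\langle f,g\rangle_{\cE_\alpha}\).

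The main obstacle is justifying Fubini for this signed integrand. I would first dominate it by \(|\rho_1^\sharp||\rho_2^\sharp||F_Qf(\omega a)||F_Qg(\omega a)|\). Homogeneity turns the integral of this bound into \(\int|\rho_1^\sharp\rho_2^\sharp||\omega|^{-\alpha}\dd\omega\cdot\int|F_Qf||F_Qg|\,\dd\nu_\alpha\). Both factors are finite by Cauchy--Schwarz and the two admissibility conditions. Alternatively, I would polarize the diagonal case \(\rho_1=\rho_2\), \(f=g\) already contained in \cref{lem:coefficient-isometry}. Polarization in \(\rho\) is awkward because admissibility is not linear, so the direct domination is the cleaner route.

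For (ii), \(R_\rho=W_\rho F_Q\) on \(\cD_\alpha\) is a scaled isometry by (i). So it extends uniquely by continuity to the completion \(\cE_\alpha\), and the composition of the unitary extension \(F_Q:\cE_\alpha\to\cK_\alpha\) of \cref{lem:spectral-unitary} with the bounded \(W_\rho\) is such an extension, which gives \(R_\rho=W_\rho F_Q\). A scaled isometry with positive constant \(\iiprod{\rho,\rho}_\alpha\) maps Cauchy sequences in the range back to Cauchy sequences in \(\cE_\alpha\). Completeness of \(\cE_\alpha\) then shows that the range is closed.

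For (iii), take \(f\in L^1(\mu_Q)\) with \(R_\rho f=0\) almost everywhere. By \cref{lem:fourier-slice}, \(R_\rho f(a,\cdot)\in L^1(\R)\) for every \(a\), and its Fourier transform \(\rho^\sharp(-\omega)F_Qf(\omega a)\) vanishes for every \(\omega\), for \(\nu_\alpha\)-almost every \(a\). Admissibility makes \(\rho^\sharp\) nonzero on a set of positive measure. Since \(\rho^\sharp\) is continuous, it is nonzero on an open set \(U\). Hence \(F_Qf(\omega a)=0\) for \(\omega\in U\) and almost every \(a\). By \cref{eq:homogeneous-change}, \(F_Qf=0\) \(\nu_\alpha\)-almost everywhere; the null set \(\{\omega a:\ldots\}\) can be handled by Tonelli on the set where \(F_Qf\neq0\). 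Continuity of \(F_Qf\) and full support of \(\nu_\alpha\) (\cref{lem:homogeneous-mixture}) then give \(F_Qf\equiv0\). Uniqueness of Fourier transforms of finite measures, as in \cref{lem:spectral-unitary}, yields \(f\mu_Q=0\), i.e.\ \(f=0\) \(\mu_Q\)-almost everywhere.
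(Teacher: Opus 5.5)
Your proposal is correct and follows the paper's proof essentially step for step. For (i) you use sectionwise Plancherel in the bias with Tonelli, the Fourier-slice identity and homogeneity, and you justify Fubini by the same weighted Cauchy--Schwarz domination as the paper. For (ii) you extend continuously from the dense core, get closed range from the scaled isometry, and obtain \(R_\rho=W_\rho F_Q\) from uniqueness of the extension. For (iii) you use Tonelli, the slice identity at a frequency where \(\rho^\sharp\) does not vanish, homogeneity, continuity with full support of \(\nu_\alpha\), and Fourier uniqueness for finite measures.

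Two small points in (iii). A single \(\omega_0\neq0\) with \(\rho^\sharp(-\omega_0)\neq0\) suffices, which is what the paper uses via homogeneity applied to the indicator of \(\{F_Qf\neq0\}\). Your open set should be \(-U\) rather than \(U\), because of the factor \(\rho^\sharp(-\omega)\).
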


\begin{proof}
\label{app:proof-plancherel}
\label{app:proof-injectivity}
By \cref{lem:fourier-slice}, \(R_{\rho_j}f(a,\cdot)\in L^2(\R)\) for
every \(a\) and \(j=1,2\), so the one-dimensional Plancherel identity
\(\|u\|_{L^2(\R)}^2=\frac1{2\pi}\|u^\sharp\|_{L^2(\R)}^2\) applies in
the bias.  Tonelli, \cref{eq:fourier-slice}, and
\cref{eq:homogeneous-change} give, for \(j=1,2\),
\[
  \|R_{\rho_j}f\|_{L^2(\lambda_\alpha)}^2
  =\frac1{2\pi}\int_\R|\rho_j^\sharp(\omega)|^2
      \int_\cH|F_Qf(\omega a)|^2\,\nu_\alpha(\dd a)\,\dd\omega
  =\iiprod{\rho_j,\rho_j}_{\alpha}\|f\|_{\cE_\alpha}^2<\infty,
\]
where the factor \(\frac1{2\pi}\) is the Plancherel normalization
absorbed in \cref{eq:cross-admissibility}; the same holds for \(g\).
Weighted Cauchy--Schwarz bounds
\(\frac1{2\pi}\int|\rho_1^\sharp\rho_2^\sharp||\omega|^{-\alpha}\dd\omega\)
by \((\iiprod{\rho_1,\rho_1}_{\alpha}\iiprod{\rho_2,\rho_2}_{\alpha})^{1/2}\),
and \(\int_\cH|F_Qf\,F_Qg|\dd\nu_\alpha\le\|f\|_{\cE_\alpha}\|g\|_{\cE_\alpha}\).
These estimates justify Fubini in the cross identity and separate the
frequency integral from the inner product of \(\cE_\alpha\), proving
\cref{eq:ridgelet-plancherel}.  The norm identity extends \(R_\rho\)
from the dense core \(\cD_\alpha\) to its completion, an isometry up to a
nonzero scalar has closed range, and \(R_\rho=W_\rho F_Q\) holds on
the core by \cref{eq:slice-as-coefficient} and extends by continuity of
both sides (\cref{lem:coefficient-isometry}).

\emph{(iii).} For \(f\in L^1(\mu_Q)\), \cref{lem:fourier-slice} gives \(R_\rho f(a,\cdot)\in L^1(\R)\) for every \(a\).  Tonelli applied
to the hypothesis shows that, for \(\nu_\alpha\)-almost every \(a\), this
bias function vanishes almost everywhere, so its Fourier transform is
zero at every frequency.  Choose \(\omega_0\ne0\) with
\(\rho^\sharp(\omega_0)\ne0\); then \cref{eq:fourier-slice} and
\cref{eq:homogeneous-change} imply \(F_Qf=0\) \(\nu_\alpha\)-almost
everywhere.  No finite spectral-norm assumption on \(f\) is used.  Since
\(F_Qf\) is continuous and \(\supp\nu_\alpha=\cH\), it vanishes
everywhere, and Fourier uniqueness for finite measures, as in the proof
of \cref{lem:spectral-unitary}, gives \(f\mu_Q=0\).
\end{proof}

Admissibility in \cref{thm:B} is a condition on the analysis filter.  It
imposes nothing on a synthesis activation such as ReLU, which is paired
with the filter through the distributional constant of \cref{sec:tempered}.
\subsection{Examples of analysis filters}
\label{subsec:analysis-filters}
\label{app:filters}

The analysis filter and the synthesis activation play different roles.
ReLU is a synthesis activation; its Fourier transform is distributional
and it does not satisfy \cref{eq:self-admissibility}.  The following
Schwartz filters satisfy the self-admissibility condition and illustrate
the distinction between admissibility and band-pass support.

\begin{ex}[A band-pass filter for every \(\alpha>0\)]
\label{ex:bandlimited-filter}
Let \(\eta(u)=\exp(-1/(1-u^2))\) for \(|u|<1\) and \(\eta(u)=0\)
otherwise, and define a real even \(\rho_{\mathrm{bp}}\) by
\(\rho_{\mathrm{bp}}^\sharp(\omega):=-\eta(2|\omega|-3)\).  Its Fourier
transform is smooth, nonpositive, nonzero, and supported in
\(\{1\le|\omega|\le2\}\), so \(\rho_{\mathrm{bp}}\in\cS(\R)\) is real and
even, satisfies \cref{eq:bandpass-filter}, and is \(\alpha\)-admissible
for every \(\alpha>0\); the sign makes it admissible for ReLU synthesis
in the sense of \cref{cor:relu-admissible}.  Multiplying by
\((\iiprod{\rho_{\mathrm{bp}},\rho_{\mathrm{bp}}}_{\alpha})^{-1/2}\) normalizes the admissibility
constant to one.
\end{ex}
\begin{proof}
\label{app:proof-bandlimited-filter}
The bump \(\eta\) and all of its derivatives vanish at \(|u|=1\), and the
support stays away from \(\omega=0\), so the absolute value causes no
loss of smoothness.  Fourier inversion maps \(C_c^\infty\) into \(\cS\),
and even real Fourier data give an even real inverse.  On the compact
support, \(|\omega|^{-\alpha}\) is bounded above and below, so
\cref{eq:self-admissibility} is finite and positive.
\end{proof}

\begin{ex}[Mexican hat]
\label{ex:mexican-hat}
The Schwartz function \(\rho_{\mathrm{MH}}(t):=(1-t^2)e^{-t^2/2}\) has
\(\rho_{\mathrm{MH}}^\sharp(\omega)=\sqrt{2\pi}\,\omega^2e^{-\omega^2/2}\).
For \(\alpha>0\) it is \(\alpha\)-admissible in the sense of
\cref{eq:self-admissibility} exactly when \(\alpha<5\), with
\(\iiprod{\rho_{\mathrm{MH}},\rho_{\mathrm{MH}}}_{\alpha}=\Gamma((5-\alpha)/2)\); in particular
\(\iiprod{\rho_{\mathrm{MH}},\rho_{\mathrm{MH}}}_{1}=1\).  It is not band pass, so
\cref{thm:B,thm:C} and \cref{thm:A}(ii) apply when \(\alpha<5\).
The band-pass statements \cref{thm:A}(iii) and
\cref{thm:tempered-reconstruction} do not apply to this filter.
\end{ex}
\begin{proof}
\label{app:proof-mexican-hat}
If \(g(t)=e^{-t^2/2}\), then \(\rho_{\mathrm{MH}}=-g''\), and the
differentiation rule gives the Fourier transform.  Then
\(\iiprod{\rho_{\mathrm{MH}},\rho_{\mathrm{MH}}}_{\alpha}=\int_\R|\omega|^{4-\alpha}e^{-\omega^2}\dd\omega
=\Gamma((5-\alpha)/2)\), convergent at zero exactly when \(4-\alpha>-1\),
that is, when \(\alpha<5\).  Here \(\alpha>0\) is the standing restriction
under which \(\nu_\alpha\) is defined; for \(\alpha\le0\) the integral
converges as well.
\end{proof}

More generally, if \(\rho\in\cS(\R)\) has \(k\) vanishing moments, then
\(\rho^\sharp(\omega)=O(|\omega|^k)\) at zero and
\cref{eq:self-admissibility} is finite whenever \(\alpha<2k+1\).  The
band-pass condition \cref{eq:bandpass-filter} supplies arbitrarily many
vanishing moments at once and is useful for the smooth criterion.  The Sobolev criterion
below permits filters whose Fourier support meets the origin.

 \section{Representation and reconstruction}
\label{sec:reconstruction}

We construct coefficients for targets with spectral densities and derive
integral representations under the stated integrability conditions.
We then identify the correction induced by the weights and give
reconstruction formulas, including the vector-valued extension.

\subsection{Targets with a spectral density}
\label{subsec:representation}

The Fourier-slice identity, read backwards, produces a network from a
spectral density.  For a Borel function \(g\in L^1(\nu_\alpha)\) put
\begin{equation}
  f_g(x):=\int_\cH e^{i\langle x,\xi\rangle}g(\xi)\,\nu_\alpha(\dd\xi),
  \qquad x\in \cH .
  \label{eq:spectral-target}
\end{equation}
Thus \(f_g(x)=\widehat{g\nu_\alpha}(-x)\).  We call \(g\nu_\alpha\)
the spectral measure of \(f_g\), and \(g\) its spectral density with
respect to \(\nu_\alpha\).  The class of such targets is the analogue, for the
reference measure \(\nu_\alpha\), of the class of functions with an
integrable spectral measure used by \citet{Barron1993}.
Its density in \(C(\cH)\), proved in \cref{thm:D}, uses full support and
finiteness of \(\nu_\alpha\) on bounded sets.  The elementary facts
about an individual spectral target require neither condition.

For the band-pass representation in part (iii) of the next theorem,
write \(g_a(\omega):=g(\omega a)\).  For \(a\ne0\), this is the restriction
of \(g\) to the line \(\{\omega a:\omega\in\R\}\) through the origin.
We impose smoothness on \(g_a\) and weighted integrability on its
derivatives.  For a band-pass \(\rho\), fix a symmetric compact set
\(I\subset\R\setminus\{0\}\) containing \(\supp\rho^\sharp\).
The other parts of the theorem do not require this regularity condition.

\begin{lem}[An integrable spectral density defines a unique continuous target]
\label{lem:spectral-target-basic}
Let \(\nu\) be a sigma-finite Borel measure on \(\cH\), let \(\outY\) be a
separable complex Hilbert space, and let \(g\in L^1(\nu;\outY)\).  Then
\(f_g(x):=\int_\cH e^{i\langle x,\xi\rangle}g(\xi)\nu(\dd\xi)\)
belongs to \(C_b(\cH;\outY)\), satisfies
\(\|f_g\|_\infty\le\|g\|_{L^1(\nu;\outY)}\), and determines \(g\)
uniquely up to \(\nu\)-almost-everywhere equality.
Only the Fourier transform of the finite vector measure \(g\nu\) enters:
boundedness, continuity, and uniqueness follow from \(g\in L^1(\nu;\outY)\)
alone, and neither the Gaussian input measure \(\mu_Q\) nor the analysis
filter \(\rho\) plays a role.
\end{lem}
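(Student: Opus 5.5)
The plan is to treat \(f_g\) as the (reflected) Fourier transform of the finite \(\outY\)-valued measure \(M:=g\nu\). Then each of the three claims reduces to a standard fact about Bochner integrals or about Fourier transforms of finite measures on a separable Hilbert space.

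\emph{Definition and bound.} Since \(g\) is strongly measurable with \(\int\|g\|_\outY\,\dd\nu<\infty\) and \(|e^{i\langle x,\xi\rangle}|=1\) is continuous in \(\xi\), the integrand \(\xi\mapsto e^{i\langle x,\xi\rangle}g(\xi)\) is strongly measurable with norm \(\|g(\xi)\|_\outY\). It is therefore Bochner integrable for every \(x\), and \(\|f_g(x)\|_\outY\le\|g\|_{L^1(\nu;\outY)}\). Sigma-finiteness of \(\nu\) is used only to have the Bochner space and Fubini available.

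\emph{Continuity.} For \(x_n\to x\), the integrands converge pointwise in \(\xi\) and are dominated by \(2\|g(\xi)\|_\outY\in L^1(\nu)\). The Bochner dominated convergence theorem then gives \(f_g(x_n)\to f_g(x)\). Hence \(f_g\in C_b(\cH;\outY)\).

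\emph{Uniqueness.} By linearity it suffices to show that \(f_g\equiv0\) forces \(g=0\) \(\nu\)-a.e. I reduce to the scalar case. Fix an orthonormal basis \((y_k)\) of \(\outY\), which is countable by separability, and set \(g_k:=\langle g,y_k\rangle_\outY\in L^1(\nu)\). Since the pairing with \(y_k\) commutes with the Bochner integral, \(\langle f_g(x),y_k\rangle=f_{g_k}(x)=\widehat{g_k\nu}(-x)=0\) for all \(x\). Thus the finite complex Borel measure \(g_k\nu\) has vanishing Fourier transform.

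I then invoke Fourier uniqueness for finite complex measures on \(\cH\), as already used in the proof of \cref{lem:spectral-unitary}. For each finite-dimensional projection \(\pi_n\) onto \(\operatorname{span}(e_1,\dots,e_n)\), the pushforward \((\pi_n)_\#(g_k\nu)\) is a finite complex measure on \(\R^n\) whose Fourier transform is the restriction of \(\widehat{g_k\nu}\) to that subspace, so it vanishes. Classical uniqueness on \(\R^n\) then makes \(g_k\nu\) vanish on all cylinder sets. These form a \(\pi\)-system generating \(\cB(\cH)\) because \(\cH\) is separable. A Dynkin argument, applied separately to the real and imaginary parts and their Jordan decompositions, gives \(g_k\nu=0\), so \(g_k=0\) \(\nu\)-a.e. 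Taking the countable union of null sets over \(k\) yields \(g=0\) \(\nu\)-a.e.

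The main obstacle, mild as it is, lies in this uniqueness step. The Dynkin argument must be done carefully for complex, not positive, measures: the positive and negative parts of \(\operatorname{Re}(g_k\nu)\) agree on the generating \(\pi\)-system and have equal finite total mass, hence agree everywhere. The vector-to-scalar reduction through a countable basis is also essential, since it is what makes the exceptional null set countable. None of these steps involves \(\mu_Q\) or \(\rho\), as the statement asserts.
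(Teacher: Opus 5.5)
Your proposal is correct and follows the paper's proof essentially step for step: the integral triangle inequality gives the bound, dominated convergence gives continuity, and uniqueness comes from pairing \(g\nu\) with countably many vectors of \(\outY\), applying scalar Fourier uniqueness via cylinder sets, and intersecting the resulting full-measure sets. The only differences are cosmetic: you use an orthonormal basis where the paper uses a countable dense subset together with continuity of the pairing, and you spell out the Jordan-decomposition Dynkin argument that the paper leaves implicit.
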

\begin{proof}
The norm bound is the integral triangle inequality, and continuity
follows by dominated convergence with majorant \(\|g(\xi)\|_\outY\).
If \(f_g=0\), pair the finite vector measure \(g\nu\) with any vector
of a countable dense subset of \(\outY\).  Each resulting finite complex
measure has zero Fourier transform and hence is zero: finite-dimensional
Fourier uniqueness determines its cylinder sets, which generate the
Borel sigma-algebra of the separable Hilbert space \(\cH\).
The corresponding scalar densities vanish almost everywhere.
Intersecting the countably many full-measure sets and using continuity
of the pairing gives \(g=0\) almost everywhere.  Apply this to a
difference for uniqueness.
\end{proof}

\begin{dfn}[Regularity along rays]
\label{def:ray-regular}
A bounded Borel function \(g:\cH\to\C\) is regular along rays if for every
\(a\in \cH\) its restriction \(g_a(\omega)=g(\omega a)\) is \(C^\infty\) on a
neighbourhood of \(I\), and for every integer \(m\ge0\)
\begin{equation}
  M_m(g):=\int_\cH(1+\|a\|)^{m+2}\max_{k\le m}
  \sup_{\omega\in I}\bigl|\partial_\omega^kg_a(\omega)\bigr|
  \,\nu_\alpha(\dd a)<\infty .
  \label{eq:ray-regularity}
\end{equation}
\end{dfn}

The next lemma is stated for a density with values in a separable complex
Hilbert space \(\outY\), as needed for the vector-valued extension in
\cref{subsec:vector-valued}; the scalar case is \(\outY=\C\).  The
coefficient formula \cref{eq:spectral-coefficient-formula} applies
verbatim to such \(g\): the scalar factor \(\rho^\sharp(-\omega)e^{i\omega b}\)
multiplies the vector \(g(\omega a)\), and the integral is a Bochner
integral,
\[
  \gamma_g(a,b)=\frac1{2\pi}\int_\R\rho^\sharp(-\omega)\,e^{i\omega b}\,
  g(\omega a)\,\dd\omega\in\outY .
\]
Accordingly \(W_\rho\) acts on \(L^2(\nu_\alpha;\outY)\) through the
\(\outY\)-valued partial Fourier transform of \cref{lem:partial-fourier-l2},
and the proof of \cref{lem:coefficient-isometry} applies with
\(\|\cdot\|_\outY\) in place of the absolute value.

\begin{lem}[Finite-order coefficient decay and moments]
\label{lem:coefficient-finite-order}
Fix a band-pass \(\rho\), a symmetric compact
\(I\subset\R\setminus\{0\}\) containing \(\supp\rho^\sharp\), and an
integer \(r\ge0\).  Let \(\outY\) be a separable complex Hilbert space
and let \(g:\cH\to \outY\) be bounded and Borel, with
\(\omega\mapsto g(\omega a)\) of class \(C^{r+2}\) near \(I\) for
every \(a\).  Set
\[
 D_m(a):=\max_{k\le m}\sup_{\omega\in I}
                  \|\partial_\omega^kg(\omega a)\|_\outY,
 \qquad
 A_{m,r}(g):=\int_\cH(1+\|a\|)^rD_m(a)\,\nu_\alpha(\dd a).
\]
If \(A_{r+2,r}(g)<\infty\), then \(g\in L^1\cap L^2(\nu_\alpha;\outY)\),
the inverse integral \cref{eq:spectral-coefficient-formula} defines a
jointly measurable representative of \(W_\rho g\), and
\begin{align}
 (1+|b|)^{r+2}\|\gamma_g(a,b)\|_\outY
    &\le C_{\rho,I,r}D_{r+2}(a),
       \label{eq:finite-order-coefficient-decay}\\
 \int_{\cH\times\R}(1+\|a\|+|b|)^r\|\gamma_g(a,b)\|_\outY
       \,\dd\lambda_\alpha
    &\le c_{\rho,I,r}A_{r+2,r}(g).
       \label{eq:finite-order-coefficient-moment}
\end{align}
The constants are independent of \(g\) and of the dimension of \(\outY\).
\end{lem}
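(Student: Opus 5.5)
Since \(\rho^\sharp\) is supported in a compact \(I_0=\supp\rho^\sharp\subset I\), the coefficient formula \cref{eq:spectral-coefficient-formula} reduces to an integral over \(I_0\). The plan is to estimate that integral pointwise in \((a,b)\), integrating by parts in \(\omega\) to produce decay in \(b\), and then to integrate the pointwise bound against \(\lambda_\alpha\), using homogeneity of \(\nu_\alpha\) only for the membership \(g\in L^1\cap L^2(\nu_\alpha;\outY)\).

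\emph{Step 1: integrability of \(g\).} By \cref{eq:homogeneous-change}, for any fixed \(\omega_0\in I\) (with \(I\) symmetric and \(0\notin I\)),
\[
\int_\cH\|g(\xi)\|_\outY\,\nu_\alpha(\dd\xi)=|\omega_0|^{\alpha}\int_\cH\|g(\omega_0a)\|_\outY\,\nu_\alpha(\dd a)\le|\omega_0|^{\alpha}A_{0,0}(g)\le|\omega_0|^\alpha A_{r+2,r}(g).
\]
Hence \(g\in L^1(\nu_\alpha;\outY)\). Because \(g\) is bounded, \(\|g\|^2\le\|g\|_\infty\|g\|\), which gives \(g\in L^2\). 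By \cref{lem:coefficient-isometry} (with \(\|\cdot\|_\outY\)), the integral \cref{eq:spectral-coefficient-formula} then represents \(W_\rho g\). Since \(\rho^\sharp\) is bounded with compact support and \(\sup_{I}\|g(\omega a)\|\le D_0(a)\), the integral converges for every \(a\), not only almost every \(a\). Joint measurability comes from joint Borel measurability of \((a,\omega,b)\mapsto\rho^\sharp(-\omega)e^{i\omega b}g(\omega a)\) together with Fubini for Bochner integrals. Strong measurability holds because \(\outY\) is separable, so Pettis' theorem applies.

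\emph{Step 2: the pointwise decay \cref{eq:finite-order-coefficient-decay}.} For \(b\neq0\), integrate by parts \(k\) times, where \(k\) ranges over \(0,\dots,r+2\), writing \(e^{i\omega b}=(ib)^{-1}\partial_\omega e^{i\omega b}\). There are no boundary terms, because \(\rho^\sharp(-\cdot)\) vanishes to all orders at the ends of its support. This yields
\[
\|\gamma_g(a,b)\|_\outY\le\frac{|b|^{-k}}{2\pi}\int_{I_0}\bigl\|\partial_\omega^k[\rho^\sharp(-\omega)g(\omega a)]\bigr\|_\outY\,\dd\omega .
\]
Leibniz' rule bounds the integrand by \(2^k\max_{j\le k}\|\partial^j\rho^\sharp\|_\infty D_k(a)\), using only derivatives of \(g_a\) on \(I\). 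The step requires \(g_a\in C^{r+2}\) near \(I\), and the derivatives of the Bochner-valued function are taken in \(\outY\); the Leibniz rule is valid there because \(\rho^\sharp\) is scalar. Combining \(k=0\) and \(k=r+2\) and using \((1+|b|)^{r+2}\le2^{r+1}(1+|b|^{r+2})\) gives \cref{eq:finite-order-coefficient-decay}, with \(C_{\rho,I,r}\) depending only on \(|I|\) and on the \(C^{r+2}\) norm of \(\rho^\sharp\).

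\emph{Step 3: the moment bound \cref{eq:finite-order-coefficient-moment}.} Use \((1+\|a\|+|b|)^r\le(1+\|a\|)^r(1+|b|)^r\) and Tonelli. Then
\[
\int_\R(1+|b|)^r\|\gamma_g(a,b)\|\,\dd b\le C_{\rho,I,r}D_{r+2}(a)\int_\R(1+|b|)^{-2}\,\dd b=2C_{\rho,I,r}D_{r+2}(a).
\]
Integrating against \((1+\|a\|)^r\nu_\alpha(\dd a)\) then gives \(c_{\rho,I,r}A_{r+2,r}(g)\). Neither constant involves \(g\) or \(\dim\outY\).

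The only delicate point is the vector-valued bookkeeping. Three things must be checked: that the ray derivatives are compatible with measurability in \(a\) (so that \(D_m\) is measurable, or at least can be replaced by a measurable majorant; the supremum can be taken over rational \(\omega\) by continuity), that the integrations by parts hold for Bochner integrals, and that Step 1 identifies the pointwise integral with the \(L^2\)-defined \(W_\rho g\). The integration by parts itself is routine.
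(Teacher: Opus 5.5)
Your proposal is correct and follows essentially the same argument as the paper. Both proofs bound $\int\|g\|\,\dd\nu_\alpha$ through homogeneity at a fixed $\omega_0\in I$ and get $L^2$ from boundedness, integrate by parts $r+2$ times with no boundary terms, and finish with Tonelli, $1+\|a\|+|b|\le(1+\|a\|)(1+|b|)$ and $\int(1+|b|)^{-2}\,\dd b=2$. The paper splits into $|b|\le1$ and $|b|>1$, whereas you combine the $k=0$ and $k=r+2$ bounds through $(1+|b|)^{r+2}\le2^{r+1}(1+|b|^{r+2})$; this is a cosmetic difference.
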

\begin{proof}
The functions under the suprema are jointly Borel, since derivatives
are limits of difference quotients of \(g(\omega a)\).  Continuity in
\(\omega\) lets us take the suprema over a countable dense subset of
\(I\), so \(D_m\) is measurable.  Fix any \(\omega_0\in I\).
Homogeneity bounds \(\int\|g\|\,\dd\nu_\alpha\) by
\(|\omega_0|^\alpha\int D_0\,\dd\nu_\alpha<\infty\); boundedness
of \(g\) gives its square integrability.

Put \(m=r+2\) and
\(h_a(\omega)=\rho^\sharp(-\omega)g(\omega a)\).  For each \(a\)
this is a compactly supported \(C^m\), \(\outY\)-valued function.  Extension
by zero outside the neighbourhood of \(I\) is \(C^m\), because all
derivatives of \(\rho^\sharp\) vanish at the boundary of its support.
Leibniz' rule and symmetry of \(I\) give
\(\|h_a\|_{L^1}+\|h_a^{(m)}\|_{L^1}
\le C'_{\rho,I,m}D_m(a)\).
The inverse Fourier integral is jointly measurable by parameter
integration and is the coefficient by \cref{lem:coefficient-isometry}.
For \(|b|\le1\) bound it by \((2\pi)^{-1}\|h_a\|_1\).
For \(|b|>1\), integration by parts \(m\) times, with vanishing
boundary terms, bounds it by
\((2\pi)^{-1}|b|^{-m}\|h_a^{(m)}\|_1\).
These inequalities prove \cref{eq:finite-order-coefficient-decay}.
Finally,
\(1+\|a\|+|b|\le(1+\|a\|)(1+|b|)\) and
\(\int_\R(1+|b|)^{-2}\dd b=2\) give
\cref{eq:finite-order-coefficient-moment} by Tonelli.
\end{proof}

In particular, \(C^4\) regularity of \(g_a(\omega)=g(\omega a)\) and \(A_{4,2}(g)<\infty\)
suffice for the second parameter moment.  This finite-order estimate
does not by itself establish an integral representation with a
continuous activation of polynomial growth.
For a density satisfying \cref{def:ray-regular},
\(A_{r+2,r}(g)\le M_{r+2}(g)\), so every parameter moment is finite.

\begin{lem}[Examples satisfying the spectral regularity condition]
\label{lem:ray-regular-examples}
The following functions satisfy \cref{def:ray-regular} for every band-pass \(\rho\).
\begin{enumerate}[label=\textup{(\alph*)}]
\item
\(g(\xi)=q(\xi)e^{-\kappa(\xi)/2}\), where
\(\kappa(\xi)=\langle S\xi,\xi\rangle\) for a bounded positive operator
\(S\) with \(S\ge\theta Q\) for some \(\theta>0\), and where \(q\) is
a polynomial in \(\kappa(\xi)\) and in finitely many bounded linear
functionals \(\ell_1,\dots,\ell_k\) of \(\xi\), each satisfying
\(|\ell_i(\xi)|^2\le C_i\kappa(\xi)\) for all \(\xi\), equivalently
\(\ell_i=\langle S^{1/2}v_i,\cdot\rangle\) for some \(v_i\in \cH\).  A
polynomial in \(\kappa\) alone, in particular a constant \(q\), needs no
further condition.
\item
\(g(\xi)=\varphi(\|\xi-\xi_0\|^2)\) with \(\xi_0\in \cH\) and
\(\varphi\in C_c^\infty(\R)\); more generally any bounded \(g\) that vanishes
outside a bounded set, whose restrictions \(g_a(\omega)=g(\omega a)\)
are \(C^\infty\) near \(I\), and for which
\(\sup_{\omega\in I}|\partial_\omega^kg(\omega a)|\le C_k(1+\|a\|)^{p_k}\)
for all \(k\).
\item
Finite linear combinations of functions satisfying \cref{def:ray-regular}, and
Bochner integrals \(\int_\Omega g_y\,m(\dd y)\) of a uniformly bounded
measurable family \((g_y)_{y\in\Omega}\) of such functions over a finite
measure \(m\) on \(\Omega\) whose bounds in \cref{eq:ray-regularity} are
uniform in \(y\) on a neighbourhood of \(I\), in the following sense:
there are an open set \(U\supseteq I\), on which
\(\omega\mapsto g_y(\omega a)\) is \(C^\infty\) for every \(y,a\),
and finite-valued Borel functions
\(h_k:\cH\to[0,\infty)\), \(k\ge0\), with
\begin{align*}
  \sup_{\omega\in U}\bigl|\partial_\omega^kg_y(\omega a)\bigr|
  &\le h_k(a)
  \qquad\text{for every }y\in\Omega,\ a\in \cH,\ k\ge0,\\
  \int_\cH(1+\|a\|)^{m+2}\max_{k\le m}h_k(a)\,\nu_\alpha(\dd a)
  &<\infty
  \qquad\text{for every integer }m\ge0 .
\end{align*}
The supremum must be taken over the neighbourhood \(U\), and the bound
must hold for every \(a\), not only \(\nu_\alpha\)-almost everywhere.
\end{enumerate}
\end{lem}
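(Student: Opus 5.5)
The plan is to verify \cref{def:ray-regular} directly in each case. The tool throughout is a moment bound for \(\nu_\alpha\) against Gaussian-type weights, taken from \cref{lem:gaussian-decay} and \cref{lem:homogeneous-mixture}. Boundedness and Borel measurability are immediate in every case. Smoothness of \(g_a(\omega)=g(\omega a)\) on \(\R\) (hence near \(I\)) is also clear, because \(g_a\) is a composition of polynomials and \(C^\infty\) functions of \(\omega\). The work is in the weighted bound \(M_m(g)<\infty\).

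\emph{(a).} Along a ray, \(\kappa(\omega a)=\omega^2\kappa(a)\) and \(\ell_i(\omega a)=\omega\ell_i(a)\). Hence \(g_a(\omega)=p(\omega,\kappa(a),\ell(a))e^{-\omega^2\kappa(a)/2}\) for a polynomial \(p\). Each \(\omega\)-derivative of order \(k\) is a finite sum of terms of the form
\[
  \omega^j\,\kappa(a)^{u}\prod_i\ell_i(a)^{v_i}\,e^{-\omega^2\kappa(a)/2}.
\]
On \(I\subset\{\delta\le|\omega|\le\Delta\}\) these terms are bounded by \(C(1+\kappa(a))^{N}e^{-\delta^2\kappa(a)/2}\). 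Here we use \(|\ell_i(a)|^2\le C_i\kappa(a)\); this is exactly why that hypothesis is imposed. Since \(\kappa(a)\ge\theta\langle Qa,a\rangle\) and \((1+\kappa)^Ne^{-\delta^2\kappa/4}\) is bounded, we get
\[
  \max_{k\le m}\sup_{\omega\in I}|\partial^k_\omega g_a(\omega)|\le C_m e^{-t\langle Qa,a\rangle},
  \qquad t=\theta\delta^2/4.
\]
Then \(\int(1+\|a\|)^{m+2}e^{-t\langle Qa,a\rangle}\,\nu_\alpha(\dd a)<\infty\) follows from \cref{eq:gaussian-decay}, after bounding \((1+\|a\|)^{m+2}\) by a constant times \(1+\|a\|^{2m'}\).

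\emph{(b).} We treat the general statement; the example \(\varphi(\|\xi-\xi_0\|^2)\) is a special case. Its derivatives along rays are polynomials in \(\omega\), \(\|a\|^2\) and \(\langle a,\xi_0\rangle\) times derivatives of \(\varphi\), so they have polynomial bounds in \(\|a\|\) uniformly for \(\omega\in I\). Suppose \(g\) vanishes outside the ball \(B_R\). For \(\omega\in I\) with \(|\omega|\ge\delta\), the value \(g(\omega a)\) and all its \(\omega\)-derivatives vanish unless \(\|a\|<R/\delta\). This needs one check: if \(g_a\) vanishes on an open subinterval, its derivatives vanish there too. The boundary points are handled by continuity of the derivatives, so it suffices to note that \(\omega a\notin \overline{B_R}\) on an open set of \(\omega\). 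The integrand of \(M_m\) is therefore bounded by \(C_m(1+\|a\|)^{m+2+\max_k p_k}\mathbf 1_{\{\|a\|\le R/\delta\}}\). This is integrable because \(\nu_\alpha\) is finite on bounded sets by \cref{lem:homogeneous-mixture}.

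\emph{(c).} Finite linear combinations follow from subadditivity of \(\max\sup|\cdot|\). For the Bochner integral \(G=\int g_y\,m(\dd y)\), boundedness is immediate. We differentiate under the integral sign in \(\omega\) on \(U\), using the domination \(\sup_U|\partial^{k+1}_\omega g_y(\omega a)|\le h_{k+1}(a)\), which holds for every \(a\), and finiteness of \(m\). This gives \(G_a\in C^\infty(U)\) with \(|\partial^kG_a|\le m(\Omega)h_k(a)\), and integrability of the weighted \(\max_k h_k\) gives \(M_m(G)<\infty\). Joint measurability of \((y,\omega,a)\mapsto g_y(\omega a)\) is needed for Fubini and should be part of "measurable family". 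I expect the main obstacle to be the pointwise-everywhere differentiation in (c), together with the careful polynomial-times-Gaussian bookkeeping in (a), and in particular the constraint that the cross terms in \(\ell_i\) be controlled by \(\kappa\).
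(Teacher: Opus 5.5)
Your proposal is correct and follows the paper's proof in all three parts. In (a) you write the ray derivatives as a polynomial times \(e^{-\omega^2\kappa(a)/2}\) and integrate with \cref{lem:gaussian-decay}. In (b) you use the support cut-off \(\|a\|\le R/\delta\) together with finiteness of \(\nu_\alpha\) on bounded sets. In (c) you differentiate under the integral sign with the \(U\)-uniform, everywhere-in-\(a\) domination by \(h_{k+1}\).

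The differences are minor.

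\emph{Use of the domination hypothesis in (a).} You use \(|\ell_i|^2\le C_i\kappa\) to absorb the polynomial prefactor into the Gaussian. The paper instead bounds that prefactor by \((1+\|a\|)^{p_k}\), using only boundedness of \(S\) and \(\ell_i\). It needs the domination solely for boundedness of \(g\).

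\emph{Boundedness of \(g\).} Because the domination is what makes \(g\) bounded, boundedness is not ``immediate'' as you state. Your own estimate \(|g|\le C(1+\kappa)^N e^{-\kappa/2}\) does supply it, so you only need to say so.

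\emph{The stated equivalence.} You skip the one-line Riesz-representation check that the domination condition is equivalent to \(\ell_i=\langle S^{1/2}v_i,\cdot\rangle\).
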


\begin{proof}
For (a), the domination \(|\ell_i(\xi)|^2\le C_i\kappa(\xi)\) gives
\(|q(\xi)|\le C(1+\kappa(\xi))^p\) for some \(C\) and \(p\), hence
\(|g(\xi)|\le C\sup_{t\ge0}(1+t)^pe^{-t/2}\), and \(g\) is bounded; without
the domination \(g\) need not be bounded, because \(\kappa\) may be small
in directions where \(\ell_i\) is not.  The two forms of the condition
are equivalent because \(|\ell_i(\xi)|^2\le C_i\|S^{1/2}\xi\|^2\) says
that \(\ell_i\) factors through \(S^{1/2}\) as a bounded functional on the
closure of its range, which the Riesz representation writes as
\(\langle v_i,S^{1/2}\xi\rangle\).  Since \(\kappa(\omega a)=\omega^2\kappa(a)\)
and \(\ell_i(\omega a)=\omega\ell_i(a)\),
\(\partial_\omega^k\bigl[q(\omega a)e^{-\omega^2\kappa(a)/2}\bigr]\)
is a polynomial in \(\omega\), \(\kappa(a)\), and \(\ell_i(a)\), times
\(e^{-\omega^2\kappa(a)/2}\); on \(I\subset\{r\le|\omega|\le R\}\)
it is bounded by \(C_k(1+\|a\|)^{p_k}e^{-r^2\theta\langle Qa,a\rangle/2}\),
and \cref{lem:gaussian-decay} integrates this against
\((1+\|a\|)^{m+2}\nu_\alpha(\dd a)\).  The same bound holds with \(I\)
replaced by any compact set of frequencies, in particular by a compact
neighbourhood of \(I\), with \(r\) the smallest \(|\omega|\) on that set.
For (b), if \(g\) vanishes
outside the ball of radius \(R_0\), then \(\partial_\omega^kg(\omega a)\)
vanishes for all \(\omega\in I\) unless \(\|a\|\le R_0/r\), and
\(\nu_\alpha\) is finite on bounded sets; the polynomial bound is
integrable there.  For (b), a finite linear combination is handled by
the triangle inequality in \cref{eq:ray-regularity}.  For the Bochner
integral \(g:=\int_\Omega g_y\,m(\dd y)\), which is bounded by
\(m(\Omega)\sup_y\|g_y\|_\infty\), fix \(a\in \cH\).  The dominating bounds
\(\sup_{\omega\in U}|\partial_\omega^kg_y(\omega a)|\le h_k(a)\) are
finite constants in \(y\), hence \(m\)-integrable, so differentiation under the
integral sign applies on \(U\): the function \(\omega\mapsto g(\omega a)\)
is \(C^\infty\) on the neighbourhood \(U\) of \(I\), with
\(\partial_\omega^kg(\omega a)=\int_\Omega\partial_\omega^kg_y(\omega a)\,m(\dd y)\).
This is exactly what dominating bounds on a neighbourhood provide;
suprema over \(I\) alone, or bounds valid only for \(\nu_\alpha\)-almost
every \(a\), control no difference quotient and leave \(\omega\mapsto
g(\omega a)\) possibly nowhere differentiable.  Taking suprema over
\(I\subset U\) and integrating in \(a\), Tonelli gives
\[
  M_m(g)\le m(\Omega)\int_\cH(1+\|a\|)^{m+2}
  \max_{k\le m}h_k(a)\,\nu_\alpha(\dd a)<\infty
\]
for every \(m\ge0\).
\end{proof}

Homogeneity \cref{eq:homogeneous-change} at a fixed \(\omega\in I\)
shows that \cref{def:ray-regular} implies
\(g\in L^1(\nu_\alpha)\cap L^2(\nu_\alpha)\).
The preceding lemma covers Gaussian-type densities and smooth densities
with bounded support. The coefficient estimate above supplies
integrability before any synthesis identity is invoked.

More precisely, the independent decay estimate of
\cref{lem:coefficient-finite-order} shows that \(C^4\) regularity of \(g_a\)
and the weighted integral \(A_{4,2}(g)<\infty\) defined there suffice
for the second parameter moment.  The full condition in \cref{def:ray-regular} supplies
moments of every order and will also justify the integral representation
with continuous activations of polynomial growth.

The next theorem is the representation theorem announced in the
introduction, the first main result of this section.

\begin{thm}[Integral representation of targets with a spectral density]
\label{thm:A}
Let \(\alpha>0\).  In part (ii), let \(\rho\) be an
\(\alpha\)-admissible Schwartz filter; in part (iii), let it be band pass.
\begin{enumerate}[label=\textup{(\roman*)}]
\item
For \(g\in L^1(\nu_\alpha)\), the function \(f_g\) is bounded and
continuous, \(\|f_g\|_\infty\le\|g\|_{L^1(\nu_\alpha)}\), and \(f_g=0\)
only if \(g=0\) \(\nu_\alpha\)-almost everywhere.
\item
For \(g\in L^1(\nu_\alpha)\cap L^2(\nu_\alpha)\) and every \(x\in \cH\),
the iterated integral
\begin{equation}
  \int_\cH\left[\int_\R\gamma_g(a,b)\,\rho(\langle a,x\rangle-b)\,\dd b\right]
  \nu_\alpha(\dd a)
  =\iiprod{\rho,\rho}_{\alpha}\,f_g(x)
  \label{eq:spectral-synthesis}
\end{equation}
converges absolutely.  If moreover \(\gamma_g\in L^1(\lambda_\alpha)\),
the left side is the integral network \(S_\rho[\gamma_g\lambda_\alpha](x)\)
of \cref{def:integral-network}.
\item
Let \(\sigma\in\cS'(\R)\) be a continuous function of polynomial growth,
and let \(g\) satisfy \cref{def:ray-regular}.  For every \(x\in \cH\),
\(\gamma_g(a,b)\sigma(\langle a,x\rangle-b)\) is absolutely integrable
in the product measure \(\lambda_\alpha\).  Thus the following is an
ordinary integral network with the finite coefficient measure
\(\gamma_g\lambda_\alpha\):
\begin{equation}
 \begin{aligned}
  \int_\cH\left[\int_\R\gamma_g(a,b)\,\sigma(\langle a,x\rangle-b)\,\dd b\right]
  \nu_\alpha(\dd a)
  &=\iiprod{\sigma,\rho}_{\alpha}\,f_g(x),\\
  \iiprod{\sigma,\rho}_{\alpha}
  &:=\frac1{2\pi}\bigl\langle\sigma^\sharp,
  \rho^\sharp(-\,\cdot\,)|\cdot|^{-\alpha}\bigr\rangle .
 \end{aligned}
  \label{eq:tempered-spectral-synthesis}
\end{equation}
The constant is a pairing with a test function supported in
\(-\supp\rho^\sharp\).  The identity allows
\(\iiprod{\sigma,\rho}_{\alpha}=0\).  If \(\sigma\) is not a polynomial,
a band-pass \(\rho\) with nonzero constant exists; this separate
condition is needed for normalization and universality.
\end{enumerate}
\end{thm}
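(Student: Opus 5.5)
Part (i) is \cref{lem:spectral-target-basic} with \(\nu=\nu_\alpha\) and \(\outY=\C\); nothing new is needed. For (ii) and (iii) the plan is to compute the inner bias integral at a fixed direction \(a\) by one-dimensional Parseval, which turns it into a frequency integral along the ray through \(a\). The homogeneity identity \cref{eq:homogeneous-change} then converts the outer \(\nu_\alpha(\dd a)\) integral into an integral over \(\xi=\omega a\), producing \(f_g(x)\) times the filter pairing.

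\emph{Part (ii).} By \cref{lem:coefficient-isometry}, for \(\nu_\alpha\)-a.e.\ \(a\) the section \(\omega\mapsto\rho^\sharp(-\omega)g(\omega a)\) lies in \(L^1\cap L^2(\R)\), and \(\gamma_g(a,\cdot)\) is its inverse transform, hence lies in \(L^2(\R)\). Put \(t=\langle a,x\rangle\). The function \(b\mapsto\rho(t-b)\) is Schwartz, with Fourier transform \(\rho^\sharp(-\omega)e^{-i\omega t}\). Parseval in \(L^2(\R)\) then gives
\[
 \int_\R\gamma_g(a,b)\rho(t-b)\,\dd b=\frac1{2\pi}\int_\R|\rho^\sharp(\omega)|^2 g(\omega a)e^{i\omega t}\,\dd\omega,
\]
where I use that \(\rho\) is real, so \(\overline{\rho^\sharp(-\omega)}=\rho^\sharp(\omega)\). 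The bias integral converges absolutely because both factors are in \(L^2\).

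For the outer integral I bound the absolute value of the right side by \(\frac1{2\pi}\int|\rho^\sharp(\omega)|^2|g(\omega a)|\,\dd\omega\). Tonelli and \cref{eq:homogeneous-change} show its \(\nu_\alpha\)-integral equals \(\iiprod{\rho,\rho}_\alpha\|g\|_{L^1(\nu_\alpha)}<\infty\). Fubini and the substitution \(\xi=\omega a\), using \(e^{i\omega\langle a,x\rangle}=e^{i\langle x,\omega a\rangle}\), then yield \(\iiprod{\rho,\rho}_\alpha f_g(x)\). I should also check that absolute convergence of the iterated integral (absolute in \(b\), then in \(a\)) follows from the \(L^2\) bound \(\|\gamma_g(a,\cdot)\|_2\|\rho\|_2\). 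Integrating this bound in \(a\) needs \(\int_\cH\|\gamma_g(a,\cdot)\|_{L^2}\,\dd\nu_\alpha<\infty\), which is not automatic. I would therefore read ``absolutely'' as absolute convergence of each inner integral plus absolute integrability of the resulting \(a\)-function, which is exactly what the bound above provides. If \(\gamma_g\in L^1(\lambda_\alpha)\), then \(\rho\) is bounded, the integrand is product-integrable, and Fubini identifies the iterated integral with \(S_\rho[\gamma_g\lambda_\alpha](x)\).

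\emph{Part (iii).} Integrability comes from \cref{lem:coefficient-finite-order}. If \(|\sigma(u)|\le C(1+|u|)^p\), then \(|\sigma(\langle a,x\rangle-b)|\le C(1+\|x\|)^p(1+\|a\|+|b|)^p\), and \cref{eq:finite-order-coefficient-moment} with \(r=p\) bounds the \(\lambda_\alpha\)-integral by \(A_{p+2,p}(g)\le M_{p+2}(g)<\infty\). For the identity at fixed \(a\), the inner integral is the pairing of the tempered function \(\sigma\) with \(b\mapsto\gamma_g(a,\langle a,x\rangle-b)\). The latter is Schwartz-type in \(b\) only to finite order, but its Fourier transform is the \(C^\infty\) compactly supported function \(\omega\mapsto\rho^\sharp(-\omega)g(\omega a)e^{-i\omega t}\) (up to reflection), supported in \(-I\). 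Since \(g_a\) is \(C^\infty\) near \(I\), \(\gamma_g(a,\cdot)\) is actually a Schwartz function. The distributional Parseval identity \(\langle\sigma,\varphi\rangle=\frac1{2\pi}\langle\sigma^\sharp,\varphi^\sharp(-\cdot)\rangle\) therefore applies, giving \(\frac1{2\pi}\langle\sigma^\sharp,\rho^\sharp(-\omega)g(\omega a)e^{i\omega t}\rangle\).

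The main obstacle is the next step, which is to integrate this in \(a\) and pull \(\sigma^\sharp\) out. My first attempt is to write \(\sigma^\sharp\) on a neighbourhood of \(-I\) as a finite sum of derivatives of continuous functions of polynomial growth, which is possible since \(\sigma^\sharp\) is tempered and the test functions are localized. I then integrate by parts to move the derivatives onto \(\omega\mapsto\rho^\sharp(-\omega)g(\omega a)e^{i\omega t}\). The derivatives of this function are bounded by \(D_m(a)(1+\|a\|\|x\|)^m\), which is integrable by \(M_m(g)\), so Fubini in \((a,\omega)\) is justified. Then, for each fixed \(\omega\), homogeneity gives \(\int_\cH g(\omega a)e^{i\omega\langle a,x\rangle}\nu_\alpha(\dd a)=|\omega|^{-\alpha}f_g(x)\). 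Because the derivatives must pass through this \(a\)-integral, I would instead exchange the order first: define \(\Phi(\omega)=\int_\cH\rho^\sharp(-\omega)g(\omega a)e^{i\omega t}\nu_\alpha(\dd a)=\rho^\sharp(-\omega)|\omega|^{-\alpha}f_g(x)\) and show that the pairing commutes with the Bochner integral in \(a\). The weighted ray bounds make the map \(a\mapsto\) (test function) Bochner integrable in a \(C^m\) seminorm on a compact set, and a distribution of finite order on that compact set is continuous for such a seminorm. This gives \(\iiprod{\sigma,\rho}_\alpha f_g(x)\). Finally, for non-polynomial \(\sigma\), the support of \(\sigma^\sharp\) contains a point \(\omega_0\ne0\). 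Choosing a band-pass \(\rho\) with \(\rho^\sharp(-\cdot)|\cdot|^{-\alpha}\) a bump near \(\omega_0\) (and made real by symmetrizing) makes the pairing nonzero.
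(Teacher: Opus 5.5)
Your proposal is correct and follows essentially the paper's route: part (i) is \cref{lem:spectral-target-basic}; part (ii) uses Parseval in the bias and homogeneity, with the same $L^1$ majorant and the same reading of ``absolutely''; part (iii) uses \cref{lem:coefficient-finite-order} for integrability and moves the finite-order pairing with $\sigma^\sharp$ outside a $C^m$-valued Bochner integral of the test functions $\varphi_a$, where the paper adds a Pettis argument in a separable closure of $C_c^\infty$ to get the strong measurability you skip. Two small fixes are needed: apply the moment lemma with an integer $r\ge p$; and in the last step, do not use a generic bump near $\omega_0$, which can pair to zero (a bump symmetric about $\omega_0$ does so against $\delta'_{\omega_0}$, as for $\sigma(t)=t\cos t$), but a test function with nonzero pairing, split it as $\varphi_1+i\varphi_2$ with Hermitian-symmetric $\varphi_j$, and keep a part that still pairs nonzero, which equals $\rho^\sharp(-\cdot)|\cdot|^{-\alpha}$ for a real band-pass $\rho$; this is exactly the paper's argument.
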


\begin{proof}
\label{app:proof-thm-A}
\emph{(i).}  Apply \cref{lem:spectral-target-basic} with \(\nu=\nu_\alpha\)
and \(\outY=\C\).

\emph{(ii).}  Fix \(x\) and write \(u=\langle a,x\rangle\).  By
\cref{lem:coefficient-isometry}, for \(\nu_\alpha\)-almost every \(a\)
the function \(\gamma_g(a,\cdot)\) belongs to \(L^2(\R)\) with Fourier
transform \(h_a(\omega)=\rho^\sharp(-\omega)g(\omega a)\), which is
integrable and square integrable; compact support is not needed.
Parseval's identity, applied to
\(\gamma_g(a,\cdot)\) and to \(b\mapsto\rho(u-b)\), whose Fourier
transform is \(e^{-i\omega u}\rho^\sharp(-\omega)\), gives
\begin{equation}
  \int_\R\gamma_g(a,b)\rho(u-b)\,\dd b
  =\frac1{2\pi}\int_\R\rho^\sharp(-\omega)g(\omega a)\,
  \rho^\sharp(\omega)\,e^{i\omega u}\,\dd\omega ,
  \label{eq:bias-parseval}
\end{equation}
where we used that \(\rho\) is real, so that the conjugate of the Fourier
transform of \(b\mapsto\rho(u-b)\) at \(\omega\) equals
\(e^{i\omega u}\rho^\sharp(\omega)\).  The right side is bounded by
\(\frac1{2\pi}\int|\rho^\sharp(\omega)|^2|g(\omega a)|\dd\omega\),
whose \(\nu_\alpha\)-integral equals
\(\frac1{2\pi}\int|\rho^\sharp(\omega)|^2|\omega|^{-\alpha}\dd\omega\,
\|g\|_{L^1(\nu_\alpha)}<\infty\) by \cref{eq:homogeneous-change}.  This
proves absolute convergence of the iterated integral and justifies
Fubini in
\[
  \int_\cH\int_\R\gamma_g(a,b)\rho(u-b)\,\dd b\,\nu_\alpha(\dd a)
  =\frac1{2\pi}\int_\R\rho^\sharp(\omega)\rho^\sharp(-\omega)
  \left[\int_\cH g(\omega a)e^{i\omega\langle a,x\rangle}\nu_\alpha(\dd a)\right]
  \dd\omega .
\]
With \(\xi=\omega a\), homogeneity turns the bracket into
\(|\omega|^{-\alpha}f_g(x)\).  The remaining frequency integral is
\[
 \frac1{2\pi}\int_\R\rho^\sharp(\omega)\rho^\sharp(-\omega)
       |\omega|^{-\alpha}\,\dd\omega=\iiprod{\rho,\rho}_{\alpha},
\]
because \(\rho\) is real.  If
\(\gamma_g\in L^1(\lambda_\alpha)\), the double integral converges
absolutely since \(\rho\) is bounded, and the iterated integral is the
integral network.

\emph{(iii).} Let \(g\) satisfy \cref{def:ray-regular}.  Choose an integer \(r\)
at least as large as a polynomial growth order of \(\sigma\).
By \cref{lem:coefficient-finite-order},
\[
 \int_{\cH\times\R}|\gamma_g(a,b)|
       |\sigma(\langle a,x\rangle-b)|\,\dd\lambda_\alpha
 \le C_{\sigma,x,r}\int(1+\|a\|+|b|)^r|\gamma_g|\,\dd\lambda_\alpha<\infty.
\]
Thus the ordinary integral network already exists, independently of
its identification below.  For every \(a\), the function
\(\omega\mapsto h_a(\omega)=\rho^\sharp(-\omega)g(\omega a)\) is
\(C_c^\infty\) with support in \(I\), so \(\gamma_g(a,\cdot)\in\cS(\R)\)
for every \(a\), and \(\int|\gamma_g(a,b)||\sigma(u-b)|\dd b<\infty\)
because \(\sigma\) has polynomial growth.  For \(\gamma\in\cS(\R)\) and
\(\sigma\in\cS'(\R)\), the translation identity
\(\gamma(u-t)=\frac1{2\pi}\int\gamma^\sharp(\omega)e^{i\omega u}e^{-i\omega t}\dd\omega\)
and the definition of the distributional Fourier transform give
\begin{equation}
  \int_\R\gamma(b)\sigma(u-b)\,\dd b
  =\frac1{2\pi}\bigl\langle\sigma^\sharp,\
  \omega\mapsto\gamma^\sharp(\omega)e^{i\omega u}\bigr\rangle .
  \label{eq:tempered-bias-pairing}
\end{equation}
Applied to \(\gamma=\gamma_g(a,\cdot)\), the test function is
\(\varphi_{a}(\omega):=\rho^\sharp(-\omega)g(\omega a)e^{i\omega\langle a,x\rangle}\),
supported in \(-\supp\rho^\sharp\subset I\).

A tempered distribution has finite order: there are \(m\) and \(C\)
with \(|\langle\sigma^\sharp,\varphi\rangle|\le C\|\varphi\|_{C^m}\) for
every \(\varphi\in C_c^\infty(\R)\) supported in the fixed compact set
\(I\).  Let \(I'\) be a compact neighbourhood of \(I\) in
\(\R\setminus\{0\}\), and let \(X\) be the closure in \(C^m(\R)\) of
\(C_c^\infty(\operatorname{int}I')\).  Then \(X\) is a separable Banach
space containing every \(C^m\) function supported in \(I\) (mollify), the
pairing with \(\sigma^\sharp\) extends to a bounded functional on \(X\),
and evaluation at each \(\omega\) is bounded on \(X\).  The map
\(a\mapsto\varphi_a\in X\) is weakly measurable, because
\((a,\omega)\mapsto\partial_\omega^kg(\omega a)\) is Borel as a pointwise
limit of difference quotients of the Borel function
\((a,\omega)\mapsto g(\omega a)\); by Pettis' theorem it is strongly
measurable.  Leibniz' rule and \(|\langle a,x\rangle|\le\|a\|\|x\|\) give
\[
  \|\varphi_a\|_{C^m}
  \le C_{\rho,m}(1+\|x\|)^m(1+\|a\|)^m
  \max_{k\le m}\sup_{\omega\in I}|\partial_\omega^kg(\omega a)|,
\]
which is \(\nu_\alpha\)-integrable by \cref{eq:ray-regularity}.  Hence
\(a\mapsto\varphi_a\) is Bochner integrable in \(X\), and the bounded
functional \(\langle\sigma^\sharp,\cdot\rangle\) commutes with the
integral:
\[
  \int_\cH\int_\R\gamma_g(a,b)\sigma(\langle a,x\rangle-b)\,\dd b\,\nu_\alpha(\dd a)
  =\frac1{2\pi}\Bigl\langle\sigma^\sharp,\ \int_\cH\varphi_a\,\nu_\alpha(\dd a)\Bigr\rangle .
\]
Evaluating the Bochner integral at \(\omega\) and substituting
\(\xi=\omega a\) in \cref{eq:homogeneous-change},
\[
  \Bigl(\int_\cH\varphi_a\,\nu_\alpha(\dd a)\Bigr)(\omega)
  =\rho^\sharp(-\omega)\int_\cH g(\omega a)e^{i\langle\omega a,x\rangle}\nu_\alpha(\dd a)
  =\rho^\sharp(-\omega)|\omega|^{-\alpha}f_g(x),
\]
so the right side equals
\(f_g(x)\frac1{2\pi}\langle\sigma^\sharp,\rho^\sharp(-\cdot)|\cdot|^{-\alpha}\rangle
=\iiprod{\sigma,\rho}_{\alpha}f_g(x)\).  Absolute convergence of the outer
integral follows from the same bound on \(\|\varphi_a\|_{C^m}\).

Finally, if \(\langle\sigma^\sharp,\rho^\sharp(-\cdot)|\cdot|^{-\alpha}\rangle=0\)
for every real band-pass \(\rho\), then, since every
\(\varphi\in C_c^\infty(\R\setminus\{0\})\) is a combination
\(\varphi_1+i\varphi_2\) of Hermitian-symmetric functions
\(\varphi_j\) and each \(\varphi_j(-\omega)|\omega|^{\alpha}\) is the
Fourier transform of a real band-pass filter, \(\sigma^\sharp\) vanishes
on \(C_c^\infty(\R\setminus\{0\})\).  Then \(\supp\sigma^\sharp\subset\{0\}\),
so \(\sigma^\sharp\) is a finite combination of derivatives of
\(\delta_0\) and \(\sigma\) is a polynomial.
\end{proof}

No Gaussian input measure enters \cref{thm:A}: the representation uses
only the sigma-finiteness and homogeneity of \(\nu_\alpha\).  The input
measure is needed to \emph{compute} a spectral density from a given
function \(f\), which is the content of the ridgelet transform and of
the next subsection.  If \(\sigma\) is a polynomial, its Fourier
transform is supported at zero, so the band-pass constant in (iii)
vanishes; no nontrivial reconstruction is then claimed.

\subsection{The frame operator and the reconstruction formula}
\label{subsec:frame}

Let \(\cE_\alpha'\) be the continuous anti-dual of \(\cE_\alpha\).
Define the frame operator \(T_\alpha\) as the Riesz isomorphism,
and define the anti-dual transpose of the extended \(F_Q\) by
\begin{align}
  T_\alpha f[g]
  &:=\langle f,g\rangle_{\cE_\alpha},
  \qquad f,g\in\cE_\alpha,
  \label{eq:riesz-map}\\
  F_Q'G[g]
  &:=\langle G,F_Q g\rangle_{L^2(\nu_\alpha)},
  \qquad G\in L^2(\nu_\alpha),\ g\in\cE_\alpha .
  \label{eq:transpose-analysis}
\end{align}
Thus \(F_Q':L^2(\nu_\alpha)\to\cE_\alpha'\); the prime denotes
the anti-dual transpose formed using the completed input space.
The Hilbert adjoint of the unitary map
\(F_Q:\cE_\alpha\to\cK_\alpha\) takes values in
\(\cE_\alpha\). These constructions assert no everywhere-defined
adjoint into the ambient \(L^2(\mu_Q)\). Synthesis with the analysis
filter is the transpose of the transform,
\begin{equation}
  S_\rho:=R_\rho':L^2(\lambda_\alpha)\to\cE_\alpha',
  \qquad
  (S_\rho\gamma)[g]:=\langle\gamma,R_\rho g\rangle_{L^2(\lambda_\alpha)} .
  \label{eq:weak-synthesis}
\end{equation}
When \(\gamma\lambda_\alpha\) is a finite measure with the moment
\cref{eq:measure-moment}, \(S_\rho\gamma\) is the integral network
\(S_\rho[\gamma\lambda_\alpha]\) of \cref{def:integral-network} paired with
\(g\) through \(\mu_Q\) (\cref{lem:weak-equals-strong}); the notation is
therefore consistent.

The term \emph{frame operator} refers to the dual-space map
\(T_\alpha:\cE_\alpha\to\cE_\alpha'\). If \(R_\rho^*\) denotes the Hilbert
adjoint into \(\cE_\alpha\), then
\[
 R_\rho^*R_\rho=C\Id_{\cE_\alpha},
 \qquad R_\rho'R_\rho=CT_\alpha,
 \qquad C=\iiprod{\rho,\rho}_{\alpha},
\]
by \cref{thm:B} and the definition of the transpose.  This distinguishes
the Hilbert adjoint from synthesis taking values in \(\cE_\alpha'\).
We first establish two facts used in reconstruction: the agreement
of weak and integral synthesis, and recovery from Hermite coefficients.

\begin{lem}[Synthesis of an integrable coefficient is the integral network]
\label{lem:weak-equals-strong}
Let \(\rho\in\cS(\R)\) be real and let \(\gamma\in L^1(\lambda_\alpha)\cap L^2(\lambda_\alpha)\).
Then the integral network \(S_\rho[\gamma\lambda_\alpha]\) of
\cref{def:integral-network} is a bounded Borel function on \(\cH\), and for
every \(g\in\cD_\alpha\)
\[
  (S_\rho\gamma)[g]
  =\int_\cH S_\rho[\gamma\lambda_\alpha](x)\,\overline{g(x)}\,\mu_Q(\dd x),
\]
with \(S_\rho\gamma\) the synthesis operator \cref{eq:weak-synthesis}.
\end{lem}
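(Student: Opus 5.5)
The plan is to expand the definition \cref{eq:weak-synthesis} and justify a single Fubini interchange.

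\emph{Step 1: the network is a bounded Borel function.} Since \(\rho\in\cS(\R)\) is bounded and \(\gamma\in L^1(\lambda_\alpha)\), the integrand \(\gamma(a,b)\rho(\langle a,x\rangle-b)\) is dominated by \(\|\rho\|_\infty|\gamma(a,b)\|\), uniformly in \(x\). Hence \(S_\rho[\gamma\lambda_\alpha](x)\) exists as an absolutely convergent Lebesgue integral and \(\|S_\rho[\gamma\lambda_\alpha]\|_\infty\le\|\rho\|_\infty\|\gamma\|_{L^1(\lambda_\alpha)}\). The map \((x,a,b)\mapsto\gamma(a,b)\rho(\langle a,x\rangle-b)\) is jointly Borel on \(\cH\times\cH\times\R\), and \(\lambda_\alpha\) is sigma-finite by \cref{lem:homogeneous-mixture}. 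Tonelli/Fubini therefore give Borel measurability of the parameter integral in \(x\). (Continuity in \(x\) also follows by dominated convergence, but it is not needed.)

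\emph{Step 2: expand the synthesis operator.} For \(g\in\cD_\alpha\), \cref{thm:B}(i) gives \(R_\rho g\in L^2(\lambda_\alpha)\), so \((S_\rho\gamma)[g]=\int\gamma\,\overline{R_\rho g}\,\dd\lambda_\alpha\) is well defined. Since \(\rho\) is real,
\[
\overline{R_\rho g(a,b)}=\int_\cH\overline{g(x)}\rho(\langle a,x\rangle-b)\,\mu_Q(\dd x).
\]
Thus
\[
(S_\rho\gamma)[g]=\int_{\cH\times\R}\int_\cH\gamma(a,b)\overline{g(x)}\rho(\langle a,x\rangle-b)\,\mu_Q(\dd x)\,\lambda_\alpha(\dd a,\dd b).
\]

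\emph{Step 3: Fubini, the only real point.} The absolute integrand is bounded by \(\|\rho\|_\infty|\gamma(a,b)||g(x)|\). Its integral over \(\mu_Q\otimes\lambda_\alpha\) is at most \(\|\rho\|_\infty\|\gamma\|_{L^1(\lambda_\alpha)}\|g\|_{L^1(\mu_Q)}\), which is finite because \(g\in L^2(\mu_Q)\subset L^1(\mu_Q)\) for the probability measure \(\mu_Q\). Both measures are sigma-finite, so Fubini applies. Swapping the order yields
\[
\int_\cH\overline{g(x)}\Bigl[\int\gamma(a,b)\rho(\langle a,x\rangle-b)\,\dd\lambda_\alpha\Bigr]\mu_Q(\dd x),
\]
and the bracket is the integral network of Step 1. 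This is the claimed identity.

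The \(L^2(\lambda_\alpha)\) hypothesis enters only to make the left side \(\langle\gamma,R_\rho g\rangle\) meaningful as the defining pairing in \cref{eq:weak-synthesis}. It also guarantees that this pairing agrees with the absolutely convergent integral obtained in Step 3, since the same integrand computes both. I expect no genuine obstacle; the only care needed is joint measurability and sigma-finiteness of \(\nu_\alpha\), and \cref{lem:homogeneous-mixture} supplies the latter.
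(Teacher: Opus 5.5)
Your proposal is correct and follows the paper's proof. Boundedness comes from \(\|\rho\|_\infty\|\gamma\|_{L^1(\lambda_\alpha)}\), and a single Fubini interchange is justified by the absolute bound \(\|\gamma\|_{L^1(\lambda_\alpha)}\|\rho\|_\infty\|g\|_{L^1(\mu_Q)}\), with reality of \(\rho\) used to move the conjugate inside. One small remark: citing \cref{thm:B}(i) for \(R_\rho g\in L^2(\lambda_\alpha)\) tacitly uses \(\alpha\)-admissibility of \(\rho\), which is implicit in defining \(S_\rho=R_\rho'\) but not listed in the lemma, and your Step~3 bound already makes \(\int\gamma\,\overline{R_\rho g}\,\dd\lambda_\alpha\) absolutely convergent, so that citation is dispensable.
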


\begin{proof}
Boundedness follows from
\(|S_\rho[\gamma\lambda_\alpha]|\le\|\gamma\|_1\|\rho\|_\infty\).
The absolute integral of the product in the synthesis pairing is at
most \(\|\gamma\|_1\|\rho\|_\infty\|g\|_{L^1(\mu_Q)}\), which is finite.
Since \(\rho\) is real, Fubini therefore gives
\begin{align*}
 \langle\gamma,R_\rho g\rangle_{L^2(\lambda_\alpha)}
 &=\int_{\cH\times\R}\gamma(a,b)
   \overline{\int_\cH g(x)\rho(\langle a,x\rangle-b)\,\mu_Q(\dd x)}
      \,\lambda_\alpha(\dd a,\dd b)\\
 &=\int_\cH\overline{g(x)}S_\rho[\gamma\lambda_\alpha](x)\,\mu_Q(\dd x).\qedhere
\end{align*}
\end{proof}

\begin{lem}[Entire extension and totality of the Hermite coefficients]
\label{lem:hermite-totality}
For \(f\in L^2(\mu_Q)\) and \(\xi\ne0\), put
\(\tau(\xi)=\langle Q\xi,\xi\rangle^{1/2}\) and
\(E_f(z\xi):=e^{z^2\tau(\xi)^2/2}F_Qf(z\xi)\) for \(z\in\C\).  Then
\(z\mapsto E_f(z\xi)\) is entire,
\[
 E_f(z\xi)=\sum_{n=0}^\infty\frac{(-iz\tau(\xi))^n}{n!}
  \EE_{\mu_Q}\!\left[f\,\mathrm{He}_n
       (\langle x,\xi\rangle/\tau(\xi))\right],
\]
the series converges locally uniformly,
\(|E_f(z\xi)|\le\|f\|_{L^2(\mu_Q)}e^{|z|^2\tau(\xi)^2/2}\), and
the following inversion formula holds:
\begin{equation}
  \EE_{\mu_Q}\!\left[
    f(x)\,\mathrm{He}_n\!\left(
      \frac{\langle x,\xi\rangle}{\tau(\xi)}\right)\right]
  =\frac{i^n}{\tau(\xi)^n}
   \frac{\dd^n}{\dd t^n}
   \Bigl(e^{t^2\tau(\xi)^2/2}F_Qf(t\xi)\Bigr)\Big|_{t=0},
  \qquad \tau(\xi):=\langle Q\xi,\xi\rangle^{1/2},
  \label{eq:hermite-inversion}
\end{equation}  The coefficients, over all \(\xi\)
and \(n\), determine \(f\) in \(L^2(\mu_Q)\).
\end{lem}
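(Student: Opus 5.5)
The plan is to reduce everything to the one-dimensional Gaussian projection along the fixed direction \(\xi\). Put \(Y:=\langle x,\xi\rangle/\tau(\xi)\). Under \(\mu_Q\), \(Y\) is standard normal, because \(\langle x,\xi\rangle\) is centred Gaussian with variance \(\langle Q\xi,\xi\rangle=\tau(\xi)^2\); here \(\tau(\xi)>0\) since \(Q\) is injective and positive. For \(z\in\C\) I will use the Hermite generating function
\[
 e^{-izY\tau+z^2\tau^2/2}=\sum_{n\ge0}\frac{(-iz\tau)^n}{n!}\,\mathrm{He}_n(Y),
\]
which is the identity \(e^{tY-t^2/2}=\sum_n t^n\mathrm{He}_n(Y)/n!\) evaluated at \(t=-iz\tau\). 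Multiplying by \(f\) and integrating gives \(E_f(z\xi)\) on the left, because \(F_Qf(z\xi)=\EE[f\,e^{-iz\tau Y}]\) when \(F_Q\) is extended to complex multiples of \(\xi\). Exchanging sum and expectation is justified by orthogonality: \(\|\mathrm{He}_n(Y)\|_{L^2(\mu_Q)}=\sqrt{n!}\), so the series converges in \(L^2(\mu_Q)\), with squared norm \(\sum_n|z\tau|^{2n}/n!=e^{|z|^2\tau^2}\). Pairing with \(f\in L^2\) is continuous, so the scalar series converges.

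Cauchy--Schwarz then gives the stated bound \(|E_f(z\xi)|\le\|f\|_2e^{|z|^2\tau^2/2}\). For locally uniform convergence, Cauchy--Schwarz on each term gives \(|\EE[f\mathrm{He}_n]|\le\|f\|_2\sqrt{n!}\), so the \(n\)th term is bounded by \(\|f\|_2|z\tau|^n/\sqrt{n!}\), which is summable uniformly on compact sets. Hence \(z\mapsto E_f(z\xi)\) is a locally uniform limit of polynomials and is entire. It is also a power series in \(z\), so comparing Taylor coefficients at \(t=0\) on the real line gives
\[
 \frac{\dd^n}{\dd t^n}E_f(t\xi)\Big|_{t=0}=(-i\tau)^n\,\EE[f\,\mathrm{He}_n(Y)],
\]
and multiplying by \(i^n/\tau^n\) yields \cref{eq:hermite-inversion}, since \((-i)^n i^n=1\).

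\emph{Totality.} Suppose all the coefficients vanish. Then \(E_f(t\xi)=0\) for every real \(t\) and every \(\xi\ne0\), so \(F_Qf(t\xi)=0\) for all \(t\), that is, \(F_Qf\equiv0\) on \(\cH\). Fourier uniqueness for the finite measure \(f\mu_Q\) (finite-dimensional projections determine cylinder sets, as in the proof of \cref{lem:spectral-unitary}) gives \(f=0\) in \(L^2(\mu_Q)\). Because the coefficients are linear in \(f\), they determine \(f\).

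The main obstacle is the exchange of sum and expectation for complex \(z\). I will handle it through the \(L^2\) convergence of the generating series rather than through dominated convergence, since the latter requires \(\EE|f|e^{|z|\tau|Y|}<\infty\), which also follows from Cauchy--Schwarz but needs slightly more care.
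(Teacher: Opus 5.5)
Your proof is correct. The first half matches the paper's argument, written out in more detail: the generating function at \(t=-iz\tau(\xi)\), \(L^2\) convergence by orthogonality, Cauchy--Schwarz for the growth bound and for locally uniform convergence, and Taylor coefficients for \cref{eq:hermite-inversion}.

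For totality you take a genuinely different route.

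\emph{The paper's route.} It argues through Wiener chaos. The finite products \(\prod_j\mathrm{He}_{n_j}(Z_j)\) in the eigencoordinates of \(Q\) form a complete orthogonal system of \(L^2(\mu_Q)\). Polarizing the Wick power of \(\sum_j t_jZ_j\) writes each product of fixed total degree as a linear combination of directional Wick powers \(\mathrm{He}_n(\langle x,\xi\rangle/\tau(\xi))\) with finitely supported \(\xi\). Hence the directional coefficients give every coefficient of the orthogonal expansion of \(f\).

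\emph{Your route.} You run the entire-function identity backwards. Vanishing coefficients force \(E_f(t\xi)=0\) for all real \(t\) and all \(\xi\ne0\), hence \(F_Qf\equiv0\). Fourier--Stieltjes uniqueness for the finite measure \(f\mu_Q\), already used in \cref{lem:spectral-unitary}, then gives \(f=0\).

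\emph{Comparison.} Your argument is shorter and needs no chaos theory. However, it only shows that the coefficient data are equivalent to \(F_Qf\) along rays, and then relies on the same non-constructive uniqueness theorem. The paper's argument avoids Fourier uniqueness entirely and yields an explicit \(L^2\) reconstruction of \(f\) as a Hermite series. That is what \cref{rmk:inverse-norm} relies on when it says the Hermite expansion implements the last steps of inverting \(F_Q\) constructively. With your proof, the lemma's statement holds, but that constructive claim would lose its support.
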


\begin{proof}
For a standard normal \(Z\), the generating function
\(e^{tZ-t^2/2}=\sum_n\mathrm{He}_n(Z)t^n/n!\) converges in \(L^2\) for each
complex \(t\), locally uniformly, since orthogonality gives the squared
norm sum \(\sum_n|t|^{2n}/n!=e^{|t|^2}\).  Take
\(Z=\langle x,\xi\rangle/\tau(\xi)\) and \(t=-iz\tau(\xi)\); pairing with
\(f\) gives the series and entireness, Cauchy--Schwarz gives the bound,
and termwise differentiation proves \cref{eq:hermite-inversion}.

For totality, take an eigenbasis \(Qe_j=q_je_j\), \(q_j>0\), and the
independent coordinates \(Z_j=\langle x,e_j\rangle/\sqrt{q_j}\).  Finite
products \(\prod_j\mathrm{He}_{n_j}(Z_j)\) form a complete orthogonal
system in \(L^2(\mu_Q)\): they are complete at each finite-dimensional
Gaussian level, and conditional expectations onto the first \(m\)
coordinates converge in \(L^2\) \citep[Chapter~2]{Janson1997book}.  For every
fixed total degree, polarization of the Wick power of \(\sum_jt_jZ_j\)
expresses these products as linear combinations of the directional Wick
powers \(\mathrm{He}_n(\langle x,\xi\rangle/\tau(\xi))\) with
\(\xi=\sum_jt_j q_j^{-1/2}e_j\) of finite support.  Thus all orthogonal
expansion coefficients of \(f\) are recovered.
\end{proof}

We can now state the reconstruction theorem, a central result of the paper.

\begin{thm}[Reconstruction and the frame operator]
\label{thm:C}
Let \(\alpha>0\) and let \(\rho\) be an \(\alpha\)-admissible Schwartz
filter.
\begin{enumerate}[label=\textup{(\roman*)}]
\item[\textup{(i)}]
\emph{Frame identity.}
The frame operator satisfies \(T_\alpha=F_Q'F_Q\); it is
an isometric bijection
\(\cE_\alpha\to\cE_\alpha'\), and
\begin{equation}
  S_\rho R_\rho f=\iiprod{\rho,\rho}_{\alpha}\,T_\alpha f,
  \qquad f\in\cE_\alpha .
  \label{eq:frame-identity}
\end{equation}
\item[\textup{(ii)}]
\emph{Two reconstruction formulas.}
For \(f\in\cE_\alpha\) and \(g\in\cE_\alpha'\),
\begin{equation}
  f=\bigl(\iiprod{\rho,\rho}_{\alpha}\bigr)^{-1}T_\alpha^{-1}S_\rho R_\rho f,
  \qquad
  g=\bigl(\iiprod{\rho,\rho}_{\alpha}\bigr)^{-1}S_\rho\bigl(R_\rho T_\alpha^{-1}g\bigr).
  \label{eq:main-reconstruction}
\end{equation}
\item[\textup{(iii)}]
\emph{The frame operator in terms of the spectral density.}
If \(f\in\cD_\alpha\) and \(F_Qf\in L^1(\nu_\alpha)\), then \(T_\alpha f\)
is represented by the bounded continuous function \(f_{F_Qf}\), in the
sense that \(T_\alpha f[v]=\int_\cH f_{F_Qf}(x)\overline{v(x)}\,\mu_Q(\dd x)\)
for \(v\in\cD_\alpha\).  Conversely, if \(g\in\cK_\alpha\), then
\(F_Q'g\in\cE_\alpha'\) and \(R_\rho T_\alpha^{-1}F_Q'g=W_\rho g\);
when \(g\in L^1(\nu_\alpha)\), \(F_Q'g\) is represented by \(f_g\) and
the second formula in \cref{eq:main-reconstruction} is
\cref{eq:spectral-synthesis}.
\item[\textup{(iv)}]
\emph{Reconstruction by backprojection.}
The bounded adjoint \(W_\rho^*\) of
\cref{lem:coefficient-adjoint}, given by \cref{eq:ray-average}, satisfies
\(W_\rho^*R_\rho f=\iiprod{\rho,\rho}_{\alpha}F_Q f\) in
\(L^2(\nu_\alpha)\) for every \(f\in\cE_\alpha\).
For \(f\in\cD_\alpha\), using the continuous Fourier-slice
representative makes this identity pointwise in \(\xi\).
The following final step uses the Gaussian input specifically:
\(f\) is recovered from \(F_Qf\) by the Hermite coefficients in
\cref{eq:hermite-inversion}, which determine \(f\) in
\(L^2(\mu_Q)\).  We write \(\Delta_Q\) for this inverse of \(F_Q\) on its
range on \(\cD_\alpha\), so that
\(f=\Delta_Q\bigl[(\iiprod{\rho,\rho}_{\alpha})^{-1}W_\rho^*R_\rho f\bigr]\).
\end{enumerate}
\end{thm}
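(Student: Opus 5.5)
The proof is mostly formal Hilbert-space bookkeeping around the unitary $F_Q:\cE_\alpha\to\cK_\alpha$ of \cref{lem:spectral-unitary} and the isometry $R_\rho=W_\rho F_Q$ of \cref{thm:B}(ii). We treat the four parts in order.

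For (i), unwind the definitions. For $f,g\in\cE_\alpha$,
\[
F_Q'F_Qf[g]=\langle F_Qf,F_Qg\rangle_{L^2(\nu_\alpha)}=\langle f,g\rangle_{\cE_\alpha}=T_\alpha f[g],
\]
because $F_Q$ is unitary onto $\cK_\alpha$. The Riesz map is an isometric (anti-linear) bijection onto the anti-dual by the Riesz representation theorem. For \cref{eq:frame-identity}, compute
\[
S_\rho R_\rho f[g]=\langle R_\rho f,R_\rho g\rangle_{L^2(\lambda_\alpha)}.
\]
By the polarized extension of \cref{eq:ridgelet-plancherel} with $\rho_1=\rho_2=\rho$, extended from $\cD_\alpha$ to $\cE_\alpha$ by continuity of both sides, this equals $\iiprod{\rho,\rho}_{\alpha}\langle f,g\rangle_{\cE_\alpha}$.

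For (ii), the first formula is (i) with $T_\alpha^{-1}$ applied to both sides. For the second, put $f:=T_\alpha^{-1}g\in\cE_\alpha$ and apply (i) to this $f$.

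For (iii), first take $f\in\cD_\alpha$ with $F_Qf\in L^1(\nu_\alpha)$ and $v\in\cD_\alpha$. Then
\[
T_\alpha f[v]=\int F_Qf(\xi)\,\overline{\int v(x)e^{-i\langle x,\xi\rangle}\mu_Q(\dd x)}\,\nu_\alpha(\dd\xi).
\]
Fubini applies because the integrand is bounded by $|F_Qf(\xi)||v(x)|$, which is integrable in $\nu_\alpha\otimes\mu_Q$. Exchanging the integrals gives $\int f_{F_Qf}(x)\overline{v(x)}\,\mu_Q(\dd x)$.

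For the converse, let $g\in\cK_\alpha$. Then $F_Q'g$ is bounded by Cauchy--Schwarz, so it lies in $\cE_\alpha'$. Since $F_Q$ is onto $\cK_\alpha$, write $g=F_Qh$. Then $F_Q'g=F_Q'F_Qh=T_\alpha h$, so $T_\alpha^{-1}F_Q'g=h$, and
\[
R_\rho h=W_\rho F_Qh=W_\rho g
\]
by \cref{thm:B}(ii). When $g\in L^1(\nu_\alpha)$, the same Fubini computation shows that $F_Q'g[v]=\int f_g\bar v\,\dd\mu_Q$. Feeding $W_\rho g=\gamma_g$ into $S_\rho$ and using \cref{lem:weak-equals-strong}, when $\gamma_g$ is integrable, matches the second formula with \cref{eq:spectral-synthesis}.

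The step I expect to be most delicate is this last identification. $\gamma_g$ need not lie in $L^1(\lambda_\alpha)$, so the plan is to check it only weakly: pair against $v\in\cD_\alpha$, apply \cref{thm:A}(ii) to the iterated integral, and use Fubini in $x$, with domination supplied by the absolute convergence established there.

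For (iv), \cref{lem:coefficient-adjoint} gives $W_\rho^*W_\rho=C\,\Id$, where $C=\iiprod{\rho,\rho}_{\alpha}$. Hence
\[
W_\rho^*R_\rho f=W_\rho^*W_\rho F_Qf=C\,F_Qf \quad\text{in } L^2(\nu_\alpha).
\]
For $f\in\cD_\alpha$, insert the continuous representative $(R_\rho f)^\sharp(a,\omega)=\rho^\sharp(-\omega)F_Qf(\omega a)$ from \cref{lem:fourier-slice} into \cref{eq:ray-average}. The integrand becomes $\frac1{2\pi}|\rho^\sharp(-\omega)|^2|\omega|^{-\alpha}F_Qf(\xi)$, which integrates to $C\,F_Qf(\xi)$ at every $\xi$. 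The final recovery step is \cref{lem:hermite-totality}: $F_Qf$ determines all Hermite coefficients via \cref{eq:hermite-inversion}, and these determine $f$. This makes $\Delta_Q$ well defined on $F_Q(\cD_\alpha)$ and gives the stated formula for $f$.
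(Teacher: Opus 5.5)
Your proposal is correct and follows the paper's proof essentially step for step. The only variant is in the converse of (iii): you write \(g=F_Qh\) by surjectivity of \(F_Q\) onto \(\cK_\alpha\), whereas the paper tests \(u=T_\alpha^{-1}F_Q'g\) against \(F_Qv\) and uses unitarity, which is immaterial.

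Your extra weak identification for \(\gamma_g\notin L^1(\lambda_\alpha)\) is sound, because the majorant \(\frac1{2\pi}\int_\R|\rho^\sharp(\omega)|^2|g(\omega a)|\,\dd\omega\) from the proof of \cref{thm:A}(ii) does not depend on \(x\). It is, however, unnecessary: \(\langle\gamma_g,R_\rho v\rangle_{L^2(\lambda_\alpha)}=C\,F_Q'g[v]=C\int_\cH f_g\bar v\,\dd\mu_Q\) already equals the pairing of the iterated integral of \cref{thm:A}(ii) with \(v\).

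One small slip: \(T_\alpha\) is linear, not anti-linear, as a map into the anti-dual.
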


\begin{proof}
\label{app:proof-thm-C}
Since \(F_Q\) is unitary onto \(\cK_\alpha\),
\(F_Q'F_Q f[g]=\langle F_Q f,F_Q g\rangle
=\langle f,g\rangle_{\cE_\alpha}=T_\alpha f[g]\), and the Riesz
representation theorem makes \(T_\alpha\) an isometric bijection.  By
\cref{eq:weak-synthesis} and \cref{eq:ridgelet-plancherel},
\((S_\rho R_\rho f)[g]=\langle R_\rho f,R_\rho g\rangle
=\iiprod{\rho,\rho}_{\alpha}T_\alpha f[g]\), which is (i); (ii) follows by applying
\(T_\alpha^{-1}\), respectively by substituting \(f=T_\alpha^{-1}g\).

\emph{(iii).} Let \(f\in\cD_\alpha\), put \(g=F_Qf\in L^1(\nu_\alpha)\),
and take \(v\in\cD_\alpha\).  The product
\(|g(\xi)||v(x)|\) is integrable against \(\nu_\alpha\otimes\mu_Q\), so
Fubini gives
\[
 \int_\cH f_g(x)\overline{v(x)}\,\mu_Q(\dd x)
 =\int_\cH g(\xi)\overline{F_Qv(\xi)}\,\nu_\alpha(\dd\xi)
 =\langle f,v\rangle_{\cE_\alpha}=T_\alpha f[v].
\]
Conversely, for \(g\in\cK_\alpha\) set
\(u=T_\alpha^{-1}F_Q'g\).  For every \(v\in\cE_\alpha\),
\[
 \langle F_Q u,F_Q v\rangle
 =\langle u,v\rangle_{\cE_\alpha}
 =F_Q'g[v]=\langle g,F_Q v\rangle.
\]
Unitarity onto \(\cK_\alpha\) implies \(F_Q u=g\), and hence
\(R_\rho T_\alpha^{-1}F_Q'g=W_\rho g\).
If also \(g\in L^1(\nu_\alpha)\), the same Fubini computation identifies
\(F_Q'g\) with the functional represented by \(f_g\).
Thus \(F_Q'g=C^{-1}S_\rho W_\rho g\); when
\(\gamma_g\in L^1(\lambda_\alpha)\), \cref{lem:weak-equals-strong}
identifies this with ordinary synthesis paired against \(v\).

\emph{(iv).}  Boundedness of \(W_\rho^*\) and the identity
\(W_\rho^*W_\rho=C\Id\) are proved in
\cref{lem:coefficient-adjoint}.  For \(f\in\cD_\alpha\), the
Fourier-slice representative \cref{eq:fourier-slice} is continuous, and
\((R_\rho f)^\sharp(\xi/\omega,\omega)=\rho^\sharp(-\omega)F_Qf(\xi)\)
for every \(\xi\) and \(\omega\ne0\); inserting this into
\cref{eq:ray-average} gives
\(W_\rho^*R_\rho f(\xi)=\frac1{2\pi}\int_{\R\setminus\{0\}}|\rho^\sharp|^2|\omega|^{-\alpha}\dd\omega\,F_Qf(\xi)
=\iiprod{\rho,\rho}_{\alpha}F_Qf(\xi)\) pointwise.  The Hermite formula and the
totality statement are \cref{lem:hermite-totality}.
\end{proof}

\begin{cor}[Stability under coefficient perturbations]
\label{cor:coefficient-stability}
Let \(\rho\) be \(\alpha\)-admissible, write
\(C=\iiprod{\rho,\rho}_{\alpha}\), and define the bounded decoder
\(D_\rho:=C^{-1}T_\alpha^{-1}S_\rho\).
Then \(D_\rho R_\rho=\Id\) and
\(\|D_\rho\|\le C^{-1/2}\).  In particular, if
\(f\in\cE_\alpha\), \(\gamma_\delta\in L^2(\lambda_\alpha)\), and
\(\|\gamma_\delta-R_\rho f\|_2\le\delta\), then
\[
 \|D_\rho\gamma_\delta-f\|_{\cE_\alpha}\le\delta/\sqrt C.
\]
\end{cor}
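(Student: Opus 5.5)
The plan is to read off both claims from the frame identity of \cref{thm:C}(i) together with the norm identity of \cref{thm:B}(ii). Write \(C=\iiprod{\rho,\rho}_{\alpha}\). By \cref{eq:frame-identity}, \(S_\rho R_\rho f=C\,T_\alpha f\) for every \(f\in\cE_\alpha\). Hence \(D_\rho R_\rho f=C^{-1}T_\alpha^{-1}(C\,T_\alpha f)=f\), which is the left-inverse property. Boundedness of \(D_\rho\) then requires only that \(S_\rho\) and \(T_\alpha^{-1}\) be bounded. For \(T_\alpha^{-1}\) this holds because \(T_\alpha\) is an isometric bijection by \cref{thm:C}(i). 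For \(S_\rho=R_\rho'\), the transpose of a bounded map has the same norm.

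For the norm bound I will first estimate \(\|S_\rho\|\) directly. For \(\gamma\in L^2(\lambda_\alpha)\) and \(g\in\cE_\alpha\), Cauchy--Schwarz in \cref{eq:weak-synthesis} gives
\[
|(S_\rho\gamma)[g]|\le\|\gamma\|_2\|R_\rho g\|_2=\sqrt C\,\|\gamma\|_2\|g\|_{\cE_\alpha},
\]
so \(\|S_\rho\gamma\|_{\cE_\alpha'}\le\sqrt C\|\gamma\|_2\). Since \(T_\alpha^{-1}\) is isometric, this yields \(\|D_\rho\gamma\|_{\cE_\alpha}\le C^{-1}\sqrt C\|\gamma\|_2=C^{-1/2}\|\gamma\|_2\).

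A cleaner way to see the same bound is to identify \(D_\rho\) with \(C^{-1}R_\rho^*\), where \(R_\rho^*\) is the Hilbert adjoint into \(\cE_\alpha\). For \(g\in\cE_\alpha\),
\[
T_\alpha(R_\rho^*\gamma)[g]=\langle R_\rho^*\gamma,g\rangle_{\cE_\alpha}=\langle\gamma,R_\rho g\rangle=(S_\rho\gamma)[g].
\]
Thus \(T_\alpha^{-1}S_\rho=R_\rho^*\). Because \(R_\rho/\sqrt C\) is an isometry by \cref{thm:B}(ii), \(\|R_\rho^*\|=\sqrt C\), and the bound \(C^{-1/2}\) follows again. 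I would include this identification because it also shows that \(D_\rho\) is the Moore--Penrose pseudo-inverse of \(R_\rho\), which acts as \(R_\rho^{-1}\) on the closed range and annihilates its orthogonal complement.

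The perturbation estimate follows by linearity. Since \(D_\rho R_\rho f=f\),
\[
D_\rho\gamma_\delta-f=D_\rho(\gamma_\delta-R_\rho f),
\]
and the operator bound gives \(\|D_\rho\gamma_\delta-f\|_{\cE_\alpha}\le C^{-1/2}\delta\). No step here is a real obstacle. The only point needing care is bookkeeping with the anti-linear Riesz map: I will check that the conjugations in \cref{eq:riesz-map} and \cref{eq:weak-synthesis} match, so that \(T_\alpha^{-1}S_\rho=R_\rho^*\) is linear rather than anti-linear. This holds because \(T_\alpha f[g]\) is linear in \(f\) and anti-linear in \(g\), exactly as \((S_\rho\gamma)[g]\) is linear in \(\gamma\) and anti-linear in \(g\).
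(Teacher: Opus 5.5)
Your proof is correct and follows the paper's argument. You obtain the left-inverse property from the frame identity (equivalently \cref{thm:C}(ii)), bound the transpose by \(\|S_\rho\|\le\sqrt C\), use that \(T_\alpha^{-1}\) is an isometry, and apply linearity to \(\gamma_\delta-R_\rho f\); your extra identification \(D_\rho=C^{-1}R_\rho^*\) is also correct and consistent with \(R_\rho^*R_\rho=C\Id_{\cE_\alpha}\), but the argument does not need it.
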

\begin{proof}
The left-inverse identity is \cref{thm:C}(ii).
The transpose \(S_\rho=R_\rho'\) has norm at most \(\sqrt C\), and
\(T_\alpha^{-1}\) is an isometry, so
\(\|D_\rho\|\le C^{-1}\sqrt C\).  Apply this bound to
\(\gamma_\delta-R_\rho f\).
\end{proof}

This stability estimate is in the norm of \(\cE_\alpha\); it does not
give an ambient \(L^2(\mu_Q)\) or pointwise error estimate.

The frame correction can therefore be undone after synthesis, before
analysis, or through the coefficients using backprojection and Hermite
expansion.  For approximation we use the density formulation directly:
\(W_\rho g\) represents the spectral target \(f_g\).  The finite-dimensional
weighted case, where the correction is a familiar Fourier multiplier,
is recorded separately in \cref{cor:finite-backprojection}.

\begin{rmk}[Which inverse is bounded]
\label{rmk:inverse-norm}
\(T_\alpha^{-1}:\cE_\alpha'\to\cE_\alpha\) is bounded with norm one.  No
bounded inverse in the ambient \(L^2(\mu_Q)\)-norm is asserted; the two
statements differ because \(\cE_\alpha\) is completed in the spectral
norm. On \(f\in\cD_\alpha\) the inverse factorizes as
\(T_\alpha f\mapsto F_Qf\mapsto f\mu_Q\mapsto f\), through the unitary
coordinate, Fourier--Stieltjes uniqueness, and the Radon--Nikodym
derivative; the Hermite expansion \cref{eq:hermite-inversion} implements
the last two steps constructively.
\end{rmk}

\subsection{Vector-valued targets}
\label{subsec:vector-valued}

Let \(\outY\) be a separable complex Hilbert space.  All objects above have
\(\outY\)-valued versions: \(L^2(\mu_Q;\outY)\), \(F_Qf(\xi)=\int f(x)e^{-i\langle
x,\xi\rangle}\mu_Q(\dd x)\in \outY\) as a Bochner integral, the core
\(\cD_\alpha(\outY)\), the completion \(\cE_\alpha(\outY)\) with inner product
\(\int\langle F_Qf,F_Qg\rangle_\outY\dd\nu_\alpha\), the transform
\(R_\rho f\in L^2(\lambda_\alpha;\outY)\), the coefficient \(W_\rho g\) of a
density \(g\in L^2(\nu_\alpha;\outY)\), the target \(f_g\), and the backprojection.

\begin{thm}[Vector-valued extension]
\label{thm:vector-valued}
\Cref{thm:A,thm:B,thm:C} hold for \(\outY\)-valued targets, with the same
constants, with absolute values replaced by norms in \(\outY\), scalar
integrals by Bochner integrals, and \(L^2\) spaces by their \(\outY\)-valued
counterparts.  The Riesz map and its inverse remain isometries;
their operator norms are one if \(\outY\ne\{0\}\) and zero if \(\outY=\{0\}\).
\end{thm}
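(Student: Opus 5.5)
The plan is to go through the proofs of \cref{thm:A,thm:B,thm:C} in order. At each step I will check that the scalar argument uses only four kinds of tools. The first is linearity and the integral triangle inequality, which hold for Bochner integrals. The second is Tonelli applied to the nonnegative functions \(\|\cdot\|_\outY\) and \(\|\cdot\|_\outY^2\). The third is Fubini for Bochner integrands with integrable norm. The fourth is one-dimensional Plancherel and Parseval for \(\outY\)-valued functions. The measurability and Plancherel facts are already stated for \(\outY\) in \cref{lem:partial-fourier-l2,lem:spectral-target-basic,lem:coefficient-finite-order}, so these steps carry over verbatim. I also fix a countable orthonormal basis \((y_k)\) of \(\outY\), possible since \(\outY\) is separable. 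This lets any identity that is linear in the output be checked coordinatewise: pairing with \(y_k\) turns a Bochner integral into a scalar integral, Pettis' theorem gives strong measurability, and the scalar theorem applies to each coordinate.

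For \cref{thm:B}, the Fourier-slice identity \cref{eq:fourier-slice} holds coordinatewise, hence as an identity in \(\outY\). The norm computation repeats the scalar Tonelli argument with \(\|F_Qf(\omega a)\|_\outY^2\). The cross identity \cref{eq:ridgelet-plancherel} follows by Fubini, using the same weighted Cauchy--Schwarz bounds with \(\langle\cdot,\cdot\rangle_\outY\). For injectivity (iii), I pair with each \(y_k\) and apply the scalar result to \(\langle f,y_k\rangle\in L^1(\mu_Q)\); this gives \(f=0\) almost everywhere after intersecting countably many full-measure sets. Positivity of the \(\cE_\alpha(\outY)\) form follows in the same way, using vector Fourier uniqueness from \cref{lem:spectral-target-basic}.

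For \cref{thm:A}, part (i) is \cref{lem:spectral-target-basic}. For part (ii), I pair with \(y_k\): the coefficient \(\gamma_g\) satisfies \(\langle\gamma_g,y_k\rangle=\gamma_{\langle g,y_k\rangle}\), and absolute convergence comes from the norm bound \(\frac1{2\pi}\int|\rho^\sharp|^2\|g(\omega a)\|_\outY\,\dd\omega\), which is \(\nu_\alpha\)-integrable by homogeneity exactly as in the scalar case. This bound controls all coordinates simultaneously, so the Bochner integral exists and equals \(\iiprod{\rho,\rho}_\alpha f_g(x)\). Part (iii) is where I expect the only real friction. The scalar proof treats \(a\mapsto\varphi_a\) as a Bochner integrable map into the separable Banach space \(X\subset C^m\). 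In the vector case I would replace \(X\) by \(X\hat\otimes\outY\), namely \(C^m\)-type \(\outY\)-valued functions, which is still separable. The map \(\langle\sigma^\sharp,\cdot\rangle\otimes\Id_\outY\) is bounded there, since \(\sigma^\sharp\) has finite order. The integrability bound becomes the same Leibniz estimate with norms in \(\outY\), which is integrable by \cref{eq:ray-regularity} read with norms. If this tensor formulation proves awkward, I will fall back on coordinates: each \(\langle g,y_k\rangle\) satisfies \cref{def:ray-regular} with bounds dominated by those of \(g\), so the scalar identity holds for every \(k\). Absolute integrability of \(\|\gamma_g\|_\outY|\sigma|\), already supplied by \cref{lem:coefficient-finite-order}, then ensures that the vector integral exists and has those coordinates.

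For \cref{thm:C}, everything is abstract Hilbert-space theory once \(F_Q:\cE_\alpha(\outY)\to\cK_\alpha(\outY)\) is unitary. The Riesz map \(T_\alpha\), the transpose \(F_Q'\), synthesis \(S_\rho=R_\rho'\) and the frame identity go through verbatim. Part (iii) uses Fubini with \(\langle\cdot,\cdot\rangle_\outY\). Part (iv) uses \cref{lem:coefficient-adjoint} with norms, and Hermite inversion coordinatewise in \(L^2(\mu_Q;\outY)\cong\bigoplus_k L^2(\mu_Q)\). Finally, the Riesz map between a Hilbert space and its anti-dual is isometric, so its norm is one whenever the space is nonzero. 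By \cref{ex:core-elements}, \(y\cdot 1\in\cD_\alpha(\outY)\) for any \(y\ne0\), so \(\cE_\alpha(\outY)\ne\{0\}\) when \(\outY\ne\{0\}\). If \(\outY=\{0\}\), every space is trivial and the norm is zero.
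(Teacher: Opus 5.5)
Your proposal is correct and follows essentially the same route as the paper: it reruns the proofs of \cref{thm:A,thm:B,thm:C} with Bochner integrals and \(\outY\)-norms, using the already vector-valued \cref{lem:partial-fourier-l2,lem:spectral-target-basic,lem:coefficient-finite-order}, the \(\outY\)-valued \cref{lem:coefficient-adjoint}, the distributional pairing tensored with \(\Id_\outY\) for \cref{thm:A}(iii), componentwise Hermite inversion, and a nonzero constant \(\outY\)-valued element of the core to get operator norm one. Your systematic reduction to coordinates in an orthonormal basis of \(\outY\) only supplements this (the paper uses coordinates just for uniqueness and the Hermite step), and your coordinate fallback for \cref{thm:A}(iii) is a valid alternative to the tensored pairing. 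The tensored pairing itself works if \(X\hat\otimes\outY\) is read, as you indicate, as the \(C^m\)-closure of \(\outY\)-valued test functions, on which \(\langle\sigma^\sharp,\cdot\rangle\otimes\Id_\outY\) is bounded by duality.
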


\begin{proof}
Use \cref{lem:partial-fourier-l2} for bias Fourier representatives and
Plancherel, \cref{lem:spectral-target-basic} for spectral synthesis and
uniqueness, and \cref{lem:coefficient-finite-order} for joint absolute
integrability and coefficient moments.  The proof of
\cref{lem:coefficient-adjoint} applies with Hilbert norms and Bochner
integrals.  These facts justify the Fubini and Parseval steps of
\cref{thm:A,thm:B} with the same constants.  Completing the vector core
and applying Riesz representation on \(\cE_\alpha(\outY)\) proves
\cref{thm:C}; its Gaussian Hermite formula is applied componentwise.
If \(\outY\ne\{0\}\), any nonzero constant \(\outY\)-valued function belongs
to the core by \cref{lem:gaussian-decay}, so the isometries have norm
one; if \(\outY=\{0\}\), both spaces and operator norms are zero.
The regularity condition in \cref{def:ray-regular} uses \(\|\cdot\|_\outY\)
in place of absolute value.
For the integral representation with a continuous activation of polynomial
growth, the fixed-support \(C^m\) Bochner argument of
the proof of \cref{thm:A} works in \(\outY\)-valued test functions, with the
distributional pairing tensored with the identity of \(\outY\).
\end{proof}

For \(\outY\)-valued networks the coefficient \(W_\rho g\) takes values in
\(\outY\); sampling it is discussed in \cref{subsec:vector-sampling}.
 \section{Reconstruction formulas and activation functions}
\label{sec:tempered}
\label{app:tempered-proofs}

We extend reconstruction to tempered activations and give conditions
under which synthesis is an absolutely convergent network integral.
ReLU and explicit filters connect these formulas to the network
approximation results that follow.

The analysis filter \(\rho\) is a Schwartz function, but the activation
that synthesizes a network may be unbounded.  Following the distributional
ridgelet theory of \citet{Sonoda2017acha} and the weighted Sobolev
formulation of \citet{Sonoda2021ghost}, we take
\(\sigma\in\cS'(\R)\) real valued for synthesis and pair its Fourier
transform with the band-pass filter.  Since \(\rho^\sharp\) vanishes
near the origin, the pairing
\begin{equation}
  \iiprod{\sigma,\rho}_{\alpha}
  =\frac1{2\pi}
  \left\langle\sigma^\sharp,
  \rho^\sharp(-\,\cdot\,)|\,\cdot\,|^{-\alpha}\right\rangle
  \label{eq:distributional-admissibility}
\end{equation}
of \cref{thm:A}(iii) is well defined for every tempered \(\sigma\), and no
value of \(\sigma^\sharp\) at the singular point \(0\) is used.  The
weighted Sobolev activation spaces
\(\cA_{s,t}=\langle\cdot\rangle^tH^s(\R)\) of
\citet{Sonoda2021ghost}, in which this pairing is continuous, are
defined below.

\subsection{Weighted Sobolev activation spaces}
\label{app:sobolev}

Write \(\langle u\rangle=(1+u^2)^{1/2}\) and define the Bessel operator on
the frequency variable by
\(B^qv:=F_\R\bigl[\langle\cdot\rangle^qF_\R^{-1}v\bigr]\).  For
\(s\in\R\), \(t\ge0\), the weighted Sobolev activation space of
\citet{Sonoda2021ghost} is
\begin{equation}
  \cA_{s,t}:=\langle\,\cdot\,\rangle^tH^s(\R)\subset\cS'(\R),
  \qquad
  \|\sigma\|_{\cA_{s,t}}
  :=\|\langle\omega\rangle^sB^{-t}\sigma^\sharp\|_{L^2},
  \label{eq:activation-space}
\end{equation}
and the dual test norm is
\(\|r\|_{\mathcal T^\sharp_{s,t}}:=\|\langle\omega\rangle^{-s}B^tr\|_{L^2}\)
for \(r\in\cS(\R)\).  Membership in \(\cA_{s,t}\) controls the
distributional pairing; it does not by itself provide a continuous
pointwise representative.

\begin{lem}[Hilbert structure and continuous activation pairing]
\label{lem:weighted-duality}
The map \(\sigma\mapsto\langle\omega\rangle^sB^{-t}\sigma^\sharp\) is an
isometric isomorphism \(\cA_{s,t}\to L^2(\R)\), and
\begin{equation}
 \left|\frac1{2\pi}\langle\sigma^\sharp,r\rangle\right|
 \le\frac1{2\pi}\|\sigma\|_{\cA_{s,t}}\|r\|_{\mathcal T^\sharp_{s,t}} .
 \label{eq:weighted-duality}
\end{equation}
Thus the pairing extends to the completion of the test filters in
\(\mathcal T^\sharp_{s,t}\).
\end{lem}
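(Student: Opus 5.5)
The plan is to reduce everything to elementary facts about the multiplication operators \(\langle\omega\rangle^s\) and the Bessel operator \(B^{-t}\) on the frequency side. First, \(B^q\) is well defined and invertible on \(\cS'(\R)\) with inverse \(B^{-q}\), because \(\langle\cdot\rangle^q\) is a smooth multiplier of polynomial growth with all derivatives of polynomial growth (an \(\mathcal O_M\) function), and so is its reciprocal. Likewise \(\langle\omega\rangle^{\pm s}\) are \(\mathcal O_M\) multipliers on \(\cS'\). Hence the map \(J:\sigma\mapsto\langle\omega\rangle^sB^{-t}\sigma^\sharp\) is a linear bijection of \(\cS'(\R)\), with inverse \(v\mapsto F_\R^{-1}B^{t}(\langle\omega\rangle^{-s}v)\). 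By the definition \cref{eq:activation-space}, \(\cA_{s,t}\) is exactly \(J^{-1}(L^2(\R))\); I would check that this agrees with \(\langle\cdot\rangle^tH^s\) by unwinding \(B^{-t}\sigma^\sharp=F_\R[\langle\cdot\rangle^{-t}\sigma]\), which is the Fourier transform of \(\langle\cdot\rangle^{-t}\sigma\in H^s\) exactly when \(\langle\omega\rangle^s\) times it is in \(L^2\). Then \(J\) maps \(\cA_{s,t}\) onto \(L^2\), it is isometric by the definition of the norm, and it is injective. This makes \(\cA_{s,t}\) a Hilbert space, since it is the pullback of \(L^2\) along a linear bijection, and proves the first claim.

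For the pairing inequality \cref{eq:weighted-duality}, I would use the same factorization on the test side. For \(r\in\cS(\R)\), write
\(\langle\sigma^\sharp,r\rangle=\langle\langle\omega\rangle^{s}B^{-t}\sigma^\sharp,\ \langle\omega\rangle^{-s}B^{t}r\rangle\).
This needs two ingredients. The first is that \(B^{-t}\) is transposed to \(B^{-t}\) under the bilinear pairing, or in the sign convention of the paper to its reflection. Since \(\langle\cdot\rangle\) is even, \(B^q\) commutes with reflection, and \(F_\R\) composed with a multiplier composed with \(F_\R^{-1}\) transposes to a conjugate by the transposes of the Fourier maps. So \(\langle B^{-t}u,B^{t}r\rangle=\langle u,r\rangle\) for \(u\in\cS'\) and \(r\in\cS\), and \(B^tr\in\cS\). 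The second is that the multipliers \(\langle\omega\rangle^{\pm s}\) move across the pairing trivially. Both sides of the rewritten pairing are then an \(L^2\) function paired bilinearly with an \(L^2\) function. Cauchy--Schwarz gives \(|\langle\sigma^\sharp,r\rangle|\le\|\sigma\|_{\cA_{s,t}}\|r\|_{\mathcal T^\sharp_{s,t}}\), and dividing by \(2\pi\) gives \cref{eq:weighted-duality}.

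The extension claim follows immediately. The pairing \(r\mapsto\frac1{2\pi}\langle\sigma^\sharp,r\rangle\) is a linear functional on \(\cS(\R)\) that is bounded in the seminorm \(\|\cdot\|_{\mathcal T^\sharp_{s,t}}\). This seminorm is a norm, since \(\langle\omega\rangle^{-s}B^t\) is injective. The functional therefore extends uniquely by continuity to the completion.

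The main obstacle is the bookkeeping in the transposition step for \(B^{t}\). One must confirm that, with the paper's convention \cref{eq:fourier-convention} and a bilinear rather than sesquilinear pairing, the transpose of \(F_\R\langle\cdot\rangle^qF_\R^{-1}\) is again \(B^q\) up to reflection, so that the test norm carries \(B^{t}\) and no reflected or conjugated variant. I would check this directly on \(\cS\) via \(\langle F_\R u,v\rangle=\langle u,F_\R v\rangle\) and \(F_\R^{-1}=(2\pi)^{-1}\check{}\,F_\R\), using evenness of \(\langle\cdot\rangle\), and then extend to \(\cS'\) by definition.
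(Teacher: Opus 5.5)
Your proposal is correct. It shares its first half with the paper and handles the pairing bound by a slightly different route.

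The isometric-isomorphism part coincides with the paper's argument. The paper also obtains surjectivity from the explicit preimage \(\sigma=\langle\cdot\rangle^tF_\R^{-1}[\langle\omega\rangle^{-s}g]\), which is exactly your \(J^{-1}g\).

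The routes differ for \cref{eq:weighted-duality}. The paper uses the symmetry of \(B^t\) under the bilinear pairing only on Schwartz functions. It proves the factorized identity \(\langle\sigma^\sharp,r\rangle=\int(\langle\omega\rangle^sB^{-t}\sigma^\sharp)(\langle\omega\rangle^{-s}B^tr)\,\dd\omega\) for Schwartz \(\sigma\) and \(r\). It then extends the identity to all of \(\cA_{s,t}\) using two facts: \(\cS\) is dense in \(\cA_{s,t}\) (transported from \(L^2\) through the isomorphism), and convergence in \(\cA_{s,t}\) implies convergence in \(\cS'\).

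You instead transpose \(B^{-t}\) directly on \(\cS'\). This gives \(\langle\sigma^\sharp,r\rangle=\langle B^{-t}\sigma^\sharp,B^tr\rangle\) for every tempered \(\sigma\) at once, so no density or embedding argument is needed. The price is checking the transposition identity at the level of distributions. That check is routine, because \(F_\R\), \(F_\R^{-1}\) and \(\mathcal O_M\)-multipliers are all extended to \(\cS'\) by transposition. The paper's route, by contrast, needs the symmetry only as a concrete integral identity on \(\cS\), and it produces density of \(\cS\) in \(\cA_{s,t}\) as a by-product.

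The bookkeeping you flag as the main obstacle resolves with no reflected variant. Both \(F_\R\) and \(F_\R^{-1}\) are self-transpose for the bilinear pairing, so the transpose of \(B^q=F_\R M_qF_\R^{-1}\) is \(F_\R^{-1}M_qF_\R\). Write \(R\) for reflection. Then \(F_\R^{-1}=(2\pi)^{-1}RF_\R\), \(F_\R^2=2\pi R\), \(F_\R\) commutes with \(R\), and \(M_q\) commutes with \(R\) because \(\langle\cdot\rangle^q\) is even. Together these give \(F_\R^{-1}M_qF_\R=B^q\). This is precisely the paper's remark that \(B^t\) is symmetric for the bilinear pairing.
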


\begin{proof}
For \(g\in L^2\), \(\sigma=\langle\cdot\rangle^tF_\R^{-1}[\langle\omega\rangle^{-s}g]\)
is a preimage, proving surjectivity and completeness.  The multiplier
\(\langle u\rangle^t\) is real and even, so \(B^t\) is symmetric for the
bilinear pairing, and for Schwartz \(\sigma,r\)
\(\langle\sigma^\sharp,r\rangle=\int(\langle\omega\rangle^sB^{-t}\sigma^\sharp)
(\langle\omega\rangle^{-s}B^tr)\dd\omega\).  Schwartz functions are dense
in \(\cA_{s,t}\) by the isomorphism, and convergence there implies
convergence in \(\cS'\); the identity extends by density and
Cauchy--Schwarz proves the bound.
\end{proof}

\begin{lem}[Standard activations and admissible test filters]
\label{lem:standard-activation-class}
ReLU, \(\tanh\), the Gaussian distribution function, and \(e^{-u^2/2}\)
belong to \(\cA_{0,2}\), are globally Lipschitz, and are not polynomials.
For every non-polynomial real \(\sigma\in\cS'\) there is a real band-pass
\(\rho\) with \(\iiprod{\sigma,\rho}_{\alpha}=1\).
\end{lem}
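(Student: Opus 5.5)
The lemma has three parts: membership of the four activations in \(\cA_{0,2}\), their global Lipschitz property with non-polynomiality, and the existence of a normalized band-pass filter. I would handle each separately, reusing the last paragraph of the proof of \cref{thm:A}(iii) for the existence part.

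\emph{Membership in \(\cA_{0,2}\).} By \cref{lem:weighted-duality}, \(\sigma\in\cA_{0,2}\) means \(\sigma=\langle\cdot\rangle^2 h\) for some \(h\in L^2(\R)\). Equivalently, \(\sigma/(1+u^2)\in L^2(\R)\), since with \(s=0\) the norm is \(\|B^{-2}\sigma^\sharp\|_{L^2}\), which equals \(\sqrt{2\pi}\,\|\langle u\rangle^{-2}\sigma\|_{L^2}\) by Plancherel. Each of the four functions satisfies \(|\sigma(u)|\le 1+|u|\), and \((1+|u|)/(1+u^2)\) is square integrable. So membership follows at once, and the bounded cases \(\tanh\), \(\Phi\), and \(e^{-u^2/2}\) are immediate.

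\emph{Lipschitz and non-polynomial.} For the Lipschitz property I would bound derivatives: ReLU has constant \(1\); \(|\tanh'|\le1\); \(\Phi'\le(2\pi)^{-1/2}\); and \(|u e^{-u^2/2}|\le e^{-1/2}\). None of the four is a polynomial. ReLU is not smooth at zero. The other three are bounded and nonconstant, whereas any bounded polynomial is constant.

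\emph{Existence of the filter.} Here I would reuse the argument at the end of the proof of \cref{thm:A}(iii). Since \(\sigma\) is non-polynomial, \(\supp\sigma^\sharp\not\subset\{0\}\), so some \(\varphi\in C_c^\infty(\R\setminus\{0\})\) has \(\langle\sigma^\sharp,\varphi\rangle\neq0\). Splitting \(\varphi\) into Hermitian-symmetric parts \(\varphi_1+i\varphi_2\), we may assume \(\varphi\) is Hermitian symmetric. Then put \(\rho^\sharp(\omega):=\varphi(-\omega)|\omega|^{\alpha}\). This function is smooth, compactly supported away from \(0\), and Hermitian symmetric, so \(\rho\) is a real Schwartz band-pass filter with \(\iiprod{\sigma,\rho}_\alpha=\frac1{2\pi}\langle\sigma^\sharp,\varphi\rangle=:c\neq0\).

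\emph{Main obstacle: normalizing to exactly \(1\).} Since \(\sigma\) is real, \(\sigma^\sharp\) is Hermitian, so the pairing of \(\sigma^\sharp\) with a Hermitian-symmetric test function is real. Hence \(c\in\R\setminus\{0\}\), and \(\rho/c\) is real with \(\iiprod{\sigma,\rho/c}_\alpha=1\). To check reality, I would use \(\overline{\langle\sigma^\sharp,\psi\rangle}=\langle\overline{\sigma^\sharp},\bar\psi\rangle\) together with \(\overline{\sigma^\sharp}=\sigma^\sharp(-\cdot)\), valid for real \(\sigma\) in \(\cS'\).
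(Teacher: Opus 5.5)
Your proposal is correct and follows the paper's proof: membership via \(\langle u\rangle^{-2}\sigma\in L^2\), the Lipschitz property from bounded derivatives, non-polynomiality from boundedness or the kink of ReLU, and the filter from the closing argument of the proof of \cref{thm:A} followed by rescaling. You also check explicitly that \(\iiprod{\sigma,\rho}_{\alpha}\) is real for real \(\sigma\) and Hermitian-symmetric test data, so dividing by it keeps \(\rho\) real; the paper leaves this step implicit in the word ``rescaling''.
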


\begin{proof}
Membership in \(\cA_{0,2}\) means \(\langle u\rangle^{-2}\sigma\in L^2\);
three of the functions are bounded, and ReLU satisfies
\(\int_0^\infty u^2(1+u^2)^{-2}\dd u<\infty\).  Their derivatives are
bounded wherever defined.  The bounded functions are nonconstant and
ReLU is not smooth at zero, so none is a polynomial.  The last statement
is the final part of the proof of \cref{thm:A}, followed by rescaling
\(\rho\).
\end{proof}

\subsection{Reconstruction with tempered activations}
We now use band-pass regularizations to define synthesis in the dual space.

\begin{dfn}[Regularized synthesis]
\label{def:regularized-synthesis}
Let \(\sigma\in\cS'(\R)\) be real and let \(\rho\) be a band-pass filter.
Choose an even \(\chi\in C_c^\infty(\R\setminus\{0\})\) equal to one on a
neighbourhood of \(\supp\rho^\sharp\), and an even, compactly supported,
smooth approximate identity \((\eta_\eps)_{\eps>0}\).  Define the real
Schwartz functions \(\sigma_\eps\) by
\begin{equation}
  \sigma_\eps^\sharp:=\chi\,(\sigma^\sharp*\eta_\eps)
  \in C_c^\infty(\R\setminus\{0\}),
  \label{eq:regularized-activation}
\end{equation}
and, for \(\gamma\in\Ran R_\rho\subset L^2(\lambda_\alpha)\), the
regularized synthesis \(S_{\sigma_\eps}\gamma:=R_{\sigma_\eps}'\gamma\in\cE_\alpha'\)
as in \cref{eq:weak-synthesis}.  The synthesis with \(\sigma\) is
\begin{equation}
  S_\sigma\gamma:=\lim_{\eps\downarrow0}S_{\sigma_\eps}\gamma
  \qquad\text{in }\cE_\alpha' ,
  \label{eq:tempered-synthesis}
\end{equation}
whenever the limit exists.
\end{dfn}

\begin{thm}[Reconstruction with a tempered activation]
\label{thm:tempered-reconstruction}
Let \(\sigma\in\cS'(\R)\) be real and let \(\rho\) be a band-pass filter.
For every \(f\in\cE_\alpha\) the limit \cref{eq:tempered-synthesis}
exists for \(\gamma=R_\rho f\), does not depend on \(\chi\) or
\((\eta_\eps)\), and
\begin{equation}
  S_\sigma R_\rho f
  =\iiprod{\sigma,\rho}_{\alpha}T_\alpha f .
  \label{eq:tempered-frame-identity}
\end{equation}
If \(\iiprod{\sigma,\rho}_{\alpha}\ne0\), then
\begin{equation}
  f=\bigl(\iiprod{\sigma,\rho}_{\alpha}\bigr)^{-1}
  T_\alpha^{-1}S_\sigma R_\rho f,
  \qquad
  g=\bigl(\iiprod{\sigma,\rho}_{\alpha}\bigr)^{-1}
  S_\sigma\bigl(R_\rho T_\alpha^{-1}g\bigr),
  \qquad f\in\cE_\alpha,\ g\in\cE_\alpha' .
  \label{eq:tempered-reconstruction}
\end{equation}
If \(\sigma\) is not a polynomial, equivalently \(\sigma\ne0\) in
\(\cS'/\cP\) with \(\cP\) the polynomials, then a band-pass \(\rho\) with
\(\iiprod{\sigma,\rho}_{\alpha}\ne0\) exists.
\end{thm}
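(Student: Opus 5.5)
The plan is to reduce everything to the Schwartz case of \cref{thm:C} via the Fourier-slice identity, and then pass to the limit in the constant alone. First I verify that each \(\sigma_\eps\) is a real Schwartz function with \(\sigma_\eps^\sharp\in C_c^\infty(\R\setminus\{0\})\). This holds because \(\chi\) and \(\eta_\eps\) are even and \(\sigma\) is real, so \(\sigma^\sharp*\eta_\eps\) is smooth, slowly increasing and Hermitian symmetric. Consequently \(\sigma_\eps\) is \(\alpha\)-admissible, and \(R_{\sigma_\eps}\) is defined on \(\cE_\alpha\) by \cref{thm:B}(ii). For \(f,g\in\cE_\alpha\), the cross Plancherel identity \cref{eq:ridgelet-plancherel} gives
\[
(S_{\sigma_\eps}R_\rho f)[g]=\langle R_\rho f,R_{\sigma_\eps}g\rangle_{L^2(\lambda_\alpha)}
=\iiprod{\rho,\sigma_\eps}_{\alpha}\langle f,g\rangle_{\cE_\alpha}.
\]
I prove it on \(\cD_\alpha\) and extend it by continuity of both sides. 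Hence \(S_{\sigma_\eps}R_\rho f=\iiprod{\rho,\sigma_\eps}_{\alpha}T_\alpha f\) exactly, for every \(\eps\). The order of the filters must be handled carefully against the bilinear/sesquilinear conventions: the pairing that appears should be \(\frac1{2\pi}\int\rho^\sharp\overline{\sigma_\eps^\sharp}|\omega|^{-\alpha}\dd\omega\). Since \(\sigma_\eps\) is real, \(\overline{\sigma_\eps^\sharp(\omega)}=\sigma_\eps^\sharp(-\omega)\), so after \(\omega\mapsto-\omega\) this equals \(\frac1{2\pi}\int\sigma_\eps^\sharp(\omega)\rho^\sharp(-\omega)|\omega|^{-\alpha}\dd\omega\). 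That is the bilinear pairing \(\frac1{2\pi}\langle\sigma_\eps^\sharp,\rho^\sharp(-\cdot)|\cdot|^{-\alpha}\rangle\), matching \cref{eq:distributional-admissibility}.

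Thus the whole limit question reduces to scalar convergence \(\iiprod{\sigma_\eps,\rho}_{\alpha}\to\iiprod{\sigma,\rho}_{\alpha}\), which yields convergence in \(\cE_\alpha'\) since \(\|T_\alpha f\|=\|f\|\). Put \(\psi(\omega):=\rho^\sharp(-\omega)|\omega|^{-\alpha}\in C_c^\infty(\R\setminus\{0\})\). Because \(\chi=1\) on a neighbourhood of \(\supp\rho^\sharp\) (symmetric by evenness of \(\chi\)), we have \(\chi\psi=\psi\). Then
\[
\langle\chi(\sigma^\sharp*\eta_\eps),\psi\rangle=\langle\sigma^\sharp*\eta_\eps,\psi\rangle=\langle\sigma^\sharp,\eta_\eps*\psi\rangle,
\]
using evenness of \(\eta_\eps\). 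Since \(\eta_\eps*\psi\to\psi\) in \(\cS(\R)\) (all supports lie in a fixed compact set and the derivatives converge uniformly), the limit is \(\langle\sigma^\sharp,\psi\rangle\). The same computation shows the value never depended on \(\chi\), and the limit is independent of \((\eta_\eps)\). This gives \cref{eq:tempered-frame-identity}.

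The reconstruction formulas \cref{eq:tempered-reconstruction} then follow exactly as in \cref{thm:C}(ii). For the first, apply \(T_\alpha^{-1}\) and divide by the nonzero constant. For the second, substitute \(f=T_\alpha^{-1}g\), which is legitimate because \(T_\alpha\) is a bijection by \cref{thm:C}(i). The existence statement for non-polynomial \(\sigma\) is the last paragraph of the proof of \cref{thm:A}, as recorded in \cref{lem:standard-activation-class}. The equivalence with \(\sigma\ne0\) in \(\cS'/\cP\) is the statement that \(\supp\sigma^\sharp\subset\{0\}\) if and only if \(\sigma\) is a polynomial.

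The main obstacle I expect is the conjugation/sign bookkeeping in the first step, namely making the sesquilinear \(L^2(\lambda_\alpha)\) pairing of \cref{thm:B} produce exactly the bilinear distributional constant \cref{eq:distributional-admissibility} rather than its conjugate or its reflection. Reality of \(\sigma\), \(\rho\), \(\chi\) and \(\eta_\eps\) is what makes the reflected and conjugated versions coincide, so I will check that step explicitly.
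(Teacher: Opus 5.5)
Your proposal is correct and follows the paper's proof essentially step for step. The paper likewise obtains the exact identity \(S_{\sigma_\eps}R_\rho f=\iiprod{\sigma_\eps,\rho}_{\alpha}T_\alpha f\) from the cross Plancherel formula with Hermitian symmetry and \(\omega\mapsto-\omega\), passes to the limit in the scalar constant alone (using \(\chi=1\) near \(\supp\rho^\sharp\) and \(\|T_\alpha f\|=\|f\|_{\cE_\alpha}\)), and gets reconstruction and existence as in \cref{thm:C}(ii) and the end of the proof of \cref{thm:A}; the one small correction is that \(\sigma_\eps\) is \(\alpha\)-admissible only when it is nonzero, as the paper notes, though when \(\sigma_\eps=0\) both sides of the identity vanish trivially.
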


\begin{proof}
\label{app:proof-tempered-reconstruction}
Since \(\sigma\) is real, \(\sigma^\sharp\) is Hermitian symmetric,
\(\sigma^\sharp(-\omega)=\overline{\sigma^\sharp(\omega)}\) in the
distributional sense; as \(\chi\) and \(\eta_\eps\) are even and real,
\(\sigma_\eps^\sharp=\chi(\sigma^\sharp*\eta_\eps)\) is Hermitian
symmetric, so \(\sigma_\eps\) is a real Schwartz function with
\(\sigma_\eps^\sharp\in C_c^\infty(\R\setminus\{0\})\).  Hence
\(\sigma_\eps\) is \(\alpha\)-admissible whenever it is nonzero, and
\cref{thm:B}(i) applies to the pair \((\rho,\sigma_\eps)\).  With
\(S_{\sigma_\eps}=R_{\sigma_\eps}'\) as in \cref{eq:weak-synthesis}, for
\(f\in\cD_\alpha\) and \(g\in\cE_\alpha\),
\[
  (S_{\sigma_\eps}R_\rho f)[g]
  =\langle R_\rho f,R_{\sigma_\eps}g\rangle_{L^2(\lambda_\alpha)}
  =\frac1{2\pi}\int_\R\rho^\sharp(\omega)\overline{\sigma_\eps^\sharp(\omega)}
       |\omega|^{-\alpha}\,\dd\omega\ \langle f,g\rangle_{\cE_\alpha}
  =\iiprod{\sigma_\eps,\rho}_{\alpha}\,T_\alpha f[g],
\]
where the last equality uses Hermitian symmetry,
\(\overline{\sigma_\eps^\sharp(\omega)}=\sigma_\eps^\sharp(-\omega)\),
and the substitution \(\omega\mapsto-\omega\).  Both sides are
continuous in \(f\in\cE_\alpha\), so the identity holds on
\(\cE_\alpha\).  Since \(\chi=1\) on a neighbourhood of
\(\supp\rho^\sharp\), the pairing defining
\(\iiprod{\sigma_\eps,\rho}_{\alpha}\) equals
\(\frac1{2\pi}\langle\sigma^\sharp*\eta_\eps,\rho^\sharp(-\cdot)|\cdot|^{-\alpha}\rangle\),
and distributional convergence \(\sigma^\sharp*\eta_\eps\to\sigma^\sharp\)
against the fixed test function \(\rho^\sharp(-\omega)|\omega|^{-\alpha}\)
gives \(\iiprod{\sigma_\eps,\rho}_{\alpha}\to \iiprod{\sigma,\rho}_{\alpha}\).
Since \(T_\alpha\) is an isometry,
\[
  \|S_{\sigma_\eps}R_\rho f-\iiprod{\sigma,\rho}_{\alpha}T_\alpha f\|_{\cE_\alpha'}
  =|\iiprod{\sigma_\eps,\rho}_{\alpha}-\iiprod{\sigma,\rho}_{\alpha}|\,\|f\|_{\cE_\alpha}
  \longrightarrow0 .
\]
The limit depends only on the pairing
\cref{eq:distributional-admissibility}, hence not on \(\chi\) or
\((\eta_\eps)\).  Applying \(T_\alpha^{-1}\), respectively substituting
\(f=T_\alpha^{-1}g\), proves \cref{eq:tempered-reconstruction}.  The
existence of a band-pass \(\rho\) with nonzero constant for
non-polynomial \(\sigma\) was proved at the end of the proof of \cref{thm:A}.
\end{proof}

The argument uses the direction measure only through the homogeneity
\cref{eq:homogeneous-change} and the finiteness of the admissibility
constants of \(\rho\) and of the regularizations \(\sigma_\eps\); neither the
positivity of \(\alpha\) nor the full support of \(\nu_\alpha\) enters
\cref{eq:tempered-frame-identity} or the independence of \(\chi\) and
\((\eta_\eps)\).  Full support is used earlier, in \cref{thm:B}, to make
\(R_\rho\) injective.

Here a synthesis activation is called admissible if it can be paired
with a real band-pass analysis filter to give a nonzero reconstruction
constant \(\iiprod{\sigma,\rho}_{\alpha}\).
This is a condition on the pair \((\sigma,\rho)\);
the self-admissibility condition \cref{eq:self-admissibility} concerns
the analysis filter alone.

\begin{cor}[ReLU is admissible]
\label{cor:relu-admissible}
Let \(\sigma(t)=\ReLU(t)=\max(t,0)\).  Under the convention
\cref{eq:fourier-convention},
\begin{equation}
  \ReLU^\sharp
  =-\fp(\omega^{-2})+i\pi\delta_0',
  \label{eq:relu-fourier}
\end{equation}
which equals \(-\omega^{-2}\) away from the origin.  If
\(\rho^\sharp\in C_c^\infty(\R\setminus\{0\})\) is nonzero, even, and
nonpositive, then
\begin{equation}
  \iiprod{\ReLU,\rho}_{\alpha}
  =-\frac1{2\pi}\int_\R
  \rho^\sharp(\omega)|\omega|^{-\alpha-2}\,\dd\omega>0 .
  \label{eq:relu-admissibility}
\end{equation}
After rescaling \(\rho\) the constant is one, and
\cref{eq:tempered-reconstruction} and \cref{thm:A}(iii) hold with ReLU
synthesis for every \(\alpha>0\).
\end{cor}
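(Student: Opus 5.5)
The proof has three parts: compute the Fourier transform of ReLU, evaluate the pairing \cref{eq:distributional-admissibility} against the band-pass test function, and invoke the earlier theorems.

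\emph{Fourier transform of ReLU.} Write \(\ReLU(t)=t\,H(t)\) with \(H\) the Heaviside function. Under the convention \cref{eq:fourier-convention}, \(H^\sharp=\pi\delta_0-i\,\mathrm{p.v.}(1/\omega)\). One checks this from \(\operatorname{sign}^\sharp=-2i\,\mathrm{p.v.}(1/\omega)\), which follows from \((\operatorname{sign})'=2\delta_0\), and from \(1^\sharp=2\pi\delta_0\). Multiplication by \(t\) corresponds to \(i\,\dd/\dd\omega\) on the Fourier side, so
\[
\ReLU^\sharp=i\bigl(\pi\delta_0'-i\,(\mathrm{p.v.}\,1/\omega)'\bigr)=i\pi\delta_0'+(\mathrm{p.v.}\,1/\omega)'.
\]
The distributional derivative of \(\mathrm{p.v.}(1/\omega)\) is \(-\fp(\omega^{-2})\). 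This gives \cref{eq:relu-fourier}. Away from the origin both singular terms reduce to the function \(-\omega^{-2}\). The only delicate point is the sign bookkeeping. I would check it against an independent route: the second derivative of \(\ReLU\) is \(\delta_0\), so \((i\omega)^2\ReLU^\sharp=1\), which gives \(\ReLU^\sharp=-\omega^{-2}\) on \(\R\setminus\{0\}\). This is all the next step needs.

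\emph{The pairing.} The test function \(\psi(\omega)=\rho^\sharp(-\omega)|\omega|^{-\alpha}\) lies in \(C_c^\infty(\R\setminus\{0\})\). The distribution \(\ReLU^\sharp\) restricted to \(\R\setminus\{0\}\) is the smooth function \(-\omega^{-2}\), and \(\delta_0'\) and the finite-part correction act only at \(0\). Hence
\[
\iiprod{\ReLU,\rho}_\alpha=\frac1{2\pi}\int_\R -\omega^{-2}\rho^\sharp(-\omega)|\omega|^{-\alpha}\,\dd\omega .
\]
Since \(\rho^\sharp\) is even, this equals \cref{eq:relu-admissibility}. Positivity holds because \(\rho^\sharp\le0\), \(\rho^\sharp\) is nonzero and continuous, so it is strictly negative on an open set, and the weight \(|\omega|^{-\alpha-2}\) is positive there.

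\emph{Consequences.} An even real \(\rho^\sharp\) yields a real even \(\rho\in\cS(\R)\) that is band pass. Replacing \(\rho\) by \(\rho/\iiprod{\ReLU,\rho}_\alpha\) keeps it band pass and makes the constant one. \(\ReLU\) is real, continuous, of linear growth and tempered. Therefore \cref{thm:tempered-reconstruction} gives \cref{eq:tempered-reconstruction}, and \cref{thm:A}(iii) applies for every \(\alpha>0\). The example \(\rho_{\mathrm{bp}}\) of \cref{ex:bandlimited-filter} shows that such \(\rho\) exist.
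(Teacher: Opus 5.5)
Your proof is correct and follows the paper's argument. You compute \(\ReLU^\sharp\) from standard distributional transforms, note that the test function \(\rho^\sharp(-\omega)|\omega|^{-\alpha}\) is supported away from the origin so that only the value \(-\omega^{-2}\) matters, and conclude from evenness and the sign of \(\rho^\sharp\). The paper instead writes \(\ReLU(t)=(|t|+t)/2\) with \((|t|)^\sharp=-2\fp(\omega^{-2})\) and \(t^\sharp=2\pi i\delta_0'\), while you use \(\ReLU=tH\) and a cross-check via \(\ReLU''=\delta_0\); this difference is only cosmetic.
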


\begin{proof}
From \(\ReLU(t)=(|t|+t)/2\), the identities
\((|t|)^\sharp=-2\fp(\omega^{-2})\) and \(t^\sharp=2\pi i\delta_0'\)
give \cref{eq:relu-fourier}.  The test function in
\cref{eq:distributional-admissibility} is supported away from zero, so
the \(\delta_0'\) term vanishes and the finite part is ordinary
multiplication by \(\omega^{-2}\); evenness of \(\rho^\sharp\) and its
sign give \cref{eq:relu-admissibility}.
\end{proof}

\begin{ex}[Standard activations]
\label{ex:standard-activations}
ReLU, \(\tanh\), the Gaussian distribution function, and the Gaussian
\(e^{-u^2/2}\) are globally Lipschitz, are not polynomials, and belong to
\(\cA_{0,2}=\langle\cdot\rangle^2L^2(\R)\); see
\cref{lem:standard-activation-class}.  Each of them is therefore covered
by \cref{thm:tempered-reconstruction}, by \cref{thm:A}(iii), and by the
finite-width bounds of \cref{sec:finite-width}.
\end{ex}

\begin{proof}
The three properties are verified in \cref{lem:standard-activation-class};
a globally Lipschitz function has polynomial growth, so
\cref{thm:A}(iii) applies. The activation hypotheses of the
finite-width results below are therefore satisfied.
\end{proof}

\begin{rmk}[Identity in the dual space versus integral network]
\label{rmk:weak-versus-strong}
\Cref{eq:tempered-frame-identity} is an identity in \(\cE_\alpha'\); it
does not by itself produce a finite coefficient measure on
\(\cH\times\R\).  When the target has a spectral density \(g\) satisfying
\cref{def:ray-regular}, \cref{thm:A}(iii) upgrades the second formula in
\cref{eq:tempered-reconstruction} to an absolutely convergent integral
network with coefficient \(\gamma_g\), and \cref{thm:E} shows that this
coefficient measure has finite total variation and moments.  The examples of
\cref{sec:examples} with ReLU synthesis are of this kind.
\end{rmk}
 \subsection{Integral representation under Sobolev conditions}
\label{subsec:weak-sobolev}

Band-pass support avoids the singular frequency at zero, but it is not
necessary for an absolutely convergent network integral.  The next theorem
replaces smoothness of \(g_a(\omega)=g(\omega a)\) by a condition on
the inverse Fourier transform of
\(h_a(\omega):=\rho^\sharp(-\omega)g_a(\omega)\).
This condition also allows non-band-pass filters. We first give the
weighted inverse Fourier estimates and the dual pairing used in the proof.
The direction measure may be \(\nu_\alpha\), or any homogeneous
sigma-finite measure as specified below.  The Fourier pairing in the
statement is its bounded Sobolev-dual extension, defined precisely in
\cref{lem:sobolev-pairing}.

Write \(\langle t\rangle=(1+t^2)^{1/2}\), and use the angular Fourier
convention
\[
 f^\sharp(\omega)=\int_{\mathbb R}f(t)e^{-it\omega}\,dt,
 \qquad
 \check h(t)=\frac1{2\pi}\int_{\mathbb R}h(\omega)e^{it\omega}\,d\omega.
\]
Here \(\outY\) is the standing separable complex Hilbert space.
The inverse transform is understood in \(L^2\) when the integral need
not exist pointwise.  For \(s\geq0\), define the Bessel-potential space
in the frequency variable by
\begin{equation}
 H^s_\omega(\mathbb R;\outY)
 =\{h\in L^2(\mathbb R;\outY):\langle t\rangle^s\check h(t)\in L^2\},
 \qquad
 \|h\|_{H^s_\omega}^2
 =2\pi\int_{\mathbb R}\langle t\rangle^{2s}\|\check h(t)\|_\outY^2\,dt.
 \label{eq:sobolev-norm}
\end{equation}
Thus \(\|h\|_{H^0_\omega}=\|h\|_2\) by Plancherel.  The displayed norm
is the Bessel-potential norm for this Fourier normalization.
It is a separable Hilbert space: the map
\(h\mapsto\sqrt{2\pi}\langle\cdot\rangle^s\check h\) is an
isometric isomorphism onto \(L^2(\mathbb R;\outY)\).
For Banach-valued data in \cref{thm:weak-sobolev-synthesis}, only the
weighted inverse-transform norm in \cref{eq:sobolev-norm} is used.
The weighted \(L^1\) estimate below follows directly from
Cauchy--Schwarz for that norm, without a Fourier isometry.

\begin{lem}[Weighted inverse Fourier estimates and modulation]
\label{lem:sobolev-tools}
Let \(s>1/2\), \(0\leq r<s-1/2\), and put
\[
 A_{s,r}:=\frac1{\sqrt{2\pi}}
   \left(\int_{\mathbb R}(1+t^2)^{-(s-r)}\,dt\right)^{1/2}.
\]
For every \(h\in H^s_\omega(\mathbb R;\outY)\),
\begin{equation}
 \int_{\mathbb R}\langle t\rangle^r\|\check h(t)\|_\outY\,dt
 \leq A_{s,r}\|h\|_{H^s_\omega}.
 \label{eq:sobolev-weighted-l1}
\end{equation}
Reflection \(Rh(\omega)=h(-\omega)\) is an isometry on this space.
For \(M_uh(\omega)=e^{iu\omega}h(\omega)\),
\begin{equation}
 \|M_uh\|_{H^s_\omega}\leq(1+|u|)^s\|h\|_{H^s_\omega}.
 \label{eq:sobolev-modulation}
\end{equation}
The map \((u,h)\mapsto M_uh\) is jointly continuous.
\end{lem}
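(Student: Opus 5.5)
The weighted $L^1$ estimate \cref{eq:sobolev-weighted-l1} should follow from a single Cauchy--Schwarz step, with no Fourier isometry. We split $\langle t\rangle^r=\langle t\rangle^{r-s}\cdot\langle t\rangle^{s}$ and estimate
\[
 \int_{\R}\langle t\rangle^r\|\check h(t)\|_\outY\,dt
 \le\Bigl(\int_\R\langle t\rangle^{-2(s-r)}\,dt\Bigr)^{1/2}
    \Bigl(\int_\R\langle t\rangle^{2s}\|\check h(t)\|_\outY^2\,dt\Bigr)^{1/2}.
\]
The first factor is finite exactly because $2(s-r)>1$, which is the hypothesis $r<s-1/2$. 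By \cref{eq:sobolev-norm} the second factor equals $(2\pi)^{-1/2}\|h\|_{H^s_\omega}$. Together these give the constant $A_{s,r}$.

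\emph{Reflection and modulation.} For reflection, note that $(Rh)^\vee(t)=\check h(-t)$ by the substitution $\omega\mapsto-\omega$. Since the weight $\langle t\rangle^{2s}$ is even, the norm is preserved. For modulation, we have $(M_uh)^\vee(t)=\check h(t+u)$, which holds first for $h\in L^1\cap L^2$ and then by $L^2$ continuity. After the change of variables $t\mapsto t-u$, the norm becomes $2\pi\int\langle t-u\rangle^{2s}\|\check h(t)\|^2\,dt$. Peetre's inequality $\langle t-u\rangle\le\sqrt2\,\langle u\rangle\langle t\rangle$ gives a constant $\sqrt2$, which is too weak. Instead we use the sharper elementary bound $\langle t-u\rangle\le(1+|u|)\langle t\rangle$. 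To prove it, we square both sides: we need $1+(t-u)^2\le(1+|u|)^2(1+t^2)$. The right side is at least $1+t^2+2|u|+u^2+u^2t^2+2|u|t^2$. The left side is $1+t^2-2tu+u^2$, and $2|tu|\le|u|(1+t^2)\le2|u|+2|u|t^2$, so the inequality holds. Raising it to the power $2s$ gives \cref{eq:sobolev-modulation}.

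\emph{Joint continuity.} We write
\[
 M_uh-M_{u_0}h_0=M_u(h-h_0)+(M_u-M_{u_0})h_0 .
\]
The first term is bounded by $(1+|u|)^s\|h-h_0\|$, which is uniformly small for $u$ near $u_0$. For the second term, the norm equals
\[
 2\pi\int\bigl\|\check h_0(t+u)-\check h_0(t+u_0)\bigr\|^2\langle t\rangle^{2s}\,dt .
\]
Using the isometry $h\mapsto\sqrt{2\pi}\langle\cdot\rangle^s\check h$ onto $L^2$, this is the distance between translates of $\check h_0$ in the weighted space. We handle it by density. For $\check h_0$ continuous with compact support, the translates converge uniformly with supports in a fixed compact set, so the norm tends to zero. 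For general $h_0$, we approximate by such functions, and the uniform bound \cref{eq:sobolev-modulation} for $u$ in a bounded set makes the $\varepsilon/3$ argument work.

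This last density step is the only point that needs care, specifically the uniformity in $u$ of the operator bound. It is nevertheless routine, since \cref{eq:sobolev-modulation} has already been established.
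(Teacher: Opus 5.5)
Your proof is correct and follows the paper's argument essentially step for step: Cauchy--Schwarz for the weighted $L^1$ bound, the inversion identities $(Rh)^\vee(t)=\check h(-t)$ and $(M_uh)^\vee(t)=\check h(t+u)$ combined with $\langle t-u\rangle\le(1+|u|)\langle t\rangle$, and, for joint continuity, density plus the same splitting $M_u(h-h_0)+(M_u-M_{u_0})h_0$. The differences are cosmetic: you verify the bracket inequality by squaring rather than via $\langle t-u\rangle\le\langle t\rangle+|u|$, and your dense class is all of $C_c(\R;\outY)$ rather than the paper's finite-dimensional-valued continuous compactly supported functions; both choices work.
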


\begin{proof}
Cauchy--Schwarz, applied to
\(\langle t\rangle^{-(s-r)}\) and
\(\langle t\rangle^s\|\check h(t)\|_\outY\), proves
\eqref{eq:sobolev-weighted-l1}.  The scalar integral is finite
precisely when \(s-r>1/2\).  Fourier inversion in \(L^2\) gives
\(F_\R^{-1}(Rh)(t)=\check h(-t)\) and
\(F_\R^{-1}(M_uh)(t)=\check h(t+u)\).
Since \(\langle t-u\rangle\leq\langle t\rangle+|u|
\leq(1+|u|)\langle t\rangle\), changing variables proves the
reflection and modulation estimates.
Translations are strongly continuous in
\(L^2(\mathbb R,\langle t\rangle^{2s}dt;\outY)\): this holds by dominated
convergence for continuous compactly supported functions with values
in a finite-dimensional subspace of \(\outY\); these functions are dense,
and the displayed translation bound is uniform for \(u\) in a bounded
interval.  Approximation therefore proves strong continuity for every
element.  Finally, for \(u_n\to u\) and \(h_n\to h\), split
\(M_{u_n}h_n-M_uh\) into
\(M_{u_n}(h_n-h)+(M_{u_n}-M_u)h\), and apply the local uniform
bound and strong continuity.
\end{proof}

Only the weighted \(L^1\) estimate \eqref{eq:sobolev-weighted-l1} and the reflection
isometry are used for \cref{thm:weak-sobolev-synthesis}; the modulation bound
\eqref{eq:sobolev-modulation} and the joint continuity are the mapping properties that the
alternative route through an \(H^s_\omega\)-valued Bochner integral would need, and are
recorded because they identify the bias translation \eqref{eq:sobolev-bias-pairing} as a
modulation.

\begin{lem}[The bilinear Sobolev pairing and its translation formula]
\label{lem:sobolev-pairing}
Suppose \(\sigma:\mathbb R\to\mathbb C\) is continuous and
\(|\sigma(t)|\leq C_\sigma(1+|t|)^p\), where \(p\geq0\), and
\(s>p+1/2\).  Set
\(b_{\sigma,s}=\|\langle\cdot\rangle^{-s}\sigma\|_2<\infty\).
Its Fourier transform defines an element of the bilinear dual
\(H^{-s}_\omega\), with the following normalization:
\begin{equation}
 \frac1{2\pi}\langle\sigma^\sharp,q\rangle
 =\int_{\mathbb R}\sigma(t)\check q(-t)\,dt,
 \qquad q\in H^s_\omega(\mathbb R),
 \label{eq:sobolev-pairing}
\end{equation}
and the norm of this functional is at most
\((2\pi)^{-1/2}b_{\sigma,s}\).
For \(\outY\)-valued tests the right side defines a bounded linear map
\(L_\sigma^{\outY}:H^s_\omega(\mathbb R;\outY)\to \outY\) with the same
bound.  If \(h\in H^s_\omega(\mathbb R;\outY)\), \(\gamma=\check h\),
and \(u\in\mathbb R\), then the Bochner integral below is absolute and
\begin{equation}
 \int_{\mathbb R}\sigma(u-b)\gamma(b)\,\dd b
 =L_\sigma^{\outY}(M_uh).
 \label{eq:sobolev-bias-pairing}
\end{equation}
\end{lem}
\begin{proof}
The growth assumption makes \(b_{\sigma,s}\) finite.  Weighted
Cauchy--Schwarz proves
\[
 \int_{\mathbb R}|\sigma(t)|\|\check q(-t)\|_\outY\,dt
 \leq\frac{b_{\sigma,s}}{\sqrt{2\pi}}\|q\|_{H^s_\omega},
\]
so the claimed integrals and bounded maps exist.  For a scalar Schwartz
test \(q\), the definition of the distributional Fourier transform
and \(q^\sharp(t)=2\pi\check q(-t)\) give
\eqref{eq:sobolev-pairing}.  Scalar Schwartz functions are dense
in \(H^s_\omega\): under the isomorphism following
\eqref{eq:sobolev-norm}, the inverse Fourier transforms of
Schwartz functions are dense in weighted \(L^2\), as follows by
truncation and mollification.  Thus the bounded extension agrees with
the Fourier distribution, establishing the stated dual interpretation.
The vector-valued construction uses the same absolute Bochner integral
and does not require a scalar choice of coordinates.

Now \(F_\R^{-1}(M_uh)(t)=\gamma(t+u)\), so
\(F_\R^{-1}(M_uh)(-t)=\gamma(u-t)\).
Substitution \(b=u-t\) in its absolutely convergent pairing proves
\eqref{eq:sobolev-bias-pairing}.  In particular, the test is
\(e^{iu\omega}h(\omega)\), with positive modulation and no reflection, corresponding to the
minus sign in the neuron bias.
\end{proof}

\begin{thm}[Integral representation under Sobolev conditions]
\label{thm:weak-sobolev-synthesis}
Let \(\cH\) be a real inner-product space with its Borel structure and let \(\nu\) be a
sigma-finite Borel measure on \(\cH\) such that, for every \(t\neq0\),
\((D_t)_\#\nu=|t|^{-\alpha}\nu\), where \(D_ta=ta\) and \(\alpha>0\).  Let \(\outY\) be a
complex Banach space, let \(\rho\in\mathcal S(\mathbb R;\mathbb R)\) be nonzero, let
\(\sigma:\R\to\C\) be continuous with \(|\sigma(t)|\le C_\sigma(1+|t|)^p\), where \(p\ge0\)
and \(s>p+1/2\), and let \(g:\cH\to \outY\) be strongly measurable.  Write
\(g_a(\omega):=g(\omega a)\) and
\(h_a(\omega):=\rho^\sharp(-\omega)g_a(\omega)\), and assume the following.
\begin{enumerate}
\item There is a jointly measurable \(\gamma_g:\cH\times\R\to \outY\) such that,
for \(\nu\)-almost every \(a\), the function \(b\mapsto\gamma_g(a,b)\)
has the weighted integrability and Fourier transform
\[
 \int_\R\langle b\rangle^{2s}\|\gamma_g(a,b)\|_\outY^2\,\dd b<\infty
 \qquad\text{and}\qquad
 \int_\R\gamma_g(a,b)e^{-i\omega b}\,\dd b=h_a(\omega)
 \quad\text{for every }\omega\in\R.
\]
When \(\outY\) is a Hilbert space, this is equivalently
\(h_a\in H^s_\omega(\mathbb R;\outY)\) with
\(\check h_a=\gamma_g(a,\cdot)\).
\item
\(\displaystyle
 \mathfrak B_s(\rho,g)
 :=\int_\cH(1+\|a\|)^s\|h_a\|_{H^s_\omega}\,\nu(da)<\infty.
\)
\item The Lebesgue class
\(q_{\alpha,\rho}(\omega)=\rho^\sharp(-\omega)|\omega|^{-\alpha}\)
belongs to \(H^s_\omega(\mathbb R)\).  No value at \(\omega=0\) is
prescribed.
\end{enumerate}
Then \(g\in L^1(\nu;\outY)\), so that
\(f_g(x)=\int_\cH e^{i\langle x,\xi\rangle}g(\xi)\,\nu(d\xi)\) is defined, and
\(\gamma_g\in L^1(\nu\otimes\,\dd b;\outY)\): for every \(0\leq r<s-1/2\),
\begin{equation}
 \int_{\cH\times\mathbb R}(1+\|a\|+|b|)^r
          \|\gamma_g(a,b)\|_\outY\,\nu(da)\,\dd b
 \leq 2^{r/2}A_{s,r}\,\mathfrak B_s(\rho,g),
 \label{eq:sobolev-moments}
\end{equation}
so the vector measure \(\Gamma_g=\gamma_g(\nu\otimes\,\dd b)\) is finite.  For every
\(x\in \cH\) the direction average
\[
 \Psi_x(t):=\int_\cH\gamma_g(a,\langle a,x\rangle-t)\,\nu(da)
\]
converges absolutely for almost every \(t\), belongs to \(L^1(\R;\outY)\), and
\begin{equation}
 \Psi_x(t)=\check q_{\alpha,\rho}(-t)\,f_g(x)
 \qquad\text{for almost every }t\in\R.
 \label{eq:sobolev-direction-average}
\end{equation}
Consequently the ordinary, absolutely convergent synthesis satisfies, for every
\(x\in \cH\),
\begin{equation}
 S_\sigma[\Gamma_g](x)
 :=\int_{\cH\times\mathbb R}\sigma(\langle a,x\rangle-b)
                    \gamma_g(a,b)\,\nu(da)\,\dd b
 =\iiprod{\sigma,\rho}_{\alpha}f_g(x),
 \qquad
 \iiprod{\sigma,\rho}_{\alpha}
 =\frac1{2\pi}\langle\sigma^\sharp,q_{\alpha,\rho}\rangle .
 \label{eq:weak-sobolev-synthesis}
\end{equation}
The absolute convergence is uniform on bounded input sets, and the
synthesis is continuous.  No non-polynomial assumption is needed for
this identity; normalization to reproduce \(f_g\) requires the
separate condition \(\iiprod{\sigma,\rho}_{\alpha}\neq0\).
If, in addition, \(\outY\) is a separable complex Hilbert space,
\(\iiprod{\rho,\rho}_{\alpha}<\infty\), and \(g\in L^2(\nu;\outY)\), then
\(\gamma_g\in L^2(\nu\otimes\,\dd b;\outY)\) with
\begin{equation}
 \int_{\cH\times\R}\|\gamma_g(a,b)\|_\outY^2\,\nu(da)\,\dd b
 =\iiprod{\rho,\rho}_{\alpha}\,\|g\|_{L^2(\nu;\outY)}^2 .
 \label{eq:sobolev-coefficient-l2}
\end{equation}
\end{thm}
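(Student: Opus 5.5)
We first establish the moment bound, then the direction-average identity, and finally deduce the synthesis formula by Fubini; the \(L^2\) clause is handled separately.

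\emph{Moments.} For \(\nu\)-almost every \(a\), hypothesis~1 gives \(\gamma_g(a,\cdot)=\check h_a\) with finite weighted norm. The Cauchy--Schwarz argument of \cref{lem:sobolev-tools} then yields
\[
\int_\R\langle b\rangle^r\|\gamma_g(a,b)\|\,\dd b\le A_{s,r}\|h_a\|_{H^s_\omega}.
\]
It uses only the weighted inverse-transform norm, so it is valid for Banach \(\outY\). We bound \((1+\|a\|+|b|)^r\le 2^{r/2}(1+\|a\|)^r\langle b\rangle^r\), using \(1+|b|\le\sqrt2\langle b\rangle\). Since \(r<s\), we have \((1+\|a\|)^r\le(1+\|a\|)^s\), and Tonelli together with hypothesis~2 gives \cref{eq:sobolev-moments}. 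For \(g\in L^1(\nu)\), choose \(\omega_0\ne0\) with \(\rho^\sharp(-\omega_0)\neq0\); such a point exists because \(\rho\ne0\) and \(\rho^\sharp\) is continuous. Then
\[
\|h_a(\omega_0)\|\le\int_\R\|\gamma_g(a,b)\|\,\dd b .
\]
Integrating in \(a\) and applying the homogeneity \((D_{\omega_0})_\#\nu=|\omega_0|^{-\alpha}\nu\) gives
\[
|\rho^\sharp(-\omega_0)|\,|\omega_0|^{-\alpha}\|g\|_{L^1(\nu)}\le A_{s,0}\mathfrak B_s<\infty .
\]

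\emph{Direction average.} Fix \(x\). Absolute convergence of \(\Psi_x(t)\) for almost every \(t\), and \(\Psi_x\in L^1\), follow from Tonelli after the measure-preserving shift \(b=\langle a,x\rangle-t\) inside each section. I would identify \(\Psi_x\) through its Fourier transform. Since \(\Psi_x\in L^1\), its transform is given by an absolutely convergent integral, and Fubini, justified by \(\gamma_g\in L^1\), gives
\[
\Psi_x^\sharp(\omega)=\int_\cH e^{-i\omega\langle a,x\rangle}\,h_a(-\omega)\,\nu(\dd a)
=\rho^\sharp(\omega)\int_\cH g(-\omega a)\,e^{-i\omega\langle a,x\rangle}\,\nu(\dd a).
\]
Substituting \(\xi=-\omega a\) with the homogeneity (the sign of \(\omega\) is harmless) yields \(\rho^\sharp(\omega)|\omega|^{-\alpha}f_g(x)\) for \(\omega\ne0\). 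On the other side, \(\check q_{\alpha,\rho}\in L^1\) because \(q_{\alpha,\rho}\in H^s_\omega\) by hypothesis~3 and \cref{eq:sobolev-weighted-l1} applies. The transform of \(t\mapsto\check q_{\alpha,\rho}(-t)\) is \(q_{\alpha,\rho}(-\omega)=\rho^\sharp(\omega)|\omega|^{-\alpha}\). Injectivity of the Fourier transform on \(L^1(\R;\outY)\) (apply functionals in \(\outY^*\)) then gives \cref{eq:sobolev-direction-average}. The delicate step is this Fubini: integrating \(e^{-i\omega b}\gamma_g(a,\langle a,x\rangle-t)\) in \(t\) first must reproduce exactly \(h_a(-\omega)\) with the right phase. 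Hypothesis~1 is what makes this work, since it asserts the pointwise Fourier identity for every \(\omega\).

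\emph{Synthesis.} With \(b=\langle a,x\rangle-t\), we have \(|\sigma(\langle a,x\rangle-b)|\le C_\sigma(1+\|x\|)^p(1+\|a\|)^p(1+|b|)^p\). Choosing \(r=p\), which satisfies \(p<s-1/2\), \cref{eq:sobolev-moments} gives absolute convergence uniformly on bounded \(x\)-sets. Continuity in \(x\) then follows by dominated convergence. By Fubini, the synthesis equals
\[
\int_\R\sigma(t)\Psi_x(t)\,\dd t=f_g(x)\int_\R\sigma(t)\check q_{\alpha,\rho}(-t)\,\dd t ,
\]
and by \cref{eq:sobolev-pairing} the last integral is \(\frac1{2\pi}\langle\sigma^\sharp,q_{\alpha,\rho}\rangle\). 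This integral is finite because \(b_{\sigma,s}<\infty\) and the weighted Cauchy--Schwarz bound of \cref{lem:sobolev-pairing} applies.

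\emph{The \(L^2\) clause.} In the Hilbert case, sectionwise Plancherel gives \(\|\gamma_g(a,\cdot)\|_2^2=\frac1{2\pi}\int\|h_a\|^2\,\dd\omega\). Integrating over \(\nu\) and using homogeneity, exactly as in \cref{lem:coefficient-isometry}, yields \cref{eq:sobolev-coefficient-l2}.
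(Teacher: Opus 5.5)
Your proposal is correct and follows the paper's proof step for step. Both use the weighted Cauchy--Schwarz moment bound via \(2^{r/2}(1+\|a\|)^s\langle b\rangle^r\), the \(\omega_0\)-plus-homogeneity argument for \(g\in L^1(\nu;\outY)\), the Fubini reduction of the synthesis to \(\int_\R\sigma(t)\Psi_x(t)\,\dd t\), identification of \(\Psi_x\) by comparing \(L^1\) Fourier transforms off the origin, and sectionwise Plancherel with homogeneity for the \(L^2\) clause. The only cosmetic slip is that your product majorant \((1+\|x\|)^p(1+\|a\|)^p(1+|b|)^p\) is controlled by the intermediate bound of your moment step rather than by \eqref{eq:sobolev-moments} itself; the paper uses \(\max\{1,R\}^p(1+\|a\|+|b|)^p\) for \(\|x\|\le R\), which plugs directly into \eqref{eq:sobolev-moments} with \(r=p\).
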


\begin{proof}
\label{app:weak-sobolev-proof}
Throughout, \(\gamma_g\) is the jointly measurable coefficient of hypothesis 1, and
\(h_a\) is the Fourier transform of \(b\mapsto\gamma_g(a,b)\), so that
\(\|h_a\|_{H^s_\omega}^2
 =2\pi\int_\R\langle b\rangle^{2s}\|\gamma_g(a,b)\|_\outY^2\,\dd b\).
Measurable versions may be chosen Borel, since the base measures are sigma-finite Borel
measures; completion does not change the resulting density measure.

\emph{Moments.}  Since \(r<s\),
\[
 (1+\|a\|+|b|)^r
 \leq (1+\|a\|)^r(1+|b|)^r
 \leq 2^{r/2}(1+\|a\|)^s\langle b\rangle^r,
\]
so integrating the weighted inverse Fourier estimate
\eqref{eq:sobolev-weighted-l1} over \(\nu\) proves \eqref{eq:sobolev-moments}.  In
particular \(\Gamma_g\) has finite variation, with
\(|\Gamma_g|=\|\gamma_g\|_\outY(\nu\otimes\,\dd b)\), and
to see that \(g\in L^1(\nu;\outY)\), choose \(\omega_0\ne0\)
with \(\rho^\sharp(-\omega_0)\ne0\). Hypothesis 1 and the weighted
inverse Fourier estimate with \(r=0\) give
\[
 \|g\|_{L^1(\nu;\outY)}
 \le |\omega_0|^\alpha |\rho^\sharp(-\omega_0)|^{-1}
 A_{s,0}\,\mathfrak B_s(\rho,g)<\infty.
\]
Here we integrated \(\|h_a(\omega_0)\|\le A_{s,0}\|h_a\|_{H^s_\omega}\)
and used homogeneity. Thus \(f_g\) is defined, bounded and continuous.

\emph{A majorant.}  If \(\|x\|\leq R\), then
\[
 |\sigma(\langle a,x\rangle-b)|\,\|\gamma_g(a,b)\|_\outY
 \leq C_\sigma\max\{1,R\}^{p}
       (1+\|a\|+|b|)^p\|\gamma_g(a,b)\|_\outY,
\]
whose right side is integrable by \eqref{eq:sobolev-moments} with \(r=p\), which is
admissible because \(p<s-1/2\).  This is a common absolute majorant on the whole ball
\(\|x\|\leq R\); it makes the synthesis absolutely convergent, uniformly there, justifies
the direction--bias Fubini interchange below, and proves continuity of the synthesis by
dominated convergence and continuity of \(\sigma\).

\emph{The direction average.}  Fix \(x\in \cH\) and write \(u_a=\langle a,x\rangle\).  For
each \(a\) the map \(b\mapsto u_a-b\) preserves Lebesgue measure, and \((a,b)\mapsto u_a-b\)
is jointly measurable, so \eqref{eq:sobolev-moments} with \(r=0\) shows that
\((a,t)\mapsto\gamma_g(a,u_a-t)\) is integrable for \(\nu\otimes\,\dd t\).  Hence \(\Psi_x\)
is defined for almost every \(t\), lies in \(L^1(\R;\outY)\), and Fubini gives
\begin{equation}
 S_\sigma[\Gamma_g](x)=\int_\R\sigma(t)\Psi_x(t)\,\dd t .
 \label{eq:sobolev-synthesis-average}
\end{equation}

\emph{Identification of \(\Psi_x\).}  Both \(\Psi_x\) and
\(\Theta_x:=\check q_{\alpha,\rho}(-\,\cdot\,)f_g(x)\) are integrable, the latter by
\eqref{eq:sobolev-weighted-l1} applied to \(q_{\alpha,\rho}\), so both have continuous
Fourier transforms.  Let \(\omega\neq0\).  Fubini in \((a,t)\), the substitution
\(b=u_a-t\), hypothesis 1 and homogeneity give
\begin{align*}
 \Psi_x^\sharp(\omega)
 &=\int_\cH e^{-i\omega u_a}h_a(-\omega)\,\nu(da)
  =\rho^\sharp(\omega)\int_\cH e^{i\langle x,-\omega a\rangle}g(-\omega a)\,\nu(da)\\
 &=\rho^\sharp(\omega)|\omega|^{-\alpha}f_g(x)
  =q_{\alpha,\rho}(-\omega)f_g(x)
  =\Theta_x^\sharp(\omega).
\end{align*}
Two continuous functions that agree off the origin agree everywhere, so
\(\Psi_x^\sharp=\Theta_x^\sharp\), and uniqueness of the Fourier transform on
\(L^1(\R;\outY)\) proves \eqref{eq:sobolev-direction-average}.  The argument uses no value
of \(q_{\alpha,\rho}\) at the origin, and no evaluation of a Sobolev class at a prescribed
frequency.

\emph{The identity.}  By \eqref{eq:sobolev-synthesis-average},
\eqref{eq:sobolev-direction-average} and \eqref{eq:sobolev-pairing},
\[
 S_\sigma[\Gamma_g](x)
 =\int_\R\sigma(t)\check q_{\alpha,\rho}(-t)\,\dd t\;f_g(x)
 =\frac1{2\pi}\langle\sigma^\sharp,q_{\alpha,\rho}\rangle f_g(x)
 =\iiprod{\sigma,\rho}_{\alpha}f_g(x),
\]
the first integral being absolutely convergent by \cref{lem:sobolev-pairing}.  This proves
\eqref{eq:weak-sobolev-synthesis}.

\emph{The \(L^2\) statement.}  Assume now that \(\outY\) is a separable complex Hilbert
space, that \(\iiprod{\rho,\rho}_{\alpha}<\infty\) and that \(g\in L^2(\nu;\outY)\).
Plancherel in the bias for almost every \(a\), Tonelli, and homogeneity give
\begin{align*}
 \int_{\cH\times\mathbb R}\|\gamma_g(a,b)\|_\outY^2\,\nu(da)\,\dd b
 &=\frac1{2\pi}\int_{\mathbb R}|\rho^\sharp(\omega)|^2
       \int_\cH\|g(\omega a)\|_\outY^2\,\nu(da)\,d\omega\\
 &=\iiprod{\rho,\rho}_{\alpha}\|g\|_{L^2(\nu;\outY)}^2,
\end{align*}
the omitted frequency \(\omega=0\) being a Lebesgue null set.  This also identifies the
coefficient with \(W_\rho g\), the \(L^2\) coefficient of \cref{def:spectral-coefficient}.
\end{proof}

\begin{rmk}[What the synthesis identity uses]
\label{rmk:sobolev-synthesis-structure}
The identity \eqref{eq:weak-sobolev-synthesis} uses no property of \(\cH\) beyond its inner
product and its Borel structure, and \(\alpha>0\) only through the existence of homogeneous
measures of that degree.  It uses no property of \(\outY\) beyond completeness of its norm:
the Hilbert structure and the separability of the standing output space enter only in
\eqref{eq:sobolev-coefficient-l2}, through Plancherel in the bias variable.  Admissibility of
\(\rho\) is not assumed either: positivity of \(\iiprod{\rho,\rho}_{\alpha}\) is never used,
and its finiteness only in \eqref{eq:sobolev-coefficient-l2}.  Finally
\(g\in L^1(\nu;\outY)\) need not be assumed, because hypotheses 1 and 2 give it: choose
\(\omega_0\neq0\) with \(\rho^\sharp(-\omega_0)\neq0\), so that
\eqref{eq:sobolev-weighted-l1} with \(r=0\) bounds
\(|\rho^\sharp(-\omega_0)|\,\|g(\omega_0a)\|_\outY=\|h_a(\omega_0)\|_\outY\)
by \(A_{s,0}\|h_a\|_{H^s_\omega}\); integrating in \(a\) and using homogeneity,
\[
 \|g\|_{L^1(\nu;\outY)}
 \leq|\omega_0|^{\alpha}\,|\rho^\sharp(-\omega_0)|^{-1}A_{s,0}\,
     \mathfrak B_s(\rho,g).
\]
\end{rmk}

For reference,
\(A_{s,r}=(2\pi)^{-1/2}
(\int_\R(1+t^2)^{-(s-r)}\,\dd t)^{1/2}\).
The distinction between synthesis and sampling matters: a second
parameter moment requires \(s>5/2\).  With this extra condition and
a real globally Lipschitz activation, \cref{thm:lipschitz-barron}
applies; reproducing \(f_g\) requires division of the coefficient
by a nonzero \(\iiprod{\sigma,\rho}_{\alpha}\).

These hypotheses allow concrete non-band-pass filters.  For
\(\rho^\sharp(\omega)=\omega^{2k}e^{-\omega^2}\) and
\(g(\xi)=e^{-\|\xi\|^2}v\), it suffices that
\(2k>\alpha+2s-1/2\).  For example, \(\alpha=1,s=3,k=4\)
permits a second moment and nonzero reconstruction constants for both
Gaussian and ReLU activations.  The following proposition states these
conditions and constants, followed by their verification.

\begin{prop}[Non-band-pass Gaussian-derivative filters]
\label{prop:nonbandpass-sobolev}
Let \(\nu\) be a sigma-finite Borel measure on \(\cH\) satisfying
\((D_\omega)_\#\nu=|\omega|^{-\alpha}\nu\) for \(\omega\neq0\), with
\(\alpha>0\), and suppose \(\nu(\{a:\|a\|\leq1\})<\infty\).
Fix \(s>1/2\), choose an integer \(k\geq1\) with
\begin{equation}
 2k>\alpha+2s-\tfrac12,
 \label{eq:nonbandpass-order}
\end{equation}
and define
\[
 \rho^\sharp(\omega)=\omega^{2k}e^{-\omega^2},
 \qquad g(\xi)=e^{-\|\xi\|^2}v,\qquad v\in \outY.
\]
Then \(\rho\) is a real Schwartz filter, is not band-pass, and
satisfies all filter and density hypotheses of
Theorem~\ref{thm:weak-sobolev-synthesis} for this \(s\).
In particular this gives an absolutely convergent network integral for every continuous
activation of growth order \(p<s-1/2\).

Put \(\delta=2k-\alpha\).  For
\(\sigma(t)=e^{-t^2/2}\), the reconstruction constant is
\[
 \iiprod{\sigma,\rho}_{\alpha}
 =\frac{\Gamma((\delta+1)/2)}{\sqrt{2\pi}}
       (3/2)^{-(\delta+1)/2}>0.
\]
If \(s>3/2\), ReLU also satisfies the hypotheses, and
\[
 \iiprod{\operatorname{ReLU},\rho}_{\alpha}
 =-\frac1{2\pi}\Gamma((\delta-1)/2)\neq0.
\]
The local finiteness assumption holds for the homogeneous Gaussian
mixtures used in this paper on an infinite-dimensional Hilbert space.
\end{prop}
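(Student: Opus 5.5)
We verify the three hypotheses of \cref{thm:weak-sobolev-synthesis} for this pair \((\rho,g)\), then compute the two constants in closed form.

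\emph{Filter properties.} Since \(\rho^\sharp(\omega)=\omega^{2k}e^{-\omega^2}\) is real, even and Schwartz, \(\rho\) is a real even Schwartz function. Its Fourier support is all of \(\R\), so \(\rho\) is not band pass.

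\emph{Hypothesis 3.} The function \(q_{\alpha,\rho}(\omega)=|\omega|^{\delta}e^{-\omega^2}\) with \(\delta=2k-\alpha>0\) is rapidly decaying and has a finite-order singularity of type \(|\omega|^\delta\) at the origin. Its inverse transform therefore decays like \(|t|^{-\delta-1}\), by the standard asymptotics of Fourier transforms of homogeneous singularities times a Gaussian. The condition \(\langle t\rangle^{s}\check q\in L^2\) then holds when \(2s-2\delta-2<-1\), i.e.\ \(\delta>s-1/2\). This is implied by \cref{eq:nonbandpass-order}, which in fact gives \(\delta>2s-1/2\). A more robust route is to bound \(\|q\|_{H^s_\omega}\) by a fractional Sobolev norm \(\|q\|_{H^{s}}\) and check local \(H^s\) membership of \(|\omega|^\delta\) near zero; this requires \(\delta>s-1/2\). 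I expect this step to be the main technical point, though it is only mildly delicate.

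\emph{Hypotheses 1 and 2.} Take \(h_a(\omega)=\omega^{2k}e^{-\omega^2(1+\|a\|^2)}v\); this uses \(\rho^\sharp(-\omega)=\rho^\sharp(\omega)\), since \(2k\) is even. Here \(h_a\) is a polynomial times a Gaussian of variance parameter \(c=1+\|a\|^2\), so \(\gamma_g(a,\cdot)=\check h_a\) is explicit, jointly continuous and Schwartz in \(b\).

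Scaling \(\omega=\eta/\sqrt c\) gives
\[
\check h_a(t)=c^{-k-1/2}\,\check\phi(t/\sqrt c),\qquad \phi(\eta)=\eta^{2k}e^{-\eta^2}.
\]
Hence \(\|h_a\|_{H^s_\omega}\lesssim c^{-k-1/4}\,c^{s/2}\|v\|\). Hypothesis 2 then reduces to
\[
\int (1+\|a\|)^s(1+\|a\|^2)^{s/2-k-1/4}\,\nu(\dd a)<\infty .
\]
On the unit ball this integral is finite by the local finiteness assumption on \(\nu\). Outside the ball, homogeneity decomposes \(\nu\) into dyadic shells, with \(\nu(\{2^j<\|a\|\le2^{j+1}\})=2^{j\alpha}\nu(\{1<\|a\|\le2\})\). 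The resulting geometric series converges when \(\alpha+2s-2k-1/2<0\), which is exactly \cref{eq:nonbandpass-order}.

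\emph{The Gaussian constant.} Since every hypothesis now holds, the constants follow from the pairing \(\frac1{2\pi}\langle\sigma^\sharp,q_{\alpha,\rho}\rangle\), computed for the two activations. For \(\sigma=e^{-t^2/2}\) we have \(\sigma^\sharp=\sqrt{2\pi}e^{-\omega^2/2}\), so
\[
\frac{1}{2\pi}\sqrt{2\pi}\int_\R|\omega|^\delta e^{-3\omega^2/2}\,\dd\omega
=\frac{\Gamma((\delta+1)/2)}{\sqrt{2\pi}}\,(3/2)^{-(\delta+1)/2}.
\]

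\emph{The ReLU constant.} ReLU has growth order \(p=1\), which requires \(s>3/2\). By \cref{eq:relu-fourier}, \(\ReLU^\sharp=-\fp(\omega^{-2})+i\pi\delta_0'\). The \(\delta_0'\) term gives \(q'(0)\), which vanishes since \(q\) is even and \(\delta>1\) makes \(q\) \(C^1\). The finite part is an ordinary integral because \(\delta-2>-1\). This gives
\[
-\frac1{2\pi}\int_\R|\omega|^{\delta-2}e^{-\omega^2}\,\dd\omega=-\frac{1}{2\pi}\Gamma((\delta-1)/2).
\]
To connect this with the Sobolev pairing \cref{eq:sobolev-pairing}, approximate \(q\) by Schwartz tests in \(H^s_\omega\).

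\emph{Local finiteness for \(\nu_\alpha\).} For the homogeneous Gaussian mixtures \(\nu_\alpha\), the local finiteness assumption is exactly \cref{lem:homogeneous-mixture}.
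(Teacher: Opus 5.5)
Your treatment of hypotheses 1 and 2 matches the paper: the dilation formula \(\check h_a(t)=c^{-k-1/2}\check\phi(t/\sqrt c)\), the bound \(\|h_a\|_{H^s_\omega}\lesssim c^{s/2-k-1/4}\|v\|\), and the dyadic-shell summation. So do the Gaussian constant and local finiteness; citing \cref{lem:homogeneous-mixture} there is fine, and shorter than the paper's direct computation of \(\nu_\alpha(B_R)\).

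The real difference is how you handle the singular function \(q_{\alpha,\rho}(\omega)=|\omega|^\delta e^{-\omega^2}\). You get \(q_{\alpha,\rho}\in H^s_\omega\) from the decay \(|\check q_{\alpha,\rho}(t)|\lesssim\langle t\rangle^{-\delta-1}\). This is correct, but it is an imported fact about Fourier transforms of \(|\omega|^\delta\) times a Gaussian. The paper avoids it with the subordination identity \(|\omega|^{-\alpha}=\Gamma(\alpha/2)^{-1}\int_0^\infty u^{\alpha/2-1}e^{-u\omega^2}\,\dd u\). This writes \(q_{\alpha,\rho}\) as a \(u\)-superposition of the Schwartz dilates \(\omega^{2k}e^{-(1+u)\omega^2}\). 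The \(H^s_\omega\) bound then follows from the dilation estimate you already have plus a weighted Cauchy--Schwarz in \(u\), under the same condition \(\delta>s-\tfrac12\). The same superposition makes the ReLU constant elementary in the paper. Each dilate is Schwartz and vanishes to order \(2k\) at the origin, so \(\delta_0'\) contributes nothing and the finite part is an ordinary integral. Fubini in \((u,\omega)\), licensed by \(\alpha<2k-1\), then gives \(-\frac1{2\pi}\Gamma((\delta-1)/2)\).

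Your ReLU step instead pairs \(-\fp(\omega^{-2})+i\pi\delta_0'\) directly with the non-Schwartz \(q_{\alpha,\rho}\). You defer the justification to ``approximate by Schwartz tests in \(H^s_\omega\)'', and this is the one step that still needs an argument. Convergence \(q_n\to q_{\alpha,\rho}\) in \(H^s_\omega\) gives convergence of the Sobolev pairing. You must also show that \(-\fp\!\int q_n\omega^{-2}\,\dd\omega-i\pi q_n'(0)\) tends to \(-\int q_{\alpha,\rho}\omega^{-2}\,\dd\omega\). There are two ways to close it:
\begin{itemize}
\item Prove that this functional is continuous on \(H^s_\omega\) for \(s>3/2\), via \(H^s\hookrightarrow C^{1,\beta}\) for small \(\beta>0\).
\item Take \(q_n=q_{\alpha,\rho}(1-\chi(n\,\cdot))\) with \(\chi\in C_c^\infty(\R)\) equal to one near zero. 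These are Schwartz and vanish near the origin, so the pairing is the ordinary integral. Dominated convergence applies because \(\delta>1\). By scaling, \(\|q_{\alpha,\rho}\chi(n\,\cdot)\|_{H^s_\omega}=O(n^{s-1/2-\delta})\to0\), precisely because \(\delta>s-\tfrac12\).
\end{itemize}
Finally, for the \(L^2\) clause of \cref{thm:weak-sobolev-synthesis} you should also record two facts. One is \(\iiprod{\rho,\rho}_\alpha<\infty\), which here amounts to \(4k-\alpha>-1\). The other is \(g\in L^2(\nu;\outY)\), which your shell argument already gives.
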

\begin{proof}
\label{app:proof-nonbandpass-sobolev}
Homogeneity gives \(\nu(B_R)=R^\alpha\nu(B_1)\) for closed balls
centered at zero and \(R>0\).  Splitting into \(B_1\) and the annuli
\(2^j<\|a\|\leq2^{j+1}\), \(j\geq0\), shows that
\begin{equation}
 \int_\cH(1+\|a\|^2)^{-d/2}\,\nu(da)<\infty
 \quad\hbox{whenever }d>\alpha.
 \label{eq:homogeneous-polynomial-integrability}
\end{equation}
Indeed the annular integrals are bounded by
\(\nu(B_1)2^\alpha 2^{-j(d-\alpha)}\), a summable geometric
series.  Gaussian decay therefore gives \(g\in L^1(\nu;\outY)\cap L^2(\nu;\outY)\);
the first is automatic by \cref{rmk:sobolev-synthesis-structure}, and the second supplies
\eqref{eq:sobolev-coefficient-l2}.

The inverse transform of the real even Schwartz function
\(\omega^{2k}e^{-\omega^2}\) is real and Schwartz.  Its Fourier
transform is nonzero at every \(\omega\neq0\), so it is not
band-pass.  Self-admissibility is equivalent here to
\(4k-\alpha>-1\), since the squared Fourier transform is
\(|\omega|^{4k}e^{-2\omega^2}\); this follows from
\eqref{eq:nonbandpass-order}.  The integral is positive.

Set \(A=(1+\|a\|^2)^{1/2}\),
\(h_0(\omega)=\omega^{2k}e^{-\omega^2}\), and
\(\gamma_0=\check h_0\).  The functions \(h_a\) and their inverse
transforms satisfy
\[
 h_a(\omega)=A^{-2k}h_0(A\omega)v,
 \qquad
 \check h_a(b)=A^{-2k-1}\gamma_0(b/A)v.
\]
Since \(A\geq1\), substitution \(b=At\) gives
\[
 \|h_a\|_{H^s_\omega}
 \leq\|v\|_\outY\|h_0\|_{H^s_\omega}A^{s-2k-1/2}.
\]
The coefficient \((a,b)\mapsto A^{-2k-1}\gamma_0(b/A)v\) is jointly
continuous, because \(a\mapsto A\) is continuous with \(A\geq1\) and
\(\gamma_0\) is Schwartz; for each \(a\), its restriction to the bias variable is \(\check h_a\).  This
is hypothesis 1 of \cref{thm:weak-sobolev-synthesis}.  Moreover,
\[
 (1+\|a\|)^s\|h_a\|_{H^s_\omega}
 \leq2^{s/2}\|v\|_\outY\|h_0\|_{H^s_\omega}
          (1+\|a\|^2)^{(2s-2k-1/2)/2}.
\]
This is integrable by \eqref{eq:homogeneous-polynomial-integrability}
and \eqref{eq:nonbandpass-order}, which say
\(2k+1/2-2s>\alpha\).  Thus \(\mathfrak B_s<\infty\).

The Sobolev membership of \(q_{\alpha,\rho}(\omega)=|\omega|^{\delta}e^{-\omega^2}\)
follows from the same dilates, by the subordination formula
\begin{equation}
 |\omega|^{-\alpha}
 =\frac1{\Gamma(\alpha/2)}\int_0^\infty u^{\alpha/2-1}e^{-u\omega^2}\,\dd u,
 \qquad\omega\neq0 .
 \label{eq:gamma-subordination}
\end{equation}
It writes \(q_{\alpha,\rho}\) as the superposition
\[
 q_{\alpha,\rho}(\omega)
 =\frac1{\Gamma(\alpha/2)}\int_0^\infty u^{\alpha/2-1}
     \tau^{-2k}h_0(\tau\omega)\,\dd u,
 \qquad \tau=\tau(u)=(1+u)^{1/2},
\]
of the rescaled functions \(\tau^{-2k}h_0(\tau\,\cdot)\), whose inverse Fourier transforms are the dilates
\(\tau^{-2k-1}\gamma_0(\cdot/\tau)\).  Since
\(\|\tau^{-2k-1}\gamma_0(\cdot/\tau)\|_{L^1}=\tau^{-2k}\|\gamma_0\|_{L^1}\) and
\(\int_0^\infty u^{\alpha/2-1}(1+u)^{-k}\,\dd u<\infty\) for \(0<\alpha<2k\), Fubini gives
the coefficient
\[
 \check q_{\alpha,\rho}(b)
 =\frac1{\Gamma(\alpha/2)}\int_0^\infty u^{\alpha/2-1}
     \tau^{-2k-1}\gamma_0(b/\tau)\,\dd u .
\]
For the norm, weighted Cauchy--Schwarz in \(u\) with the weights
\(w(u)=u^{\alpha/2-1}\) and \(m(u)=\tau(u)^{s-2k-1/2}\), followed by Tonelli in \((u,b)\),
bounds \(\|q_{\alpha,\rho}\|_{H^s_\omega}^2\) by
\[
 \begin{aligned}
 &
 \frac1{\Gamma(\alpha/2)^2}
 \Bigl(\int_0^\infty wm\,\dd u\Bigr)
 \int_0^\infty\frac{w}{m}\,
     \bigl\|\tau^{-2k}h_0(\tau\,\cdot\,)\bigr\|_{H^s_\omega}^2\,\dd u
  \\
 &\qquad\leq\frac{\|h_0\|_{H^s_\omega}^2}{\Gamma(\alpha/2)^2}
       \Bigl(\int_0^\infty u^{\alpha/2-1}(1+u)^{(s-2k-1/2)/2}\,\dd u\Bigr)^2,
 \end{aligned}
\]
by the dilation estimate already displayed.  The Beta-type integral converges at the origin
because \(\alpha>0\) and at infinity because \(2k>\alpha+s-1/2\), which is weaker than
\eqref{eq:nonbandpass-order}.  Hence \(q_{\alpha,\rho}\in H^s_\omega(\R)\), with no
discussion of weak derivatives or of boundary terms at the origin.

For the Gaussian activation,
\(\sigma^\sharp(\omega)=\sqrt{2\pi}e^{-\omega^2/2}\).
The ordinary Fourier pairing and substitution \(z=(3/2)\omega^2\)
give the first displayed constant.  For ReLU use
\(\operatorname{ReLU}^\sharp=-\operatorname{Pf}(\omega^{-2})
+i\pi\delta'_0\), and pair it with the superposition
\eqref{eq:gamma-subordination} rather than with \(q_{\alpha,\rho}\) directly.  Each
test \(\omega\mapsto \tau^{-2k}h_0(\tau\omega)\) is a Schwartz function vanishing to order
\(2k\geq2\) at the origin, so \(\delta'_0\) contributes nothing and the finite part is the
absolutely convergent integral
\(-\int_\R\omega^{2k-2}e^{-\tau^2\omega^2}\,\dd\omega\); no cutoff sequence is
needed.  Fubini in \((u,\omega)\), licensed by
\(\int_0^\infty u^{\alpha/2-1}(1+u)^{-(2k-1)/2}\,\dd u<\infty\) for
\(\alpha<2k-1\), which follows from \eqref{eq:nonbandpass-order} and \(s>3/2\), then
returns the subordination integral in the \(\omega\) variable and gives
\[
 \iiprod{\operatorname{ReLU},\rho}_{\alpha}
 =-\frac1{2\pi}\int_{\mathbb R}|\omega|^{\delta-2}e^{-\omega^2}\,d\omega
 =-\frac1{2\pi}\Gamma((\delta-1)/2).
\]

Finally, for the Gaussian mixture write \(Z\sim\mu_{P}\).
Tonelli and the substitution implicit in
\(\|\sqrt{2t}Z\|\leq R\) give
\[
 \nu_\alpha(B_R)
 =\mathbb E\int_0^{R^2/(2\|Z\|^2)}t^{\alpha/2-1}\,dt
 =\frac{2^{1-\alpha/2}}{\alpha}R^\alpha
       \mathbb E\|Z\|^{-\alpha}<\infty.
\]
For finiteness choose an integer \(n>\alpha\) and \(n\) positive
eigenvalues of the injective covariance \(P\).  If their minimum is
\(\lambda_*>0\), then
\(\|Z\|^2\geq\lambda_*\sum_{j=1}^n Z_j^2\), with independent
standard real Gaussians \(Z_j\).  Polar integration in
\(\mathbb R^n\) gives a finite negative moment of order \(\alpha<n\).
This proves the claimed local finiteness and completes the example.
\end{proof}

\section{Approximation by finite-width networks}
\label{sec:finite-width}

We obtain finite-width networks by Monte Carlo approximation of the
coefficient integral and estimate their error uniformly on compact sets.
Conditions on the spectral density provide the required coefficient
moments, and density of these targets yields cc-universality.  The
construction does not require a projection of the input space or the
direction parameter.

\subsection{Approximation rates}
\label{subsec:sampling}

Let \(\Gamma\) be a finite complex measure on \(\cH\times\R\) with polar
decomposition \(\Gamma=h|\Gamma|\), \(|h|=1\) \(|\Gamma|\)-almost
everywhere.  If \(\Gamma=0\) take the zero network.  Otherwise write
\begin{equation}
  V:=\|\Gamma\|_{\mathrm{TV}},
  \qquad p:=|\Gamma|/V,
  \label{eq:polar-decomposition}
\end{equation}
sample independent \(\theta_j=(a_j,b_j)\sim p\), and form the sampled network
\begin{equation}
  f_N(x):=\frac{V}{N}\sum_{j=1}^N
  h(\theta_j)\,\sigma(\langle a_j,x\rangle-b_j),
  \label{eq:polar-network}
\end{equation}
which is unbiased for \(f=S_\sigma[\Gamma]\).

\begin{dfn}[Activation-dependent Rademacher complexity]
\label{def:rademacher-complexity}
For a compact \(K\subset \cH\), define
\begin{equation}
  \mathfrak R_N(K;p,\sigma)
  :=\EE_{\theta,\varepsilon}\sup_{x\in K}
  \left|\frac1N\sum_{j=1}^N\varepsilon_jh(\theta_j)
  \sigma(\langle a_j,x\rangle-b_j)\right|,
  \label{eq:rademacher-complexity}
\end{equation}
where \(\varepsilon_j\) are independent Rademacher signs.
\end{dfn}

\begin{lem}[Approximation by averages of integrable Banach-valued atoms]
\label{lem:banach-rademacher-vanishing}
Let \(X\) be a separable Banach space, \(p\) a probability measure,
and \(\Phi\in L^1(p;X)\).  For independent samples \(\theta_j\sim p\)
and independent Rademacher signs \(\varepsilon_j\),
\[
 \mathbb E\left\|\frac1N\sum_{j=1}^N
                 \varepsilon_j\Phi(\theta_j)\right\|_X\longrightarrow0.
\]
Moreover, with \(m=\int\Phi\,\dd p\),
\[
 \mathbb E\left\|\frac1N\sum_{j=1}^N\Phi(\theta_j)-m\right\|_X
 \le2\mathbb E\left\|\frac1N\sum_{j=1}^N
                 \varepsilon_j\Phi(\theta_j)\right\|_X.
\]
\end{lem}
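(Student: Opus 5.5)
The plan has two parts: a symmetrization inequality, and a vanishing statement proved by truncating \(\Phi\) to finitely valued simple functions. For the second claim, use the standard symmetrization with an independent ghost sample \(\theta_j'\sim p\). Since \(m=\EE\Phi(\theta_j')\) and \(\Phi\) is Bochner integrable, Jensen's inequality for the conditional expectation in the Banach space \(X\) gives
\[
 \EE\Bigl\|\frac1N\sum_j\Phi(\theta_j)-m\Bigr\|_X
 \le\EE\Bigl\|\frac1N\sum_j\bigl(\Phi(\theta_j)-\Phi(\theta_j')\bigr)\Bigr\|_X .
\]
Each difference \(\Phi(\theta_j)-\Phi(\theta_j')\) is symmetric, so inserting independent signs \(\varepsilon_j\) does not change the joint law. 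The triangle inequality then bounds the right side by twice the Rademacher average. Separability of \(X\) makes every norm appearing here measurable, so all the expectations are well defined.

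For the vanishing statement, fix \(\eta>0\). Choose a simple function \(\Phi_\eta=\sum_{k=1}^K x_k\mathbf 1_{E_k}\) with \(\|\Phi-\Phi_\eta\|_{L^1(p;X)}<\eta\); such a function exists by the definition of Bochner integrability. Split the Rademacher average into the \(\Phi_\eta\) part and the remainder. The remainder contributes at most
\[
 \frac1N\sum_j\EE\|\Phi(\theta_j)-\Phi_\eta(\theta_j)\|_X<\eta ,
\]
uniformly in \(N\). The simple part equals \(\sum_k x_k\,\frac1N\sum_j\varepsilon_j\mathbf 1_{E_k}(\theta_j)\). Each scalar average has second moment at most \(1/N\), because the summands are centred, independent and bounded by one. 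Cauchy--Schwarz therefore bounds the simple part by \(N^{-1/2}\sum_k\|x_k\|_X\). Letting \(N\to\infty\) gives a \(\limsup\) at most \(\eta\), and since \(\eta\) was arbitrary the limit is zero.

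The main point needing care is the Banach-space setting. No type-2 or Hilbert structure is available, so the proof must not rely on an \(L^2\) computation in \(X\). The simple-function truncation avoids this: it reduces everything to finitely many scalar averages and uses only the \(L^1\) norm. The symmetrization step needs only Fubini and Jensen for Bochner integrals, so neither step depends on any geometric property of \(X\).
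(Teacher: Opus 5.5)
Your proposal is correct and follows the paper's proof essentially step for step: an \(L^1(p;X)\) approximation by a simple function with scalar second-moment bounds for the vanishing claim, and a ghost sample with Jensen, sign insertion by symmetry, and the triangle inequality for the symmetrization bound. The only difference is cosmetic: you bound the simple part by \(N^{-1/2}\sum_k\|x_k\|_X\), while the paper keeps the slightly sharper \(\sum_k\|x_k\|_X\sqrt{p(E_k)/N}\), and this changes nothing in the limit.
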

\begin{proof}
Approximate \(\Phi\) in \(L^1(p;X)\) by a simple function
\(\Psi=\sum_{k=1}^Jx_k\mathbf1_{E_k}\).  The expected norm of the
signed-average error is at most \(\|\Phi-\Psi\|_{L^1}\).
For the simple function it is at most
\(\sum_k\|x_k\|\sqrt{p(E_k)/N}\), by the scalar second-moment
identity and Cauchy--Schwarz.  First let \(N\to\infty\), then let the
approximation error tend to zero.  For the second assertion introduce
independent copies \(\theta'_j\), apply Jensen to the ghost sample,
and use exchangeability of each pair to insert signs into
\(\Phi(\theta_j)-\Phi(\theta'_j)\).  The triangle inequality bounds
the result by two copies of the signed average.  Every integral is
justified by \(\Phi\in L^1\).
\end{proof}

\begin{thm}[Compact-open approximation bound]
\label{thm:general-rademacher}
Whenever the atoms \(x\mapsto h(\theta)\sigma(\langle a,x\rangle-b)\) are
strongly measurable and integrable in \(C(K)\), the sampled network satisfies
\begin{equation}
  \EE\|f_N-f\|_{C(K)}
  \le 2V\,\mathfrak R_N(K;p,\sigma).
  \label{eq:general-rademacher-bound}
\end{equation}
\end{thm}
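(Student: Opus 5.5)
The plan is to apply the second assertion of \cref{lem:banach-rademacher-vanishing} in the separable Banach space \(X=C(K)\) (or \(C(K;\outY)\) in the vector-valued case). We take as atom \(\Phi(\theta):=h(\theta)\,\sigma(\langle a,\cdot\rangle-b)\), an element of \(X\). First, \(X\) is separable because \(K\) is a compact metric space. By hypothesis \(\Phi\) is strongly measurable and \(\Phi\in L^1(p;X)\), since integrability in \(C(K)\) is exactly the standing assumption. The lemma therefore applies verbatim.

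The next step is to identify the mean \(m=\int\Phi\,\dd p\in X\) with \(f/V\) restricted to \(K\). Point evaluation \(\delta_x\) is a bounded linear functional on \(C(K)\), so it commutes with the Bochner integral. This gives
\[
m(x)=\int h(\theta)\sigma(\langle a,x\rangle-b)\,p(\dd\theta)=V^{-1}\int\sigma(\langle a,x\rangle-b)\,\Gamma(\dd a,\dd b)=V^{-1}S_\sigma[\Gamma](x),
\]
where the middle equality is the polar decomposition \(\Gamma=h|\Gamma|=Vh\,p\). Hence \(f_N-f=V\bigl(N^{-1}\sum_j\Phi(\theta_j)-m\bigr)\) on \(K\). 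We also use that \(f\) is defined at every \(x\in K\), which follows from the same bounded-functional argument.

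Multiplying the symmetrization inequality of \cref{lem:banach-rademacher-vanishing} by \(V\) then gives
\[
\EE\|f_N-f\|_{C(K)}\le 2V\,\EE\Bigl\|N^{-1}\sum_j\varepsilon_j\Phi(\theta_j)\Bigr\|_{C(K)}.
\]
The right-hand side is \(2V\,\mathfrak R_N(K;p,\sigma)\) by \cref{def:rademacher-complexity}, since the \(C(K)\) norm is the supremum over \(x\in K\). The case \(\Gamma=0\) is trivial, because both sides vanish.

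The only delicate point is measurability. The supremum in \(\mathfrak R_N\) must be a measurable random variable, and it is, because it equals the norm of a strongly measurable \(X\)-valued variable. Evaluation must also commute with the Bochner integral, which holds for bounded functionals. For vector outputs, the same argument runs with \(C(K;\outY)\) and the \(\outY\)-valued density \(h\), noting that \(C(K;\outY)\) is separable because \(\outY\) is separable.
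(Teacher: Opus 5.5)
Your proposal is correct and follows the paper's proof exactly: apply the symmetrization assertion of \cref{lem:banach-rademacher-vanishing} in \(X=C(K)\) to the atom \(\Phi(\theta)=h(\theta)\sigma(\langle a,\cdot\rangle-b)\), then multiply by \(V\). The identification of the Bochner mean with \(f/V\) through point evaluations, and the measurability remarks, are details the paper leaves implicit.
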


\begin{proof}
Apply \cref{lem:banach-rademacher-vanishing} with \(X=C(K)\) and multiply
by \(V\).  This also proves convergence to zero.  No Lipschitz or
second-moment assumption enters this step.
\end{proof}

\begin{lem}[A two-coordinate Rademacher comparison]
\label{lem:two-coordinate-comparison}
Let $S$ be a nonempty set, let $N\geq1$, and let
$\psi_i,u_i,v_i:S\to\mathbb R$ be bounded functions, for $1\leq i\leq N$.
Suppose that
\[
 |\psi_i(s)-\psi_i(t)|
 \leq |u_i(s)-u_i(t)|+|v_i(s)-v_i(t)|
 \qquad(s,t\in S).
\]
For independent Rademacher signs $\varepsilon_i,\varepsilon_{i1},
\varepsilon_{i2}$, one has
\begin{equation}
 \mathbb E\sup_{s\in S}\sum_{i=1}^N\varepsilon_i\psi_i(s)
 \leq
 2\mathbb E\sup_{s\in S}
 \sum_{i=1}^N\bigl(\varepsilon_{i1}u_i(s)
                       +\varepsilon_{i2}v_i(s)\bigr).
 \label{eq:two-coordinate-comparison}
\end{equation}
\end{lem}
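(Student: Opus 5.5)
The plan is a one-coordinate-at-a-time replacement, in the spirit of the Ledoux--Talagrand contraction argument. The key point is that the factor $2$ on the right of \eqref{eq:two-coordinate-comparison} should be realized as a \emph{scaling} of each replaced coordinate, not as a multiplicative loss per step, so that it does not compound over the $N$ steps. Precisely, I would prove the following one-step inequality. Let $A:S\to\mathbb R$ be bounded, let $\psi,u,v$ satisfy the hypothesis for a single index, and write $w=(u,v)$ and $e=(e_1,e_2)\in\{\pm1\}^2$. Then
\[
 \mathbb E_\varepsilon\sup_{s}\bigl(A(s)+\varepsilon\psi(s)\bigr)
 \le \mathbb E_{e}\sup_s\bigl(A(s)+2e_1u(s)+2e_2v(s)\bigr).
\]
Since all functions are bounded, every supremum is finite and only finitely many sign patterns occur, so no measurability issues arise. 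Granting this, I fix $i$ and condition on all other signs. I take $A$ to be the sum of the untouched terms $\varepsilon_j\psi_j$ ($j>i$) and the already replaced terms $2(\varepsilon_{j1}u_j+\varepsilon_{j2}v_j)$ ($j<i$). I then apply the one-step bound and take expectations. After $N$ steps the right side is $\mathbb E\sup_s\sum_i 2(\varepsilon_{i1}u_i+\varepsilon_{i2}v_i)$, which is \eqref{eq:two-coordinate-comparison}.

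For the one-step bound, I first write $2\,\mathbb E_\varepsilon\sup(A+\varepsilon\psi)$ as $\sup_{s,t}[A(s)+A(t)+\psi(s)-\psi(t)]$. By hypothesis this is at most $\sup_{s,t}[A(s)+A(t)+|u(s)-u(t)|+|v(s)-v(t)|]$. For each pair $(s,t)$ the absolute values are realized by some sign vector $e$, so this is at most $\max_e[K(e)+K(-e)]$, where $K(e):=\sup_s(A(s)+e\cdot w(s))$. The four sign vectors split into the two antipodal pairs $\{e,-e\}$. Let $G_1,G_2$ be the corresponding values of $K(e)+K(-e)$. Each satisfies $G_k\ge 2\sup A$, since $K(e)+K(-e)\ge 2A(s)$ for every $s$. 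Hence
\[
 \max(G_1,G_2)=G_1+G_2-\min(G_1,G_2)\le \sum_e K(e)-2\sup A = 4\,\mathbb E_eK(e)-2\sup A .
\]

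Finally I absorb the $-2\sup A$ by convexity. Writing $A+e\cdot w=\tfrac12\bigl[(A+2e\cdot w)+A\bigr]$ gives $K(e)\le\tfrac12\bigl(K_2(e)+\sup A\bigr)$, where $K_2(e):=\sup_s(A(s)+2e\cdot w(s))$. Therefore $4\,\mathbb E_eK(e)-2\sup A\le 2\,\mathbb E_eK_2(e)$. Dividing by $2$ gives the one-step inequality.

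The main obstacle, and the step I would check most carefully, is this bookkeeping. One must use the lower bound $G_k\ge 2\sup A$ to trade the maximum over the two antipodal pairs for their sum, and then convexity to convert the leftover $-2\sup A$ into the doubling of $w$. Everything else, namely the conditioning on the other signs and the induction over $i$, is routine.
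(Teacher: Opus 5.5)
Your proposal is correct. Its outer structure matches the paper's: the same one-sign inequality with the replaced coordinate doubled, so the constant does not compound, applied conditionally on the other signs for $i=1,\dots,N$.

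The two arguments differ in how that one-sign inequality is proved.

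\emph{The paper} argues from below. It picks $\delta$-near-maximizers $s_\pm$ of $F\pm\psi$ and bounds the doubled process by its values at these two points, using $\max\{a,b\}=(a+b+|a-b|)/2$. It removes the offset by symmetry, $\mathbb E|A+D|\ge\mathbb E|D|$, and obtains the constant from the four-sign enumeration $\mathbb E|\eta_1r+\eta_2q|=\max\{|r|,|q|\}\ge(|r|+|q|)/2$. Finally it lets $\delta\downarrow0$.

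\emph{You} argue from above.
\begin{enumerate}
\item The one-sign process is at most $\max_e[K(e)+K(-e)]$ over antipodal pairs.
\item The lower bound $K(e)+K(-e)\ge2\sup A$ lets you replace the maximum of the two pair values by their sum minus $2\sup A$.
\item Convexity of $\lambda\mapsto\sup(A+\lambda e\cdot w)$ turns this into the doubled process.
\end{enumerate}
All of these steps check out.

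What each route buys:
\begin{itemize}
\item Your version works with suprema directly, with no approximate maximizers or limiting step. It also passes through the intermediate bound $\mathbb E_\varepsilon\sup(A+\varepsilon\psi)\le2\mathbb E_eK(e)-\sup A$, which already implies the doubled inequality.
\item The paper's two-point argument is the more familiar contraction-style proof and localizes the factor-$2$ loss in the single elementary inequality $\max\{|r|,|q|\}\ge(|r|+|q|)/2$.
\item In your argument the corresponding loss sits in replacing $\max(G_1,G_2)$ by $G_1+G_2-\min(G_1,G_2)$. With only one coordinate, $G_1=G_2$ and that step could be skipped, giving the contraction with no loss.
\end{itemize}
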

\begin{proof}
We first prove a one-sign comparison.  Let $F,\psi,u,v:S\to\mathbb R$ be
bounded, with $\psi,u,v$ satisfying the stated increment condition, so
that every supremum below is finite.  Fix $\delta>0$ and choose
$s_\pm\in S$ with
$F(s_+)+\psi(s_+)>\sup_S(F+\psi)-\delta/2$ and
$F(s_-)-\psi(s_-)>\sup_S(F-\psi)-\delta/2$; no maximizer need exist.  Put
\[
 A=F(s_+)-F(s_-),\qquad
 r=u(s_+)-u(s_-),\qquad q=v(s_+)-v(s_-).
\]
For two independent signs $\eta_1,\eta_2$, the variable
$D=2(\eta_1r+\eta_2q)$ is symmetric.  Consequently
\[
 \mathbb E|A+D|
 =\frac12\mathbb E\bigl(|A+D|+|A-D|\bigr)
 \geq\mathbb E|D|.
\]
Direct enumeration of the four sign pairs gives
\[
 \mathbb E|\eta_1r+\eta_2q|
 =\frac{|r+q|+|r-q|}{2}
 =\max\{|r|,|q|\}
 \geq\frac{|r|+|q|}{2}.
\]
Using only the two indices $s_+,s_-$ to bound a supremum from below,
and using $\max\{a,b\}=(a+b+|a-b|)/2$, we therefore obtain
\begin{align*}
 &\mathbb E_{\eta_1,\eta_2}
    \sup_{s\in S}\bigl(F(s)+2\eta_1u(s)+2\eta_2v(s)\bigr)\\
 &\quad\geq\frac{F(s_+)+F(s_-)}{2}
                   +\frac12\mathbb E|A+D|\\
 &\quad\geq\frac{F(s_+)+F(s_-)}{2}
                   +\frac{|r|+|q|}{2}\\
 &\quad\geq\frac{F(s_+)+F(s_-)+\psi(s_+)-\psi(s_-)}{2}\\
 &\quad\geq\mathbb E_{\varepsilon}
            \sup_{s\in S}\bigl(F(s)+\varepsilon\psi(s)\bigr)-\frac\delta2 .
\end{align*}
Letting $\delta\downarrow0$ proves the one-sign inequality, which allows an
arbitrary bounded offset $F$.  Condition on
all signs except $\varepsilon_i$, apply it to replace
$\varepsilon_i\psi_i$ by
$2\varepsilon_{i1}u_i+2\varepsilon_{i2}v_i$, and then average over the
conditioned signs.  Repeating this for $i=1,\ldots,N$ proves
\eqref{eq:two-coordinate-comparison}.

Both sides are finite averages of bounded suprema over the $2^N$,
respectively $4^N$, sign patterns, so no countability, measurability or
separability of $S$ enters, and no assumption that any of the functions
vanish at a common index is needed.
\end{proof}

\begin{thm}[Dimension-free uniform Hilbert-valued Barron bound]
\label{thm:lipschitz-barron}
Let \(\Gamma\) be a finite-variation \(\outY\)-valued Borel measure,
\(\sigma:\R\to\R\) globally Lipschitz, and \(N\ge1\).
For \(V>0\), write \(\Gamma=h|\Gamma|\), \(\|h\|_\outY=1\) almost
everywhere, and \(p=|\Gamma|/V\).  Suppose
\begin{equation}
 M_2^2:=\int_{\cH\times\R}(\|a\|^2+|b|^2)\,p(\dd a,\dd b)<\infty.
 \label{eq:second-moment}
\end{equation}
For a nonempty compact \(K\subset \cH\), put
\(R_K=\sup_{x\in K}\sqrt{\|x\|^2+1}\).  The sampled network
\cref{eq:polar-network}, now with \(\outY\)-valued weights, satisfies
\begin{equation}
 \begin{split}
 \EE\|f_N-f\|_{C(K;\outY)}
 &\le\frac{V}{\sqrt N}
       \bigl(4|\sigma(0)|+8\Lip(\sigma)R_KM_2\bigr)\\
 &\le\frac{8V}{\sqrt N}
       \bigl(|\sigma(0)|+\Lip(\sigma)R_KM_2\bigr).
 \end{split}
 \label{eq:lipschitz-barron-bound}
\end{equation}
At least one deterministic width-\(N\) realization satisfies the first
bound.  For \(V=0\) take the zero network; for empty \(K\) the error is zero.
\end{thm}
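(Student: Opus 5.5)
We reduce to a Rademacher estimate and then remove the \(\outY\)-valued weights and the nonlinearity by the two-coordinate comparison. By \cref{thm:general-rademacher} (applied with \(C(K;\outY)\) in place of \(C(K)\); the atoms \(x\mapsto h(\theta)\sigma(\langle a,x\rangle-b)\) are strongly measurable since \(\theta\mapsto\) atom is continuous in \((a,b)\) into \(C(K)\) for fixed \(h(\theta)\), and integrable because \(\|h\|=1\) and \(|\sigma(t)|\le|\sigma(0)|+\Lip(\sigma)|t|\) with the finite moment \(M_2\)), it suffices to bound
\[
\mathfrak R:=\EE\sup_{x\in K}\Bigl\|\frac1N\sum_{j}\varepsilon_jh(\theta_j)\sigma(\langle a_j,x\rangle-b_j)\Bigr\|_\outY
\]
by \((2|\sigma(0)|+4\Lip(\sigma)R_KM_2)/\sqrt N\). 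The existence of a deterministic realization then follows because a random variable cannot exceed its mean almost surely.

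First we split off the constant \(\sigma(0)\). The term \(\sigma(0)\frac1N\sum_j\varepsilon_jh(\theta_j)\) does not depend on \(x\), and its expected norm is at most \(|\sigma(0)|/\sqrt N\) by orthogonality of Rademacher sums in the Hilbert space \(\outY\) and \(\|h\|=1\). This is generous against the budget \(2|\sigma(0)|\). The remainder uses \(\tilde\sigma=\sigma-\sigma(0)\), which has the same Lipschitz constant and satisfies \(\tilde\sigma(0)=0\).

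Next we remove the norm in \(\outY\) and the vector weights. We write \(\|y\|=\sup_{e}\mathrm{Re}\langle y,e\rangle\) over the unit ball of \(\outY\), so the supremum runs over \(s=(x,e)\in S:=K\times B_\outY\). Then
\[
\psi_j(s)=\mathrm{Re}\langle h(\theta_j),e\rangle\,\tilde\sigma(\langle a_j,x\rangle-b_j).
\]
With \(c_j=\mathrm{Re}\langle h_j,e\rangle\in[-1,1]\) and \(z_j=\langle a_j,x\rangle-b_j\), the product increment \(|c\tilde\sigma(z)-c'\tilde\sigma(z')|\) is bounded by \(L|z-z'|+|\tilde\sigma(z')||c-c'|\). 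This is where \cref{lem:two-coordinate-comparison} enters, with \(u_j=Lz_j\) and \(v_j\) a scalar multiple of \(c_j\). Because \(|\tilde\sigma(z')|\) is not uniformly bounded, I would make the bound bounded by choosing \(v_j(s)=L\,B_j\,c_j\) with \(B_j=\sup_{x\in K}|z_j|\le R_K(\|a_j\|^2+b_j^2)^{1/2}\), conditioned on \(\theta\). The comparison yields \(2\,\EE\sup(\sum\varepsilon_{j1}Lz_j+\sum\varepsilon_{j2}LB_jc_j)\). The first sum is bounded by \(L\,\EE\|\sum\varepsilon_j(a_j,b_j)\|\,R_K\), and the second by \(L\,\EE\|\sum\varepsilon_jB_jh_j\|\). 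By Jensen and orthogonality in \(\cH\times\R\) and \(\outY\), each is at most \(LR_K\sqrt{NM_2^2}\). Collecting terms gives \(\frac1N\cdot2\cdot2LR_KM_2\sqrt N\). Multiplying by the factor \(2V\) from \cref{thm:general-rademacher} yields \(8\Lip(\sigma)R_KM_2V/\sqrt N\). The \(\sigma(0)\) term contributes \(2V|\sigma(0)|/\sqrt N\le4V|\sigma(0)|/\sqrt N\).

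The main obstacle is exactly this increment estimate: the lemma needs an \(\omega\)-independent bounded Lipschitz domination over all of \(S\), including the supremum over \(e\) and the growth of \(\tilde\sigma\). To handle it I would condition on \(\theta\), so that \(B_j\) are constants, and check the symmetric-in-\((s,t)\) version of the inequality. If the symmetric form fails, the fallback is to use \(\min(|\tilde\sigma(z)|,|\tilde\sigma(z')|)\) together with a case swap, or to use the reduction \(\|y\|\le\sup\) over a countable dense set of \(e\), taking real and imaginary parts separately at the cost of the factor \(2\) that separates \(4|\sigma(0)|\) from \(|\sigma(0)|\).
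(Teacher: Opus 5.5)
Your proposal is correct and follows essentially the paper's proof. Both use symmetrization through the Banach-valued Rademacher lemma, real Hilbert duality $\|w\|=\sup_{\|e\|\le1}\mathrm{Re}\langle w,e\rangle$, the two-coordinate comparison with $u_j=L\langle (a_j,b_j),(x,-1)\rangle$ and $v_j$ equal to a sample-wise bound on the activation times $\mathrm{Re}\langle h_j,e\rangle$, and Jensen with Rademacher orthogonality. There are two differences. You split off $\sigma(0)$ first, whereas the paper keeps $|\sigma(0)|$ inside $B_j$; the comparison factor $2$ then also multiplies that term, which is why the paper gets $4|\sigma(0)|$ where you get $2|\sigma(0)|$. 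Also, the increment inequality you flag as the main obstacle holds directly for every ordered pair by splitting the product difference, exactly as in the paper, so no fallback is needed.
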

\begin{proof}
\label{app:proof-lipschitz-barron}
Put \(L=\Lip(\sigma)\) and \(b_0=|\sigma(0)|\).
We may assume $V>0$ and $K\ne\varnothing$.
The Radon--Nikodym property of a Hilbert space gives a strongly
$|\Gamma|$-measurable polar density $h$ with $\|h\|_\outY=1$
$|\Gamma|$-almost everywhere.  We use a Borel representative and change
it to zero on a Borel null set so that $\|h(\theta)\|_\outY\leq1$ everywhere.
Such a representative exists: a strongly measurable function is an
almost-everywhere limit of simple measurable functions; if the measure
has been completed, replace the finitely many sets in each of those
simple functions by Borel sets modulo null sets, and take the limit
outside the resulting Borel null set.  These modifications do not
change $\Gamma$, $f$, or the law of the sampled network.

Equip $\widetilde \cH=\cH\oplus\mathbb R$ with its Hilbert norm, and set
\[
 t=(a,b),\qquad z_x=(x,-1),\qquad
 \Phi(t)(x)=h(t)\sigma(\langle t,z_x\rangle).
\]
For every $t$, the function $\Phi(t)$ belongs to $C(K;\outY)$, and
\begin{equation}
 \|\Phi(t)\|_{C(K;\outY)}\leq b_0+LR_K\|t\|.
 \label{eq:supp-vector-envelope}
\end{equation}
We verify its strong measurability, rather than presupposing
measurability of an uncountable supremum.  The map
\[
 (t,y)\longmapsto
       \bigl[x\longmapsto y\sigma(\langle t,z_x\rangle)\bigr]
       \quad\hbox{from }\widetilde \cH\times \outY\hbox{ to }C(K;\outY)
\]
is continuous: the norm of the difference at $(t,y)$ and $(t',y')$ is
at most
\[
 LR_K\|t-t'\|\,\|y\|
       +(b_0+LR_K\|t'\|)\|y-y'\|.
\]
Its domain is separable, so its range is separable.  Composing it with
the Borel map $t\mapsto(t,h(t))$ gives a Borel, separably valued
$C(K;\outY)$-valued map, which is strongly measurable.
Cauchy--Schwarz gives $\int\|t\|\,dp\leq M_2$; hence
\eqref{eq:supp-vector-envelope} makes $\Phi$ Bochner integrable.
Evaluation at any $x\in K$ is continuous linear, so
$f|_K=V\int\Phi\,dp$ in $C(K;\outY)$.
The same envelope also shows that all random uniform norms used below
are measurable and integrable.

We first estimate the Rademacher sum conditionally on arbitrary fixed
samples $t_1,\ldots,t_N$.  Put $h_i=h(t_i)$ and
$B_i=b_0+LR_K\|t_i\|$.  Regard $\outY$ as a real Hilbert space with inner
product $(y,w)_{\mathbb R}=\operatorname{Re}\langle y,w\rangle_\outY$;
its norm is unchanged.  For $y$ in its closed unit ball $B_\outY$, put
$q_i(y)=(h_i,y)_{\mathbb R}$.  For every $w\in \outY$,
\[
 \|w\|_\outY=\sup_{y\in B_\outY}(w,y)_{\mathbb R}.
\]
This follows by Cauchy--Schwarz and, when $w\ne0$, by taking
$y=w/\|w\|_\outY$; for $w=0$ both sides vanish.

Choose countable norm-dense subsets $D_K\subset K$ and $D_\outY\subset B_\outY$,
with $0\in D_\outY$.  All processes below are continuous in $(x,y)$, so
their suprema on $K\times B_\outY$ equal their suprema on the countable
set $S=D_K\times D_\outY$.  On $S$ define
\[
 \psi_i(x,y)=\sigma(\langle t_i,z_x\rangle)q_i(y),\qquad
 u_i(x,y)=L\langle t_i,z_x\rangle,\qquad
 v_i(x,y)=B_iq_i(y).
\]
These functions are bounded.  Since $|q_i(y)|\leq1$ and
$|\sigma(\langle t_i,z_x\rangle)|\leq B_i$, splitting the difference
of the two products gives
\begin{align*}
 |\psi_i(x,y)-\psi_i(x',y')|
 &\leq L|\langle t_i,z_x-z_{x'}\rangle|
          +B_i|q_i(y)-q_i(y')|\\
 &=|u_i(x,y)-u_i(x',y')|+|v_i(x,y)-v_i(x',y')|.
\end{align*}
Lemma~\ref{lem:two-coordinate-comparison}, followed by real Hilbert
duality, therefore yields
\begin{align*}
 &\mathbb E_{\varepsilon}
      \sup_{x\in K}\Bigl\|\sum_{i=1}^N
                  \varepsilon_i h_i\sigma(\langle t_i,z_x\rangle)\Bigr\|_\outY\\
 &=\mathbb E_{\varepsilon}\sup_{(x,y)\in S}
                    \sum_{i=1}^N\varepsilon_i\psi_i(x,y)\\
 &\leq2\mathbb E_{\varepsilon_{i1},\varepsilon_{i2}}
      \sup_{x\in K,\ y\in B_\outY}
      \left(
       L\Bigl\langle\sum_{i=1}^N\varepsilon_{i1}t_i,z_x\Bigr\rangle
       +\Bigl(\sum_{i=1}^N\varepsilon_{i2}B_i h_i,y\Bigr)_{\mathbb R}
      \right)\\
 &\leq2LR_K\mathbb E\Bigl\|\sum_{i=1}^N\varepsilon_{i1}t_i\Bigr\|
       +2\mathbb E\Bigl\|\sum_{i=1}^N\varepsilon_{i2}B_i h_i\Bigr\|_\outY\\
 &\leq2LR_K\left(\sum_{i=1}^N\|t_i\|^2\right)^{1/2}
          +2\left(\sum_{i=1}^NB_i^2\right)^{1/2}\\
 &\leq2b_0\sqrt N+4LR_K\left(\sum_{i=1}^N\|t_i\|^2\right)^{1/2}.
\end{align*}
Here the penultimate inequality follows by expanding the squared
Hilbert norms: mixed terms have zero sign expectation, and
$\|h_i\|_\outY\leq1$.  The last inequality is the triangle inequality in
$\mathbb R^N$ applied to $(B_i)_i=(b_0)_i+LR_K(\|t_i\|)_i$.
These arguments also cover $L=0$ or $M_2=0$ without division by either
quantity.  Jensen's inequality now gives
\[
 \mathbb E_{t,\varepsilon}
   \left\|\frac1N\sum_{i=1}^N\varepsilon_i\Phi(t_i)\right\|_{C(K;\outY)}
 \leq\frac{2b_0+4LR_KM_2}{\sqrt N},
\]
because
$\mathbb E(\sum_i\|t_i\|^2)^{1/2}\leq\sqrt N M_2$.

For completeness, symmetrization here needs no Hilbert structure in
$C(K;\outY)$.  Let $Z_i=\Phi(t_i)$, let $Z'_i$ be independent copies, and
let $m=\mathbb EZ_1$.  Conditional Jensen, exchangeability of each
independent pair $(Z_i,Z'_i)$, and the triangle inequality imply
\begin{align*}
 \mathbb E\left\|\frac1N\sum_{i=1}^N Z_i-m\right\|_{C(K;\outY)}
 &\leq\mathbb E\left\|\frac1N\sum_{i=1}^N(Z_i-Z'_i)\right\|_{C(K;\outY)}\\
 &=\mathbb E\left\|\frac1N\sum_{i=1}^N
                         \varepsilon_i(Z_i-Z'_i)\right\|_{C(K;\outY)}\\
 &\leq2\mathbb E\left\|\frac1N\sum_{i=1}^N
                         \varepsilon_i Z_i\right\|_{C(K;\outY)}.
\end{align*}
Multiplying by $V$ proves the first inequality in
\cref{eq:lipschitz-barron-bound}; the second uses $4b_0\leq8b_0$.
Finally, a nonnegative integrable random variable with expectation at
most $A$ cannot exceed $A$ almost surely: otherwise its difference
from $A$ would be strictly positive almost surely and have positive
integral.  Thus some sample satisfies the first displayed bound,
which proves the deterministic assertion.
\end{proof}

The Hilbert-valued variance identity is given in
\cref{subsec:vector-sampling}. Concentration for bounded parameters and
the separation of input truncation from finite-width approximation
error are additional consequences recorded in \cref{app:sampling}.

\subsection{Finite variation from the spectral density}
\label{subsec:spectral-moments}

The coefficient \(\gamma_g\) of \cref{thm:A} lies in
\(L^2(\lambda_\alpha)\), and \(\lambda_\alpha\) is an infinite measure,
so finite total variation and moments of the coefficient measure require
additional hypotheses.  The condition in \cref{def:ray-regular} supplies them;
this is the content of the following theorem, which together with
\cref{thm:D} below is the main result of this section.

\begin{thm}[Finite total variation and moments of the coefficient measure]
\label{thm:E}
Let \(\rho\) be a band-pass filter, fix the frequency window \(I\) in
\cref{def:ray-regular}, and let \(g\) satisfy that definition.  For every
integer \(r\ge0\) there is a constant \(c_{\rho,I,r}\), independent of
\(g\), such that
\begin{equation}
 \int_{\cH\times\R}(1+\|a\|+|b|)^r|\gamma_g(a,b)|
      \,\dd\lambda_\alpha
 \le c_{\rho,I,r}M_{r+2}(g)<\infty.
 \label{eq:spectral-moments-all}
\end{equation}
The same estimate holds for \(\outY\)-valued densities, with norms and a
constant independent of the dimension of \(\outY\).  In particular,
\begin{equation}
  \int_{\cH\times\R}(1+\|a\|^2+|b|^2)\,|\gamma_g(a,b)|
  \,\lambda_\alpha(\dd a,\dd b)\le c_{\rho,I,2}\,M_4(g)<\infty .
  \label{eq:spectral-moments}
\end{equation}
Consequently, for every real globally Lipschitz \(\sigma\), the target
\(\iiprod{\sigma,\rho}_{\alpha}f_g\) is the integral
network \(S_\sigma[\gamma_g\lambda_\alpha]\), and its sampled network \cref{eq:polar-network} satisfies, with
\(V=\|\gamma_g\|_{L^1(\lambda_\alpha)}\) and \(M_2\) as in
\cref{eq:second-moment} for \(p=|\gamma_g|\lambda_\alpha/V\) when
\(V>0\) (take the zero network when \(V=0\)),
\begin{equation}
  \EE\bigl\|f_N-\iiprod{\sigma,\rho}_{\alpha}f_g\bigr\|_{C(K)}
  \le\frac{8V}{\sqrt N}
  \left(|\sigma(0)|+\Lip(\sigma)R_KM_2\right)
  \qquad\text{for every compact }K\subset \cH .
  \label{eq:spectral-barron}
\end{equation}
\end{thm}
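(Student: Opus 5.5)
The moment estimate \cref{eq:spectral-moments-all} is essentially already contained in \cref{lem:coefficient-finite-order}, so the plan is to reduce to it.
Given \(g\) satisfying \cref{def:ray-regular}, the restriction \(g_a\) is \(C^\infty\) near \(I\) for every \(a\). In particular it is \(C^{r+2}\), and \(g\) is bounded and Borel. Since \((1+\|a\|)^r\le(1+\|a\|)^{r+4}\) and \(D_{r+2}(a)\) is exactly the inner maximum in \cref{eq:ray-regularity} with \(m=r+2\), we get \(A_{r+2,r}(g)\le M_{r+2}(g)<\infty\). The lemma then gives \cref{eq:spectral-moments-all} with its constant \(c_{\rho,I,r}\), which depends only on \(\rho\), \(I\) and \(r\). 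It also identifies \(\gamma_g\) as a jointly measurable representative of \(W_\rho g\). The lemma is already stated for \(\outY\)-valued \(g\), with constants independent of \(\dim\outY\), and this gives the vector-valued clause. For \cref{eq:spectral-moments} we use \(1+\|a\|^2+|b|^2\le(1+\|a\|+|b|)^2\) and take \(r=2\).

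For the representation statement, let \(\sigma\) be real and globally Lipschitz. Then \(\sigma\) has linear growth, is continuous, and lies in \(\cS'(\R)\). So \cref{thm:A}(iii) applies and gives absolute integrability of \(\gamma_g(a,b)\sigma(\langle a,x\rangle-b)\) against \(\lambda_\alpha\), together with
\(\int\gamma_g\sigma(\langle a,x\rangle-b)\,\dd\lambda_\alpha=\iiprod{\sigma,\rho}_{\alpha}f_g(x)\).
The case \(r=1\) of \cref{eq:spectral-moments-all} shows that the finite measure \(\Gamma=\gamma_g\lambda_\alpha\) satisfies \cref{eq:measure-moment}. Absolute integrability lets us pass from the iterated integral to the product integral by Fubini. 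Hence the left side is the integral network \(S_\sigma[\gamma_g\lambda_\alpha]\) of \cref{def:integral-network}.

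The sampling bound is then \cref{thm:lipschitz-barron} applied to \(\Gamma=\gamma_g\lambda_\alpha\), either scalar or \(\outY\)-valued. Its variation is \(|\Gamma|=\|\gamma_g\|\lambda_\alpha\), so \(V=\|\gamma_g\|_{L^1(\lambda_\alpha)}\), which is finite by the case \(r=0\). When \(V>0\), the second moment \(M_2^2=V^{-1}\int(\|a\|^2+|b|^2)\|\gamma_g\|\,\dd\lambda_\alpha\) is finite by \cref{eq:spectral-moments}. The second inequality in \cref{eq:lipschitz-barron-bound} is exactly \cref{eq:spectral-barron}, since \(f=S_\sigma[\Gamma]=\iiprod{\sigma,\rho}_{\alpha}f_g\). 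The case \(V=0\) is trivial: then \(\gamma_g=0\) almost everywhere, so the target \(\iiprod{\sigma,\rho}_{\alpha}f_g=S_\sigma[\Gamma]\) vanishes and the zero network has zero error.

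I expect no real obstacle, since all the analytic work sits in \cref{lem:coefficient-finite-order} and \cref{thm:A}(iii). The points that need care are these:
\begin{itemize}
\item the polar density \(h\) must be chosen Borel, which \cref{thm:lipschitz-barron} handles internally;
\item the representative \(\gamma_g\) used for sampling must be the jointly measurable one from the lemma, so that the measure \(\Gamma\) is well defined;
\item the weight exponents must be matched, namely \((1+\|a\|)^{r}\) in \(A_{r+2,r}\) against \((1+\|a\|)^{m+2}\) in \(M_m\). This only goes in the favourable direction.
\end{itemize}
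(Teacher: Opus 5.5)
Your proposal is correct and follows the paper's proof essentially step for step. You get the moments from \cref{lem:coefficient-finite-order} via \(A_{r+2,r}(g)\le M_{r+2}(g)\), identify the synthesis with \(\iiprod{\sigma,\rho}_{\alpha}f_g\) through \cref{thm:A}(iii) with the first moment supplying absolute integrability, and obtain the rate from \cref{thm:lipschitz-barron}, using the zero network when \(V=0\).
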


\begin{proof}
\label{app:proof-thm-E}
For each integer \(r\ge0\), \cref{lem:coefficient-finite-order} gives
\[
 \int(1+\|a\|+|b|)^r\|\gamma_g(a,b)\|_\outY\,\dd\lambda_\alpha
 \le c_{\rho,I,r}A_{r+2,r}(g)
 \le c_{\rho,I,r}M_{r+2}(g).
\]
This proves all moments of the coefficient measure, including finite total variation at \(r=0\) and
\cref{eq:spectral-moments} at \(r=2\), independently of synthesis.
For globally Lipschitz real \(\sigma\),
\(|\sigma(\langle a,x\rangle-b)|
\le|\sigma(0)|+\Lip(\sigma)(\|a\|\|x\|+|b|)\).
The first moment therefore makes the synthesis absolutely integrable,
and \cref{thm:A}(iii) identifies it with
\(\iiprod{\sigma,\rho}_{\alpha}f_g\), including when the constant is zero.
If \(V>0\), the second moment gives \cref{eq:second-moment} for the
sampling distribution \(p=|\gamma_g|\lambda_\alpha/V\), and \cref{thm:lipschitz-barron} proves
\cref{eq:spectral-barron}.  If \(V=0\), the coefficient and its
synthesis vanish and the zero network suffices.
\end{proof}

\subsection{Constructive universal approximation}
\label{subsec:universality}

We now state the approximation theorem that is the main result of the
paper from the point of view of approximation theory.

The following qualitative result supplies discretization when no
explicit Lipschitz rate is required.

Use the polar decomposition \cref{eq:polar-decomposition} with \(V>0\);
the zero measure is represented by the zero network.

\begin{lem}[Qualitative finite-atomic approximation]
\label{lem:qualitative-sampling}
Let \(\sigma:\R\to\C\) be continuous.  For compact \(K\subset \cH\), assume
\(\int\|\sigma(\langle a,\cdot\rangle-b)\|_{C(K)}\dd|\Gamma|(a,b)<\infty\).
Then for every \(\eps>0\) there is a finite atomic complex measure
\(\Gamma_\eps\) with \(\|S_\sigma\Gamma_\eps-S_\sigma\Gamma\|_{C(K)}<\eps\).
\end{lem}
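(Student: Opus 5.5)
The plan is to read the lemma as a law of large numbers in the separable Banach space \(X=C(K)\) and to apply \cref{lem:banach-rademacher-vanishing}. If \(\Gamma=0\), we take \(\Gamma_\eps=0\). Otherwise we use the polar decomposition \cref{eq:polar-decomposition}, \(\Gamma=h|\Gamma|\) with \(|h|=1\), \(V=\|\Gamma\|_{\mathrm{TV}}\) and \(p=|\Gamma|/V\). We choose a Borel representative of \(h\) with \(|h|\le1\) everywhere, as in the proof of \cref{thm:lipschitz-barron}, and define
\[
 \Phi(\theta):=V\,h(\theta)\,\sigma(\langle a,\cdot\rangle-b)\in C(K),\qquad \theta=(a,b).
\]
The space \(C(K)\) is separable because \(K\) is a compact metric space.

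\textbf{Step 1: \(\Phi\) is Bochner integrable.} The map \((a,b)\mapsto\sigma(\langle a,\cdot\rangle-b)\) from \(\cH\times\R\) to \(C(K)\) is continuous. Indeed, if \((a',b')\to(a,b)\), then \(\langle a',x\rangle-b'\) stays in a fixed compact interval and converges uniformly for \(x\in K\), since \(K\) is bounded, and \(\sigma\) is uniformly continuous on that interval. Since \(\cH\times\R\) is separable, \(\Phi\) is Borel and separably valued, hence strongly measurable. The integrability hypothesis of the lemma gives \(\int\|\Phi\|_{C(K)}\,\dd p<\infty\), so \(\Phi\in L^1(p;C(K))\).

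\textbf{Step 2: the mean of \(\Phi\) is the target.} Evaluation at each \(x\in K\) is a bounded linear functional on \(C(K)\), so it commutes with the Bochner integral. Hence \(\int\Phi\,\dd p=S_\sigma\Gamma|_K\).

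\textbf{Step 3: sampling.} Take independent samples \(\theta_j\sim p\). By the second assertion of \cref{lem:banach-rademacher-vanishing}, the expected error \(\EE\|\frac1N\sum_j\Phi(\theta_j)-S_\sigma\Gamma\|_{C(K)}\) is bounded by twice a signed average. By the first assertion of the same lemma, that signed average tends to \(0\). We fix \(N\) with expected error below \(\eps\). A nonnegative random variable with expectation below \(\eps\) takes a value below \(\eps\) with positive probability, so we can fix one realization with error below \(\eps\). We then set
\[
 \Gamma_\eps:=\frac VN\sum_{j=1}^N h(\theta_j)\,\delta_{\theta_j},
\]
which is a finite atomic complex measure with \(S_\sigma\Gamma_\eps=\frac1N\sum_j\Phi(\theta_j)\).

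The only delicate point is Step 1: strong measurability of the \(C(K)\)-valued atoms. It is settled by joint continuity of the parametrization together with separability of the parameter space.
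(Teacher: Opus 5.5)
Your proof is correct, but after the shared first step it reaches the finite atomic measure by a different route from the paper.

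\textbf{What is shared.} Your Step 1 is the same as the paper's: the atom map is continuous into the separable space \(C(K)\), which gives strong measurability, and the hypothesis gives Bochner integrability.

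\textbf{The paper's route.} The paper then argues deterministically. By Hahn--Banach separation, the Bochner mean lies in the closed convex hull of the essential range of the atom function: a functional separating the mean from that hull would contradict itself after integration. The mean is then approximated by a finite convex combination of essential-range values. Each of these is approximated in turn by an actual atom \(h(a_j,b_j)\sigma(\langle a_j,\cdot\rangle-b_j)\), and one multiplies by \(V\).

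\textbf{Your route.} You instead run the Monte Carlo argument. Your Steps 2--3 amount to \cref{thm:general-rademacher} together with its remark that the bound tends to zero, followed by selecting a good realization.

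\textbf{What each buys.} Your version gives equally weighted atoms \(Vh(\theta_j)/N\). It also shows more: i.i.d.\ sampling from \(p\), as in \cref{eq:polar-network}, succeeds in expectation. The lemma thus becomes a corollary of sampling machinery already in the paper, at the cost of symmetrization and an \(L^1\) law of large numbers. The paper's convex-hull argument is purely functional-analytic and needs no probability space. It applies verbatim to any Bochner integrable function, but yields arbitrary convex weights and says nothing about random sampling.
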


\begin{proof}
The atom map \((a,b)\mapsto\sigma(\langle a,\cdot\rangle-b)|_K\) is
continuous into the separable Banach space \(C(K)\): convergent
parameters have uniformly bounded preactivations on \(K\), where
continuity of \(\sigma\) is uniform.  Thus
\(\outY(a,b)=h(a,b)\sigma(\langle a,\cdot\rangle-b)|_K\) is Bochner integrable,
and its mean belongs to the closed convex hull of its essential range
by Hahn--Banach separation.  Approximate the mean by a finite convex
combination of essential-range values, then each value by an actual
\(\outY(a_j,b_j)\), and multiply by \(V\).
\end{proof}

\begin{thm}[Constructive universal approximation with approximation rates]
\label{thm:D}
Let \(\sigma:\R\to\R\) be continuous, of polynomial growth, and not a
polynomial, and let \(\rho\) be a band-pass filter with
\(\iiprod{\sigma,\rho}_{\alpha}=1\), which exists by
\cref{thm:tempered-reconstruction}.  Let \(f:\cH\to\C\) be continuous, let
\(K\subset \cH\) be compact, and let \(\eps>0\).  Then there is a smooth
spectral density \(g\), vanishing outside a bounded set and satisfying
\cref{def:ray-regular}, such that:
\begin{enumerate}[label=\textup{(\roman*)}]
\item
\(\|f-f_g\|_{C(K)}<\eps\);
\item
\(f_g=S_\sigma[\gamma_g\lambda_\alpha]\) is an integral network with
activation \(\sigma\) whose coefficient measure \(\gamma_g\lambda_\alpha\)
is finite and has finite moments of all orders;
\item
if \(\sigma\) is globally Lipschitz, the sampled network \(f_N\) of \(\gamma_g\lambda_\alpha\) satisfies
\begin{equation}
  \EE\|f-f_N\|_{C(K)}
  \le\eps+\frac{8V}{\sqrt N}\left(|\sigma(0)|+\Lip(\sigma)R_KM_2\right),
  \label{eq:universal-rate}
\end{equation}
with \(V=\|\gamma_g\|_{L^1(\lambda_\alpha)}\) and \(M_2\) the second
moment of \(|\gamma_g|\lambda_\alpha/V\); at least one deterministic
width-\(N\) network satisfies the same bound.
\end{enumerate}
In particular the finite-width networks with activation \(\sigma\) are
dense in \(C(\cH)\) for the compact-open topology.  The same statements
hold for continuous \(f:\cH\to \outY\), with the norms in (i) and (iii)
replaced by \(C(K;\outY)\); the same explicit bound follows from
\cref{thm:lipschitz-barron}.
\end{thm}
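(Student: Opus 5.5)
The plan is to reduce (ii) and (iii) to \cref{thm:A}(iii), \cref{thm:E}, and \cref{thm:lipschitz-barron}. All the real work lies in (i): showing that spectral targets \(f_g\) whose densities \(g\) are smooth and vanish outside a bounded set are dense in \(C(K)\). Once such a \(g\) is produced, \cref{lem:ray-regular-examples}(b) or (c) gives \cref{def:ray-regular}. \Cref{thm:A}(iii), with \(\iiprod{\sigma,\rho}_{\alpha}=1\), then gives \(f_g=S_\sigma[\gamma_g\lambda_\alpha]\) as an absolutely convergent integral. \Cref{thm:E} supplies finite variation and moments of every order, which is (ii). For (iii), apply \cref{thm:lipschitz-barron} to \(\gamma_g\lambda_\alpha\) and use the triangle inequality \(\EE\|f-f_N\|\le\|f-f_g\|+\EE\|f_g-f_N\|\). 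The deterministic realization comes from the same expectation argument. Density in the compact-open topology follows since \(\eps\) and \(K\) are arbitrary, and the \(\outY\)-valued case is obtained by running everything with \cref{thm:vector-valued} and the \(\outY\)-valued Barron bound.

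For (i) I would proceed in three stages. \emph{Step 1: reduce to trigonometric polynomials.} The functions \(x\mapsto e^{i\langle x,\xi\rangle}\), \(\xi\in\cH\), span a unital self-adjoint algebra that separates points of \(K\). Stone--Weierstrass then gives a finite sum \(\sum_k c_ke^{i\langle x,\xi_k\rangle}\) within \(\eps/3\) of \(f\) on \(K\); in the vector case, take finite sums with coefficients \(c_k\in\outY\), using a finite \(\eps\)-net of \(f(K)\) and a partition of unity. \emph{Step 2: smear each exponential.} Replace \(e^{i\langle x,\xi_k\rangle}\) by \(f_{g_k}\) with \(g_k=\varphi_\delta(\|\xi-\xi_k\|^2)/Z_{k,\delta}\), where \(\varphi_\delta\in C_c^\infty\) is nonnegative and supported in \([0,\delta^2)\), and \(Z_{k,\delta}=\int\varphi_\delta(\|\xi-\xi_k\|^2)\,\nu_\alpha(\dd\xi)\). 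By full support of \(\nu_\alpha\) (\cref{lem:homogeneous-mixture}), \(Z_{k,\delta}>0\), and by finiteness on bounded sets, \(Z_{k,\delta}<\infty\). Then \(g_k\nu_\alpha\) is a probability measure concentrated in the ball \(B(\xi_k,\delta)\), so
\[
|f_{g_k}(x)-e^{i\langle x,\xi_k\rangle}|\le\sup_{\|\xi-\xi_k\|<\delta}|e^{i\langle x,\xi\rangle}-e^{i\langle x,\xi_k\rangle}|\le\delta\sup_{x\in K}\|x\|,
\]
which is uniform on \(K\). \emph{Step 3: set \(g=\sum_kc_kg_k\).} This \(g\) is covered by \cref{lem:ray-regular-examples}(b),(c), and linearity gives \(f_g=\sum c_kf_{g_k}\), so (i) holds for \(\delta\) small.

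The main obstacle is checking that \(g_k\) really satisfies \cref{def:ray-regular}, that is, the derivative bounds along rays required in \cref{lem:ray-regular-examples}(b). Write \(\|\omega a-\xi_k\|^2=\omega^2\|a\|^2-2\omega\langle a,\xi_k\rangle+\|\xi_k\|^2\). This is a polynomial in \(\omega\) whose coefficients are bounded by a polynomial in \(\|a\|\), so by the chain rule \(\partial_\omega^j\) of \(g_k(\omega a)\) is bounded by \(C_j(1+\|a\|)^{2j}\). Moreover \(g_k\) vanishes outside a bounded set. This is exactly hypothesis (b), and the finiteness of \(M_m\) then uses only that \(\nu_\alpha\) is finite on the bounded set of \(a\) with \(\omega a\) in the support for some \(\omega\in I\). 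A secondary point is that \(\xi_k=0\) causes no difficulty, since the smeared density still lies in the same class.
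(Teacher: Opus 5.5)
Your route is the paper's: Stone--Weierstrass on the characters $e^{i\langle x,\xi\rangle}$, then smearing each character with a bump in $\|\xi-\xi_k\|^2$ normalized against $\nu_\alpha$ (full support and local finiteness make the normalizer positive and finite). Regularity then comes from \cref{lem:ray-regular-examples}(b) and the finite-combination clause of (c), followed by \cref{thm:A}(iii), \cref{thm:E}, and \cref{thm:lipschitz-barron} with the triangle inequality. Your chain-rule bound $C_j(1+\|a\|)^{2j}$ on the ray derivatives is exactly the check that is needed.

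There is one real omission. The closing density claim is stated for every continuous, non-polynomial $\sigma$ of polynomial growth. You derive it from the deterministic realization in (iii), but (iii) requires $\sigma$ to be globally Lipschitz. For an activation such as $\sigma(t)=\ReLU(t)^2$, your argument produces the integral network of (ii) but no finite-width approximant.

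The missing step is a discretization that needs only integrability of the atoms in $C(K)$. If $|\sigma(t)|\le C(1+|t|)^p$ and $r_K=\sup_{x\in K}\|x\|$, then $\|\sigma(\langle a,\cdot\rangle-b)\|_{C(K)}\le C\max\{1,r_K\}^p(1+\|a\|+|b|)^p$. This majorant is $|\gamma_g|\lambda_\alpha$-integrable by the moment bound of \cref{thm:E} with an integer $r\ge p$. \Cref{lem:qualitative-sampling} then gives a finite atomic measure, i.e.\ a finite-width network, within any prescribed tolerance of $f_g$ on $K$; alternatively use \cref{thm:general-rademacher} with \cref{lem:banach-rademacher-vanishing}. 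Combined with (i), this gives density for every admissible $\sigma$, and it is how the paper closes the non-Lipschitz case.
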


\begin{proof}
\label{app:proof-thm-D}
\emph{Step 1: characters are dense.}  The functions
\(\chi_\xi(x):=e^{i\langle x,\xi\rangle}\), \(\xi\in \cH\), restricted to
\(K\), satisfy \(\chi_\xi\chi_{\xi'}=\chi_{\xi+\xi'}\),
\(\overline{\chi_\xi}=\chi_{-\xi}\), and \(\chi_0=1\), so their finite
linear combinations form a self-conjugate subalgebra of \(C(K)\)
containing the constants.  If \(x\ne x'\) in \(K\), then
\(\xi=t(x-x')\) with \(0<t\|x-x'\|^2<2\pi\) gives
\(\chi_\xi(x)\ne\chi_\xi(x')\).  By the Stone--Weierstrass theorem there
are \(w_1,\dots,w_J\in\C\) and \(\xi_1,\dots,\xi_J\in \cH\) with
\(\|f-\sum_jw_j\chi_{\xi_j}\|_{C(K)}<\eps/2\).

\emph{Step 2: characters have spectral densities.}  Let
\(\varphi\in C_c^\infty(\R)\) be nonnegative, positive on \((-1,1)\), and
zero outside \([-1,1]\), and for \(\delta>0\) put
\(g_j(\xi):=\varphi(\|\xi-\xi_j\|^2/\delta^2)/Z_j\), where
\(Z_j:=\int\varphi(\|\xi-\xi_j\|^2/\delta^2)\nu_\alpha(\dd\xi)\) is
positive because \(\nu_\alpha\) has full support and is finite on
bounded sets.  Then \(g_j\ge0\), \(\int g_j\dd\nu_\alpha=1\), and
\(g_j\) vanishes outside the ball of radius \(\delta\) around \(\xi_j\).
For \(x\in K\),
\[
  |\chi_{\xi_j}(x)-f_{g_j}(x)|
  =\Bigl|\int_\cH\bigl(e^{i\langle x,\xi_j\rangle}-e^{i\langle x,\xi\rangle}\bigr)
  g_j(\xi)\nu_\alpha(\dd\xi)\Bigr|
  \le\sup_{\|\xi-\xi_j\|\le\delta}|\langle x,\xi_j-\xi\rangle|
  \le r_K\delta ,
\]
with \(r_K=\sup_K\|x\|\).  Choose \(\delta\) with
\(r_K\delta\sum_j|w_j|<\eps/2\) and put \(g:=\sum_jw_jg_j\).  Then
\(\|f-f_g\|_{C(K)}<\eps\), which is (i).

\emph{Step 3: regularity.}  Each \(g_j\) is bounded and vanishes outside
a bounded set.  Its restrictions \(\omega\mapsto g_j(\omega a)\) are
\(C^\infty\), because \(\omega\mapsto\|\omega a-\xi_j\|^2\) is a polynomial,
and their derivatives on \(I\) are bounded by polynomials in \(\|a\|\)
by the chain rule.  Hence \(g\) satisfies \cref{def:ray-regular}
by \cref{lem:ray-regular-examples}(b) and the
finite-combination clause of (c). \Cref{thm:A}(iii)
with \(\iiprod{\sigma,\rho}_{\alpha}=1\) gives
\(f_g=\int\gamma_g\sigma(\langle a,\cdot\rangle-b)\dd\lambda_\alpha\), and
the moment bound of \cref{thm:E}, with an arbitrary integer order, shows
that \(\gamma_g\lambda_\alpha\) is finite with finite moments of all
orders; this is (ii).  Part (iii) is \cref{thm:lipschitz-barron} applied
to \(f_g\), together with the triangle inequality and (i).  Density of
finite-width networks follows from (iii), or from (ii) and
\cref{lem:qualitative-sampling} when \(\sigma\) is merely continuous.

\emph{Vector-valued case.}  For \(f\in C(\cH;\outY)\) and compact \(K\), \(f\)
is uniformly continuous on \(K\); given \(\eps\), choose a finite cover of
\(K\) by balls of radius \(\delta'\) on which \(f\) varies by less than
\(\eps/4\), a continuous partition of unity \((\pi_i)\) subordinate to
it, and points \(x_i\) in the balls.  Then
\(\|f-\sum_i\pi_if(x_i)\|_{C(K;\outY)}<\eps/4\), and each scalar \(\pi_i\) is
approximated by a trigonometric sum as in Step~1.  The resulting
approximant is \(\sum_jw_j\chi_{\xi_j}\) with \(w_j\in \outY\), and Steps 2
and 3 apply with \(\outY\)-valued \(g=\sum_jw_jg_j\), using
\cref{thm:vector-valued}.  The uniform rate for the resulting vector
coefficient measure follows from \cref{thm:lipschitz-barron}.
\end{proof}

The coefficient is constructed on the full direction space \(\cH\),
without choosing an input truncation first.  A sampled network still
depends on only finitely many linear measurements of its input.
The constants \(V\) and \(M_2\) depend on the spectral approximant,
and hence on \(f,K,\eps\); the theorem does not quantify this first
approximation step.  If \(f\) already has a density satisfying
\cref{thm:E}, sampling applies directly with no preliminary error.
The weaker Sobolev route of \cref{thm:weak-sobolev-synthesis} gives
the same conclusion for its class when \(s>5/2\) and the
reconstruction constant is nonzero.

\subsection{Vector-valued approximation}
\label{subsec:vector-sampling}

Let \(\Gamma\) be a \(\outY\)-valued measure of bounded variation on
\(\cH\times\R\).  Since \(\outY\) is a Hilbert space, \(\Gamma=h|\Gamma|\) with
\(\|h\|_\outY=1\) \(|\Gamma|\)-almost everywhere, and the sampled network \cref{eq:polar-network} is defined with \(\outY\)-valued weights.

\begin{lem}[Mean-square error identity for Hilbert-valued averages]
\label{lem:hilbert-sampling}
Let \(X\) be a separable Hilbert space and \(\outY\in L^2(p;X)\).  For
independent copies \(\outY_j\), put \(f=V\EE \outY\) and
\(f_N=VN^{-1}\sum_jY_j\).  Then
\begin{equation}
 \EE\|f_N-f\|_X^2
 =\frac{V^2}N\bigl(\EE\|\outY\|_X^2-\|\EE \outY\|_X^2\bigr)
 \le\frac{V^2}N\EE\|\outY\|_X^2,
 \label{eq:hilbert-variance}
\end{equation}
and a deterministic sample satisfies the same upper bound.
\end{lem}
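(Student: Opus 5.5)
The identity is the standard variance computation for i.i.d.\ averages in a Hilbert space, carried out with Bochner integrals. Write \(m:=\EE\outY\in X\), which exists as a Bochner integral because \(\outY\in L^2(p;X)\subset L^1(p;X)\) (\(p\) is a probability measure). Set \(\outY_j^0:=\outY_j-m\), so that \(f_N-f=VN^{-1}\sum_j\outY_j^0\). Expanding the squared norm gives
\[
 \EE\|f_N-f\|_X^2=\frac{V^2}{N^2}\Bigl(\sum_j\EE\|\outY_j^0\|_X^2+\sum_{j\ne k}\EE\,\langle\outY_j^0,\outY_k^0\rangle_X\Bigr).
\]
Every term is integrable by Cauchy--Schwarz and \(\outY\in L^2\).

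The main step is to show that the cross terms vanish. For \(j\ne k\), fix an orthonormal basis \((e_n)\) of the separable space \(X\) and write \(\langle\outY_j^0,\outY_k^0\rangle=\sum_n\langle\outY_j^0,e_n\rangle\overline{\langle\outY_k^0,e_n\rangle}\). The partial sums are dominated by \(\|\outY_j^0\|\|\outY_k^0\|\), which is integrable because, by independence, \(\EE\|\outY_j^0\|\|\outY_k^0\|=(\EE\|\outY^0\|)^2<\infty\). Dominated convergence therefore lets us exchange expectation and sum. Each scalar term then factors by independence into \(\EE\langle\outY_j^0,e_n\rangle\cdot\overline{\EE\langle\outY_k^0,e_n\rangle}\). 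This vanishes, since bounded linear functionals commute with the Bochner integral and \(\EE\outY_j^0=0\). This interchange is the only point needing care; the argument uses only square integrability and never conditions on anything.

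Next, the diagonal terms are \(N\) copies of \(\EE\|\outY-m\|^2\). Expanding and again using that the functional \(\langle\cdot,m\rangle\) commutes with \(\EE\) gives \(\EE\|\outY-m\|^2=\EE\|\outY\|^2-2\operatorname{Re}\langle m,m\rangle+\|m\|^2=\EE\|\outY\|^2-\|m\|^2\). This yields the equality in \cref{eq:hilbert-variance}, and dropping \(-\|m\|^2\le0\) gives the inequality.

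Finally, the deterministic sample: the random variable \(\|f_N-f\|_X^2\) is nonnegative and integrable, with mean at most \(A:=V^2N^{-1}\EE\|\outY\|^2\). If it exceeded \(A\) almost surely, its mean would be strictly larger than \(A\). Hence some realization \((\outY_1,\dots,\outY_N)\), that is some choice of samples \(\theta_j\), satisfies the bound, exactly as in the last step of the proof of \cref{thm:lipschitz-barron}.
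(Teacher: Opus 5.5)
Your proof is correct and follows the paper's argument: set \(Z_j=\outY_j-\EE\outY\), expand the squared norm, note that the cross terms \(\EE\langle Z_j,Z_k\rangle\) vanish by independence and zero mean, and collect the \(N\) diagonal copies of \(\EE\|Z_1\|^2\). You also supply details the paper leaves implicit: the basis-expansion justification that the cross terms vanish, the identity \(\EE\|Z_1\|^2=\EE\|\outY\|^2-\|\EE\outY\|^2\), and the averaging argument for a deterministic sample, which the paper gives only in the proof of \cref{thm:lipschitz-barron}.
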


\begin{proof}
Write \(Z_j=\outY_j-\EE \outY\).  Independence and zero means make
\(\EE\langle Z_j,Z_k\rangle=0\) for \(j\ne k\), so the expanded squared
norm leaves \(N\EE\|Z_1\|^2\).
\end{proof}

\begin{cor}[Vector-valued rates]
\label{cor:vector-rates}
Let \(\sigma\) be real and globally Lipschitz, and let \(N\ge1\).
If \(V=0\), all conclusions hold for the zero network.  Otherwise put
\(p=|\Gamma|/V\).
\begin{enumerate}[label=\textup{(\roman*)}]
\item
Suppose \cref{eq:second-moment} holds.  For every Borel probability
measure \(\zeta\) on \(\cH\) with
\(\int\|x\|^2\zeta(\dd x)<\infty\),
\begin{equation}
 \mathbb E\|f_N-f\|_{L^2(\zeta;\outY)}^2
 =\frac1N\left(
 V^2\int_{\cH\times\R}
   \|\sigma(\langle a,\cdot\rangle-b)\|_{L^2(\zeta)}^2\,p(\dd a,\dd b)
 -\|f\|_{L^2(\zeta;\outY)}^2\right).
 \label{eq:vector-variance-exact}
\end{equation}
Consequently,
\begin{align*}
  \EE\|f_N-f\|_{L^2(\zeta;\outY)}^2
  &\le\frac{V^2}{N}\int_{\cH\times\R}
  \|\sigma(\langle a,\cdot\rangle-b)\|_{L^2(\zeta)}^2\,p(\dd a,\dd b)
  \\
  &\le\frac{2V^2}{N}\Bigl(|\sigma(0)|^2
  +\Lip(\sigma)^2\bigl(1+\textstyle\int\|x\|^2\dd\zeta\bigr)M_2^2\Bigr).
\end{align*}
\item
For this part it suffices that \(\int(\|a\|+|b|)\,\dd p<\infty\).
For every compact \(K\subset \cH\),
\(\EE\|f_N-f\|_{C(K;\outY)}\le2V\,\mathfrak R_N^\outY(K;p,\sigma)\), where
\(\mathfrak R_N^\outY\) is \cref{eq:rademacher-complexity} with the absolute
value replaced by the norm of \(\outY\), and
\(\mathfrak R_N^\outY(K;p,\sigma)\to0\) as \(N\to\infty\).
\end{enumerate}
\end{cor}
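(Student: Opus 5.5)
\emph{Part (i).}  The plan is to apply \cref{lem:hilbert-sampling} with \(X=L^2(\zeta;\outY)\) and the random element \(\Phi(\theta)=h(\theta)\,\sigma(\langle a,\cdot\rangle-b)\), \(\theta=(a,b)\sim p\).

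\emph{Setting up \(X\).}  Since \(\cH\) is a separable metric space and \(\outY\) is a separable Hilbert space, \(X\) is a separable Hilbert space.

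\emph{Measurability of \(\Phi\).}  I would copy the argument in the proof of \cref{thm:lipschitz-barron}.
\begin{itemize}
\item The map \((t,y)\mapsto y\,\sigma(\langle t,z_\cdot\rangle)\) from \(\widetilde\cH\times\outY\) into \(X\) is continuous. This follows by dominated convergence, with majorant \((|\sigma(0)|+L\|t\|(1+\|x\|^2)^{1/2})^2\), which is \(\zeta\)-integrable because \(\zeta\) has a finite second moment.
\item Composing with the Borel map \(t\mapsto(t,h(t))\), where \(h\) is a Borel representative with \(\|h\|\le1\), gives a strongly measurable \(\Phi\).
\end{itemize}

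\emph{Square integrability of \(\Phi\).}  Since \(\|h\|_\outY=1\) \(p\)-almost everywhere, \(\|\Phi(\theta)\|_X^2=\|\sigma(\langle a,\cdot\rangle-b)\|_{L^2(\zeta)}^2\). Using \(|\sigma(t)|\le|\sigma(0)|+L|t|\), Cauchy--Schwarz in \(\widetilde\cH\) to get \(|\langle a,x\rangle-b|^2\le(\|a\|^2+b^2)(1+\|x\|^2)\), and \((u+v)^2\le2u^2+2v^2\), we obtain
\[
\|\Phi(\theta)\|_X^2\le2|\sigma(0)|^2+2L^2\Bigl(1+\int\|x\|^2\dd\zeta\Bigr)(\|a\|^2+b^2).
\]
Integrating against \(p\) and using \cref{eq:second-moment} gives both \(\Phi\in L^2(p;X)\) and the final displayed inequality.

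\emph{Identifying the mean.}  Next I identify \(V\,\EE\Phi\) with \(f\) as an element of \(X\). Point evaluation is not continuous on \(X\), so I would instead pair with arbitrary \(\psi\in X\). Bochner integrals commute with the bounded functional \(\langle\cdot,\psi\rangle_X\), and Fubini in \((\theta,x)\) is justified by the same majorant. This gives \(\langle V\EE\Phi,\psi\rangle_X=\int\langle f(x),\psi(x)\rangle_\outY\,\zeta(\dd x)\), where \(f(x)=S_\sigma[\Gamma](x)\) is the pointwise Bochner integral.

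\emph{Conclusion of (i).}  Hence \(V^2\|\EE\Phi\|_X^2=\|f\|_{L^2(\zeta;\outY)}^2\), and \cref{eq:hilbert-variance} becomes exactly \cref{eq:vector-variance-exact}. Dropping the subtracted term gives the first inequality, and the bound above gives the second.

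\emph{Part (ii).}  Here I would apply \cref{lem:banach-rademacher-vanishing} with the separable Banach space \(X=C(K;\outY)\). This space is separable because \(K\) is a compact metric space and \(\outY\) is separable. The argument mirrors \cref{thm:general-rademacher}.
\begin{itemize}
\item \emph{Measurability.} Strong measurability of \(\Phi\) into \(C(K;\outY)\) is exactly the continuity-plus-composition argument in the proof of \cref{thm:lipschitz-barron}.
\item \emph{Integrability.} Bochner integrability follows from the envelope \cref{eq:supp-vector-envelope}, \(\|\Phi(t)\|\le|\sigma(0)|+LR_K\|t\|\), together with the first-moment assumption \(\int(\|a\|+|b|)\dd p<\infty\). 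Evaluation at \(x\in K\) is continuous on \(C(K;\outY)\), so \(f|_K=V\int\Phi\,\dd p\).
\item \emph{Bound and convergence.} The second assertion of the lemma, multiplied by \(V\), yields \(\EE\|f_N-f\|_{C(K;\outY)}\le2V\mathfrak R^\outY_N\). Its first assertion gives \(\mathfrak R_N^\outY\to0\).
\end{itemize}

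\emph{Main obstacle.}  The only delicate step is the identification of the mean in (i). The pointwise definition of \(f\) has to be reconciled with the \(L^2(\zeta;\outY)\)-valued Bochner integral, and this is handled by duality plus Fubini. The remaining steps are direct reuse of earlier lemmas.
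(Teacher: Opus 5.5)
Your proposal is correct and follows the paper's proof: (i) is \cref{lem:hilbert-sampling} applied in \(X=L^2(\zeta;\outY)\), using the same bounds \(|\sigma(u)|^2\le2|\sigma(0)|^2+2\Lip(\sigma)^2u^2\) and \(u^2\le(\|a\|^2+b^2)(\|x\|^2+1)\); (ii) is \cref{lem:banach-rademacher-vanishing} applied in \(C(K;\outY)\), with the continuity-plus-polar-density measurability argument and the first-moment envelope. Beyond the paper, you also spell out strong measurability into \(L^2(\zeta;\outY)\) and the identification \(V\,\EE\Phi=f\) by duality and Fubini, both of which the paper leaves implicit.
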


\begin{proof}
Part (i) is the Hilbert-valued variance identity of
\cref{lem:hilbert-sampling} applied in \(X=L^2(\zeta;\outY)\).
The atom has squared norm
\(V^2\|\sigma(\langle a,\cdot\rangle-b)\|_{L^2(\zeta)}^2\) because
\(\|h\|_\outY=1\) almost everywhere.  Its second moment is finite by
\(|\sigma(u)|^2\le2|\sigma(0)|^2+2\Lip(\sigma)^2u^2\) and
\(u^2\le(\|a\|^2+b^2)(\|x\|^2+1)\), which also give the bound.
For (ii), the map \((a,b,y)\mapsto
[x\mapsto y\sigma(\langle a,x\rangle-b)]\) is continuous into
\(C(K;\outY)\); composing with the measurable polar density gives a
strongly measurable atom.  Its norm is at most
\(|\sigma(0)|+\Lip(\sigma)R_K\sqrt{\|a\|^2+b^2}\), which is integrable
under the first-moment assumption.  Apply
\cref{lem:banach-rademacher-vanishing} in the separable Banach space
\(C(K;\outY)\).
\end{proof}

Under a second moment, \cref{thm:lipschitz-barron} already gives
an explicit uniform rate for the whole Hilbert-valued output.
The variance formula records the corresponding exact \(L^2\) identity.
The first-moment conclusion requires less integrability and extends,
through \cref{lem:banach-rademacher-vanishing}, to integrable atoms in
any separable Banach space.
 \section{Examples on infinite-dimensional spaces}
\label{sec:examples}
\label{app:examples}

We apply the representation and approximation results to explicit targets
on infinite-dimensional spaces.  Gaussian targets and nonlinear layers
built from convolution and a Dirichlet solution operator demonstrate how
the coefficients and transform formulas can be computed.

A function \(f\) on \(\cH\) is cylindrical if \(f=\widetilde f\circ L\) for
a finite-rank linear map \(L\), continuous or not.  The examples below
identify explicit conditions under which the exact targets are
non-cylindrical.  Their finite-width approximants use finitely many
linear measurements, with directions selected from the coefficient law.  We write
\(\phi_v\) for the centred one-dimensional Gaussian density of variance
\(v>0\), with \(\phi_0=\delta_0\), and \(\sigma_{\mathrm{Gauss}}(u):=e^{-u^2/2}\) for the
Gaussian activation.

\subsection{A Gaussian target with a closed-form transform}
\label{subsec:closed-form}

\begin{lem}[Gaussian integral of a quadratic exponential]
\label{lem:gaussian-quadratic}
Let \(\Xi\) be a positive self-adjoint trace-class operator, \(S\) a
bounded positive self-adjoint operator, and \(x\in \cH\).  Put
\(M:=\Xi^{1/2}S\Xi^{1/2}\), which is trace class.  Then
\[
  \int_\cH e^{i\langle x,\xi\rangle-\langle S\xi,\xi\rangle/2}\,\mu_{\Xi}(\dd\xi)
  =\det(I+M)^{-1/2}
  \exp\Bigl(-\tfrac12\bigl\langle\Xi^{1/2}(I+M)^{-1}\Xi^{1/2}x,x\bigr\rangle\Bigr).
\]
\end{lem}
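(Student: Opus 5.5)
We reduce to independent one-dimensional Gaussian integrals by a Karhunen--Lo\`eve realization of \(\mu_\Xi\), as in the construction before \cref{lem:mixture-integration}. The one subtlety is that \(S\) and \(\Xi\) need not commute. We therefore write the integrand in terms of the white-noise variable and diagonalize \(M\) rather than \(\Xi\).

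\emph{Realization.} Realize \(\xi=\Xi^{1/2}Z\), where \(Z=\sum_j Z_je_j\) is formal white noise along an orthonormal basis \((e_j)\) of eigenvectors of \(M\), with \(Me_j=m_je_j\), \(m_j\ge0\), \(\sum_jm_j=\tr M<\infty\). The series \(\Xi^{1/2}Z=\sum_jZ_j\Xi^{1/2}e_j\) converges in \(L^2(\Omega;\cH)\) because \(\sum_j\|\Xi^{1/2}e_j\|^2=\tr\Xi<\infty\), and its law is \(\mu_\Xi\), since its covariance is \(\sum_j\Xi^{1/2}e_j\otimes\Xi^{1/2}e_j=\Xi\). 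In these coordinates:
\begin{itemize}
\item \(\langle S\xi,\xi\rangle=\sum_jm_jZ_j^2\), the limit in \(L^1\) of the finite truncations, since \(\langle S\Xi^{1/2}e_j,\Xi^{1/2}e_k\rangle=\langle Me_j,e_k\rangle=m_j\delta_{jk}\);
\item \(\langle x,\xi\rangle=\sum_jc_jZ_j\) with \(c_j=\langle\Xi^{1/2}x,e_j\rangle\), and \(\sum_jc_j^2=\|\Xi^{1/2}x\|^2<\infty\).
\end{itemize}

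\emph{Product formula.} For the truncation to the first \(n\) coordinates, independence and the elementary identity
\[
\EE\, e^{icZ-mZ^2/2}=(1+m)^{-1/2}e^{-c^2/(2(1+m))}
\]
give
\[
\prod_{j\le n}(1+m_j)^{-1/2}\exp\Bigl(-\tfrac12\sum_{j\le n}c_j^2/(1+m_j)\Bigr).
\]

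\emph{Passage to the limit.} The truncated integrands converge almost surely along a subsequence to \(e^{i\langle x,\xi\rangle-\langle S\xi,\xi\rangle/2}\), and they are all bounded by one because \(S\ge0\). Dominated convergence therefore lets \(n\to\infty\). Convergence of the infinite product to \(\det(I+M)^{-1/2}\), the Fredholm determinant, follows from \(\sum m_j<\infty\). The exponent converges to \(\sum_jc_j^2/(1+m_j)=\langle(I+M)^{-1}\Xi^{1/2}x,\Xi^{1/2}x\rangle\), which is the stated quadratic form. That \(M\) is trace class follows because \(\Xi^{1/2}\) is Hilbert--Schmidt and \(S\) is bounded.

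\emph{Main obstacle.} The main obstacle is justifying the realization when \(M\) and \(\Xi\) have different eigenbases: the white-noise \(Z\) does not live in \(\cH\), only \(\Xi^{1/2}Z\) does. I will handle this by working directly with the \(L^2(\Omega;\cH)\)-convergent series above and checking the two identities for \(\langle S\xi,\xi\rangle\) and \(\langle x,\xi\rangle\) on partial sums before passing to the limit. If \(\ker\Xi\ne\{0\}\), the vectors \(\Xi^{1/2}e_j\) may vanish, which is harmless. Alternatively one can use finite-rank projections \(\Pi_n\) onto eigenvectors of \(\Xi\) together with the finite-dimensional Gaussian formula, and then invoke continuity of the Fredholm determinant in trace norm.
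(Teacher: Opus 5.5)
Your proposal is correct and follows essentially the same route as the paper: realize \(\mu_\Xi\) by the \(L^2(\Omega;\cH)\)-convergent series \(\sum_j\eta_j\Xi^{1/2}u_j\) with \((u_j)\) an eigenbasis of \(M\) rather than of \(\Xi\). This diagonalizes \(\langle S\xi,\xi\rangle\) as \(\sum_jm_j\eta_j^2\) and \(\langle x,\xi\rangle\) as \(\sum_j\eta_j\langle\Xi^{1/2}x,u_j\rangle\); you then factor by independence and pass to the limit by dominated convergence. The only difference is cosmetic: you apply dominated convergence to the truncated integrands, whereas the paper first identifies the limiting quadratic and linear forms and then factors the full expectation, and the ``main obstacle'' you flag is resolved by that same series, as in the paper.
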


\begin{proof}
Let \((u_j)\) be an orthonormal eigenbasis of \(M\) with eigenvalues
\(m_j\ge0\), and let \(\eta_j\) be independent standard normal
variables.  The series \(\xi:=\sum_j\eta_j\Xi^{1/2}u_j\) converges in
\(L^2(\Omega;\cH)\) because \(\sum_j\|\Xi^{1/2}u_j\|^2=\tr\Xi<\infty\),
and its characteristic functional is
\(\exp(-\frac12\sum_j\langle\Xi^{1/2}u_j,z\rangle^2)=e^{-\langle\Xi z,z\rangle/2}\),
so \(\xi\sim\mu_{\Xi}\).  For the partial sums \(\xi_n\),
\(\langle S\xi_n,\xi_n\rangle=\sum_{j,k\le n}\eta_j\eta_k\langle Mu_j,u_k\rangle
=\sum_{j\le n}m_j\eta_j^2\), which converges almost surely to
\(\sum_jm_j\eta_j^2\) since \(\sum_jm_j<\infty\); continuity of the
quadratic form along an almost surely convergent subsequence gives
\(\langle S\xi,\xi\rangle=\sum_jm_j\eta_j^2\).  Likewise
\(\langle x,\xi\rangle=\sum_j\eta_jx_j\) with
\(x_j:=\langle\Xi^{1/2}x,u_j\rangle\).  By independence and dominated
convergence,
\[
  \EE\,e^{i\langle x,\xi\rangle-\langle S\xi,\xi\rangle/2}
  =\prod_j\EE\,e^{i\eta_jx_j-m_j\eta_j^2/2}
  =\prod_j(1+m_j)^{-1/2}e^{-x_j^2/(2(1+m_j))},
\]
and \(\sum_jx_j^2/(1+m_j)=\langle(I+M)^{-1}\Xi^{1/2}x,\Xi^{1/2}x\rangle\).
\end{proof}

\begin{ex}[Closed-form transform and integral representation of \(T_\alpha f_W\)]
\label{ex:closed-form}
Let \(W\) be a bounded, positive, injective, self-adjoint operator such
that \(M:=Q^{1/2}WQ^{1/2}\) is trace class, and let
\begin{equation}
  f_W(x):=e^{-\langle Wx,x\rangle/2},
  \qquad
  D:=\det(I+M),
  \qquad
  \kappa_W(\xi):=\langle Q^{1/2}(I+M)^{-1}Q^{1/2}\xi,\xi\rangle .
  \label{eq:gaussian-target}
\end{equation}
Then, for every \(\alpha>0\) and every band-pass \(\rho\):
\begin{enumerate}[label=\textup{(\roman*)}]
\item
\(F_Qf_W(\xi)=D^{-1/2}e^{-\kappa_W(\xi)/2}\) and
\(R_\rho f_W(a,b)=D^{-1/2}(\rho*\phi_{\kappa_W(a)})(-b)\); in particular
\(f_W\in\cD_\alpha\), and \(f_W\) is not cylindrical when \(W\) has
infinite rank.
\item
\(g:=F_Qf_W\) satisfies \cref{def:ray-regular}, and the image
\(T_\alpha f_W\) is represented by
\begin{equation}
  f_g(x)=D^{-1/2}\int_0^\infty
  \det\bigl(I+2sP^{1/2}S_WP^{1/2}\bigr)^{-1/2}
  \exp\Bigl(-\tfrac12\bigl\langle\Xi_sx,x\bigr\rangle\Bigr)
  s^{\alpha/2-1}\,\dd s,
  \label{eq:filtered-gaussian-target}
\end{equation}
where \(S_W:=Q^{1/2}(I+M)^{-1}Q^{1/2}\) and
\(\Xi_s:=2sP^{1/2}(I+2sP^{1/2}S_WP^{1/2})^{-1}P^{1/2}\).
\item
For every real, globally Lipschitz, non-polynomial \(\sigma\), including
ReLU, the coefficient measure \(R_\rho f_W\,\lambda_\alpha=\gamma_g\lambda_\alpha\)
has finite total variation and second moment, the integral network \(S_\sigma[R_\rho f_W\,\lambda_\alpha]\)
equals \(\iiprod{\sigma,\rho}_{\alpha}f_g\), and its sampled network converges at the rate \(N^{-1/2}\) in \(C(K)\) as in
\cref{eq:spectral-barron}.
\end{enumerate}
\end{ex}

\begin{proof}
\label{app:proof-closed-form}
\emph{(i).}  \Cref{lem:gaussian-quadratic} with \(\Xi=Q\), \(S=W\),
and \(x\) replaced by \(-\xi\) gives
\(F_Qf_W(\xi)=D^{-1/2}e^{-\kappa_W(\xi)/2}\).  By
\cref{eq:fourier-slice}, the bias Fourier transform of
\(R_\rho f_W(a,\cdot)\) is \(D^{-1/2}\rho^\sharp(-\omega)e^{-\omega^2\kappa_W(a)/2}\),
whose inverse Fourier transform is \(D^{-1/2}(\rho*\phi_{\kappa_W(a)})(-b)\).
Since \((I+M)^{-1}\ge(1+\|M\|)^{-1}I\), we have
\(\kappa_W(\xi)\ge(1+\|M\|)^{-1}\langle Q\xi,\xi\rangle\), and
\cref{lem:gaussian-decay} gives \(f_W\in\cD_\alpha\).  If
\(f_W=\widetilde F\circ L\) with \(L\) of finite rank and \(W\) of
infinite rank, take \(0\ne x\in\ker L\); then \(f_W(x)=f_W(0)=1\), while
\(\langle Wx,x\rangle=\|W^{1/2}x\|^2>0\) gives \(f_W(x)<1\).

\emph{(ii).}  \(g=F_Qf_W\in L^1(\nu_\alpha)\) by (i), so
\cref{thm:C}(iii) represents \(T_\alpha f_W\) by \(f_g\).  By
\cref{lem:mixture-integration}, applied to the absolutely integrable
complex integrand,
\[
 f_g(x)=D^{-1/2}\int_0^\infty
 \left[\int_\cH e^{i\langle x,\xi\rangle-\langle S_W\xi,\xi\rangle/2}
       \,\mu_{2sP}(\dd\xi)\right]s^{\alpha/2-1}\,\dd s.
\]
Applying \cref{lem:gaussian-quadratic} with \(\Xi=2sP\) and \(S=S_W\)
evaluates the bracket and gives \cref{eq:filtered-gaussian-target}.

\emph{(iii).}  \(g\) has the form of \cref{lem:ray-regular-examples}(a)
with \(q=D^{-1/2}\) and \(S=S_W\ge(1+\|M\|)^{-1}Q\), so it satisfies
\cref{def:ray-regular}.
\Cref{eq:slice-as-coefficient} gives \(R_\rho f_W=W_\rho g=\gamma_g\),
and \cref{thm:E} gives finite total variation of the coefficient measure, the identity
\(S_\sigma[\gamma_g\lambda_\alpha]=\iiprod{\sigma,\rho}_{\alpha}f_g\), and the
rate.
\end{proof}

Of the standing hypotheses on \(W\), the transform and convolution formulas
of part (i) and the membership \(f_W\in\cD_\alpha\) use only that \(W\) is
bounded, positive and self-adjoint with \(M\) trace class; injectivity of
\(W\) is needed for the non-cylindricity of \(f_W\), and the analysis filter
does not appear in part (ii) at all.

Formula \cref{eq:filtered-gaussian-target} displays the frame operator concretely: the Gaussian target and its image \(T_\alpha f_W\) are both explicit,
and the ridgelet coefficient of the former is the network coefficient of
the latter.

\subsection{Gaussian-parameter networks on \texorpdfstring{\(\ell^2\)}{l2}}
\label{subsec:gaussian-parameter}

Let \(\cH=\ell^2(\N)\) with standard basis \((e_j)\), and let
\(Qe_j=q_je_j\) with \(q_j>0\) and \(\sum_jq_j<\infty\), so that \(Q\) is
injective, trace class, and of infinite rank.

\begin{lem}[Absolute hinge representation of the Gaussian]
\label{lem:gaussian-hinge}
For \(\phi(u)=e^{-u^2/2}\), \(\phi(u)=\int_\R(u-b)_+\phi''(b)\dd b\)
absolutely for each \(u\), and
\(\int_\R(1+|b|^k)|\phi''(b)|\dd b<\infty\) for every \(k\ge0\).
\end{lem}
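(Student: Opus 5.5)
The plan is to prove the hinge identity by integrating by parts twice and to read off the moment bound from the explicit form of \(\phi''\).

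\textbf{Moments.} We have
\[
 \phi''(b)=(b^2-1)e^{-b^2/2},
\]
so \(|\phi''(b)|\le(1+b^2)e^{-b^2/2}\). Hence \((1+|b|^k)|\phi''(b)|\) is dominated by a polynomial times a Gaussian, and it is integrable for every \(k\ge0\). Since \((u-b)_+\le|u|+|b|\), the case \(k=1\) already gives absolute convergence of \(\int_\R(u-b)_+\phi''(b)\,\dd b\) for each fixed \(u\).

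\textbf{The identity.} Fix \(u\). The integrand vanishes for \(b>u\), so
\[
 \int_\R(u-b)_+\phi''(b)\,\dd b=\int_{-\infty}^u(u-b)\phi''(b)\,\dd b .
\]
We integrate by parts on \([-T,u]\) and let \(T\to\infty\).

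In the first step we differentiate \(u-b\) and integrate \(\phi''\) to \(\phi'\). The boundary term at \(b=u\) vanishes because of the factor \(u-b\). The term at \(b=-T\) is \((u+T)\phi'(-T)\to0\), since \(\phi'(b)=-be^{-b^2/2}\) decays faster than any polynomial. This leaves \(\int_{-T}^u\phi'(b)\,\dd b=\phi(u)-\phi(-T)\). Finally \(\phi(-T)\to0\), so the integral equals \(\phi(u)\).

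Equivalently, \(b\mapsto(u-b)_+\) is a fundamental solution of \(\partial_b^2\), in the sense that its second distributional derivative in \(b\) is \(\delta_u\); pairing with \(\phi''\), which decays rapidly together with \(\phi\) and \(\phi'\), returns \(\phi(u)\). The only care needed is to justify the limits \(T\to\infty\), and the Gaussian decay of \(\phi\) and \(\phi'\) supplies exactly that. I do not expect any step to be a real obstacle.
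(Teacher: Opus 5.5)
Your proposal is correct and follows essentially the same route as the paper. The moments come from the explicit form \(\phi''(b)=(b^2-1)e^{-b^2/2}\), and integration by parts on \([-T,u]\) gives \(\phi(u)-\phi(-T)-(u+T)\phi'(-T)\), whose boundary terms vanish as \(T\to\infty\).
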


\begin{proof}
Since \(\phi''(b)=(b^2-1)e^{-b^2/2}\), all polynomially weighted absolute
integrals are finite.  Integrating by parts on \((-L,u)\) gives
\(\int_{-L}^u(u-b)\phi''(b)\dd b=\phi(u)-\phi(-L)-(u+L)\phi'(-L)\), which
tends to \(\phi(u)\) as \(L\to\infty\).
\end{proof}

\begin{ex}[ReLU and Gaussian networks with Gaussian parameters]
\label{ex:gaussian-parameter}
The integral networks
\begin{equation}
  f_{\mathrm{ReLU},Q}(x):=\int_\cH\ReLU(\langle a,x\rangle)\,\mu_{Q}(\dd a),
  \qquad
  f_{\mathrm{Gauss},Q}(x):=\int_\cH\sigma_{\mathrm{Gauss}}(\langle a,x\rangle)\,\mu_{Q}(\dd a)
  \label{eq:gaussian-parameter-networks}
\end{equation}
have the closed forms
\begin{equation}
 \begin{aligned}
  f_{\mathrm{ReLU},Q}(x)&=\sqrt{\frac{\langle Qx,x\rangle}{2\pi}},\qquad
  f_{\mathrm{Gauss},Q}(x)=\frac1{\sqrt{1+\langle Qx,x\rangle}},\\
  f_{\mathrm{Gauss},Q}(x)&=\int_{\cH\times\R}
  \ReLU(\langle a,x\rangle-b)\,\phi''(b)\,\mu_Q(\dd a)\,\dd b.
 \end{aligned}
  \label{eq:gaussian-parameter-closed-forms}
\end{equation}
where \(\phi(u)=e^{-u^2/2}\) and \(\phi''(b)=(b^2-1)e^{-b^2/2}\).  Neither
target is cylindrical.  The second identity shows that a Gaussian-activation
network with finite coefficient measure is also a ReLU network with finite
coefficient measure, with all parameter moments finite.
\end{ex}

\begin{proof}
For fixed \(x\), \(\langle a,x\rangle\) is centred Gaussian with variance
\(v=\langle Qx,x\rangle\); if \(Z\sim\mu_{v}\) then
\(\EE Z_+=\sqrt{v/2\pi}\) and \(\EE e^{-Z^2/2}=(1+v)^{-1/2}\).  Integrating
\(\phi''\) twice from \(-\infty\) gives \(\phi(u)=\int_\R(u-b)_+\phi''(b)\dd b\)
(\cref{lem:gaussian-hinge}), and Fubini is justified by
\(\int|\phi''|<\infty\) and finite Gaussian first moments.  If a target
factored through a finite-rank map, a nonzero vector in its kernel would
give the same value as \(x=0\), contradicting injectivity of \(Q\).
\end{proof}

\begin{cor}[Discretization of the Gaussian-parameter ReLU network]
\label{cor:relu-discretization}
Let \(a_1,\dots,a_N\) be independent with law \(\mu_{Q}\) and
\(f_{\mathrm{ReLU},Q,N}(x):=\frac1N\sum_{j=1}^N\ReLU(\langle a_j,x\rangle)\).  Then, for
every compact \(K\subset \cH\),
\(\EE\|f_{\mathrm{ReLU},Q,N}-f_{\mathrm{ReLU},Q}\|_{C(K)}\le 8R_K\sqrt{\tr Q}/\sqrt N\).
\end{cor}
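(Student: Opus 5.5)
The plan is to apply \cref{thm:lipschitz-barron} directly to the scalar case \(\outY=\C\). The representing measure is \(\Gamma=\mu_Q\otimes\delta_0\) on \(\cH\times\R\), meaning all biases are zero and the outer weight is constant. By \cref{eq:gaussian-parameter-networks}, its synthesis with \(\sigma=\ReLU\) is exactly \(f_{\mathrm{ReLU},Q}\).

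First I check the ingredients of that theorem:
\begin{itemize}
\item \(\Gamma\) is a positive probability measure, so the polar density is \(h\equiv1\) and \(V=\|\Gamma\|_{\mathrm{TV}}=1\).
\item The sampling law is \(p=\Gamma\), so the sampled network \cref{eq:polar-network} with \(b_j=0\) and \(a_j\sim\mu_Q\) i.i.d.\ coincides with \(f_{\mathrm{ReLU},Q,N}\).
\item \(\ReLU\) is real and globally Lipschitz with \(\Lip(\ReLU)=1\) and \(\ReLU(0)=0\).
\end{itemize}

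Next I compute the second moment \cref{eq:second-moment}. It is
\[
M_2^2=\int_\cH\|a\|^2\,\mu_Q(\dd a),
\]
and the case \(m=1\) of \cref{eq:gaussian-norm-moments} (with \(Q\) in place of \(P\)) gives \(M_2^2=\tr Q\). One could also get this directly from \(\EE\sum_j q_jZ_j^2=\tr Q\).

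Substituting \(V=1\), \(|\sigma(0)|=0\), \(\Lip(\sigma)=1\) and \(M_2=\sqrt{\tr Q}\) into the first line of \cref{eq:lipschitz-barron-bound} gives
\[
\EE\|f_{\mathrm{ReLU},Q,N}-f_{\mathrm{ReLU},Q}\|_{C(K)}\le 8R_K\sqrt{\tr Q}/\sqrt N,
\]
which is the claimed bound.

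The only point needing care is the measurability and integrability of the atoms and the identification \(f|_K=V\int\Phi\,\dd p\). That theorem's proof handles both once \(M_2<\infty\), so no obstacle remains. For empty \(K\) the statement is trivial.
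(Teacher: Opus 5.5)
Your proposal is correct and is exactly the paper's argument: apply \cref{thm:lipschitz-barron} in the scalar case to the pushforward of \(\mu_Q\) under \(a\mapsto(a,0)\), with \(V=1\), \(h\equiv1\), zero biases, \(\ReLU(0)=0\), \(\Lip(\ReLU)=1\), and \(M_2^2=\EE\|a\|^2=\tr Q\). The first line of \cref{eq:lipschitz-barron-bound} then gives \(8R_K\sqrt{\tr Q}/\sqrt N\). The bound only needs \(M_2^2\le\tr Q\), which the \(m=1\) case of \cref{eq:gaussian-norm-moments} supplies, and equality in fact holds.
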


\begin{proof}
Apply \cref{thm:lipschitz-barron} with \(V=1\), \(b=0\), \(\sigma(0)=0\),
\(\Lip(\sigma)=1\), and \(M_2^2=\EE\|a\|^2=\tr Q\).
\end{proof}

\subsection{Neural-operator layers}
\label{subsec:operator-layer}

The layers of neural operators \citep{Kovachki2021NOreview} compose an integral
operator, a pointwise nonlinearity, and another integral operator.  We
show that such a layer is an integral network in the sense of
\cref{def:integral-network}, with an explicit finite parameter measure,
and that with the Gaussian activation its ridgelet transform is explicit
and the whole theory of \cref{sec:reconstruction,sec:finite-width}
applies.

Let \((\Omega,m)\) be a finite measure space, let \(y\mapsto a_y\in \cH\)
be Borel and bounded, \(\|A\|_\infty:=\sup_y\|a_y\|<\infty\), and let
\(y\mapsto v_y\in \outY\) be Borel with \(\int_\Omega\|v_y\|_\outY m(\dd y)<\infty\).
Define the integral operators
\[
  (Ax)(y):=\langle a_y,x\rangle,
  \qquad
  Bu:=\int_\Omega v_y\,u(y)\,m(\dd y),
\]
and, for a continuous real activation \(\sigma\) of polynomial growth, the
layer
\begin{equation}
  F(x):=B\,\sigma(Ax)
  =\int_\Omega v_y\,\sigma(\langle a_y,x\rangle)\,m(\dd y),
  \qquad x\in \cH .
  \label{eq:operator-layer}
\end{equation}
For \(\varphi\in \outY\) the scalar observable is
\(F_\varphi(x):=\langle F(x),\varphi\rangle_\outY\), and
\(w_\varphi(y):=\langle v_y,\varphi\rangle_\outY\).

\begin{ex}[Neural-operator layer as an integral network]
\label{ex:operator-layer}
\begin{enumerate}[label=\textup{(\roman*)}]
\item
\emph{Network structure.}
\(F=S_\sigma[\Gamma]\) with the \(\outY\)-valued measure
\(\Gamma=\iota_\#(v_y\,m(\dd y))\), \(\iota(y)=(a_y,0)\).
Put \(B_1=\int_\Omega\|v_y\|_\outY\,m(\dd y)\).
Its variation satisfies \(V\le B_1\), and its normalized second
parameter moment is at most \(\|A\|_\infty^2\) when \(V>0\).
If \(\sigma\) is real and globally Lipschitz,
\cref{thm:lipschitz-barron} yields
\begin{equation}
 \EE\|F_N-F\|_{C(K;\outY)}
 \le\frac{B_1}{\sqrt N}
       \bigl(4|\sigma(0)|+8\Lip(\sigma)R_K\|A\|_\infty\bigr).
 \label{eq:operator-layer-uniform}
\end{equation}
Thus the bound controls the entire output function.  The scalar
observable has the same bound with \(B_1\) replaced by
\(\|w_\varphi\|_{L^1(m)}\).  The \(L^2(\zeta;\outY)\) variance
formula of \cref{cor:vector-rates} also applies when \(\zeta\)
has a finite second input moment.
\item
\emph{Gaussian activation: explicit transform.}
Let \(\sigma=\sigma_{\mathrm{Gauss}}\).  For \(\xi\in \cH\) put
\(\upsilon_y^2:=\langle Qa_y,a_y\rangle\) and
\(S_y:=Q-(1+\upsilon_y^2)^{-1}(Qa_y)\otimes(Qa_y)\).  Then
\(F_\varphi\in\cD_\alpha\) for every \(\alpha>0\), with
\begin{align}
  F_QF_\varphi(\xi)
  &=\int_\Omega w_\varphi(y)\,(1+\upsilon_y^2)^{-1/2}
  e^{-\langle S_y\xi,\xi\rangle/2}\,m(\dd y),
  \\
  R_\rho F_\varphi(a,b)
  &=\int_\Omega w_\varphi(y)\,(1+\upsilon_y^2)^{-1/2}
  \bigl(\rho*\phi_{\langle S_ya,a\rangle}\bigr)(-b)\,m(\dd y),
  \label{eq:operator-layer-transform}
\end{align}
These formulas are mixtures
of Gaussian Fourier factors and one-dimensional convolutions,
respectively,
and \(S_y\ge(1+\|Q\|\|A\|_\infty^2)^{-1}Q\).  Consequently
\(F_QF_\varphi\) satisfies \cref{def:ray-regular}, the reconstruction formulas of
\cref{thm:C} hold for \(F_\varphi\), and the ridgelet coefficient
\(R_\rho F_\varphi\) synthesizes, with any real Lipschitz non-polynomial
\(\sigma'\), the target
\(\iiprod{\sigma',\rho}_{\alpha}\,T_\alpha F_\varphi\) with the finite-width
rate of \cref{eq:spectral-barron}.  The same holds for \(F\)
itself as a \(\outY\)-valued target.
\item
\emph{ReLU form.}
By \cref{eq:gaussian-parameter-closed-forms}, the Gaussian-activation
layer is also the ReLU network
\(F(x)=\int_{\Omega\times\R}v_y\,\phi''(b)\,
\ReLU(\langle a_y,x\rangle-b)\,m(\dd y)\,\dd b\), whose coefficient
measure is finite with all moments finite.
\item
\emph{Non-cylindricity.}
If \(A\) has infinite rank, \(\sigma=\sigma_{\mathrm{Gauss}}\), and \(w_\varphi>0\)
\(m\)-almost everywhere, then \(F_\varphi\) is not cylindrical.
\end{enumerate}
\end{ex}

\begin{proof}
\label{app:proof-operator-layer}
\emph{(i).}  Since \(\|a_y\|\le\|A\|_\infty\) and \(\sigma\) has polynomial
growth, \(y\mapsto v_y\sigma(\langle a_y,x\rangle)\) is Bochner integrable
for each \(x\), and the change of variables for the pushforward measure
\(\Gamma=\iota_\#(v_ym(\dd y))\) gives
\(S_\sigma[\Gamma](x)=\int_\Omega v_y\sigma(\langle a_y,x\rangle)m(\dd y)=F(x)\).
The variation of a pushforward is at most the pushforward of the
variation, so \(\|\Gamma\|_{\rm TV}\le\int\|v_y\|m(\dd y)\), and
\(\int(\|a\|^2+b^2)\dd|\Gamma|\le\|A\|_\infty^2\|\Gamma\|_{\rm TV}\).  The
scalar observable has coefficient measure \(\iota_\#(w_\varphi m)\) with
total variation \(\|w_\varphi\|_{L^1(m)}\), and \cref{thm:lipschitz-barron}
gives the displayed bound.

\emph{(ii).}  Let \((Z,W)\) be a centred Gaussian pair with variances
\(\upsilon^2,\tau^2\) and covariance \(r\), and assume \(\upsilon>0\).  Given
\(Z\), \(W\) is Gaussian with mean \(rZ/\upsilon^2\) and variance
\(\tau^2-r^2/\upsilon^2\), so
\(\EE[e^{-iW}\mid Z]=e^{-irZ/\upsilon^2}e^{-(\tau^2-r^2/\upsilon^2)/2}\), and
\[
  \EE\bigl[e^{-Z^2/2}e^{-irZ/\upsilon^2}\bigr]
  =\int_\R e^{-z^2/2-irz/\upsilon^2}\frac{e^{-z^2/(2\upsilon^2)}}{\sqrt{2\pi\upsilon^2}}\dd z
  =(1+\upsilon^2)^{-1/2}\exp\Bigl(-\frac{r^2}{2\upsilon^2(1+\upsilon^2)}\Bigr),
\]
since the tilted density is Gaussian with variance \(\upsilon^2/(1+\upsilon^2)\).
Multiplying,
\(\EE[e^{-Z^2/2}e^{-iW}]=(1+\upsilon^2)^{-1/2}\exp(-\tau^2/2+r^2/(2(1+\upsilon^2)))\);
the case \(\upsilon=0\) is the limit.  Under \(\mu_Q\), the pair
\((\langle a_y,x\rangle,\langle x,\xi\rangle)\) is centred Gaussian with
\(\upsilon^2=\upsilon_y^2\), \(\tau^2=\langle Q\xi,\xi\rangle\), and
\(r=\langle Qa_y,\xi\rangle\), which gives
\(F_Q[\sigma_{\mathrm{Gauss}}(\langle a_y,\cdot\rangle)](\xi)=(1+\upsilon_y^2)^{-1/2}e^{-\langle S_y\xi,\xi\rangle/2}\).
Since \(\sigma_{\mathrm{Gauss}}\) is bounded and \(w_\varphi\in L^1(m)\), Fubini gives the
first formula in \cref{eq:operator-layer-transform}, and the Fourier
slice with the inverse Fourier transform of
\(\omega\mapsto e^{-\omega^2\langle S_ya,a\rangle/2}\) gives the second.
Cauchy--Schwarz for the positive form \(Q\) gives
\(\langle Qa_y,\xi\rangle^2\le\upsilon_y^2\langle Q\xi,\xi\rangle\), hence
\(\langle S_y\xi,\xi\rangle\ge(1+\upsilon_y^2)^{-1}\langle Q\xi,\xi\rangle
\ge(1+\|Q\|\|A\|_\infty^2)^{-1}\langle Q\xi,\xi\rangle\); in particular
\(F_\varphi\in\cD_\alpha\) by \cref{lem:gaussian-decay}.  Each
\(g_y(\xi):=(1+\upsilon_y^2)^{-1/2}e^{-\langle S_y\xi,\xi\rangle/2}\) is of
the form of \cref{lem:ray-regular-examples}(a) with \(\|S_y\|\le\|Q\|\)
and the uniform lower bound \(S_y\ge\theta Q\),
\(\theta:=(1+\|Q\|\|A\|_\infty^2)^{-1}\), just proved, and the family is
bounded by \(1\) because \((1+\upsilon_y^2)^{-1/2}\le1\).  Fix a compact
symmetric neighbourhood of \(I\) inside \(\R\setminus\{0\}\), let \(U\)
be its interior and \(r'>0\) the smallest \(|\omega|\) on it.  Since
\(\partial_\omega^k\bigl[(1+\upsilon_y^2)^{-1/2}e^{-\omega^2\langle S_ya,a\rangle/2}\bigr]\)
is a polynomial in \(\omega\) and \(\langle S_ya,a\rangle\) times
\(e^{-\omega^2\langle S_ya,a\rangle/2}\), and
\(\langle S_ya,a\rangle\le\|Q\|\|a\|^2\), the proof of part (a) gives
\[
  \sup_{\omega\in U}\bigl|\partial_\omega^kg_y(\omega a)\bigr|
  \le C_k(1+\|a\|)^{2k}e^{-r'^2\theta\langle Qa,a\rangle/2}=:h_k(a)
\]
for every \(y\) and every \(a\in \cH\), with \(C_k\) independent of \(y\),
and \cref{lem:gaussian-decay} makes the weights \(h_k\) satisfy the
moment bounds of \cref{lem:ray-regular-examples}(c).  Part (c) of that
lemma, applied with the finite measure \(|w_\varphi|m\) and the family
\((w_\varphi/|w_\varphi|)g_y\), which obeys the same bounds because the
extra factor has modulus at most one, shows that
\(F_QF_\varphi=\int w_\varphi(y)g_y\,m(\dd y)\) satisfies
\cref{def:ray-regular}. The
remaining claims are \cref{thm:C} and \cref{thm:E}, and the \(\outY\)-valued
statement is the same computation with Bochner integrals
(\cref{thm:vector-valued}).

\emph{(iii).}  By \cref{lem:gaussian-hinge},
\(\sigma_{\mathrm{Gauss}}(\langle a_y,x\rangle)=\int_\R\ReLU(\langle a_y,x\rangle-b)\phi''(b)\dd b\),
and Fubini is justified by
\(\int\|v_y\|\int(1+\|a_y\|\|x\|+|b|)|\phi''(b)|\dd b\,m(\dd y)<\infty\).

\emph{(iv).}  If \(F_\varphi=\widetilde F\circ L\) with \(L\) of finite
rank, then \(\ker L\) has finite codimension while \(\ker A\) has infinite
codimension, since \(\cH/\ker A\) is isomorphic to \(\Ran A\); hence there is
\(x\in\ker L\setminus\ker A\), and \(F_\varphi(tx)=\widetilde F(0)=F_\varphi(0)
=\int w_\varphi\dd m\) for all \(t\in\R\).  On the other hand
\(\sigma_{\mathrm{Gauss}}(t\langle a_y,x\rangle)\to\mathbf 1_{\{\langle a_y,x\rangle=0\}}\)
as \(t\to\infty\), boundedly, so
\(F_\varphi(tx)\to\int w_\varphi\mathbf 1_{\{Ax=0\}}\dd m\), which is
strictly smaller than \(\int w_\varphi\dd m\) because \(Ax\ne0\) on a set
of positive measure and \(w_\varphi>0\) almost everywhere.
\end{proof}

\begin{rmk}[Equivariance of the transform]
\label{rmk:equivariance}
If \(U\) is a unitary operator on \(\cH\) with \(UQU^*=Q\), then
\(U_\#\mu_Q=\mu_Q\) and, for \(f\in L^1(\mu_Q)\),
\(R_\rho[f\circ U](a,b)=R_\rho f(Ua,b)\); if also \(UPU^*=P\), then
\(\nu_\alpha\) and \(\lambda_\alpha\) are \(U\)-invariant and the
Plancherel identity, the frame operator, and the reconstruction formulas
commute with the action of \(U\).  This is the infinite-dimensional
analogue of the group equivariance of ridgelet transforms studied in
\citet{Sonoda2022gconv}.  On \(\R^d\), unlike on \(\mathbb T^d\), no
translation-invariant Gaussian measure with trace-class covariance
exists, so a Gaussian input weight on \(L^2(\R^d)\) necessarily breaks
translation invariance; the network structure (i) of
\cref{ex:operator-layer} does not involve the input weight and is
unaffected.
\end{rmk}

\begin{ex}[Periodic convolution layer]
\label{ex:convolution}
Let \(\Omega=\mathbb T^d=(\R/2\pi\Z)^d\) with normalized Lebesgue measure,
\(\cH=L^2(\mathbb T^d;\R)\), \(\outY=L^2(\mathbb T^d)\), and let
\(k,\psi\in L^2(\mathbb T^d;\R)\). For a kernel \(u\), its
Fourier coefficients are
\(\widehat u(n):=\int_{\mathbb T^d}u(y)e^{-i\langle n,y\rangle}\,\dd y\),
using normalized Haar measure. With \(a_y:=k(y-\cdot)\) and
\(v_y:=\psi(\cdot-y)\), the layer \cref{eq:operator-layer} is
\(F(x)=\psi*\sigma(k*x)\), a convolutional layer followed by a
smoothing convolution.  Here \(\|A\|_\infty=\|k\|_2\),
\(\int\|v_y\|\dd y=\|\psi\|_2\), and \(F\) commutes with all
translations of \(\mathbb T^d\), and with every isometry of \(\mathbb T^d\)
that fixes \(k\) and \(\psi\).  If \(\widehat k(n)\ne0\) for infinitely
many \(n\in\Z^d\), then \(A\) has infinite rank.  If additionally
\(\sigma=\sigma_{\mathrm{Gauss}}\), then with
\(\varphi\equiv1\) and \(\widehat\psi(0)\ne0\) the observable
\(F_1(x)=\widehat\psi(0)\int_{\mathbb T^d}\sigma((k*x)(y))\dd y\) is not
cylindrical.  Choosing \(Q=P=(I-\Delta)^{-s}\) with \(s>d/2\), which is
injective, trace class, and translation invariant, the transform is
equivariant as well: \(R_\rho[f\circ\tau_z](a,b)=R_\rho f(\tau_za,b)\)
for every translation \(\tau_z\) (\cref{rmk:equivariance}).
\end{ex}

\begin{proof}
With normalized Haar measure, \(\|a_y\|=\|k\|_2\) and \(\|v_y\|=\|\psi\|_2\)
for every \(y\), and \(y\mapsto a_y\), \(y\mapsto v_y\) are continuous
into \(L^2(\mathbb T^d)\).  Convolutions commute with translations and
with isometries fixing their kernels, and \(\sigma\) acts pointwise, so
\(F\) is equivariant.  In the Fourier basis \(A\) is diagonal
with entries \(\widehat k(n)\), so its rank is the number of nonzero
coefficients; with \(\varphi\equiv1\), \(w_1(y)=\int\psi=\widehat\psi(0)\)
is a nonzero constant.  For the Gaussian activation,
\cref{ex:operator-layer}(iv) applies after reversing the sign of the
observable if this constant is negative.  The
operator \((I-\Delta)^{-s}\) has eigenvalues \((1+|n|^2)^{-s}\), summable
for \(s>d/2\), is injective, and commutes with translations, so
\cref{rmk:equivariance} applies.
\end{proof}

\begin{ex}[Dirichlet solution operator with a pointwise nonlinearity]
\label{ex:dirichlet}
Let \(\Omega=(0,1)\) with Lebesgue measure, \(\cH=\outY=L^2(0,1)\), and let
\(L_D^{-1}:=(I-\partial_t^2)^{-1}\) with Dirichlet boundary conditions,
\[
  (L_D^{-1}x)(y)=\int_0^1g(y,t)\,x(t)\,\dd t,
  \qquad
  g(y,t)=\frac{\sinh(\min(y,t))\,\sinh(1-\max(y,t))}{\sinh 1},
\]
so that \(u=L_D^{-1}x\) solves \(-u''+u=x\), \(u(0)=u(1)=0\).  With
\(a_y=v_y=g(y,\cdot)\), the layer \cref{eq:operator-layer} is
\(F(x)=L_D^{-1}\,\sigma(L_D^{-1}x)\): the solution of the
linear problem whose source is a pointwise nonlinearity of the solution
with source \(x\), which is the nonlinear correction in the Picard update
\(u_1=L_D^{-1}x+F(x)\) starting from
\(u_0=L_D^{-1}x\) for \(-u''+u=\sigma(u)+x\).  The operator \(L_D^{-1}\) has eigenvalues
\(\lambda_n=(1+\pi^2n^2)^{-1}\) with eigenfunctions
\(e_n(t)=\sqrt2\sin(n\pi t)\), so it is injective, trace class, and of
infinite rank, and \(\|A\|_\infty\le\sup_y\|g(y,\cdot)\|_2<\infty\).
\Cref{ex:operator-layer} applies, and with \(Q=P=L_D^{-1}\) the input
measure is a Gaussian field with covariance equal to the inverse
Dirichlet operator.  Applying \(L_D^{-1}\) to white noise instead
would give covariance \(L_D^{-2}\).  The linear
operator \(L_D^{-1}\) itself is an exact ReLU network,
\(L_D^{-1}x=\sum_n\lambda_ne_n[\ReLU(\langle e_n,x\rangle)-\ReLU(-\langle e_n,x\rangle)]\),
whose \(2N\)-neuron truncation has error at most
\(\lambda_{N+1}\|x\|=O(N^{-2})\) uniformly on bounded sets.
This is the linear spectral-truncation estimate.  For the nonlinear
layer, \cref{eq:operator-layer-uniform} supplies a separate uniform
approximation estimate for the full output.
\end{ex}

\begin{proof}
Direct differentiation shows that \(g(y,\cdot)\) solves the boundary
value problem with a unit jump of the derivative at \(t=y\), so
\(L_D^{-1}\) is the integral operator with kernel \(g\); the eigenpairs
are checked by \(-e_n''+e_n=(1+\pi^2n^2)e_n\).  The kernel is continuous
and bounded by \(\sinh^2(1/2)/\sinh1\), so \(y\mapsto g(y,\cdot)\) is
continuous and bounded into \(L^2(0,1)\), and \(m=\dd y\) is finite; hence
\cref{ex:operator-layer} applies.  The ReLU expansion is the spectral
theorem with \(u=\ReLU(u)-\ReLU(-u)\), and orthogonality gives
\(\|L_D^{-1}x-(L_D^{-1})_Nx\|^2=\sum_{n>N}\lambda_n^2\langle e_n,x\rangle^2
\le\lambda_{N+1}^2\|x\|^2\).
\end{proof}

\section{Numerical experiments}
\label{sec:numerics}

We compare the observed finite-width approximation errors with the
predicted rates and check an explicit transform formula numerically.
Varying the input resolution distinguishes approximation in network
width from errors caused by numerical discretization.
The three experiments illustrate \cref{thm:D,thm:E} and
\cref{ex:operator-layer} on \(\ell^2\) or \(L^2(0,1)\).
For computation the input is truncated to \(d\) coordinates, with several
values of \(d\).  These finite computations do not establish a
uniform-in-dimension theorem.  Errors are suprema on finite test sets,
averaged over independent trials; fitted slopes use logarithmic
coordinates and widths \(N=4,8,\dots,4096\).
\Cref{app:numerics} gives the parameter-drawing procedures, quadrature,
and reproduction details.

\paragraph{Experiment 1: ReLU network with Gaussian random directions on \(\ell^2\).}
This is \cref{cor:relu-discretization}.  Let \(Q=\operatorname{diag}(j^{-2})\)
on \(\ell^2\), truncated to \(d\in\{10,10^2,10^3,10^4\}\) coordinates,
and let \(K\) consist of \(200\) points with coordinates
\(x_j=u_j/j\), \(u_j\) uniform in \([-1,1]\).  The target
\(f_{\mathrm{ReLU},Q}(x)=\sqrt{\langle Qx,x\rangle/2\pi}\) is compared with
\(f_{\mathrm{ReLU},Q,N}(x)=N^{-1}\sum_{j\le N}\ReLU(\langle a_j,x\rangle)\),
\(a_j\sim\mu_{Q}\), over \(20\) trials.  \Cref{tab:numerics} and
\cref{fig:numerics}(upper left) show that the mean supremum error decays like
\(N^{-1/2}\) and is similar across the tested \(d\); the bound of \cref{cor:relu-discretization}, which equals \(0.24\) at \(N=4096\), is
about twenty times larger than the observed error.  Thus the experiment
checks the rate rather than sharpness of the bound.

\paragraph{Experiment 2: the ridgelet coefficient of \(f_W\) synthesizes \(C\,T_\alpha f_W\).}
This is \cref{ex:closed-form}(iii) with \(\alpha=1\), \(P=Q=\operatorname{diag}(j^{-2})\),
\(W=I\), and the analysis filter \(\rho(t)=\mathrm{He}_4(t)e^{-t^2/2}\),
whose Fourier transform is \(\sqrt{2\pi}\,\omega^4e^{-\omega^2/2}\).
This filter has four vanishing moments and admissibility constant
\(\iiprod{\rho,\rho}_{1}=6\); it is not band pass, but \cref{thm:A}(ii) uses only
admissibility, and the finiteness of the total variation and of the
second moment of \(\gamma_g=R_\rho f_W\) is verified directly from
\(\int_\R|\gamma_g(a,b)|\dd b=D^{-1/2}\sqrt{2\pi}\,m_4\,(1+\kappa_W(a))^{-2}\),
\(m_4=\EE|\mathrm{He}_4(Z)|\), and \cref{lem:gaussian-decay}.  The
coefficient is explicit, \(\gamma_g(a,b)=D^{-1/2}\sqrt{2\pi}\,\phi_u^{(4)}(b)\)
with \(u=1+\kappa_W(a)\), and the target \(\iiprod{\rho,\rho}_{1}f_g\) is computed
from \cref{eq:filtered-gaussian-target} by quadrature.  Parameters are
drawn by rejection sampling for the bias and importance
resampling for the direction; these numerical procedures approximate the
coefficient law \(|\gamma_g|\lambda_1/V\), as detailed in
\cref{app:numerics}.  Over \(10\) trials, with
\(d\in\{10,10^2,10^3\}\) and \(K\) consisting of \(100\) points drawn
by the rule of Experiment~1, the mean supremum error of the sampled network
\(f_N=\frac VN\sum_j\operatorname{sgn}\gamma_g(\theta_j)\rho(\langle a_j,\cdot\rangle-b_j)\)
against \(6\,f_g\), whose supremum on \(K\) is about \(12.5\), decays
approximately like \(N^{-1/2}\) across the tested dimensions (\cref{tab:numerics},
\cref{fig:numerics}(upper right)).

\paragraph{Experiment 3: the Dirichlet operator layer on \(L^2(0,1)\).}
This is \cref{ex:dirichlet} with the Gaussian activation \(\sigma_{\mathrm{Gauss}}\) and the
observable \(\varphi\equiv1\), for which
\(w(y)=\int_0^1g(y,t)\dd t=1-\cosh(y-\tfrac12)/\cosh\tfrac12>0\) and
\(V=\|w\|_{L^1}=1-2\tanh\tfrac12\approx0.0758\).  The interval is
discretized by \(d\in\{64,512,4096\}\) midpoints, and the test set
consists of \(100\) inputs \(x=\sum_{n\le30}(u_n/n)e_n\) with \(u_n\)
uniform in \([-1,1]\).  (a) The sampled network
\(f_N(x)=\frac VN\sum_j\sigma_{\mathrm{Gauss}}(\langle g(y_j,\cdot),x\rangle)\), \(y_j\sim w/V\),
is compared with \(F_1(x)=\int_0^1w(y)\sigma_{\mathrm{Gauss}}((L_D^{-1}x)(y))\dd y\) over
\(20\) trials; the error decays like \(N^{-1/2}\) for \(d=512\) and
\(d=4096\), while for \(d=64\) it levels off near \(10^{-5}\), which is
the quadrature error of the \(64\)-point reference value.  (b) The
closed-form transform \cref{eq:operator-layer-transform}, with the
filter of Experiment~2, is compared at \(a=\sqrt2\sin(\pi t)+\tfrac12\sqrt2\sin(3\pi t)\)
and \(b\in\{0,-\tfrac12,-1\}\) with a Monte Carlo estimate of
\(\int F_1(x)\rho(\langle a,x\rangle-b)\,\mu_{L_D^{-1}}(\dd x)\) using
\(2\times10^5\) Gaussian samples and \(d'\in\{8,64,512\}\) modes
of the Karhunen--Lo\`eve expansion; the discretization is specified in
\cref{app:numerics}.
The closed-form values are \((0.18119,\ 0.09062,\ -0.06298)\); the Monte
Carlo estimates are \((0.18111,\ 0.09107,\ -0.06258)\) for \(d'=8\),
\((0.18106,\ 0.09041,\ -0.06303)\) for \(d'=64\), and
\((0.18077,\ 0.09072,\ -0.06258)\) for \(d'=512\), with standard errors
\((0.00013,\ 0.00023,\ 0.00018)\): all agree within about three
standard errors.

\begin{table}[ht]
\centering
\caption{Mean supremum error on \(K\) of the sampled networks at
\(N=4\) and \(N=4096\), and the fitted slope of the error against \(N\)
in logarithmic coordinates, for each truncation dimension \(d\).}
\label{tab:numerics}
\begin{tabular}{@{}lrrrr@{}}
\toprule
Experiment & \(d\) & error at \(N=4\) & error at \(N=4096\) & slope\\
\midrule
1: ReLU, Gaussian directions on \(\ell^2\) & \(10\) & \(0.41\) & \(0.011\) & \(-0.53\)\\
 & \(10^2\) & \(0.37\) & \(0.012\) & \(-0.50\)\\
 & \(10^3\) & \(0.42\) & \(0.011\) & \(-0.51\)\\
 & \(10^4\) & \(0.32\) & \(0.010\) & \(-0.50\)\\
\midrule
2: ridgelet coefficient of \(f_W\), target \(6\,f_g\) & \(10\) & \(11.9\) & \(0.46\) & \(-0.47\)\\
 & \(10^2\) & \(11.6\) & \(0.43\) & \(-0.48\)\\
 & \(10^3\) & \(12.8\) & \(0.36\) & \(-0.52\)\\
\midrule
3: Dirichlet operator layer on \(L^2(0,1)\) & \(64\) & \(1.0\times10^{-4}\) & \(1.2\times10^{-5}\) & \(-0.31\)\\
 & \(512\) & \(1.4\times10^{-4}\) & \(3.6\times10^{-6}\) & \(-0.53\)\\
 & \(4096\) & \(8.9\times10^{-5}\) & \(2.9\times10^{-6}\) & \(-0.50\)\\
\bottomrule
\end{tabular}
\end{table}

\begin{figure}[ht]
\centering
\includegraphics[width=0.49\textwidth]{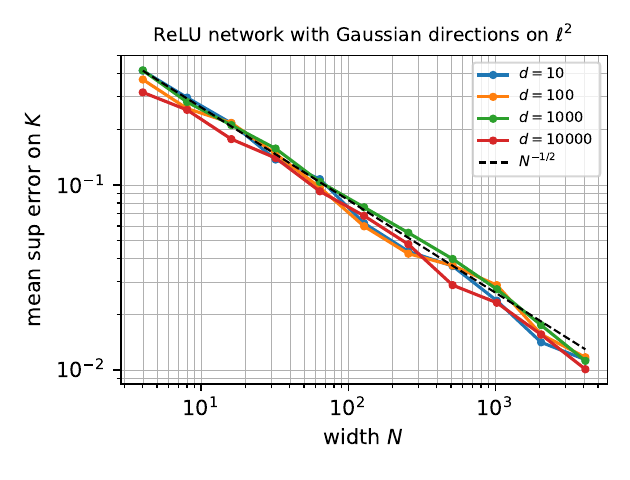}\hfill
\includegraphics[width=0.49\textwidth]{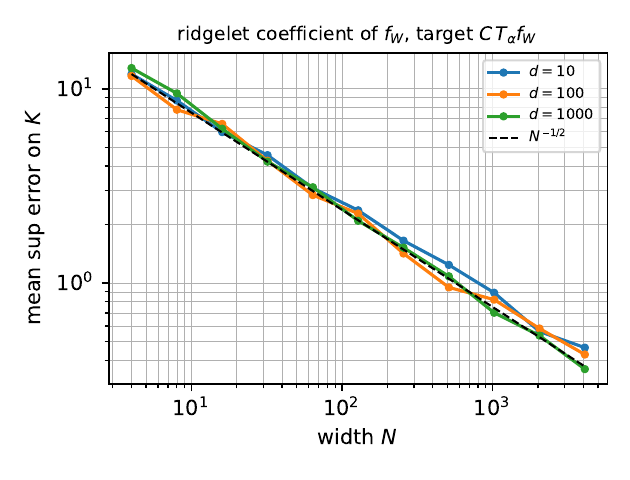}\par\smallskip
\includegraphics[width=0.49\textwidth]{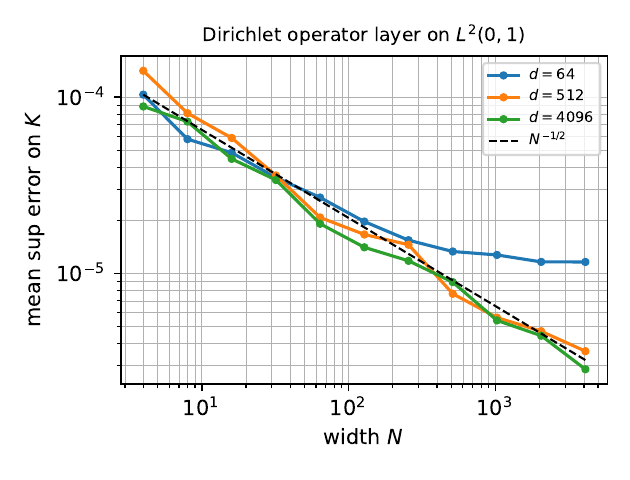}
\caption{Mean supremum error on \(K\) against the width \(N\) for
Experiments 1 (upper left), 2 (upper right), and 3(a) (bottom), for several
truncation dimensions \(d\); the dashed line is proportional to
\(N^{-1/2}\).}
\label{fig:numerics}
\end{figure}

The experiments illustrate the approximation estimates rather than the
computational cost, which grows with \(d\) through the inner products.
Across the tested resolutions, the errors and slopes are similar once
the discretization error is small enough.  Finite test sets, quadrature,
and importance resampling limit what these computations alone establish.
\FloatBarrier

 \section{Discussion}
\label{sec:discussion}

We discuss what the analysis, representation, and approximation results
establish and identify the remaining limitations.  These include the
preliminary approximation of a general continuous target, the numerical
representation of directions, and extensions of the measure construction.

\paragraph{Main results.}
With a Gaussian input weight and a homogeneous Gaussian mixture on the
direction parameter, the ridgelet transform on an infinite-dimensional
Hilbert space keeps the structure of the finite-dimensional theory.  The
admissibility constant is a one-dimensional integral in which no dimension
appears, the transform is injective with a Plancherel identity on
\(\cE_\alpha\), and synthesis returns the target up to the frame operator
\(T_\alpha\), the correction induced by the weights, which is undone after
synthesis, before analysis, or on the coefficients.  Targets with a spectral
density are integral networks with explicit coefficients; every continuous
target is a compact-open limit of such networks; and sampling the
coefficient gives finite-width networks at the rate \(N^{-1/2}\) when the
density satisfies \cref{def:ray-regular} and the activation is globally
Lipschitz.  Weak reconstruction covers every non-polynomial tempered
activation, and ReLU, \(\tanh\), the Gaussian distribution function, and the
Gaussian activation satisfy all of these conditions.  The neural-operator
layers of the examples are integral networks with explicit parameter
measures to which the finite-width bounds apply.

\paragraph{Approximation rates and numerical implementation.}
The Sobolev criterion of \cref{thm:weak-sobolev-synthesis}
permits non-band-pass filters and a finite range of coefficient moments.
For \(s>5/2\), it connects to the Hilbert-valued uniform approximation
theorem just as the smooth criterion does.  The experiments support
the predicted approximation rates on finite test sets and check an
explicit transform independently; they also expose a discretization
floor.  They do not remove the need to account for numerical integration
and finite representations of directions in an implementation.

\paragraph{Position among related results.}
The universality statement is of a type already available as an existence
theorem: compact-open density of networks with scalar ridge neurons holds
on locally convex spaces for every continuous non-polynomial activation
\citep{Ismailov2026AMAI}, and global density on weighted spaces is proved in
\citet{CuchieroSchmockerTeichmann2026}.  What the ridgelet transform adds is
the construction of the approximants: an explicit coefficient for every
target with a spectral density and every admissible analysis filter, a
Plancherel identity with an identified frame operator, and a sampling
argument that yields the finite-width rate and applies to the examples.  The
rate is a Barron-type statement for a restricted class, consistent with the
lower bounds for Lipschitz operator classes
\citep{LanthalerStuart2025,Lanthaler2024entropy,AdcockMaierParhi2025}.  The
transform differs from the ridgelet transform for Wiener functionals of
\citet{Martias2012ridgelet} in its atoms, its parameter measure, and its
Plancherel identity on a single space; \cref{sec:extended-related} records
these comparisons.

\paragraph{Open problems.}
Natural next steps are quantitative control of the spectral approximation
for regular target classes, the cost of representing infinite-dimensional
directions in a finite computation, and extensions to operator activations
and to input measures beyond the Gaussian construction.

\appendix
\crefalias{section}{appendix}
\crefalias{subsection}{appendix}
\section{A kernel interpretation of the frame operator}
This appendix gives a regularized characteristic functional and an optional
kernel interpretation. Neither calculation is a prerequisite of the main proofs.
\subsection{The regularized characteristic functional of the mixture}
\label{app:mixture-character}

The following calculation is used only in \cref{rmk:riesz-kernel}.

\begin{lem}[Truncation of the scale integral]
\label{lem:mixture-character}
For \(z\ne0\) and \(\alpha>0\), put \(q=\langle Pz,z\rangle>0\) and
\(\nu_\alpha^{\eps,M}:=\int_\eps^M\mu_{2sP}s^{\alpha/2-1}\dd s\) for
\(0<\eps<M<\infty\).  Then
\[
  \lim_{\eps\downarrow0,\,M\uparrow\infty}
  \int_\cH e^{i\langle z,\xi\rangle}\,
         \nu_\alpha^{\eps,M}(\dd\xi)
  =\int_0^\infty e^{-sq}s^{\alpha/2-1}\,\dd s
  =\Gamma(\alpha/2)\,q^{-\alpha/2}.
\]
In contrast, \(\int_\cH|e^{i\langle z,\xi\rangle}|\nu_\alpha(\dd\xi)=\infty\),
so the limit is not a Lebesgue integral against \(\nu_\alpha\).
\end{lem}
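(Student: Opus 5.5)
The plan is to compute the truncated integral exactly, using \cref{lem:mixture-integration}, and then let the truncation go. The measure \(\nu_\alpha^{\eps,M}\) is finite: its mass is \(\int_\eps^M s^{\alpha/2-1}\dd s<\infty\). So the bounded integrand \(e^{i\langle z,\xi\rangle}\) is absolutely integrable against it. The proof of \cref{lem:mixture-integration}, with \(s\) restricted to \([\eps,M]\), runs verbatim: indicators, then simple functions, then monotone convergence, then linearity. It gives
\[
 \int_\cH e^{i\langle z,\xi\rangle}\,\nu_\alpha^{\eps,M}(\dd\xi)
 =\int_\eps^M\Bigl(\int_\cH e^{i\langle z,\xi\rangle}\mu_{2sP}(\dd\xi)\Bigr)s^{\alpha/2-1}\,\dd s .
\]

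Next, the characteristic functional of the centred Gaussian \(\mu_{2sP}\) (see the notation section) gives the inner integral as \(e^{-\langle 2sPz,z\rangle/2}=e^{-sq}\). The truncated integral is therefore \(\int_\eps^M e^{-sq}s^{\alpha/2-1}\dd s\). Since \(P\) is injective and positive and \(z\ne0\), we have \(q>0\). The integrand is then nonnegative and integrable on \((0,\infty)\): near \(0\) because \(\alpha>0\), near \(\infty\) because of the exponential. Monotone convergence as \(\eps\downarrow0\) and \(M\uparrow\infty\) jointly gives the full integral, and the substitution \(u=sq\) evaluates it to \(\Gamma(\alpha/2)q^{-\alpha/2}\).

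For the final claim, \(|e^{i\langle z,\xi\rangle}|=1\), so the integral equals \(\nu_\alpha(\cH)\). This is infinite by \cref{lem:homogeneous-mixture}, or directly from \(\int_0^\infty s^{\alpha/2-1}\dd s=\infty\).

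There is no real obstacle here. The only points needing care are justifying the Fubini-type interchange on the finite truncated measure, and noting that \(q>0\) requires injectivity of \(P\).
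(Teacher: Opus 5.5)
Your proposal is correct and follows essentially the same route as the paper. The paper likewise uses finiteness of the truncated mixture to justify Fubini, evaluates the inner integral via the characteristic functional of \(\mu_{2sP}\) as \(e^{-sq}\), and concludes by monotone convergence and the substitution \(u=sq\).
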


\begin{proof}
The truncated mixture is finite, so Fubini and the characteristic
functional of \(\mu_{2sP}\) give
\(\int_\cH e^{i\langle z,\xi\rangle}\nu_\alpha^{\eps,M}(\dd\xi)
=\int_\eps^Me^{-sq}s^{\alpha/2-1}\dd s\).  Monotone convergence and the
substitution \(u=sq\) finish the proof.
\end{proof}

\begin{rmk}[A Riesz-type kernel for the frame operator]
\label{rmk:riesz-kernel}
The regularized characteristic functional of \(\nu_\alpha\) is
\(\Gamma(\alpha/2)\langle Pz,z\rangle^{-\alpha/2}\)
(\cref{lem:mixture-character}).  Interchanging integrals formally in
(iii) gives
\[
  T_\alpha f(x)
  \;``=\text{''}\;
  \Gamma(\alpha/2)\int_\cH f(y)\,
  \langle P(x-y),x-y\rangle^{-\alpha/2}\,\mu_Q(\dd y),
\]
a formal integral operator with the Riesz-type kernel
\(\langle P(x-y),x-y\rangle^{-\alpha/2}\).
On \(\R^m\) with \(P=I\) and \(0<\alpha<m\), the kernel gives a
Riesz potential of order \(m-\alpha\), namely
\(k_{m,\alpha}(-\Delta)^{-(m-\alpha)/2}(fp_Q)\), in agreement with
\cref{cor:finite-backprojection}.  The interchange is not justified for an
infinite measure and an oscillatory integrand, and the identity is not
used.
\end{rmk}

\section{Coefficient projections and minimum-norm solutions}
\label{app:reconstruction-proofs}
This appendix derives additional range and minimum-norm formulas from the
coefficient adjoint and the reconstruction theorem proved in the main text.
\subsection{Coefficient projection}
\label{app:projection}

Let \(\cY=L^2(\cH\times\R,\lambda_\alpha)\), normed through the partial
Fourier transform in the bias,
\(\|\gamma\|_{\cY}^2=\frac1{2\pi}\int|\gamma^\sharp(a,\omega)|^2\nu_\alpha(\dd a)\dd\omega\),
and let \(\rho\) be \(\alpha\)-admissible with \(C=\iiprod{\rho,\rho}_{\alpha}\).

\begin{prop}[Bounded backprojection and orthogonal range projection]
\label{prop:coefficient-projection}
The Hilbert adjoint \(W_\rho^*\) of
\(W_\rho:L^2(\nu_\alpha)\to\cY\), called backprojection, is given by
\cref{eq:ray-average}: the integral there
converges absolutely for \(\nu_\alpha\)-almost every \(\xi\), is
independent, as an \(L^2\) class, of the jointly measurable Fourier
representative of \(\gamma\), and satisfies
\(\|W_\rho^*\gamma\|_{L^2(\nu_\alpha)}\le\sqrt C\|\gamma\|_{\cY}\) and
\(W_\rho^*W_\rho=C\Id\).  With \(P_{\cK_\alpha}\) the orthogonal
projection onto \cref{eq:spectral-range},
\begin{equation}
 \Pi_\rho=C^{-1}W_\rho P_{\cK_\alpha}W_\rho^*
 \label{eq:coefficient-projection}
\end{equation}
is the orthogonal projection onto \(\Ran R_\rho\).  The minimum-norm
solution of \(S_\rho\gamma=F\in\cE_\alpha'\) is
\(C^{-1}R_\rho T_\alpha^{-1}F\), and all solutions differ from it by an
element of \((\Ran R_\rho)^\perp\).
\end{prop}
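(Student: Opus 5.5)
The first assertions about \(W_\rho^*\) are already contained in \cref{lem:coefficient-adjoint}. The formula \cref{eq:ray-average}, its almost-everywhere absolute convergence, representative independence, the bound \(\sqrt C\), and \(W_\rho^*W_\rho=C\Id\) are proved there. The only change is that \(L^2(\lambda_\alpha)\) is now normed through the partial Fourier transform. By \cref{lem:partial-fourier-l2} this is the same norm, so I would simply cite that lemma together with \cref{lem:coefficient-adjoint}.

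For the projection, I would first identify the range. By \cref{thm:B}(ii), \(R_\rho=W_\rho F_Q\), and \(F_Q:\cE_\alpha\to\cK_\alpha\) is unitary (\cref{lem:spectral-unitary}). Hence \(\Ran R_\rho=W_\rho(\cK_\alpha)\), which is closed because \(C^{-1/2}W_\rho\) is an isometry. Next set \(V:=C^{-1/2}W_\rho\), an isometry \(L^2(\nu_\alpha)\to\cY\). Then \(VV^*\) is the orthogonal projection onto \(\Ran W_\rho\), and \(VP_{\cK_\alpha}V^*\) is a product of bounded operators. To show it is the projection onto \(V(\cK_\alpha)\), I would verify two things:
\begin{itemize}
\item It is self-adjoint and idempotent, using \(V^*V=\Id\) and \(P^2=P\).
\item Its range is \(V(\cK_\alpha)\): it contains \(V(\cK_\alpha)\) because \(V P V^*Vk=Vk\) for \(k\in\cK_\alpha\), and it is contained in \(V(\cK_\alpha)\) trivially.
\end{itemize}
This gives \cref{eq:coefficient-projection}, since \(VP_{\cK_\alpha}V^*=C^{-1}W_\rho P_{\cK_\alpha}W_\rho^*\).

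For the minimum-norm statement, I would use \(S_\rho=R_\rho'\), so \(\ker S_\rho=(\Ran R_\rho)^\perp\). Indeed, \(S_\rho\gamma=0\) exactly when \(\langle\gamma,R_\rho g\rangle=0\) for all \(g\in\cE_\alpha\). By \cref{thm:C}(ii), \(\gamma_0:=C^{-1}R_\rho T_\alpha^{-1}F\) satisfies \(S_\rho\gamma_0=F\), so it is a solution. Every solution is therefore \(\gamma_0+\eta\) with \(\eta\in(\Ran R_\rho)^\perp\). Since \(\gamma_0\in\Ran R_\rho\), Pythagoras gives \(\|\gamma_0+\eta\|^2=\|\gamma_0\|^2+\|\eta\|^2\), so \(\gamma_0\) is the unique minimum-norm solution.

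The main point of care is the identification \(\Ran R_\rho=W_\rho(\cK_\alpha)\) on the completed space \(\cE_\alpha\) rather than only on the core \(\cD_\alpha\). Surjectivity of the extended \(F_Q\) onto \(\cK_\alpha\) and continuity of \(W_\rho\) are what deliver it. Everything else is Hilbert-space bookkeeping.
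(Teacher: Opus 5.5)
Your proposal is correct and follows essentially the same route as the paper. You cite \cref{lem:coefficient-adjoint} for the backprojection facts, identify \(\Ran R_\rho=W_\rho(\cK_\alpha)\) from \(R_\rho=W_\rho F_Q\) with \(F_Q\) unitary onto \(\cK_\alpha\), build \(\Pi_\rho\) from the isometry \(C^{-1/2}W_\rho\), and combine \(\ker R_\rho'=(\Ran R_\rho)^\perp\) with orthogonality for the minimum-norm claim. The only difference is presentational: the paper writes \(\Pi_\rho=JJ^*\) for the isometry \(J=C^{-1/2}W_\rho|_{\cK_\alpha}\), whereas you check idempotence, self-adjointness, and the range of \(C^{-1}W_\rho P_{\cK_\alpha}W_\rho^*\) directly.
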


\begin{proof}
The adjoint formula, representative independence and norm bound
are proved in \cref{lem:coefficient-adjoint}.

Since \(R_\rho=W_\rho F_Q\) and \(F_Q\) is unitary onto
\(\cK_\alpha\), the operator \(C^{-1/2}W_\rho|_{\cK_\alpha}\) is an
isometry with closed image; its product with its adjoint is
\cref{eq:coefficient-projection}.  Finally
\(R_\rho'R_\rho=CT_\alpha\) by \cref{eq:frame-identity}, and the kernel of
the transpose \(R_\rho'\) is \((\Ran R_\rho)^\perp\); orthogonality proves
minimum norm and the description of the solution set.
\end{proof}

The projection formula uses the closed spectral subspace
\(\cK_\alpha\) explicitly.  The factor \(P_{\cK_\alpha}\) distinguishes
projection onto the analysis range from projection onto
\(W_\rho L^2(\nu_\alpha)\).

\begin{rmk}[Range of the analysis operator]
\label{rmk:coefficient-range}
By \cref{thm:B}, \(\Ran R_\rho=W_\rho(\cK_\alpha)\subset W_\rho(L^2(\nu_\alpha))\),
and both are closed.  The projection onto \(W_\rho L^2(\nu_\alpha)\) is
\(C^{-1}W_\rho W_\rho^*\).  The projection onto \(\Ran R_\rho\) is
\(C^{-1}W_\rho P_{\cK_\alpha}W_\rho^*\), the minimum-norm solution of
\(S_\rho\gamma=F\) is \(C^{-1}R_\rho T_\alpha^{-1}F\), and all solutions
differ by elements of \((\Ran R_\rho)^\perp\)
(\cref{prop:coefficient-projection}).  These formulas retain the closed spectral subspace explicitly and
separate the reconstruction of analyzed targets from arbitrary
coefficient synthesis.
\end{rmk}

\section{Additional approximation bounds}
\label{app:sampling}
\label{app:supplementary-sampling}
This appendix gives concentration for bounded parameters and separates input
truncation from finite-width approximation. These are consequences of the
main approximation results, rather than ingredients of their proofs.
Use the polar decomposition \cref{eq:polar-decomposition} with \(V>0\);
the zero measure is represented by the zero network.
\begin{cor}[Concentration for bounded parameters]
\label{cor:sampling-concentration}
Under \cref{thm:lipschitz-barron}, suppose \(\|a\|^2+|b|^2\le B^2\)
almost surely for some \(B\ge0\), and put \(M_K=|\sigma(0)|+\Lip(\sigma)R_KB\).  With
probability at least \(1-\delta\),
\[
 \|f_N-f\|_{C(K)}
 \le\frac{8V}{\sqrt N}(|\sigma(0)|+\Lip(\sigma)R_KM_2)
       +VM_K\sqrt{\frac{2\log(1/\delta)}N}.
\]
\end{cor}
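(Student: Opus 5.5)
The plan is to split the error into its mean and its fluctuation:
\[
\|f_N-f\|_{C(K)}
\le \EE\|f_N-f\|_{C(K)}
+\Bigl(\|f_N-f\|_{C(K)}-\EE\|f_N-f\|_{C(K)}\Bigr).
\]
The expectation term is bounded directly by \cref{thm:lipschitz-barron}, which gives
\(\tfrac{8V}{\sqrt N}(|\sigma(0)|+\Lip(\sigma)R_KM_2)\).
Only the deviation of the random variable \(Z:=\|f_N-f\|_{C(K)}\) above its mean remains. I would control it with McDiarmid's bounded-differences inequality, viewing \(Z\) as a function of the independent samples \(\theta_1,\dots,\theta_N\). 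Measurability of \(Z\) was settled in the proof of \cref{thm:lipschitz-barron}.

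For the bounded-difference constant, replace one sample \(\theta_j\) by \(\theta_j'\). Then \(f_N\) changes by
\[
\frac VN\bigl(h(\theta_j)\sigma(\langle a_j,\cdot\rangle-b_j)-h(\theta_j')\sigma(\langle a_j',\cdot\rangle-b_j')\bigr).
\]
By the triangle inequality, its \(C(K)\)-norm is at most \(\tfrac{2V}{N}\sup_\theta\|\sigma(\langle a,\cdot\rangle-b)\|_{C(K)}\), using \(|h|\le1\). Lipschitz continuity, together with \(|\langle a,x\rangle-b|\le\sqrt{\|a\|^2+b^2}\sqrt{\|x\|^2+1}\le BR_K\), bounds this supremum by \(M_K\). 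Since \(f\) does not change and \(Z\) is a norm, \(Z\) itself varies by at most \(c_j=2VM_K/N\).

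McDiarmid then gives
\[
\mathbb P\bigl(Z-\EE Z\ge t\bigr)\le\exp\Bigl(-\frac{2t^2}{\sum_jc_j^2}\Bigr)=\exp\Bigl(-\frac{t^2N}{2V^2M_K^2}\Bigr).
\]
Setting the right side equal to \(\delta\) yields \(t=VM_K\sqrt{2\log(1/\delta)/N}\), which is exactly the stated deviation term. Combining it with the expectation bound completes the argument.

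The only delicate points are technical. First, the boundedness hypothesis holds only almost surely, so I would modify \(p\) on a null set, which changes neither the law nor \(f\), to make the sup bound hold everywhere. Second, the vector-valued reading of \(C(K)\) should be checked. There is no real obstacle: McDiarmid applies verbatim because \(Z\) is a real function of independent variables, whatever the output space \(\outY\).
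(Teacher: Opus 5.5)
Your proposal is correct and is essentially the paper's proof: bound each atom by \(M_K\) to get the bounded-difference constant \(2VM_K/N\), apply McDiarmid's inequality to the deviation of \(\|f_N-f\|_{C(K)}\) above its mean, and bound the mean by \cref{thm:lipschitz-barron}. Your explicit constant computation and the null-set modification for the almost-sure bound supply details the paper leaves implicit.
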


\begin{proof}
Each atom has norm at most \(M_K\), so replacing one sample changes
\(\|f_N-f\|_{C(K)}\) by at most \(2VM_K/N\).  The bounded-difference
inequality bounds the excess over the expectation, and
\cref{thm:lipschitz-barron} bounds the expectation.
\end{proof}

\begin{cor}[Input truncation and finite-width approximation errors]
\label{cor:two-stage-error}
Let \(\Pi_m\) be finite-rank orthogonal projections converging strongly
to the identity.  For \(f\in C(\cH)\) and compact \(K\),
\(\|f-f\circ\Pi_m\|_{C(K)}\to0\).  If \(f=S_\sigma\Gamma\) satisfies the
hypotheses of \cref{thm:lipschitz-barron} and the same samples are used
with directions \(\Pi_ma_j\), then
\[
 \EE\|f-f_{m,N}\|_{C(K)}
 \le\Lip(\sigma)\Bigl(\int\|a\|\dd|\Gamma|\Bigr)\sup_K\|x-\Pi_mx\|
 +\frac{8V}{\sqrt N}(|\sigma(0)|+\Lip(\sigma)R_KM_2).
\]
\end{cor}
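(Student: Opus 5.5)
The statement to prove is Corollary~\ref{cor:two-stage-error}. I will split it into the qualitative truncation claim and the two-term error bound, handling the second by the triangle inequality with the truncated-direction network as an intermediate object.

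\emph{Truncation claim.} I would first show that \(\Pi_m x\to x\) uniformly on \(K\). The operators \(\Pi_m\) are orthogonal projections, so \(\|\Pi_m\|\le1\) and the family \(\{\Pi_m\}\) is equicontinuous. Strong convergence at each point of a finite \(\eps\)-net of \(K\), together with equicontinuity, then gives \(\sup_K\|x-\Pi_mx\|\to0\). The set \(K':=K\cup\bigcup_m\Pi_m(K)\) is compact: any sequence in it either stays in finitely many of the compact sets \(\Pi_m(K)\cup K\), or has indices \(m_k\to\infty\), in which case uniform convergence lets us follow a convergent subsequence of the preimages. Since \(f\) is uniformly continuous on \(K'\), we conclude \(\|f-f\circ\Pi_m\|_{C(K)}\to0\).

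\emph{Error bound.} Let \(f_N\) be the sampled network \cref{eq:polar-network} with directions \(a_j\), and let \(f_{m,N}\) be the same network with directions \(\Pi_m a_j\) and the same weights \(V h(\theta_j)/N\). I would split
\[
\EE\|f-f_{m,N}\|_{C(K)}\le\EE\|f-f_N\|_{C(K)}+\EE\|f_N-f_{m,N}\|_{C(K)}.
\]
The first term is bounded by \cref{thm:lipschitz-barron}. For the second, self-adjointness gives \(\langle\Pi_ma,x\rangle=\langle a,\Pi_mx\rangle\), so each atom changes by at most \(\Lip(\sigma)|\langle a_j,x-\Pi_mx\rangle|\le\Lip(\sigma)\|a_j\|\sup_K\|x-\Pi_mx\|\). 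Using \(|h|\le1\), we get
\[
\EE\|f_N-f_{m,N}\|_{C(K)}\le\frac VN\sum_j\EE\|a_j\|\,\Lip(\sigma)\sup_K\|x-\Pi_mx\|=\Lip(\sigma)\Bigl(\int\|a\|\,\dd|\Gamma|\Bigr)\sup_K\|x-\Pi_mx\|,
\]
because \(V\,\EE_p\|a\|=\int\|a\|\,\dd|\Gamma|\). This integral is finite by Cauchy--Schwarz and \(M_2<\infty\).

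The only step needing real care is the uniform convergence on \(K\) (the equicontinuity argument). Measurability of the sup-norms is already settled in the proof of \cref{thm:lipschitz-barron}, and the same continuity argument applies to the truncated atoms since \(a\mapsto\Pi_ma\) is continuous.
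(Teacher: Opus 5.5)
Your proof is correct and gives the same bound, but it splits the error through a different intermediate object than the paper does. The paper writes \(f-f_{m,N}=(f-f\circ\Pi_m)+(f\circ\Pi_m-f_{m,N})\). Because \(\langle\Pi_ma,x\rangle=\langle a,\Pi_mx\rangle\), the integral network with projected directions is exactly \(f\circ\Pi_m\). The first term is therefore the deterministic truncation error from the qualitative claim, bounded by the Lipschitz estimate inside the integral representation. The second term is the sampling error of the truncated target. The paper handles it by rerunning the proof of \cref{thm:lipschitz-barron} on the original parameter probability space with directions \(\Pi_ma\), whose second moment does not exceed \(M_2\).

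You instead pass through the untruncated sampled network \(f_N\). This lets you apply \cref{thm:lipschitz-barron} verbatim, and you bound the truncation pathwise between two finite networks before taking expectations. Your route never has to re-verify the sampling hypotheses for a projected measure. The paper's split has the advantage that its truncation term is deterministic and is literally the quantity \(\|f-f\circ\Pi_m\|_{C(K)}\) from the first assertion, which is what makes the corollary a clean separation of input truncation from finite-width error; in your split the truncation term is random and only its expectation matches.

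For the qualitative claim, your compact set \(K\cup\bigcup_m\Pi_m(K)\) combined with uniform continuity and the paper's sequential contradiction argument rest on the same fact: \(\Pi_mx_m\to x\) whenever \(x_m\to x\) in \(K\).
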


\begin{proof}
A finite-net argument gives \(\sup_K\|x-\Pi_mx\|\to0\).  If
\(f\circ\Pi_m\) did not converge uniformly on \(K\), a sequence
\(x_m\in K\) with \(|f(\Pi_mx_m)-f(x_m)|\ge\eps\) would have a
convergent subsequence \(x_m\to x\), along which \(\Pi_mx_m\to x\) as
well, contradicting continuity of \(f\) at \(x\).
The triangle inequality separates truncation from sampling; in the
integral representation the Lipschitz bound gives the truncation term,
projecting directions does not increase their second moment, and the
sampling proof applies on the original parameter probability space.
\end{proof}

\section{The finite-dimensional case and the dilation obstruction}
\label{app:finite-dim}

This appendix compares the weighted construction with finite-dimensional
ridgelet analysis.  It also explains why a construction based on dilating
a Gaussian spectral measure cannot produce a finite coefficient measure
in infinite dimension.

\subsection{A Fourier calculation for comparison}
\label{subsec:finite-ridgelet}

For comparison only, take Lebesgue input and direction measures on
\(\R^m\), and write
\begin{equation}
 R^{\rm Leb}_\rho f(a,b)=\int_{\R^m}f(x)\rho(\langle a,x\rangle-b)\,\dd x.
 \label{eq:finite-ridgelet}
\end{equation}
For Schwartz \(f,\rho\), Fubini and a translation in the bias give
\begin{equation}
 (R^{\rm Leb}_\rho f)^\sharp(a,\omega)
 =\rho^\sharp(-\omega)\widehat f(\omega a).
 \label{eq:finite-slice}
\end{equation}
If \(\sigma\) is Schwartz and \(\rho\) is band pass, Parseval in
the bias followed by \(\xi=\omega a\) gives the iterated synthesis
identity
\begin{equation}
 S^{\rm Leb}_\sigma R^{\rm Leb}_\rho f=K_{\sigma,\rho}f,
 \quad K_{\sigma,\rho}=(2\pi)^{m-1}\int_\R
 \sigma^\sharp(\omega)\rho^\sharp(-\omega)|\omega|^{-m}\,\dd\omega.
 \label{eq:finite-reconstruction}
\end{equation}
The transformed integrals are absolute: the frequency factor is supported
away from zero and \(\widehat f\) is integrable.  This calculation explains
the scaling factor, but the Hilbert-space construction in the main text
is defined independently of it.

\subsection{The finite-dimensional reference measure}
\label{subsec:finite-reference-measure}

On \(\cH=\R^m\) with \(P=I\) and \(0<\alpha<m\), Tonelli and the Gaussian
density give, for \(a\ne0\),
\[
 \begin{aligned}
  \frac{\dd\nu_\alpha}{\dd a}(a)
  &=(4\pi)^{-m/2}\int_0^\infty s^{(\alpha-m)/2-1}e^{-\|a\|^2/(4s)}\,\dd s\\
  &=(4\pi)^{-m/2}\Bigl(\frac{\|a\|^2}4\Bigr)^{(\alpha-m)/2}
   \Gamma\Bigl(\frac{m-\alpha}2\Bigr),
 \end{aligned}
\]
by the substitution \(u=\|a\|^2/(4s)\).  Hence
\begin{equation}
  \nu_\alpha(\dd a)=c_{m,\alpha}\|a\|^{\alpha-m}\,\dd a,
  \qquad
  c_{m,\alpha}=2^{-\alpha}\pi^{-m/2}\Gamma\Bigl(\frac{m-\alpha}2\Bigr).
  \label{eq:finite-density}
\end{equation}
The density is locally integrable at zero because its radial integral
is proportional to \(\int_0^rt^{\alpha-1}\dd t\).  The restriction
\(\alpha<m\) is needed for convergence at \(s=\infty\); at
\(\alpha=m\) the mixture diverges logarithmically, and Lebesgue measure,
the classical choice of direction measure, must be chosen separately
rather than obtained by substituting \(\alpha=m\).  In infinite dimension
no such restriction arises, which is the content of
\cref{lem:homogeneous-mixture}.

\begin{rmk}[Comparison with familiar spaces]
\label{rmk:sobolev-comparison}
In finite dimension, \cref{eq:finite-density} gives
\[
\|f\|_{\cE_\alpha}^2=c_{m,\alpha}\int_{\R^m}|\widehat{fp_Q}(\xi)|^2
|\xi|^{\alpha-m}\dd\xi,\qquad 0<\alpha<m,
\]
where \(p_Q\) is the
Gaussian density: a homogeneous Sobolev norm of order \((\alpha-m)/2\) of
the weighted density \(fp_Q\).  In infinite dimension there is no
Lebesgue measure to remove the weight, and we do not identify
\(\cE_\alpha\) with a Sobolev space.  Likewise \(\cK_\alpha\) is a closed
subspace of \(L^2(\nu_\alpha)\) but is not asserted to be a reproducing
kernel Hilbert space.
\end{rmk}

\subsection{Filtered backprojection}

\begin{cor}[The frame operator in finite dimension]
\label{cor:finite-backprojection}
Let \(\cH=\R^m\), \(0<\alpha<m\), \(\nu_\alpha\) as in
\cref{eq:finite-density}, and let \(p\) be a nondegenerate Gaussian
density.  Assume \(f\in L^2(p\dd x)\) with \(g:=fp\in\cS(\R^m)\).  Then
\(f\in\cD_\alpha\), and the frame operator, written as a distribution
against the pivot measure \(p\dd x\), has the representative
\[
 t_f=\int_{\R^m}e^{i\langle x,\xi\rangle}\widehat g(\xi)\,\nu_\alpha(\dd\xi)
 =k_{m,\alpha}(-\Delta)^{-(m-\alpha)/2}g,
 \qquad k_{m,\alpha}=(2\pi)^mc_{m,\alpha}.
\]
For a band-pass \(\rho\), the synthesis \(S_\rho R_\rho f\) is represented
by \(\iiprod{\rho,\rho}_{\alpha}t_f\), and, distributionally,
\[
 f=\frac{p^{-1}}{k_{m,\alpha}\iiprod{\rho,\rho}_{\alpha}}
             (-\Delta)^{(m-\alpha)/2}S_\rho R_\rho f .
\]
If instead the direction measure is Lebesgue measure and \(\alpha=m\),
the multiplier is one and \(k=(2\pi)^m\), so that
\(f=(2\pi)^{-m}(\iiprod{\rho,\rho}_{m})^{-1}p^{-1}S_\rho R_\rho f\).
\end{cor}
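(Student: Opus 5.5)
The first step is to show \(f\in\cD_\alpha\). Since \(g=fp\in\cS(\R^m)\), we have \(F_Qf(\xi)=\int f(x)e^{-i\langle x,\xi\rangle}p(x)\,\dd x=\widehat g(\xi)\), which is Schwartz. By \cref{eq:finite-density}, \(\int|\widehat g|^2\,\dd\nu_\alpha=c_{m,\alpha}\int|\widehat g(\xi)|^2\|\xi\|^{\alpha-m}\,\dd\xi\). This integral is finite: the weight is locally integrable at zero because \(\alpha>0\), and \(\widehat g\) decays rapidly at infinity. Since \(\widehat g\) is bounded, the same argument gives \(\widehat g\in L^1(\nu_\alpha)\).

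Next I identify the frame operator. With \(\widehat g\in L^1(\nu_\alpha)\), \cref{thm:C}(iii) says that \(T_\alpha f\), paired through \(\mu_Q=p\,\dd x\), is represented by the bounded continuous function
\[
t_f=f_{\widehat g}=\int e^{i\langle x,\xi\rangle}\widehat g(\xi)\,\nu_\alpha(\dd\xi)=c_{m,\alpha}\int e^{i\langle x,\xi\rangle}\|\xi\|^{\alpha-m}\widehat g(\xi)\,\dd\xi.
\]
Under the convention \(\widehat g(\xi)=\int g\,e^{-i\langle x,\xi\rangle}\,\dd x\), inverse Fourier inversion carries the factor \((2\pi)^{-m}\). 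Hence \(t_f=(2\pi)^mc_{m,\alpha}\,\mathcal F^{-1}\bigl[\|\xi\|^{-(m-\alpha)}\widehat g\bigr]=k_{m,\alpha}(-\Delta)^{-(m-\alpha)/2}g\). Here the Riesz potential is defined by the Fourier multiplier \(\|\xi\|^{-(m-\alpha)}\), which is legitimate on Schwartz \(g\) because the multiplier is locally integrable when \(m-\alpha<m\).

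Then \cref{thm:C}(i) gives \(S_\rho R_\rho f=\iiprod{\rho,\rho}_{\alpha}T_\alpha f\), represented by \(\iiprod{\rho,\rho}_{\alpha}t_f\). For a band-pass \(\rho\) I would additionally note, via \cref{lem:weak-equals-strong} or \cref{thm:A}(ii), that this is the ordinary iterated synthesis. To invert, apply \((-\Delta)^{(m-\alpha)/2}\), the multiplier \(\|\xi\|^{m-\alpha}\), to \(t_f\); in the Fourier domain it cancels the Riesz multiplier exactly and recovers \(k_{m,\alpha}g\). Dividing by \(k_{m,\alpha}\iiprod{\rho,\rho}_{\alpha}\) and by \(p\), which is positive everywhere, gives the stated formula for \(f\).

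The main obstacle is making this inversion precise distributionally. The function \(\|\xi\|^{\alpha-m}\widehat g\) is integrable but not smooth at the origin, so \(t_f\) is not Schwartz and \((-\Delta)^{(m-\alpha)/2}t_f\) has to be interpreted. I would define it via the Fourier transform of the tempered distribution \(t_f\). Since \(\widehat{t_f}=k_{m,\alpha}\|\xi\|^{\alpha-m}\widehat g\) is a locally integrable tempered function, multiplying by \(\|\xi\|^{m-\alpha}\) is well defined at the level of functions and yields \(k_{m,\alpha}\widehat g\). This is the sense of "distributionally" in the statement.

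Finally, for the Lebesgue case, I would repeat the computation with \(\nu=\dd a\), which is homogeneous of degree \(m\). The multiplier then becomes \(\|\xi\|^0=1\), so \(t_f=(2\pi)^m g\). The same steps, together with \cref{eq:finite-reconstruction}-type constants with \(\alpha=m\), give \(f=(2\pi)^{-m}(\iiprod{\rho,\rho}_{m})^{-1}p^{-1}S_\rho R_\rho f\). Admissibility of \(\rho\) for \(\alpha=m\) holds because \(\rho\) is band-pass.
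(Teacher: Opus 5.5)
Your proposal is correct and follows essentially the same route as the paper's proof. You identify \(F_Qf=\widehat g\), use local integrability of \(\|\xi\|^{\alpha-m}\) with Schwartz decay to get \(\widehat g\in L^1\cap L^2(\nu_\alpha)\), represent \(T_\alpha f\) by \(t_f\) through \cref{thm:C}(iii) with \(\widehat{t_f}=k_{m,\alpha}\|\xi\|^{\alpha-m}\widehat g\), and cancel this locally integrable multiplier at the level of functions, using direct Fourier inversion \(t_f=(2\pi)^m g\) in the Lebesgue case.
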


\begin{proof}
Here \(F_Qf=\widehat g\).  The weight \(\|\xi\|^{\alpha-m}\) is locally
integrable and \(\widehat g\) is Schwartz, so \(\widehat g\in L^1\cap L^2(\nu_\alpha)\)
and Fourier inversion gives
\(\widehat{t_f}=(2\pi)^mc_{m,\alpha}\|\xi\|^{\alpha-m}\widehat g\).  For
smooth compactly supported \(h\), Fubini identifies \(\int t_f\overline h\,p\dd x\)
with \(\langle f,h\rangle_{\cE_\alpha}\), which is \cref{thm:C}(iii).
The multiplier \(\|\xi\|^{m-\alpha}\) cancels the locally integrable
multiplier, proving the formula in distributions.  For Lebesgue
direction measure, direct Fourier inversion gives \(t_f=(2\pi)^mfp\).
\end{proof}

The two weights are visible in the formula: division by \(p\) removes the
input weight, and \((-\Delta)^{(m-\alpha)/2}\) removes the homogeneous
direction weight.  With Lebesgue measure on both sides the filter is a
constant, which is the classical statement \cref{eq:finite-reconstruction}.

\subsection{Why the spectral measure cannot be dilated}
\label{subsec:dilation-obstruction}

In finite dimension, the Fourier-slice identity \cref{eq:finite-slice}
says that the bias Fourier slice of the ridgelet coefficient at
frequency \(\omega\) is \(\rho^\sharp(-\omega)\) times the spectral
measure of \(f\) dilated by \(1/\omega\).  One might try to define a
coefficient in infinite dimension by this prescription, starting from a
target with a finite spectral measure.  The following proposition shows
that this is impossible for a Gaussian spectral measure, that is, for the
target \(f_W\) of \cref{ex:closed-form}.

\begin{prop}[Dilation obstruction]
\label{prop:dilation-obstruction}
Let \(\dim \cH=\infty\) and let \(W\) be injective, positive, self-adjoint,
and trace class, with eigenvectors \(e_j\) and eigenvalues \(w_j>0\).
\begin{enumerate}[label=\textup{(\roman*)}]
\item
For \(t>0\), the Borel sets
\(E_t:=\{x:\lim_{n\to\infty}\frac1n\sum_{j=1}^n\langle x,e_j\rangle^2/w_j=t\}\)
are pairwise disjoint and \(\mu_{tW}(E_t)=1\).  Consequently a
sigma-finite measure dominates \(\mu_{tW}\) for at most countably many
\(t\).
\item
Let \(r\) be a bounded Borel function on \(\R\) with
\(\{r\ne0\}\) of positive Lebesgue measure.  There is no finite complex
Borel measure \(\Gamma\) on \(\cH\times\R\) whose bias slices satisfy
\[
 \begin{aligned}
 \Gamma^+_\omega(E)&:=\int_{E\times\R}e^{i\omega b}\,\Gamma(\dd a,\dd b)\\
 &=r(\omega)\,(D_{1/\omega})_\#\mu_{W}(E)
 \end{aligned}
\]
for almost every \(\omega\ne0\), \(E\in\cB(\cH)\).
\end{enumerate}
\end{prop}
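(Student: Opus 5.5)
For part (i), the plan is to realize \(\mu_{tW}\) through the series construction of the appendix: if \(x\sim\mu_{tW}\), the coordinates \(\langle x,e_j\rangle/\sqrt{w_j}=\sqrt t\,Z_j\) are independent with \(Z_j\) standard normal. The strong law of large numbers then gives \(\frac1n\sum_{j\le n}\langle x,e_j\rangle^2/w_j\to t\) almost surely, so \(\mu_{tW}(E_t)=1\).

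Each \(E_t\) is Borel, because the partial averages are continuous functions of \(x\) and \(E_t\) is the set where their limit exists and equals \(t\). The sets are pairwise disjoint, since a limit is unique.

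For the countability claim, let \(\lambda\) be sigma-finite and pass to an equivalent finite measure \(\lambda'\). If \(\lambda\) dominates \(\mu_{tW}\), then \(\lambda'(E_t)>0\), because \(\mu_{tW}(E_t)=1\) and absolute continuity forces positivity. A finite measure can give positive mass to only countably many disjoint sets, which proves (i).

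For part (ii), I argue by contradiction. Suppose \(\Gamma\) exists, and let \(\lambda\) be the first marginal of \(|\Gamma|\), a finite measure on \(\cH\). Each slice \(\Gamma^+_\omega\) is then absolutely continuous with respect to \(\lambda\): if \(\lambda(E)=0\), then \(|\Gamma|(E\times\R)=0\), so \(\Gamma^+_\omega(E)=0\).

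Let \(\omega\) range over the set where the slice identity holds and \(r(\omega)\ne0\); this set has positive Lebesgue measure by hypothesis. For such \(\omega\), the measure \((D_{1/\omega})_\#\mu_W=\mu_{\omega^{-2}W}\) is dominated by \(\lambda\).

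The map \(\omega\mapsto\omega^{-2}\) sends a set of positive Lebesgue measure to an uncountable set of values \(t\). Indeed, it is locally bi-Lipschitz away from \(0\), so the image has positive measure and is therefore uncountable. Hence \(\lambda\) dominates \(\mu_{tW}\) for uncountably many \(t\), contradicting (i).

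The main point requiring care is the identification \((D_{1/\omega})_\#\mu_W=\mu_{\omega^{-2}W}\), which holds for either sign of \(\omega\) since \(\mu_W\) is symmetric. I also need to make sure the "almost every \(\omega\)" quantifier is used on a set of full measure intersected with \(\{r\ne0\}\), so that the set of \(\omega\) above genuinely has positive measure. I expect the step most needing care to be the Borel measurability of \(E_t\) together with the ergodic-type separation in (i); after that, (ii) is a short domination argument.
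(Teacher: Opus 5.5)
Your proposal is correct and follows essentially the same route as the paper. For (i) you use the strong law of large numbers on the independent normalized coordinates, and for (ii) you observe that the finite marginal \(E\mapsto|\Gamma|(E\times\R)\) dominates every slice, hence dominates \(\mu_{\omega^{-2}W}\) for a positive-measure set of \(\omega\) and so for uncountably many \(t\), contradicting (i). The only differences are cosmetic: you count disjoint sets of positive mass by passing to an equivalent finite measure instead of the paper's exhaustion with rational thresholds, and you make explicit why \(\omega\mapsto\omega^{-2}\) produces uncountably many values.
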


\begin{proof}
(i) The strong law of large numbers for the independent normalized
Gaussian coordinates gives \(\mu_{tW}(E_t)=1\), and the sets are
disjoint for distinct \(t\).  If a sigma-finite measure \(\nu\) dominated
\(\mu_{tW}\) for uncountably many \(t\), then \(\nu(E_t)>0\) for
uncountably many disjoint sets; intersecting with an exhaustion by sets
of finite \(\nu\)-measure and using positive rational thresholds shows
that a sigma-finite measure has at most countably many disjoint sets of
positive measure, a contradiction.

(ii) If \(\Gamma\) existed, the finite measure
\(\mu(E):=|\Gamma|(E\times\R)\) would satisfy \(|\Gamma^+_\omega(E)|\le\mu(E)\)
for every \(\omega\), hence would dominate
\((D_{1/\omega})_\#\mu_{W}=\mu_{W/\omega^2}\) for almost every
\(\omega\) with \(r(\omega)\ne0\).  These are uncountably many distinct
values of \(t=\omega^{-2}\), contradicting (i).
\end{proof}

The finite-variation hypothesis is essential: the marginal of a
sigma-finite measure need not be sigma-finite, and the proposition makes
no claim about coefficient objects that are not measures.  The
construction of \cref{sec:transform} avoids the obstruction because
\(\nu_\alpha\) enters as an integrator in the parameter space and is never
required to dominate the dilates of a spectral measure; the targets it
represents exactly are those with a spectral density with respect to
\(\nu_\alpha\) (\cref{thm:A}), and Gaussian targets are reached through
the frame operator (\cref{ex:closed-form}) or by approximation
(\cref{thm:D}).

\section{Numerical methods and additional results}
\label{app:numerics}

This appendix describes the numerical procedures behind \cref{sec:numerics}.
It records how parameters are drawn, how integrals and function-space
inputs are discretized, and how the reported errors are measured.
The figures and principal results are in \cref{fig:numerics,tab:numerics}.

\subsection{Reproduction and error measurement}

The supplementary Python script \texttt{experiments.py} reproduces all
numbers and figures with the fixed seed \(20260908\).  The full run uses
widths \(N=4,8,\ldots,4096\), and the same sequence of sampled neurons is
accumulated as the width increases within each trial.  The test set is
fixed within each resolution and experiment.  At each width we record
the largest absolute error on that finite set and average over trials;
the slopes in \cref{tab:numerics} are least-squares slopes of
\(\log(\text{mean error})\) against \(\log N\).
Experiments 1 and 3(a) use \(20\) trials, and Experiment 2 uses \(10\).
The errors measure finite test sets rather than the supremum over the
underlying infinite compact family of inputs.

\subsection{Gaussian directions and coefficient measures}

For Experiment 1, independent coordinates of a direction have the form
\(a_j=Z_j/j\), with standard normal \(Z_j\), for \(1\leq j\leq d\).
The \(200\) test inputs have coordinates \(x_j=u_j/j\), with independent
\(u_j\) uniform in \([-1,1]\).  The target is evaluated from its exact
finite-dimensional formula
\(\sqrt{\sum_{j=1}^d j^{-2}x_j^2/(2\pi)}\).

For Experiment 2, the same rule draws \(100\) test inputs.
The matrices are diagonal, with \(p_j=q_j=j^{-2}\), \(w_j=1\), and
\((S_W)_{jj}=q_j/(1+q_j)\); determinants are evaluated by summing logarithms.
The target \(6f_g\) is evaluated by the one-dimensional quadrature in
\cref{eq:filtered-gaussian-target}.  In the mixture integral
\(\int_0^\infty h(s)s^{-1/2}\,\dd s\), the substitution \(s=r^2\)
on \((0,1)\) removes the endpoint singularity, and the remaining
integral is evaluated on \((1,\infty)\).

Parameters are
drawn from \(|\gamma_g|\lambda_1/V\) as follows: the bias given \(a\) is
sampled exactly by rejection from a mixture of Gaussian moment
densities; the direction is sampled by sampling--importance resampling
from a pool of \(2\times10^5\) proposals \((s,a)\) with
\(s\sim\theta_0\,\mathrm{BetaPrime}(\tfrac12,\tfrac32)\),
\(a\sim\mu_{2sP}\), and weight \((1+\kappa_W(a))^{-2}(1+s/\theta_0)^2\),
where \(\theta_0=(2\tr PS_W)^{-1}\) (effective sample sizes between
\(2.8\times10^3\) and \(9.7\times10^3\)).  The total variation \(V\) is
computed by quadrature from \cref{eq:gaussian-layer-integral}.

The conditional bias law can be made explicit.  Writing
\(u=1+\kappa_W(a)\) and \(b=\sqrt u\,z\), the density of \(z\) is
\(|\mathrm{He}_4(z)|\phi_1(z)/m_4\).  Rejection sampling uses the proposal
\[
 \tfrac14\phi_1(z)+\tfrac12z^2\phi_1(z)
      +\tfrac1{12}z^4\phi_1(z)
 =\frac{z^4+6z^2+3}{12}\phi_1(z)
\]
and accepts with probability
\(|\mathrm{He}_4(z)|/(z^4+6z^2+3)\).
This gives the exact conditional law.  Directions are drawn with
replacement from the weighted proposal pool, so this part approximates
the desired continuous direction law.  The pool is reused across trials
at each resolution; the reported trial variation is conditional on that
pool and does not measure its approximation error.
The code uses the equivalent bias convention \(\langle a,x\rangle+c\)
with \(c=-b\); the even bias density and its even sign factor give the
same synthesis integral.

The variation integral is evaluated using
\[
 \int (1+\kappa_W(a))^{-2}\,\nu_1(\dd a)
 =\int_0^\infty\!\int_0^\infty
 t e^{-t}\prod_{j=1}^d(1+4ts p_j(S_W)_{jj})^{-1/2}
 \,\dd t\,s^{-1/2}\,\dd s.
\]
Multiplication by \(D^{-1/2}\sqrt{2\pi}\,m_4\) gives \(V\).
The value \(m_4=\mathbb E|\mathrm{He}_4(Z)|\) is computed by quadrature,
split at the four real roots of \(\mathrm{He}_4\); the numerical integral
is restricted to \([-12,12]\).

\subsection{Dirichlet layer and transform verification}

For Experiment 3(a), the interval grid is
\(t_i=(i-1/2)/d\), \(1\leq i\leq d\), and inner products and
integrals use the midpoint rule with weight \(1/d\).
Each of the \(100\) test inputs is a sum
\(x=\sum_{n=1}^{30}(u_n/n)e_n\), with independent uniform
\(u_n\in[-1,1]\) and \(e_n(t)=\sqrt2\sin(n\pi t)\).
The reference value of \(F_1\) uses the Green matrix
\(g(t_i,t_j)\) on the same grid.  Draws from \(w/V\) use inversion of a
cumulative distribution computed by the trapezoidal rule on \(20001\)
uniform grid points, with linear interpolation.  The direction at a drawn
point \(y\) is the Green function \(g(y,\cdot)\), evaluated on the
input grid.  The plateau at \(d=64\) therefore includes the error in
the quadrature reference, rather than measuring a failure of the
finite-width approximation rate.

For Experiment 3(b), the closed-form transform uses \(2048\) midpoints
for inner products and \(400\) midpoints for the outer integral in \(y\).
The direct Monte Carlo calculation uses \(2\times10^5\) independent draws
\[
 x=\sum_{n=1}^{d'}\lambda_n^{1/2}\zeta_ne_n,
 \qquad \lambda_n=(1+\pi^2n^2)^{-1},\qquad
 d'\in\{8,64,512\},
\]
with independent standard normal \(\zeta_n\).
For each draw, \(F_1(x)\) and \(\langle a,x\rangle\) are evaluated
on the \(2048\)-point grid.  The reported standard error is the
empirical standard deviation of the integrand divided by the square
root of the number of draws.  Agreement with the closed-form values
is a check at the specified direction and biases; it does not eliminate
quadrature error or the truncation of the Gaussian input expansion.

\section{Extended overview of related work}
\label{sec:extended-related}

This appendix expands \cref{subsec:related}.  It
places the Gaussian-weighted ridgelet transform among four bodies of work:
harmonic analysis on infinite-dimensional spaces, universal approximation by
networks with infinite-dimensional inputs, approximation and complexity
theory for neural operators, and the analysis of Gaussian measures and
functional data.  The contributions of this paper are the explicit
coefficient \(W_\rho G\) for targets with a \(\nu_\alpha\)-spectral density,
the Plancherel identity and the frame operator \(T_\alpha\) on the Hilbert
space \(\cE_\alpha\), weak reconstruction for tempered non-polynomial
activations, the dimension-free \(N^{-1/2}\) rate for Lipschitz activations,
and the three infinite-dimensional examples.  We do not claim priority for
integrating over an infinite-dimensional space with a Gaussian measure, for
transforms of ridgelet or wavelet type on Wiener space, or for universal
approximation with infinite-dimensional inputs; each has a literature
summarized below.

\subsection{Ridgelet transforms and integral representations}
\label{subsec:ext-ridgelet}

The ridgelet transform of \citet{Murata1996}, \citet{Candes.PhD}, and
\citet{Candes1999a} represents a function on \(\R^m\) as a superposition of
ridge functions \(x\mapsto\sigma(\langle a,x\rangle-b)\), and it is the
continuous counterpart of a depth-two network with a coefficient measure on
the parameter space.  Its distributional extension admits unbounded
activations such as ReLU \citep{Sonoda2017acha}; weighted Sobolev activation
spaces and the geometry of the synthesis null space are developed in
\citet{Sonoda2021ghost}, where the infinite-dimensional null space refers to
the coefficient space over a finite-dimensional input; the Fourier-slice
method yields transforms for group convolutional networks and non-Euclidean
inputs \citep{Sonoda2022gconv,Sonoda2024fourier}; and complexity bounds
derived from integral representations are surveyed in \citet{Kurkova2012}.
The Hilbert-space construction in \citet[Section~4]{Sonoda2024fourier}
integrates over a finite-dimensional subspace with Lebesgue measure.  In
this paper, we replace that integral by the Gaussian input measure
\(\mu_Q\) and the homogeneous direction measure \(\nu_\alpha\) on the
whole Hilbert space, and we use the transform as a tool for networks: it
yields the coefficients, the universality theorem, the Barron-type rate, and
the examples.

\subsection{Harmonic analysis on infinite-dimensional spaces}
\label{subsec:ext-harmonic}

\paragraph{Fourier transforms of measures.}
The Fourier transform of a finite Borel measure \(\nu\) on a Hilbert space
\(H\) is its characteristic functional
\(\widehat\nu(\xi)=\int_H e^{-i\langle x,\xi\rangle}\,\dd\nu(x)\).
Gaussian measures are characterized by this transform, and the theorems of
Minlos and Sazonov describe which positive-definite functions are transforms
of Radon measures on the dual of a nuclear space or on a Hilbert space
\citep{Bogachev1998book,DaPrato2014}.  Bochner's theorem fails in infinite
dimensions: \(\xi\mapsto\exp(-\|\xi\|^2/2)\) is not the transform of a finite
Borel measure on an infinite-dimensional Hilbert space, a fact also relevant
to kernels on function spaces \citep{WynneDuncan2022}.  In this paper, we
use the characteristic functional of the finite measure \(f\,\mu_Q\) only
along lines through the origin, \(\omega\mapsto\omega a\) for a fixed
direction \(a\) (rays, in the terminology of tomography), and relate it to
the one-dimensional Fourier transform in the bias variable.  The transform of
the measure is therefore used only through its restriction to such lines, and
this is why coefficient measures cannot be obtained from a weak Fourier-side
identity alone.

\paragraph{Fourier--Wiener, Segal--Bargmann, and white-noise transforms.}
\citet{CameronMartin1945,CameronMartin1947} introduced the Fourier--Wiener
transform of analytic and of square-integrable functionals on Wiener space.
\citet{Segal1956} identified the Gaussian \(L^2\) space with the symmetric
Fock space, the setting of the infinite-dimensional Segal--Bargmann transform,
and \citet{Gross1967} developed potential theory on abstract Wiener space.
In white-noise analysis, \citet{Kuo1989} defined a Fourier transform of
generalized functionals on the Gel'fand triple
\((\cS)\subset L^2(\mu)\subset(\cS)^*\) and related it to the Gross and
Volterra Laplacians; \citet{Lee1991} characterized measures through this
transform; the monograph of \citet{HidaKuoPotthoffStreit1993} collects the
Laplacians and the Fourier and Fourier--Mehler transforms.
\citet{JorgensenSongTian2023} construct transforms between Gaussian fields
and reproducing kernel Hilbert spaces and derive from them a Fourier
transform for general Gaussian processes.  These transforms act on
functionals of a Gaussian variable, and the Fourier--Wiener transform also
enters the inversion of the Gaussian Radon transform by \citet{Holmes2013}.
In this paper, we use only the one-dimensional Fourier transform and the
Gaussian characteristic functional, and the Hilbert space \(\cE_\alpha\) is
defined by a spectral norm rather than through a Fock decomposition.

\paragraph{Gaussian harmonic analysis.}
Harmonic analysis with respect to a Gaussian measure is organized around the
Hermite expansion, the Ornstein--Uhlenbeck semigroup, and Meyer's inequalities
for Gaussian Sobolev norms; see the survey of \citet{Bogachev2018OU} for the
infinite-dimensional theory and \citet{UrbinaRomero2019} for Gaussian
analogues of Littlewood--Paley and maximal functions.  Wiener chaos and the
totality of Hermite polynomials \citep{Janson1997book} underlie the density
arguments of this paper.  The Ornstein--Uhlenbeck semigroup and the
Fourier--Mehler transform play, for Gaussian measures, the role that
dilations play for Lebesgue measure.  In this paper, we do not use a dilation
group on \(\cH\); the scale of a direction \(a\) enters only through the
homogeneity of \(\nu_\alpha\).

\paragraph{Gaussian Radon transforms.}
Gaussian Radon transforms provide a related approach to integration over
infinite-dimensional hyperplanes.  Early work on Gaussian surface measures
is due to \citet{Hertle1980}; \citet{MihaiSengupta2007} develop the
Radon--Gauss transform and an inversion procedure.  The abstract Wiener
space construction of \citet{HolmesSengupta2012} associates Gaussian
probability measures with closed affine Cameron--Martin subspaces.
Support theorems are established in \citet{BecnelSengupta2012} and
\citet{HolmesSengupta2012}, while \citet{Holmes2013} uses disintegration
and the Segal--Bargmann transform for inversion.
\citet{Becnel2019} gives another inversion procedure by taking limits of
finite-dimensional Radon inversions under continuity and growth assumptions.
\citet{HolmesSengupta2014} study Brownian conditioning and Fock space
decompositions in classical Wiener space, and
\citet{HolmesSengupta2015} interpret kernel ridge regression through the
Gaussian Radon transform.

The relation to our analysis transform can be made explicit.  For
\(X\sim\mu_Q\), \(a\ne0\), and bounded continuous \(f\), set
\(v_a=\langle Qa,a\rangle\), let \(p_{v_a}\) be the density of
\(N(0,v_a)\), and write
\(\mathcal G_Qf(a,t)=\mathbb E[f(X)\mid\langle a,X\rangle=t]\),
using the Gaussian conditional kernel.  Disintegration gives
\[
 R_\rho f(a,b)=\int_\R \rho(t-b)\,p_{v_a}(t)\,
                    \mathcal G_Qf(a,t)\,\dd t.
\]
Thus \(R_\rho\) filters the Gaussian marginal-density-weighted Radon
transform in the bias variable.  Formally taking \(\rho=\delta\)
recovers \(p_{v_a}(b)\mathcal G_Qf(a,b)\), with this identity understood
distributionally in \(b\).  The Dirac filter, however, has infinite
self-admissibility integral
\(\int_\R|\omega|^{-\alpha}\,\dd\omega\) for every \(\alpha>0\),
so it lies outside our Plancherel theorem.  As a synthesis activation,
\(\sigma=\delta\) is covered by \cref{thm:tempered-reconstruction}
with a band-pass analysis filter, in the regularized dual-space sense.
The comparison above concerns continuous ambient linear measurements;
the Banach-space and general Cameron--Martin formulations have a broader
geometric scope.  In this paper, our contribution concerns the homogeneous
direction measure, the resulting ridgelet frame identities, and their
connection to finite-width neural approximation, rather than a generalization
of all Gaussian Radon results.

\paragraph{Wavelets and ridgelets on Wiener space.}
Transforms of wavelet and ridgelet type on the classical Wiener space
\(W=C_0([0,1];\R^d)\) with Cameron--Martin space \(\cH_{\mathrm{CM}}\) have
been constructed by Martias.  In \citet{Martias2012ridgelet}, a profile
\(\Psi\in L^2(\mu)\) of the form \(\Psi(\omega)=\eta(\|\omega\|_{\cH_{\mathrm{CM}}})\)
with \(\int_0^\infty t^3|\eta(t)|^2\,\dd t<\infty\) generates the atoms
\(\Psi_{a,\omega_0,\omega_1}(\omega)
 =a^{-1/2}\,\Psi\bigl(a^{-1}[\langle\omega,\omega_0\rangle_{\cH_{\mathrm{CM}}}\,\omega_0-\omega_1]\bigr)\)
with scale \(a>0\), orientation \(\omega_0\), and location \(\omega_1\); the
transform of \(f\in L^2(\mu_\delta)\) is the pairing with these atoms, and the
parameters are integrated against \(a^{-2}\,\dd a\), a Wiener measure with a
variance parameter in \(\omega_0\), and the Wiener measure in \(\omega_1\).
The resulting identity equates the integrated product of two transforms with
a constant multiple of the inner product in \(L^2(\mu_{\delta/\sqrt2})\); they
note that it is not a resolution of the identity: the pairing is taken in
\(L^2(\mu_\delta)\) while the identity returns the inner product of a
Gaussian space with a different variance, so synthesizing the coefficients
with the same atoms does not reproduce \(f\), and they replace reconstruction
by an approximation that converges as \(\delta\to0\).  They also derive a
characterization of H\"older regularity from the decay of the transform in
the scale.  The companion works treat a wavelet transform for
Wiener functionals with applications to densities of diffusions and
backward stochastic differential equations \citep{Martias2014wavelet}, tail
properties of the ridgelet transform \citep{Martias2014subexp}, and
multiresolution analyses of \(L^2(\mu)\) built from integer translations along
one Cameron--Martin direction and dyadic dilations of the argument, with
orthonormal wavelet bases obtained through the Fourier--Wiener transform
\citep{Martias2022wavelets}.  In this paper, the construction differs in four
respects.  The atom is a neuron \(\sigma(\langle a,x\rangle-b)\) whose profile
is a function of one real variable, so that synthesis produces a network;
the parameter measure is the homogeneous measure \(\nu_\alpha\) on \(\cH\) in
the direction and Lebesgue measure in the bias, rather than a product of a
scale measure and Gaussian measures in orientation and location; the
Plancherel identity holds on a single space \(\cE_\alpha\), with the
correction induced by the weights identified as the frame operator
\(T_\alpha\) and undone explicitly; and the admissibility conditions are
formulated on filter pairs in weighted Sobolev spaces, which admits tempered
non-polynomial activations.  We therefore regard \citet{Martias2012ridgelet}
as the closest precedent for a ridgelet transform on an infinite-dimensional
Gaussian space, and the present construction as a different transform
designed for network synthesis.

\subsection{Universal approximation with infinite-dimensional inputs}
\label{subsec:ext-uat}

\paragraph{Functionals.}
Approximation of nonlinear continuous functionals on compact subsets of
\(C[a,b]\) or \(L^p\) by one-hidden-layer networks goes back to
\citet{ChenChen1993}, with the operator version in \citet{Chen1995} and a
radial-basis version in \citet{ChenChen1995rbf}.  \citet{MhaskarHahm1997}
obtain lower bounds on the achievable order and nearly matching constructions
for functionals on \(L^p\) and \(C([-1,1]^s)\); \citet{Stinchcombe1999}
characterizes activations through compactifications and points out the
abstract structure needed for infinite-dimensional inputs; and
\citet{RossiConanGuez2005} prove universality and consistency for functional
multi-layer perceptrons whose neurons are \(\rho(\int w\,x+b)\), the
architecture of functional data analysis closest to the one studied here.
Rates for functional deep ReLU networks are given in
\citet{SongFanChenZhou2023} and \citet{SongLiuFanZhou2023}, and for
functionals on reproducing kernel Hilbert spaces, with kernel-dependent
rates, in \citet{ZhouSuhChengHuo2024}.  These rates are obtained through
finite-dimensional discretizations and depend on the discretization
dimension; in this paper, the rate depends on the activation and on the
total variation and moments of the ridgelet coefficient instead.

\paragraph{Fr\'echet, topological, and non-Euclidean inputs.}
\citet{Benth2023} prove compact-open universality for networks
\(x\mapsto\langle\ell,\sigma(Ax+b)\rangle\) with operator-valued activations on
Fr\'echet spaces and study finite-dimensional implementability;
\citet{Galimberti2025} extends the setting to non-metric spaces, and
\citet{Galimberti2026Lp} proves density in \(L^p(\mu)\), \(1\le p<\infty\), for
every Radon probability measure \(\mu\) under a bounded separating
activation, with finite-parameter and Banach-valued versions.
\citet{BilokopytovXanthos2025} give infinite-dimensional versions of the
characterization of non-polynomial activations and strengthen the density
result of \citet{Benth2023}.  \citet{KratsiosBilokopytov2020} describe
feature and readout maps that preserve universality.  \citet{Ismailov2026AMAI}
proves density of \(\sum_i c_i\sigma(w_if_i(x)-\theta_i)\) on compact subsets
of a topological space for Tauber--Wiener activations and feature families
with a separation property (Theorem~2.1) and, on locally convex spaces with
\(f_i\) in the dual and any continuous non-polynomial \(\sigma\)
(Theorem~2.2), which is the compact-open statement for the scalar ridge
architecture; \citet{Ismailov2026nonEuclidean} extends this to deep and
vector-valued networks.  Moreover,
\citet{GalimbertiKratsiosLivieri2026} and \citet{AcciaioKratsiosPammer2024}
construct universal causal models for sequences in Banach and metric spaces.
Signatures of paths provide a linear universal family on path spaces;
\citet{CoxKhedherMaessen2026} prove universality of linear functionals of the
signature for infinite-dimensional rough paths in the norm topology and, on
norm-bounded sets, in a weak-\(*\) topology.

\paragraph{Weighted and global universality.}
\citet{CuchieroSchmockerTeichmann2026} introduce functional input neural
networks on weighted spaces: an additive family of real-valued maps forms the
hidden layer, a scalar activation is applied to each neuron, and linear
readouts take values in a Banach space.  Their Theorem~4.13 proves density
in the weighted space \(\cB_\psi(X;Y)\), which yields global approximation
beyond compact sets, from weighted Stone--Weierstrass theorems; Proposition~4.4
gives sufficient conditions on the activation, including a Fourier condition
(A3) in the spirit of the non-polynomial criterion; Corollary~4.17 treats
non-anticipative functionals; Theorem~5.4 gives a global result for linear
functions of the signature; and Theorem~6.5 identifies the reproducing kernel
Hilbert space of a signature kernel with the Cameron--Martin space of a
Gaussian process with values in \(\cB_\psi(X)\).  Their first example,
\(\varphi(x)=\sum_{n=1}^N y_n\rho(\int_S x\,\dd\nu_n+b_n)\) with finite signed
Radon measures \(\nu_n\), is a finite integral network in the terminology of
this paper, with a discrete coefficient measure.  \citet{SchmockerTeichmann2026}
extend this to differentiable maps on weighted infinite-dimensional
manifolds, approximating directional derivatives through a weighted Nachbin
theorem, and \citet{NeufeldSchmocker2024noncompact} prove the one-dimensional
weighted results, including dimension-independent rates for targets with
integrable Fourier transforms, on which the lifting relies; see also
\citet{vanNuland2024} for uniform approximation on non-compact domains.
Universality for operators and their Fr\'echet derivatives is studied by
\citet{Yao2025DIFNO} for Fourier neural operators and by \citet{deFeo2026}, who
shows that it fails in Fr\'echet compact-open topologies and Fr\'echet--Sobolev
norms and holds in Bastiani topologies and weighted Bastiani--Sobolev spaces.
In this paper, the universality theorem is of compact-open type, with a
coefficient measure chosen for each compact set and each tolerance, and it is
proved through targets with a spectral density; the global weighted
statements above are stronger in topology but do not identify the
coefficients.  Conversely, the explicit coefficient \(W_\rho G\), the
Plancherel identity, and the finite-width rate of this paper have no
counterpart in these existence theorems, and the admissible analysis filters
form a family, each giving a different representation of the same target.

\subsection{Neural operators: universality, rates, and complexity}
\label{subsec:ext-operators}

Universal approximation for DeepONets \citep{Lu2021,Lanthaler2022}, Fourier
neural operators \citep{KovachkiLanthalerMishra2021}, PCA-Net
\citep{Bhattacharya2021,Lanthaler2023PCANet}, and the general neural-operator
framework \citep{Kovachki2021NOreview} is reviewed in
\citet{KovachkiLanthalerStuart2024}.  \citet{LanthalerLiStuart2023} show that
nonlocality and nonlinearity together imply universality, with an averaging
neural operator as the minimal example; \citet{GoedekeFernsel2025} prove a
compact-open universality theorem for encoder--decoder architectures between
normed or metric spaces and distinguish architectures that can be chosen
independently of the compact set; \citet{ZappalaBagherian2024} treat
transformers and neural integral operators between Banach spaces; and
\citet{ChenHeTai2026} obtain universality for continuous-depth neural flows.

Quantitative results for solution operators of partial differential equations
include the error estimates of \citet{Lanthaler2022} and, for nonlinear
parabolic equations, the Picard-iteration construction of
\citet{FuruyaTaniguchiOkuda2024}, whose Theorem~1 achieves accuracy
\(\varepsilon\) on a ball of initial data with depth \(O((\log\varepsilon^{-1})^2)\)
and width \(O(\varepsilon^{-1}(\log\varepsilon^{-1})^2)\), the nonlinearity being
approximated as a function of one variable; \citet{TaniguchiSonoda2026}
derive generalization bounds for the corresponding Picard-type learning
problem in which the entropy term does not grow with the Picard depth.  These
rates exploit the structure of the equation.  For operator classes defined by
regularity alone, lower bounds show that no such rates exist:
\citet{LanthalerStuart2025} establish a curse of parametric complexity for
\(C^r\) and Lipschitz operators that applies to PCA-Net, DeepONet, and FNO,
\citet{Lanthaler2024entropy} gives metric-entropy lower bounds that hold for
every activation, \citet{KovachkiLanthalerMhaskar2024} prove a corresponding
curse of data complexity through \(n\)-widths, and \citet{AdcockMaierParhi2025}
show that the minimax risk for Lipschitz operators between Hilbert spaces
cannot decay algebraically in the sample size, H\"older smoothness giving no
improvement.  \citet{Kratsios2024MoNO} reduce the active complexity by routing
inputs to one of several experts.  In this paper, the rate is consistent with
these lower bounds because it applies to targets whose ridgelet coefficient has
finite total variation and moments, a Barron-type class determined by
smoothness of the spectral density along lines through the origin (regularity
along rays), and not to all Lipschitz operators.

\subsection{Summary tables}
\label{subsec:ext-tables}

\Cref{tab:uat-functionals,tab:uat-operators} list universal approximation
theorems with infinite-dimensional inputs, and
\cref{tab:rate-upper,tab:rate-upper-operators,tab:rate-lower} list
quantitative upper bounds and lower bounds, each with the class of targets, the form of the approximant,
and the norm or topology in which approximation is measured.  ``One hidden layer'' means a finite sum of neurons with trained
output weights; ``random features'' means that the hidden parameters are
sampled and only the readout is chosen; cc means uniform convergence on
compact subsets of the input space.

\begin{table}[htbp]
\footnotesize
\centering
\caption{Universal approximation with infinite-dimensional inputs:
functionals, Fr\'echet and topological inputs, weighted spaces, and
signatures.}
\label{tab:uat-functionals}
\begin{tabular}{@{}p{0.16\textwidth}p{0.26\textwidth}p{0.29\textwidth}p{0.18\textwidth}@{}}
\toprule
Reference & Target class & Approximant & Norm or topology \\
\midrule
\citet{ChenChen1993} &
continuous functionals on a compact \(K\subset C[a,b]\) or \(L^p[a,b]\) &
one hidden layer, Tauber--Wiener activation applied to finitely many point
values or integrals of the input &
uniform on \(K\) \\
\citet{Chen1995} &
continuous operators \(G\colon V\to C(K_2)\), \(V\subset C(K_1)\) compact &
branch--trunk network (the DeepONet form), Tauber--Wiener activations &
uniform on \(V\times K_2\) \\
\citet{Stinchcombe1999} &
continuous functions on compactifications; functionals through the same
structure &
one hidden layer, activation characterized by the compactification &
uniform on the compactification \\
\citet{RossiConanGuez2005} &
continuous functionals on a compact \(K\subset L^2(V)\) &
one hidden layer of functional neurons \(\rho(\int wx+b)\), regular
activation &
uniform on \(K\) \\
\citet{Benth2023} &
continuous \(f\colon X\to\R\), \(X\) Fr\'echet &
one hidden layer \(\sum_i\langle\ell_i,\sigma(A_ix+b_i)\rangle\), operator
activation \(\sigma\colon X\to X\) bounded and separating &
cc \\
\citet{Galimberti2025} (Thm.~4.2) &
continuous functions on quasi-Polish spaces &
same architecture composed with an injection into \(\ell^2\); finite-rank
projections (Thm.~4.3) &
cc \\
\citet{Galimberti2026Lp} &
\(f\in L^p(\mu)\), \(\mu\) a Radon probability on a Fr\'echet space,
\(1\le p<\infty\); Banach-valued version &
same architecture, bounded separating operator activation; finite-parameter
version through a Schauder basis &
\(L^p(\mu)\); Bochner \(L^p(\mu;E)\) \\
\citet{KratsiosBilokopytov2020} &
continuous functions between spaces with feature and readout maps, including
Cartan--Hadamard manifolds &
readout \(\circ\) Euclidean feed-forward network \(\circ\) feature map &
cc \\
\citet{Ismailov2026AMAI} (Thm.~2.1) &
continuous functions on a topological space &
\(\sum_i c_i\sigma(w_if_i(x)-\theta_i)\), \(f_i\) from a family with the
separation (D-)property, Tauber--Wiener \(\sigma\) &
cc \\
\citet{Ismailov2026AMAI} (Thm.~2.2) &
continuous functions on a locally convex space &
\(\sum_i c_i\sigma(f_i(x)-\theta_i)\), \(f_i\) continuous linear
functionals, continuous non-polynomial \(\sigma\) &
cc \\
\citet{CuchieroSchmockerTeichmann2026} (Thm.~4.13) &
\(f\in\cB_\psi(X;Y)\): closure of bounded continuous maps in the weighted
sup norm on a weighted space \((X,\psi)\), \(Y\) Banach &
one hidden layer \(\sum_n y_n\rho(h_n(x))\), \(h_n\) in an additive family
(e.g.\ continuous linear functionals), \(\rho\) \(\psi_1\)-activating
(sigmoidal or Fourier condition) &
weighted sup norm \(\sup_x\|f-\varphi\|_Y/\psi(x)\) (global) \\
\citet{CuchieroSchmockerTeichmann2026} (Thm.~5.4) &
\(f\in\cB_\psi\) on H\"older rough paths &
linear functions of the signature &
weighted sup norm (global) \\
\citet{SchmockerTeichmann2026} (Thm.~4.13) &
\(f\in\cB^k_\Psi(M;Y)\): \(C^k\) maps with weighted derivatives on a weighted
\(C^k_{\mathrm{loc}}\) manifold with the bounded approximation property &
one hidden layer, strongly non-polynomial \(\rho\in\cB^k_c(\R)\), additive
family &
weighted \(C^k\) norm, derivatives up to order \(k\) (global) \\
\citet{CoxKhedherMaessen2026} &
continuous functions on compact sets of infinite-dimensional weakly geometric
H\"older rough paths &
linear functionals of the signature &
uniform on compact sets (norm topology); on norm-bounded sets (weak-\(*\)
topology) \\
\bottomrule
\end{tabular}
\end{table}

\begin{table}[htbp]
\footnotesize
\centering
\caption{Universal approximation with infinite-dimensional inputs: neural
operators, derivatives, and this paper.}
\label{tab:uat-operators}
\begin{tabular}{@{}p{0.16\textwidth}p{0.26\textwidth}p{0.29\textwidth}p{0.18\textwidth}@{}}
\toprule
Reference & Target class & Approximant & Norm or topology \\
\midrule
\citet{KovachkiLanthalerMishra2021} &
continuous operators between Sobolev spaces on the torus, on compact input
sets &
Fourier neural operator &
uniform on compact sets \\
\citet{LanthalerLiStuart2023} &
continuous operators between function spaces on Euclidean domains, on
compact input sets &
averaging neural operator (one spatial average plus pointwise nonlinearity) &
uniform on compact sets \\
\citet{GoedekeFernsel2025} &
continuous operators between normed or metric spaces satisfying
encoder--decoder approximation properties &
encoder--decoder architectures (DeepONet, BasisONet, MIONet, frame based) &
cc; also compact-set-independent choice of the architecture \\
\citet{Yao2025DIFNO} (Thms.~3.2, 4.3) &
\(C^1\) operators between Sobolev spaces on the torus whose derivative is
continuous into Hilbert--Schmidt operators &
Fourier neural operator with smooth activation &
sup over a compact set of the operator error and of the Hilbert--Schmidt
derivative error; \(L^2_\mu\) of both for a Borel probability \(\mu\) with
moment conditions \\
\citet{deFeo2026} (Thms.~4.3, 4.6; Cor.~4.19) &
\(C^k\) operators together with derivatives, on Banach spaces &
encoder--decoder classes (DeepONet, Deep-H-ONet, Schauder nets) &
not dense in Fr\'echet \(C^k\) compact-open or Fr\'echet--Sobolev norms,
already for bounded linear targets; dense in Bastiani \(C^k\) compact-open
(derivatives uniform on compacts, directions from compacts), weighted
Bastiani--Sobolev, and Gaussian Sobolev \(W^{k,p}_\gamma\) norms \\
This paper &
continuous \(f\colon\cH\to\outY\), \(\cH\) separable Hilbert, \(\outY\)
separable Hilbert &
one hidden layer \(\sum_j v_j\sigma(\langle a_j,x\rangle-b_j)\),
continuous non-polynomial \(\sigma\) of polynomial growth; coefficients
from the ridgelet transform of a spectral approximant &
cc \\
\bottomrule
\end{tabular}
\end{table}

\begin{table}[htbp]
\footnotesize
\centering
\caption{Quantitative upper bounds for functionals and for networks with
Hilbert or Banach inputs, as rates in the width \(N\).}
\label{tab:rate-upper}
\begin{tabular}{@{}p{0.14\textwidth}p{0.23\textwidth}p{0.22\textwidth}p{0.14\textwidth}p{0.15\textwidth}@{}}
\toprule
Reference & Target class & Approximant & Norm & Bound \\
\midrule
\citet{Barron1993} &
\(f\) on a ball in \(\R^d\) with \(\int|\xi||\widehat f(\xi)|\,\dd\xi\le C\) &
one hidden layer, sigmoidal, width \(N\) &
\(L^2(\mu)\) on the ball &
\(O(C/\sqrt N)\), no explicit \(d\) \\
\citet{MhaskarHahm1997} &
smooth functionals on compact subsets of \(L^p\) or \(C([-1,1]^s)\) &
generalized translation networks &
uniform &
order bounds in the number of neurons, with matching lower bounds \\
\citet{SongFanChenZhou2023} (Thms.~1, 2) &
functionals on \(L^p([-1,1]^s)\) with \(\omega_F(r)\le cr^\lambda\), on the
unit ball \(K\) of \(C^\beta\) (Thm.~2) &
functional deep ReLU network: a bounded linear first layer followed by ReLU
layers, \(M\) nonzero weights, depth \(O(L_M^2)\) &
sup over \(K\) &
\(O(L_M^{-\beta\lambda/s})\) with \(L_M=\log M/\log\log M\) \\
\citet{Gonon2023reservoir} (Thm.~1) &
\(f\) on a separable Hilbert space with a Fourier--Stieltjes representation,
second moments, and a density w.r.t.\ the sampling measure &
random ReLU features, width \(N\), trained readout &
expected squared \(L^2\) error over the input law &
\(O(N^{-1})\) for the squared error \\
\citet{Korolev2022} (Thm.~3.24) &
variation space \(F_1(X;Y)\) of maps between duals of separable Banach
spaces with Schauder bases, Lipschitz for the weak-\(*\) metric on \(Y\) &
\(\sum_{i=1}^n\alpha_i(K_ix)_+\) with finite-rank \(K_i\), \(\|K_i\|\le1\) &
weak-\(*\) metric on \(Y\), pointwise with factor \(\|x\|\); Bochner
\(L^p_\mu\) for \(\mu\) with \(p\) moments &
\(n^{-1/2}\) times \(c\|f\|_{F_1}\|x\|\); inverse theorem \\
\citet{NeufeldSchmocker2023random} (Thm.~4.5; Cor.~4.21) &
Barron space \(\mathcal B^r_{G,\theta}(X)\) of a separable Banach space of
type \(t\); in Cor.~4.21, a Barron--ridgelet class on \(U\subset\R^m\)
defined by an admissible filter pair &
random feature model with \(N\) features; random neural network with \(N\)
neurons &
Bochner \(L^r\) in the \(X\)-norm; weighted Sobolev \(W^{k,p}(U,w)\) &
\(N^{-\kappa}\), \(\kappa=1-1/\min(r,t)\), so \(N^{-1/2}\) for type \(2\),
\(r\ge2\); \(\kappa=1-1/\min(2,p,r)\) in Cor.~4.21 \\
\citet{yang-shi2026} (Thm.~3.1) &
variation class of Hilbert-to-Hilbert operators &
encoder--decoder neural operator, width \(N\) &
Bochner \(L^q\), \(2\le q<\infty\) &
encoding errors plus \(O(N^{-1/2})\) independent of the encoding dimension \\
This paper &
coefficient measure with variation \(V\) and second moment \(M_2\)
(e.g.\ spectral densities smooth along lines through the origin) &
one hidden layer, globally Lipschitz \(\sigma\), width \(N\), weights
sampled from the coefficient &
expected sup on a compact \(K\subset\cH\), \(\outY\)-valued &
\(8VN^{-1/2}(|\sigma(0)|+\Lip(\sigma)R_KM_2)\) \\
\bottomrule
\end{tabular}
\end{table}

\begin{table}[htbp]
\footnotesize
\centering
\caption{Quantitative upper bounds for neural operators and surrogates of
operators.}
\label{tab:rate-upper-operators}
\begin{tabular}{@{}p{0.14\textwidth}p{0.23\textwidth}p{0.22\textwidth}p{0.14\textwidth}p{0.15\textwidth}@{}}
\toprule
Reference & Target class & Approximant & Norm & Bound \\
\midrule
\citet{Lanthaler2022} &
PDE solution operators; general error decomposition &
DeepONet &
\(L^2(\mu)\) over the input law &
encoder, approximator, reconstruction errors; algebraic rates in examples \\
\citet{FuruyaTaniguchiOkuda2024} (Thm.~1) &
solution operator of a nonlinear parabolic equation on an \(L^\infty\)
ball, short time &
weight-tied ReLU neural operator, Green-function expansion of rank \(N\) &
sup over the ball of \(L^r(0,T;L^s)\) &
depth \(O((\log\varepsilon^{-1})^2)\), width
\(O(\varepsilon^{-1}(\log\varepsilon^{-1})^2)\), \(N\) by the Green-function
error \\
\citet{Kratsios2024MoNO} &
uniformly continuous, in particular Lipschitz, operators on compact Sobolev
sets &
mixture of neural operators routed by a tree &
uniform on the compact set &
active expert of size \(O(\varepsilon^{-1})\) for Lipschitz targets \\
\citet{AdcockGriebelMaier2026} (Thm.~4.1) &
Lipschitz \(F\in L^2_\mu(X;Y)\), \(\mu\) Gaussian, whose PCA latent
representation lies in a Gaussian Sobolev space \(H^k\) &
Hermite--PCA approximation from \(N_X,N_Y\) PCA samples and \(M\)
input--output samples &
\(L^2_\mu(X;Y)\) &
PCA tails, sampling terms, and a term improving with the Sobolev order
\(k\) \\
\bottomrule
\end{tabular}
\end{table}

\begin{table}[htbp]
\footnotesize
\centering
\caption{Lower bounds: no algebraic rate exists for the class.}
\label{tab:rate-lower}
\begin{tabular}{@{}p{0.14\textwidth}p{0.23\textwidth}p{0.22\textwidth}p{0.14\textwidth}p{0.15\textwidth}@{}}
\toprule
Reference & Target class & Approximant & Norm & Bound \\
\midrule
\citet{LanthalerStuart2025} (Thm.~2.11, Cor.~2.12) &
\(r\)-times Fr\'echet differentiable functionals or operators on a compact
\(K\) containing an infinite-dimensional hypercube \(Q_\alpha\) &
functionals of neural-network type \(\Phi\circ L\), \(L\) finite-rank
linear, \(\Phi\) a ReLU network (covers PCA-Net, DeepONet, NOMAD, FNO) &
sup over \(K\) &
complexity \(\ge\exp(c\varepsilon^{-\kappa})\),
\(\kappa=1/((\alpha+1+\delta)r)\) \\
\citet{Lanthaler2024entropy} &
Lipschitz operators between Banach spaces &
any architecture, size measured in bits &
uniform on a compact set; expectation over an input law &
size exponential in \(\varepsilon^{-1}\), for every activation \\
\citet{KovachkiLanthalerMhaskar2024} &
Lipschitz and Fr\'echet-differentiable operators &
any method using \(n\) input--output samples (\(n\)-widths) &
as stated there &
\(n\) exponential in \(\varepsilon^{-1}\) \\
\citet{AdcockMaierParhi2025} &
uniformly bounded Lipschitz (or H\"older) operators between separable
Hilbert spaces &
any estimator from \(n\) noisy samples (minimax) &
\(L^2\) risk over an input law with given covariance spectrum &
no algebraic rate in \(n\); sharp for exponentially decaying spectra \\
\bottomrule
\end{tabular}
\end{table}

\subsection{Finite-width bounds and random features}
\label{subsec:ext-finite-width}

Finite-width bounds by sampling a parameter distribution follow
\citet{Barron1993} and the variation-norm viewpoint of \citet{Bach2017b} and
\citet*{E-Ma-Wu2022barron}.  \citet{Gonon2023reservoir} prove random ReLU
approximation on separable Hilbert input spaces with an expected squared
\(L^2\) error of order \(N^{-1}\) under a Fourier--Stieltjes representation
with moment conditions and absolute continuity with respect to a sampling
measure, and they treat infinite input histories in \citet{Gonon2024}.
\citet{Korolev2022} studies two-layer networks between Banach spaces with
separable preduals whose nonlinearity is the lattice positive part, proving
direct and inverse theorems with Monte Carlo rates for the associated
variation space.  \citet{NeufeldSchmocker2023random} prove universality and
rates for Banach-space-valued random feature models in Bochner norm; their
rates for random networks in weighted Sobolev spaces on \(\R^m\) are derived
from the finite-dimensional ridgelet reconstruction with an admissible
filter pair, which is the finite-dimensional counterpart of the sampling
argument used here.  Further,
\citet{yang-shi2026} obtain dimension-independent bounds for encoder--decoder
neural operators in Bochner \(L^q\).  Kernel and random-feature surrogates for
operators are developed by \citet{NelsenStuart2021,NelsenStuart2024} and
\citet{Batlle2024}.  In this paper, the sampled network is a random feature
model whose sampling density is proportional to the modulus of the explicit
ridgelet coefficient; the additional content is that the coefficient is
characterized by an analysis operator with a Plancherel identity and a
closed-range projection, that the rate is uniform on compact sets for
Lipschitz activations, and that it extends to Hilbert-valued outputs in a
Bochner norm.  Continuous frames and coorbit discretization
\citep{Fornasier2005,Holighaus2017errata} provide the general framework in
which the coefficient space could be studied further; we do not assume their
hypotheses.

\subsection{Gaussian measures, functional data, and statistical learning}
\label{subsec:ext-data}

Gaussian measures on Hilbert spaces and their Cameron--Martin spaces are
treated in \citet{Bogachev1998book}, \citet{DaPrato2014}, and
\citet{Prevot2007book}.  The same objects appear in functional data analysis,
where observations are modeled as realizations of a Hilbert-space-valued
process and expanded in Karhunen--Lo\`eve form
\citep{WangChiouMuller2016,FerratyVieu2006}, and in Gaussian process
regression on function spaces \citep{ShiChoi2011}, whose posterior
contraction rates depend on the position of the target relative to the
Cameron--Martin space \citep{vanderVaartvanZanten2008}; the equivalence
between Gaussian process regression and kernel ridge regression is reviewed in
\citet{Kanagawa2018}.  Characteristic kernels on separable Hilbert spaces and
the explicit reproducing kernel Hilbert space of Gauss-type kernels are
constructed in \citet{WynneDuncan2022}, and operator-valued kernels for
functional responses in \citet{Kadri2016}.  Neural networks with functional
inputs have been developed in statistics by \citet{RossiConanGuez2005},
\citet{YaoMuellerWang2021}, and \citet{ThindMultaniCao2023}.  On the learning
side, convergence rates for linear operators from noisy data are obtained in
\citet{deHoop2023}, minimax rates for nonlinear operators in
\citet{AdcockMaierParhi2025}, and near-optimal learning of Gaussian Sobolev
operators through Hermite expansions and principal component analysis in
\citet{AdcockGriebelMaier2026}.  In this paper, we develop an approximation
theory, not a learning theory: the input measure \(\mu_Q\) is the Gaussian measure of the functional data model,
the spectral density \(G\) is a Fourier-side description of the target with
respect to this measure, and the sampled network is a Monte Carlo
construction rather than an estimator from data.  Statistical rates are not
claimed.

\FloatBarrier

\section*{Declarations}

\paragraph{Funding.}
This work was partially supported by
JSPS KAKENHI 24K21316, 25H01453, 25K24910,
JST BOOST JPMJBY24E2,
JST CREST JPMJCR25I5, JPMJCR24Q1,
JST ERATO JPMJER2601, 
JST ARiSE JPMJAR2633.

\paragraph{Competing interests.}
The authors declare that they have no competing interests.

\paragraph{Data availability.}
No external datasets were used. The numerical results supporting
\cref{sec:numerics}, including the data in \cref{tab:numerics,fig:numerics},
are provided as \texttt{results.json} and figure files in the supplementary
archive \texttt{python-code.zip}. The computational setup is described in
\cref{app:numerics} and in the archive's README.

\paragraph{Code availability.}
The Lean formalization snapshot is supplied in \texttt{lean-code.zip};
the Python code for \cref{sec:numerics} is supplied in \texttt{python-code.zip}.
Both supplementary archives include instructions for reproducing the results.

\bibliographystyle{plainnat}
\bibliography{refs}

\begin{thebibliography}{103}
\providecommand{\natexlab}[1]{#1}
\providecommand{\url}[1]{\texttt{#1}}
\expandafter\ifx\csname urlstyle\endcsname\relax
  \providecommand{\doi}[1]{doi: #1}\else
  \providecommand{\doi}{doi: \begingroup \urlstyle{rm}\Url}\fi

\bibitem[Acciaio et~al.(2024)Acciaio, Kratsios, and
  Pammer]{AcciaioKratsiosPammer2024}
Beatrice Acciaio, Anastasis Kratsios, and Gudmund Pammer.
\newblock {\href{https://doi.org/10.1111/mafi.12389}{Designing universal causal
  deep learning models: the geometric (hyper)transformer}}.
\newblock \emph{Mathematical Finance}, 34\penalty0 (2):\penalty0 671--735,
  2024.

\bibitem[Adcock et~al.(2025)Adcock, Maier, and Parhi]{AdcockMaierParhi2025}
Ben Adcock, Gregor Maier, and Rahul Parhi.
\newblock {\href{https://arxiv.org/abs/2512.17805}{Towards sharp minimax risk
  bounds for operator learning}}, 2025.
\newblock arXiv:2512.17805.

\bibitem[Adcock et~al.(2026)Adcock, Griebel, and Maier]{AdcockGriebelMaier2026}
Ben Adcock, Michael Griebel, and Gregor Maier.
\newblock {\href{https://arxiv.org/abs/2607.11921}{Near-optimal learning of
  Gaussian Sobolev operators}}, 2026.
\newblock arXiv:2607.11921.

\bibitem[Bach(2017)]{Bach2017b}
Francis Bach.
\newblock {\href{http://jmlr.org/papers/v18/14-546.html}{Breaking the Curse of
  Dimensionality with Convex Neural Networks}}.
\newblock \emph{Journal of Machine Learning Research}, 18\penalty0
  (19):\penalty0 1--53, 2017.

\bibitem[Barron(1993)]{Barron1993}
Andrew~R. Barron.
\newblock {\href{https://doi.org/10.1109/18.256500}{Universal approximation
  bounds for superpositions of a sigmoidal function}}.
\newblock \emph{IEEE Transactions on Information Theory}, 39\penalty0
  (3):\penalty0 930--945, 1993.

\bibitem[Batlle et~al.(2024)Batlle, Darcy, Hosseini, and Owhadi]{Batlle2024}
Pau Batlle, Matthieu Darcy, Bamdad Hosseini, and Houman Owhadi.
\newblock {\href{https://doi.org/10.1016/j.jcp.2023.112549}{Kernel methods are
  competitive for operator learning}}.
\newblock \emph{Journal of Computational Physics}, 496:\penalty0 112549, 2024.

\bibitem[Becnel(2019)]{Becnel2019}
Jeremy~J. Becnel.
\newblock {\href{https://doi.org/10.31390/cosa.13.2.04}{A limiting process to
  invert the Gauss--Radon transform}}.
\newblock \emph{Communications on Stochastic Analysis}, 13\penalty0
  (2):\penalty0 219--242, 2019.

\bibitem[Becnel and Sengupta(2012)]{BecnelSengupta2012}
Jeremy~J. Becnel and Ambar~N. Sengupta.
\newblock {\href{https://doi.org/10.1090/S0002-9947-2011-05365-1}{A support
  theorem for a Gaussian Radon transform in infinite dimensions}}.
\newblock \emph{Transactions of the American Mathematical Society},
  364\penalty0 (3):\penalty0 1281--1291, 2012.

\bibitem[Benth et~al.(2023)Benth, Detering, and Galimberti]{Benth2023}
Fred~Espen Benth, Nils Detering, and Luca Galimberti.
\newblock {\href{https://doi.org/10.1007/s10472-022-09824-z}{Neural networks in
  Fr\'echet spaces}}.
\newblock \emph{Annals of Mathematics and Artificial Intelligence}, 91\penalty0
  (1):\penalty0 75--103, 2023.

\bibitem[Bhattacharya et~al.(2021)Bhattacharya, Hosseini, Kovachki, and
  Stuart]{Bhattacharya2021}
Kaushik Bhattacharya, Bamdad Hosseini, Nikola~B. Kovachki, and Andrew~M.
  Stuart.
\newblock {\href{https://doi.org/10.5802/smai-jcm.74}{Model reduction and
  neural networks for parametric {PDEs}}}.
\newblock \emph{The SMAI Journal of Computational Mathematics}, 7:\penalty0
  121--157, 2021.

\bibitem[Bilokopytov and Xanthos(2026)]{BilokopytovXanthos2025}
Eugene Bilokopytov and Foivos Xanthos.
\newblock {\href{https://doi.org/10.1016/j.jmaa.2026.130632}{A universal
  approximation theorem and its applications to vector lattice theory}}.
\newblock \emph{Journal of Mathematical Analysis and Applications},
  561\penalty0 (1):\penalty0 130632, 2026.

\bibitem[Bogachev(1998)]{Bogachev1998book}
Vladimir~I. Bogachev.
\newblock \emph{{\href{https://pubs.ams.org/ebooks/surv/062}{Gaussian
  Measures}}}, volume~62 of \emph{Mathematical Surveys and Monographs}.
\newblock American Mathematical Society, Providence, RI, 1998.

\bibitem[Bogachev(2018)]{Bogachev2018OU}
Vladimir~I. Bogachev.
\newblock {\href{https://doi.org/10.1070/RM9812}{Ornstein--Uhlenbeck operators
  and semigroups}}.
\newblock \emph{Russian Mathematical Surveys}, 73\penalty0 (2):\penalty0
  191--260, 2018.

\bibitem[Cameron and Martin(1945)]{CameronMartin1945}
Robert~H. Cameron and William~T. Martin.
\newblock {\href{https://doi.org/10.1215/S0012-7094-45-01244-0}{Fourier--Wiener
  transforms of analytic functionals}}.
\newblock \emph{Duke Mathematical Journal}, 12\penalty0 (3):\penalty0 489--507,
  1945.

\bibitem[Cameron and Martin(1947)]{CameronMartin1947}
Robert~H. Cameron and William~T. Martin.
\newblock {\href{https://doi.org/10.1215/S0012-7094-47-01409-9}{Fourier--Wiener
  transforms of functionals belonging to {$L_2$} over the space {$C$}}}.
\newblock \emph{Duke Mathematical Journal}, 14\penalty0 (1):\penalty0 99--107,
  1947.

\bibitem[Cand\`es(1998)]{Candes.PhD}
Emmanuel~Jean Cand\`es.
\newblock
  \emph{{\href{https://searchworks.stanford.edu/view/9949708}{Ridgelets: theory
  and applications}}}.
\newblock PhD thesis, Stanford University, 1998.

\bibitem[Cand\`es and Donoho(1999)]{Candes1999a}
Emmanuel~Jean Cand\`es and David~Leigh Donoho.
\newblock {\href{https://doi.org/10.1098/rsta.1999.0444}{Ridgelets: a key to
  higher-dimensional intermittency?}}
\newblock \emph{Philosophical Transactions of the Royal Society of London A:
  Mathematical, Physical and Engineering Sciences}, 357\penalty0
  (1760):\penalty0 2495--2509, 1999.

\bibitem[Chen et~al.(2026)Chen, He, and Tai]{ChenHeTai2026}
Shuang Chen, Juncai He, and Xue-Cheng Tai.
\newblock {\href{https://arxiv.org/abs/2605.22557}{Neural flow operators can
  approximate any operator: abstract frameworks and universal approximations}},
  2026.
\newblock arXiv:2605.22557.

\bibitem[Chen and Chen(1993)]{ChenChen1993}
Tianping Chen and Hong Chen.
\newblock {\href{https://doi.org/10.1109/72.286886}{Approximations of
  continuous functionals by neural networks with application to dynamic
  systems}}.
\newblock \emph{IEEE Transactions on Neural Networks}, 4\penalty0 (6):\penalty0
  910--918, 1993.

\bibitem[Chen and Chen(1995{\natexlab{a}})]{Chen1995}
Tianping Chen and Hong Chen.
\newblock {\href{https://doi.org/10.1109/72.392253}{Universal approximation to
  nonlinear operators by neural networks with arbitrary activation functions
  and its application to dynamical systems}}.
\newblock \emph{IEEE Transactions on Neural Networks}, 6\penalty0 (4):\penalty0
  911--917, 1995{\natexlab{a}}.

\bibitem[Chen and Chen(1995{\natexlab{b}})]{ChenChen1995rbf}
Tianping Chen and Hong Chen.
\newblock {\href{https://doi.org/10.1109/72.392252}{Approximation capability to
  functions of several variables, nonlinear functionals, and operators by
  radial basis function neural networks}}.
\newblock \emph{IEEE Transactions on Neural Networks}, 6\penalty0 (4):\penalty0
  904--910, 1995{\natexlab{b}}.

\bibitem[Cox et~al.(2026)Cox, Khedher, and Maessen]{CoxKhedherMaessen2026}
Sonja Cox, Asma Khedher, and Thijs Maessen.
\newblock {\href{https://arxiv.org/abs/2603.03058}{Universal approximation by
  signatures for infinite-dimensional rough paths}}, 2026.
\newblock arXiv:2603.03058.

\bibitem[Cuchiero et~al.(2026)Cuchiero, Schmocker, and
  Teichmann]{CuchieroSchmockerTeichmann2026}
Christa Cuchiero, Philipp Schmocker, and Josef Teichmann.
\newblock {\href{https://doi.org/10.1007/s00365-025-09726-3}{Global universal
  approximation of functional input maps on weighted spaces}}.
\newblock \emph{Constructive Approximation}, 63\penalty0 (2):\penalty0
  537--612, 2026.

\bibitem[Da~Prato and Zabczyk(2014)]{DaPrato2014}
Giuseppe Da~Prato and Jerzy Zabczyk.
\newblock \emph{{\href{https://doi.org/10.1017/CBO9781107295513}{Stochastic
  Equations in Infinite Dimensions}}}, volume 152 of \emph{Encyclopedia of
  Mathematics and its Applications}.
\newblock Cambridge University Press, Cambridge, 2nd edition, 2014.

\bibitem[de~Feo(2026)]{deFeo2026}
Filippo de~Feo.
\newblock {\href{https://arxiv.org/abs/2605.15285}{Universal approximation of
  nonlinear operators and their derivatives}}, 2026.
\newblock arXiv:2605.15285.

\bibitem[de~Hoop et~al.(2023)de~Hoop, Kovachki, Nelsen, and Stuart]{deHoop2023}
Maarten~V. de~Hoop, Nikola~B. Kovachki, Nicholas~H. Nelsen, and Andrew~M.
  Stuart.
\newblock {\href{https://doi.org/10.1137/21M1442942}{Convergence rates for
  learning linear operators from noisy data}}.
\newblock \emph{SIAM/ASA Journal on Uncertainty Quantification}, 11\penalty0
  (2):\penalty0 480--513, 2023.

\bibitem[Diestel and Uhl(1977)]{Diestel1977book}
Joseph Diestel and John~Jerry Uhl, Jr.
\newblock \emph{\href{https://doi.org/10.1090/surv/015}{Vector Measures}},
  volume~15 of \emph{Mathematical Surveys}.
\newblock American Mathematical Society, Providence, RI, 1977.
\newblock \doi{10.1090/surv/015}.

\bibitem[E et~al.(2022)E, Ma, and Wu]{E-Ma-Wu2022barron}
Weinan E, Chao Ma, and Lei Wu.
\newblock {\href{https://doi.org/10.1007/s00365-021-09549-y}{The Barron Space
  and the Flow-Induced Function Spaces for Neural Network Models}}.
\newblock \emph{Constructive Approximation}, 55\penalty0 (1):\penalty0
  369--406, 2022.

\bibitem[Ferraty and Vieu(2006)]{FerratyVieu2006}
Fr{\'e}d{\'e}ric Ferraty and Philippe Vieu.
\newblock \emph{Nonparametric Functional Data Analysis: Theory and Practice}.
\newblock Springer Series in Statistics. Springer, New York, 2006.

\bibitem[Fornasier and Rauhut(2005)]{Fornasier2005}
Massimo Fornasier and Holger Rauhut.
\newblock {\href{https://doi.org/10.1007/s00041-005-4053-6}{Continuous Frames,
  Function Spaces, and the Discretization Problem}}.
\newblock \emph{Journal of Fourier Analysis and Applications}, 11\penalty0
  (3):\penalty0 245--287, 2005.

\bibitem[Furuya et~al.(2025)Furuya, Taniguchi, and
  Okuda]{FuruyaTaniguchiOkuda2024}
Takashi Furuya, Koichi Taniguchi, and Satoshi Okuda.
\newblock
  {\href{https://proceedings.iclr.cc/paper_files/paper/2025/hash/d4b6ccf3acd6ccbc1093e093df345ba2-Abstract-Conference.html}{Quantitative
  approximation for neural operators in nonlinear parabolic equations}}.
\newblock In \emph{The Thirteenth International Conference on Learning
  Representations}, 2025.

\bibitem[Galimberti(2025)]{Galimberti2025}
Luca Galimberti.
\newblock {\href{https://doi.org/10.1142/S0219530525500514}{Neural networks in
  non-metric spaces}}.
\newblock \emph{Analysis and Applications}, 24\penalty0 (6):\penalty0
  1415--1463, 2025.

\bibitem[Galimberti(2026)]{Galimberti2026Lp}
Luca Galimberti.
\newblock {\href{https://arxiv.org/abs/2608.08230}{{$L^p$} approximation
  results for infinite dimensional neural networks}}, 2026.
\newblock arXiv:2608.08230.

\bibitem[Galimberti et~al.(2026)Galimberti, Kratsios, and
  Livieri]{GalimbertiKratsiosLivieri2026}
Luca Galimberti, Anastasis Kratsios, and Giulia Livieri.
\newblock {\href{https://doi.org/10.1007/s00365-026-09745-8}{Designing
  universal causal deep learning models: the case of infinite-dimensional
  dynamical systems from stochastic analysis}}.
\newblock \emph{Constructive Approximation}, 64\penalty0 (2):\penalty0
  411--477, 2026.

\bibitem[G{\"o}deke and Fernsel(2025)]{GoedekeFernsel2025}
Janek G{\"o}deke and Pascal Fernsel.
\newblock {\href{https://arxiv.org/abs/2503.24092}{New universal operator
  approximation theorem for encoder--decoder architectures}}, 2025.
\newblock arXiv:2503.24092.

\bibitem[Gonon et~al.(2023)Gonon, Grigoryeva, and Ortega]{Gonon2023reservoir}
Lukas Gonon, Lyudmila Grigoryeva, and Juan-Pablo Ortega.
\newblock {\href{https://doi.org/10.1214/22-AAP1806}{Approximation bounds for
  random neural networks and reservoir systems}}.
\newblock \emph{The Annals of Applied Probability}, 33\penalty0 (1):\penalty0
  28--69, 2023.

\bibitem[Gonon et~al.(2024)Gonon, Grigoryeva, and Ortega]{Gonon2024}
Lukas Gonon, Lyudmila Grigoryeva, and Juan-Pablo Ortega.
\newblock
  {\href{https://doi.org/10.1016/j.neunet.2024.106486}{Infinite-dimensional
  reservoir computing}}.
\newblock \emph{Neural Networks}, 179:\penalty0 106486, 2024.

\bibitem[Gross(1967)]{Gross1967}
Leonard Gross.
\newblock {\href{https://doi.org/10.1016/0022-1236(67)90030-4}{Potential theory
  on Hilbert space}}.
\newblock \emph{Journal of Functional Analysis}, 1\penalty0 (2):\penalty0
  123--181, 1967.

\bibitem[Hertle(1980)]{Hertle1980}
Alexander Hertle.
\newblock {\href{https://doi.org/10.1007/BFb0088248}{Gaussian surface measures
  and the Radon transform on separable Banach spaces}}.
\newblock In Dietrich K{\"o}lzow, editor, \emph{Measure Theory Oberwolfach
  1979}, volume 794 of \emph{Lecture Notes in Mathematics}, pages 513--531.
  Springer, Berlin, Heidelberg, 1980.

\bibitem[Hida et~al.(1993)Hida, Kuo, Potthoff, and
  Streit]{HidaKuoPotthoffStreit1993}
Takeyuki Hida, Hui-Hsiung Kuo, J{\"u}rgen Potthoff, and Ludwig Streit.
\newblock \emph{{\href{https://doi.org/10.1007/978-94-017-3680-0}{White Noise:
  An Infinite Dimensional Calculus}}}, volume 253 of \emph{Mathematics and Its
  Applications}.
\newblock Kluwer Academic Publishers, Dordrecht, 1993.

\bibitem[Holighaus and Balazs(2017)]{Holighaus2017errata}
Nicki Holighaus and Peter Balazs.
\newblock {\href{https://arxiv.org/abs/1702.06485}{Discretization in
  generalized coorbit spaces: extensions, annotations and errata for
  ``Continuous Frames, Function Spaces and the Discretization Problem'' by M.
  Fornasier and H. Rauhut}}.
\newblock \emph{ArXiv preprint 1702.06485}, 2017.

\bibitem[Holmes(2013)]{Holmes2013}
Irina Holmes.
\newblock {\href{https://doi.org/10.1142/S0219025713500343}{An inversion
  formula for the Gaussian Radon transform for Banach spaces}}.
\newblock \emph{Infinite Dimensional Analysis, Quantum Probability and Related
  Topics}, 16\penalty0 (4):\penalty0 1350034, 2013.

\bibitem[Holmes and Sengupta(2012)]{HolmesSengupta2012}
Irina Holmes and Ambar~N. Sengupta.
\newblock {\href{https://doi.org/10.1016/j.jfa.2012.09.005}{A Gaussian Radon
  transform for Banach spaces}}.
\newblock \emph{Journal of Functional Analysis}, 263\penalty0 (11):\penalty0
  3689--3706, 2012.

\bibitem[Holmes and Sengupta(2014)]{HolmesSengupta2014}
Irina Holmes and Ambar~N. Sengupta.
\newblock {\href{https://doi.org/10.31390/cosa.8.2.07}{The Gaussian Radon
  transform in classical Wiener space}}.
\newblock \emph{Communications on Stochastic Analysis}, 8\penalty0
  (2):\penalty0 247--268, 2014.

\bibitem[Holmes and Sengupta(2015)]{HolmesSengupta2015}
Irina Holmes and Ambar~N. Sengupta.
\newblock {\href{https://doi.org/10.1142/S0219025715500198}{The Gaussian Radon
  transform and machine learning}}.
\newblock \emph{Infinite Dimensional Analysis, Quantum Probability and Related
  Topics}, 18\penalty0 (3):\penalty0 1550019, 2015.

\bibitem[Ismailov(2026{\natexlab{a}})]{Ismailov2026AMAI}
Vugar~E. Ismailov.
\newblock {\href{https://doi.org/10.1007/s10472-026-10003-7}{On shallow
  feedforward neural networks with inputs from a topological space}}.
\newblock \emph{Annals of Mathematics and Artificial Intelligence},
  2026{\natexlab{a}}.
\newblock Published online.

\bibitem[Ismailov(2026{\natexlab{b}})]{Ismailov2026nonEuclidean}
Vugar~E. Ismailov.
\newblock {\href{https://arxiv.org/abs/2602.23381}{Universality of shallow and
  deep neural networks on non-Euclidean spaces}}, 2026{\natexlab{b}}.
\newblock arXiv:2602.23381.

\bibitem[Janson(1997)]{Janson1997book}
Svante Janson.
\newblock \emph{{\href{https://doi.org/10.1017/CBO9780511526169}{Gaussian
  Hilbert Spaces}}}, volume 129 of \emph{Cambridge Tracts in Mathematics}.
\newblock Cambridge University Press, Cambridge, 1997.

\bibitem[Jorgensen et~al.(2023)Jorgensen, Song, and
  Tian]{JorgensenSongTian2023}
Palle E.~T. Jorgensen, Myung-Sin Song, and James Tian.
\newblock
  {\href{https://doi.org/10.1007/s43670-023-00051-z}{Infinite-dimensional
  stochastic transforms and reproducing kernel Hilbert space}}.
\newblock \emph{Sampling Theory, Signal Processing, and Data Analysis},
  21\penalty0 (1):\penalty0 12, 2023.

\bibitem[Kadri et~al.(2016)Kadri, Duflos, Preux, Canu, Rakotomamonjy, and
  Audiffren]{Kadri2016}
Hachem Kadri, Emmanuel Duflos, Philippe Preux, St{\'e}phane Canu, Alain
  Rakotomamonjy, and Julien Audiffren.
\newblock {\href{https://jmlr.org/papers/v17/11-315.html}{Operator-valued
  kernels for learning from functional response data}}.
\newblock \emph{Journal of Machine Learning Research}, 17\penalty0
  (20):\penalty0 1--54, 2016.

\bibitem[Kanagawa et~al.(2018)Kanagawa, Hennig, Sejdinovic, and
  Sriperumbudur]{Kanagawa2018}
Motonobu Kanagawa, Philipp Hennig, Dino Sejdinovic, and Bharath~K.
  Sriperumbudur.
\newblock {\href{https://arxiv.org/abs/1807.02582}{Gaussian processes and
  kernel methods: a review on connections and equivalences}}, 2018.
\newblock arXiv:1807.02582.

\bibitem[Korolev(2022)]{Korolev2022}
Yury Korolev.
\newblock {\href{https://doi.org/10.1137/21M1458144}{Two-layer neural networks
  with values in a Banach space}}.
\newblock \emph{SIAM Journal on Mathematical Analysis}, 54\penalty0
  (6):\penalty0 6358--6389, 2022.

\bibitem[Kovachki et~al.(2021)Kovachki, Lanthaler, and
  Mishra]{KovachkiLanthalerMishra2021}
Nikola Kovachki, Samuel Lanthaler, and Siddhartha Mishra.
\newblock {\href{https://jmlr.org/papers/v22/21-0806.html}{On universal
  approximation and error bounds for Fourier neural operators}}.
\newblock \emph{Journal of Machine Learning Research}, 22\penalty0
  (290):\penalty0 1--76, 2021.

\bibitem[Kovachki et~al.(2023)Kovachki, Li, Liu, Azizzadenesheli, Bhattacharya,
  Stuart, and Anandkumar]{Kovachki2021NOreview}
Nikola Kovachki, Zongyi Li, Burigede Liu, Kamyar Azizzadenesheli, Kaushik
  Bhattacharya, Andrew Stuart, and Anima Anandkumar.
\newblock {\href{https://jmlr.org/papers/v24/21-1524.html}{Neural Operator:
  Learning maps between function spaces with applications to PDEs}}.
\newblock \emph{Journal of Machine Learning Research}, 24\penalty0
  (89):\penalty0 1--97, 2023.

\bibitem[Kovachki et~al.(2024{\natexlab{a}})Kovachki, Lanthaler, and
  Mhaskar]{KovachkiLanthalerMhaskar2024}
Nikola~B. Kovachki, Samuel Lanthaler, and Hrushikesh Mhaskar.
\newblock {\href{https://arxiv.org/abs/2405.15992}{Data complexity estimates
  for operator learning}}, 2024{\natexlab{a}}.
\newblock arXiv:2405.15992.

\bibitem[Kovachki et~al.(2024{\natexlab{b}})Kovachki, Lanthaler, and
  Stuart]{KovachkiLanthalerStuart2024}
Nikola~B. Kovachki, Samuel Lanthaler, and Andrew~M. Stuart.
\newblock {\href{https://doi.org/10.1016/bs.hna.2024.05.009}{Operator learning:
  algorithms and analysis}}.
\newblock In \emph{Numerical Analysis Meets Machine Learning}, volume~25 of
  \emph{Handbook of Numerical Analysis}, pages 419--467. Elsevier,
  2024{\natexlab{b}}.

\bibitem[Kratsios and Bilokopytov(2020)]{KratsiosBilokopytov2020}
Anastasis Kratsios and Ievgen Bilokopytov.
\newblock
  {\href{https://proceedings.neurips.cc/paper/2020/hash/786ab8c4d7ee758f80d57e65582e609d-Abstract.html}{Non-Euclidean
  universal approximation}}.
\newblock In \emph{Advances in Neural Information Processing Systems},
  volume~33, pages 10635--10646, 2020.

\bibitem[Kratsios et~al.(2024)Kratsios, Furuya, Lara~Benitez, Lassas, and
  de~Hoop]{Kratsios2024MoNO}
Anastasis Kratsios, Takashi Furuya, Jose~Antonio Lara~Benitez, Matti Lassas,
  and Maarten de~Hoop.
\newblock {\href{https://arxiv.org/abs/2404.09101}{Mixtures of neural operators
  reduce active complexity in operator learning}}, 2024.
\newblock arXiv:2404.09101.

\bibitem[K\r{u}rkov\'a(2012)]{Kurkova2012}
V\v{e}ra K\r{u}rkov\'a.
\newblock {\href{https://doi.org/10.1016/j.neunet.2012.05.002}{Complexity
  estimates based on integral transforms induced by computational units}}.
\newblock \emph{Neural Networks}, 33:\penalty0 160--167, 2012.

\bibitem[Kuo(1989)]{Kuo1989}
Hui-Hsiung Kuo.
\newblock {\href{https://doi.org/10.1016/0047-259X(89)90069-9}{The Fourier
  transform in white noise calculus}}.
\newblock \emph{Journal of Multivariate Analysis}, 31\penalty0 (2):\penalty0
  311--327, 1989.

\bibitem[Lanthaler(2023)]{Lanthaler2023PCANet}
Samuel Lanthaler.
\newblock {\href{https://jmlr.org/papers/v24/23-0329.html}{Operator learning
  with {PCA-Net}: upper and lower complexity bounds}}.
\newblock \emph{Journal of Machine Learning Research}, 24\penalty0
  (318):\penalty0 1--67, 2023.

\bibitem[Lanthaler(2024)]{Lanthaler2024entropy}
Samuel Lanthaler.
\newblock {\href{https://arxiv.org/abs/2406.18794}{Operator learning of
  Lipschitz operators: an information-theoretic perspective}}, 2024.
\newblock arXiv:2406.18794.

\bibitem[Lanthaler and Stuart(2026)]{LanthalerStuart2025}
Samuel Lanthaler and Andrew~M. Stuart.
\newblock {\href{https://doi.org/10.1093/imanum/draf028}{The parametric
  complexity of operator learning}}.
\newblock \emph{IMA Journal of Numerical Analysis}, 46\penalty0 (2):\penalty0
  647--712, 2026.

\bibitem[Lanthaler et~al.(2022)Lanthaler, Mishra, and
  Karniadakis]{Lanthaler2022}
Samuel Lanthaler, Siddhartha Mishra, and George~Em Karniadakis.
\newblock {\href{https://doi.org/10.1093/imatrm/tnac001}{Error estimates for
  DeepONets: a deep learning framework in infinite dimensions}}.
\newblock \emph{Transactions of Mathematics and Its Applications}, 6\penalty0
  (1):\penalty0 tnac001, 2022.

\bibitem[Lanthaler et~al.(2025)Lanthaler, Li, and
  Stuart]{LanthalerLiStuart2023}
Samuel Lanthaler, Zongyi Li, and Andrew~M. Stuart.
\newblock {\href{https://doi.org/10.1007/s00365-025-09718-3}{Nonlocality and
  nonlinearity implies universality in operator learning}}.
\newblock \emph{Constructive Approximation}, 62\penalty0 (2):\penalty0
  261--303, 2025.

\bibitem[Lee(1991)]{Lee1991}
Yuh-Jia Lee.
\newblock {\href{https://doi.org/10.1016/0022-1236(91)90115-L}{Analytic version
  of test functionals, Fourier transform, and a characterization of measures in
  white noise calculus}}.
\newblock \emph{Journal of Functional Analysis}, 100\penalty0 (2):\penalty0
  359--380, 1991.

\bibitem[Lu et~al.(2021)Lu, Jin, Pang, Zhang, and Karniadakis]{Lu2021}
Lu~Lu, Pengzhan Jin, Guofei Pang, Zhongqiang Zhang, and George~Em Karniadakis.
\newblock {\href{https://doi.org/10.1038/s42256-021-00302-5}{Learning nonlinear
  operators via DeepONet based on the universal approximation theorem of
  operators}}.
\newblock \emph{Nature Machine Intelligence}, 3\penalty0 (3):\penalty0
  218--229, 2021.

\bibitem[Martias(2012)]{Martias2012ridgelet}
Claude Martias.
\newblock {\href{https://doi.org/10.1016/j.crma.2012.03.002}{La transform\'ee
  en ridelettes pour les fonctionnelles de Wiener}}.
\newblock \emph{Comptes Rendus Math\'ematique}, 350\penalty0 (5--6):\penalty0
  259--262, 2012.

\bibitem[Martias(2014{\natexlab{a}})]{Martias2014subexp}
Claude Martias.
\newblock {\href{https://doi.org/10.1016/j.crma.2014.06.012}{On the
  subexponentiality of the ridgelet transform}}.
\newblock \emph{Comptes Rendus Math\'ematique}, 352\penalty0 (12):\penalty0
  1029--1031, 2014{\natexlab{a}}.

\bibitem[Martias(2014{\natexlab{b}})]{Martias2014wavelet}
Claude Martias.
\newblock {\href{https://doi.org/10.1080/17442508.2013.879144}{The wavelet
  transform for Wiener functionals and some applications}}.
\newblock \emph{Stochastics}, 86\penalty0 (5):\penalty0 817--834,
  2014{\natexlab{b}}.

\bibitem[Martias(2022)]{Martias2022wavelets}
Claude Martias.
\newblock {\href{https://doi.org/10.5772/intechopen.104713}{Bases of wavelets
  and multiresolution in analysis on Wiener space}}.
\newblock In Francisco Bulnes, editor, \emph{Recent Advances in Wavelet
  Transforms and Their Applications}. IntechOpen, London, 2022.

\bibitem[Mhaskar and Hahm(1997)]{MhaskarHahm1997}
Hrushikesh~N. Mhaskar and Nahmwoo Hahm.
\newblock {\href{https://doi.org/10.1162/neco.1997.9.1.143}{Neural networks for
  functional approximation and system identification}}.
\newblock \emph{Neural Computation}, 9\penalty0 (1):\penalty0 143--159, 1997.

\bibitem[Mihai and Sengupta(2007)]{MihaiSengupta2007}
Vochita Mihai and Ambar~N. Sengupta.
\newblock
  {\href{https://ambarnsg.wordpress.com/wp-content/uploads/2014/01/mihaiseng.pdf}{The
  Radon--Gauss transform}}.
\newblock \emph{Soochow Journal of Mathematics}, 33\penalty0 (3):\penalty0
  415--434, 2007.

\bibitem[Murata(1996)]{Murata1996}
Noboru Murata.
\newblock {\href{https://doi.org/10.1016/0893-6080(96)00000-7}{An integral
  representation of functions using three-layered networks and their
  approximation bounds}}.
\newblock \emph{Neural Networks}, 9\penalty0 (6):\penalty0 947--956, 1996.

\bibitem[Natterer(1986)]{Natterer1986book}
Frank Natterer.
\newblock
  \emph{\href{https://link.springer.com/book/10.1007/978-3-663-01409-6}{The
  Mathematics of Computerized Tomography}}.
\newblock B. G. Teubner, Stuttgart, 1986.
\newblock \doi{10.1007/978-3-663-01409-6}.

\bibitem[Nelsen and Stuart(2021)]{NelsenStuart2021}
Nicholas~H. Nelsen and Andrew~M. Stuart.
\newblock {\href{https://doi.org/10.1137/20M133957X}{The random feature model
  for input--output maps between Banach spaces}}.
\newblock \emph{SIAM Journal on Scientific Computing}, 43\penalty0
  (5):\penalty0 A3212--A3243, 2021.

\bibitem[Nelsen and Stuart(2024)]{NelsenStuart2024}
Nicholas~H. Nelsen and Andrew~M. Stuart.
\newblock {\href{https://doi.org/10.1137/24M1648703}{Operator learning using
  random features: a tool for scientific computing}}.
\newblock \emph{SIAM Review}, 66\penalty0 (3):\penalty0 535--571, 2024.

\bibitem[Neufeld and Schmocker(2023)]{NeufeldSchmocker2023random}
Ariel Neufeld and Philipp Schmocker.
\newblock {\href{https://arxiv.org/abs/2312.08410}{Universal approximation
  property of Banach space-valued random feature models including random neural
  networks}}, 2023.
\newblock arXiv:2312.08410.

\bibitem[Neufeld and Schmocker(2026)]{NeufeldSchmocker2024noncompact}
Ariel Neufeld and Philipp Schmocker.
\newblock {\href{https://doi.org/10.1142/S0219530525500423}{Universal
  approximation results for neural networks with non-polynomial activation
  function over non-compact domains}}.
\newblock \emph{Analysis and Applications}, 24\penalty0 (5):\penalty0
  1123--1173, 2026.

\bibitem[Pr\'ev\^ot and R\"ockner(2007)]{Prevot2007book}
Claudia Pr\'ev\^ot and Michael R\"ockner.
\newblock \emph{{\href{https://doi.org/10.1007/978-3-540-70781-3}{A Concise
  Course on Stochastic Partial Differential Equations}}}, volume 1905 of
  \emph{Lecture Notes in Mathematics}.
\newblock Springer, Berlin, 2007.

\bibitem[Rossi and Conan-Guez(2005)]{RossiConanGuez2005}
Fabrice Rossi and Brieuc Conan-Guez.
\newblock {\href{https://doi.org/10.1016/j.neunet.2004.07.001}{Functional
  multi-layer perceptron: a non-linear tool for functional data analysis}}.
\newblock \emph{Neural Networks}, 18\penalty0 (1):\penalty0 45--60, 2005.

\bibitem[Schmocker and Teichmann(2026)]{SchmockerTeichmann2026}
Philipp Schmocker and Josef Teichmann.
\newblock {\href{https://arxiv.org/abs/2606.09820}{Weighted universal
  approximation of differentiable maps on infinite-dimensional manifolds}},
  2026.
\newblock arXiv:2606.09820.

\bibitem[Segal(1956)]{Segal1956}
Irving~E. Segal.
\newblock {\href{https://doi.org/10.1090/S0002-9947-1956-0076317-8}{Tensor
  algebras over Hilbert spaces. I}}.
\newblock \emph{Transactions of the American Mathematical Society},
  81:\penalty0 106--134, 1956.

\bibitem[Shi and Choi(2011)]{ShiChoi2011}
Jian~Qing Shi and Taeryon Choi.
\newblock \emph{Gaussian Process Regression Analysis for Functional Data}.
\newblock Chapman \& Hall/CRC, Boca Raton, 2011.

\bibitem[Song et~al.(2023{\natexlab{a}})Song, Fan, Chen, and
  Zhou]{SongFanChenZhou2023}
Linhao Song, Jun Fan, Di-Rong Chen, and Ding-Xuan Zhou.
\newblock {\href{https://doi.org/10.1007/s00041-023-10027-1}{Approximation of
  nonlinear functionals using deep {ReLU} networks}}.
\newblock \emph{Journal of Fourier Analysis and Applications}, 29\penalty0
  (4):\penalty0 50, 2023{\natexlab{a}}.

\bibitem[Song et~al.(2023{\natexlab{b}})Song, Liu, Fan, and
  Zhou]{SongLiuFanZhou2023}
Linhao Song, Ying Liu, Jun Fan, and Ding-Xuan Zhou.
\newblock {\href{https://doi.org/10.1016/j.neunet.2023.07.012}{Approximation of
  smooth functionals using deep {ReLU} networks}}.
\newblock \emph{Neural Networks}, 166:\penalty0 424--436, 2023{\natexlab{b}}.

\bibitem[Sonoda and Murata(2017)]{Sonoda2017acha}
Sho Sonoda and Noboru Murata.
\newblock {\href{https://doi.org/10.1016/j.acha.2015.12.005}{Neural network
  with unbounded activation functions is universal approximator}}.
\newblock \emph{Applied and Computational Harmonic Analysis}, 43\penalty0
  (2):\penalty0 233--268, 2017.

\bibitem[Sonoda et~al.(2022)Sonoda, Ishikawa, and Ikeda]{Sonoda2022gconv}
Sho Sonoda, Isao Ishikawa, and Masahiro Ikeda.
\newblock
  {\href{https://papers.nips.cc/paper_files/paper/2022/hash/fcc3dc27672a12510babe448d665e152-Abstract-Conference.html}{Universality
  of Group Convolutional Neural Networks Based on Ridgelet Analysis on
  Groups}}.
\newblock In S.~Koyejo, S.~Mohamed, A.~Agarwal, D.~Belgrave, K.~Cho, and A.~Oh,
  editors, \emph{Advances in Neural Information Processing Systems 35}, pages
  38680--38694, New Orleans, Louisiana, USA, 2022. Curran Associates, Inc.

\bibitem[Sonoda et~al.(2024)Sonoda, Ishikawa, and Ikeda]{Sonoda2024fourier}
Sho Sonoda, Isao Ishikawa, and Masahiro Ikeda.
\newblock {\href{https://doi.org/10.1016/j.jspi.2024.106184}{A unified Fourier
  slice method to derive ridgelet transform for a variety of depth-2 neural
  networks}}.
\newblock \emph{Journal of Statistical Planning and Inference}, 233:\penalty0
  106184, 2024.

\bibitem[Sonoda et~al.(2026)Sonoda, Ishikawa, and Ikeda]{Sonoda2021ghost}
Sho Sonoda, Isao Ishikawa, and Masahiro Ikeda.
\newblock {\href{https://arxiv.org/abs/2106.04770}{Ghosts in Neural Networks:
  Existence, Structure and Role of Infinite-Dimensional Null Space}}.
\newblock \emph{ArXiv preprint 2106.04770v2}, 2026.

\bibitem[Stinchcombe(1999)]{Stinchcombe1999}
Maxwell~B. Stinchcombe.
\newblock {\href{https://doi.org/10.1016/S0893-6080(98)00108-7}{Neural network
  approximation of continuous functionals and continuous functions on
  compactifications}}.
\newblock \emph{Neural Networks}, 12\penalty0 (3):\penalty0 467--477, 1999.

\bibitem[Taniguchi and Sonoda(2026)]{TaniguchiSonoda2026}
Koichi Taniguchi and Sho Sonoda.
\newblock {\href{https://arxiv.org/abs/2605.10277}{Generalization error bounds
  for Picard-type operator learning in nonlinear parabolic {PDEs}}}, 2026.
\newblock arXiv:2605.10277.

\bibitem[Thind et~al.(2023)Thind, Multani, and Cao]{ThindMultaniCao2023}
Barinder Thind, Kevin Multani, and Jiguo Cao.
\newblock {\href{https://doi.org/10.1080/10618600.2022.2097914}{Deep learning
  with functional inputs}}.
\newblock \emph{Journal of Computational and Graphical Statistics}, 32\penalty0
  (1):\penalty0 171--180, 2023.

\bibitem[Urbina-Romero(2019)]{UrbinaRomero2019}
Wilfredo Urbina-Romero.
\newblock \emph{{\href{https://doi.org/10.1007/978-3-030-05597-4}{Gaussian
  Harmonic Analysis}}}.
\newblock Springer Monographs in Mathematics. Springer, Cham, 2019.

\bibitem[van~der Vaart and van Zanten(2008)]{vanderVaartvanZanten2008}
Aad~W. van~der Vaart and J.~Harry van Zanten.
\newblock {\href{https://doi.org/10.1214/009053607000000613}{Rates of
  contraction of posterior distributions based on Gaussian process priors}}.
\newblock \emph{The Annals of Statistics}, 36\penalty0 (3):\penalty0
  1435--1463, 2008.

\bibitem[van Nuland(2024)]{vanNuland2024}
Teun D.~H. van Nuland.
\newblock {\href{https://doi.org/10.1016/j.neunet.2024.106181}{Noncompact
  uniform universal approximation}}.
\newblock \emph{Neural Networks}, 173:\penalty0 106181, 2024.

\bibitem[Wang et~al.(2016)Wang, Chiou, and M{\"u}ller]{WangChiouMuller2016}
Jane-Ling Wang, Jeng-Min Chiou, and Hans-Georg M{\"u}ller.
\newblock
  {\href{https://doi.org/10.1146/annurev-statistics-041715-033624}{Functional
  data analysis}}.
\newblock \emph{Annual Review of Statistics and Its Application}, 3:\penalty0
  257--295, 2016.

\bibitem[Wynne and Duncan(2022)]{WynneDuncan2022}
George Wynne and Andrew~B. Duncan.
\newblock {\href{https://jmlr.org/papers/v23/20-1180.html}{A kernel two-sample
  test for functional data}}.
\newblock \emph{Journal of Machine Learning Research}, 23\penalty0
  (73):\penalty0 1--51, 2022.

\bibitem[Yang and Shi(2026)]{yang-shi2026}
Jia-Qi Yang and Lei Shi.
\newblock {\href{https://arxiv.org/abs/2606.01244}{Variation Spaces for
  Encoder--Decoder Neural Operators: Approximation and Generalization}}.
\newblock \emph{ArXiv preprint 2606.01244v2}, 2026.

\bibitem[Yao et~al.(2025)Yao, Luo, Cao, Kovachki, O'Leary-Roseberry, and
  Ghattas]{Yao2025DIFNO}
Boyuan Yao, Dingcheng Luo, Lianghao Cao, Nikola Kovachki, Thomas
  O'Leary-Roseberry, and Omar Ghattas.
\newblock {\href{https://arxiv.org/abs/2512.14086}{Derivative-informed Fourier
  neural operator: universal approximation and applications to
  {PDE}-constrained optimization}}, 2025.
\newblock arXiv:2512.14086.

\bibitem[Yao et~al.(2021)Yao, Mueller, and Wang]{YaoMuellerWang2021}
Junwen Yao, Jonas Mueller, and Jane-Ling Wang.
\newblock {\href{https://proceedings.mlr.press/v139/yao21c.html}{Deep learning
  for functional data analysis with adaptive basis layers}}.
\newblock In \emph{Proceedings of the 38th International Conference on Machine
  Learning}, volume 139 of \emph{Proceedings of Machine Learning Research},
  pages 11898--11908, 2021.

\bibitem[Zappala and Bagherian(2026)]{ZappalaBagherian2024}
Emanuele Zappala and Maryam Bagherian.
\newblock {\href{https://doi.org/10.3390/axioms15090676}{Universal
  approximation of operators with transformers and neural integral operators}}.
\newblock \emph{Axioms}, 15\penalty0 (9):\penalty0 676, 2026.

\bibitem[Zhou et~al.(2024)Zhou, Suh, Cheng, and Huo]{ZhouSuhChengHuo2024}
Tian-Yi Zhou, Namjoon Suh, Guang Cheng, and Xiaoming Huo.
\newblock {\href{https://arxiv.org/abs/2403.12187}{Neural operators for
  nonlinear functionals on {RKHS}}}, 2024.
\newblock arXiv:2403.12187.

\end{thebibliography}
\end{document}